\pgfplotsset{compat=1.14}
\declaretheorem{theorem}
\declaretheorem[name=Corollary]{corollary}
\newcommand{\Bcal}{\mathcal{B}}
\newcommand{\Ccal}{\mathcal{C}}
\newcommand{\Dcal}{\mathcal{D}}
\newcommand{\Ecal}{\mathcal{E}}
\newcommand{\Fcal}{\mathcal{F}}
\newcommand{\Hcal}{\mathcal{H}}
\newcommand{\Ical}{\mathcal{I}}
\newcommand{\Kcal}{\mathcal{K}}
\newcommand{\Mcal}{\mathcal{M}}
\newcommand{\Ncal}{\mathcal{N}}
\newcommand{\Ocal}{\mathcal{O}}
\newcommand{\Vcal}{\mathcal{V}}
\newcommand{\Xcal}{\mathcal{X}}
\newcommand{\EE}{\mathbb{E}} % Expectation
\newcommand{\NN}{\mathbb{N}} % Natural numbers
\newcommand{\PP}{\mathbb{P}} % Probability
\newcommand{\RR}{\mathbb{R}} % Real numbers
\newcommand*{\argmax}{\mathop{\mathrm{argmax}}}
\newcommand*{\one}{{\bf 1}}
\newcommand{\BlackBox}{\rule{1.5ex}{1.5ex}}  % end of proof
\def\QED{~\rule[-1pt]{5pt}{5pt}\par\medskip}
\newenvironment{proof}{\par\noindent{\bf Proof\ }}{\hfill\BlackBox\\[2mm]}
\newtheorem{theorem}{Theorem}
\newtheorem{lemma}[theorem]{Lemma}
\newtheorem{proposition}[theorem]{Proposition}
\newtheorem{corollary}[theorem]{Corollary}
\newcommand{\wt}[1]{\widetilde{#1}}
\newcommand{\wh}[1]{\widehat{#1}}
\newcommand{\prob}{\mathbb P}
\newcommand{\Ex}{\mathbb E}
\newcommand{\statespace}{\mathcal S}
\newcommand{\actionspace}{\mathcal A}
\newcommand{\numS}{S}
\newcommand{\numA}{A}
\newcommand{\wmin}{w_{\min}}
\newcommand{\range}{\operatorname{rng}}
\newcommand{\defeq}{:=}
\newcommand{\Vub}{\widetilde V}
\newcommand{\Vlb}{\underaccent{\sim}{V}}
\newcommand{\Qub}{\widetilde Q}
\newcommand{\Qlb}{\underaccent{\sim}{Q}}
\newcommand{\psiub}{\widetilde \psi}
\newcommand{\psilb}{\underaccent{\widetilde{}}{\psi}}
\definecolor{DarkRed}{rgb}{0.75,0,0}
\definecolor{DarkGreen}{rgb}{0,0.5,0}
\definecolor{DarkPurple}{rgb}{0.5,0,0.5}
\definecolor{DarkBlue}{rgb}{0,0,0.7}
\definecolor{darkgreen}{rgb}{0,0.5,0}
\definecolor{mygreen}{RGB}{20, 120, 20}
\definecolor{commentcolor}{RGB}{0, 56, 160}
\newcommand{\removelatexerror}{\let\@latex@error\@gobble}
\newcommand*\circledmarker[1]{\tikz[baseline=(char.base)]{
            \node[shape=circle,draw,inner sep=.5pt] (char) {\small  #1};}}
\newcommand*\prnmarker[1]{(\MakeUppercase{\text{#1}})}
\newcommand*\circledmarked[2]{\overset{\circledmarker{#1}}{#2}}
\newcommand*\markedterm[2]{\underbrace{#2}_{\prnmarker{#1}}}
\newcommand{\mas}{M}
\newcommand{\rhatsq}{\widehat{r^2}}
\newcommand{\gedge}{\overset{G}{\rightarrow}}
\newcommand{\ldef}{\vcentcolon=}
\title{Reinforcement Learning with Feedback Graphs}
 \author[2]{Christoph Dann$^{1}$, Yishay Mansour$^{1, 3}$, Mehryar Mohri$^{1,4}$ \\ Ayush Sekhari$^{2}$  and  Karthik Sridharan}
\affil[1]{Google Research}
\affil[2]{Cornell University}
\affil[3]{Tel Aviv University}
\affil[4]{Courant Institute of Mathematical Sciences}
\date{}
\definecolor{ForestGreen}{RGB}{0,146,69}
\definecolor{BrickRed}{RGB}{230,22,22}
\definecolor{DarkGreen}{RGB}{1, 121, 91}
\begin{document}
\maketitle

\begin{abstract}\noindent
We study episodic reinforcement learning in Markov decision processes when the agent receives additional feedback per step in the form of several transition observations. Such additional observations are available in a range of tasks through extended sensors or prior knowledge about the environment (e.g., when certain actions yield similar outcome). We formalize this setting using a feedback graph over state-action pairs and show that model-based algorithms can leverage the additional feedback for more sample-efficient learning. We give a regret bound that, ignoring logarithmic factors and lower-order terms, depends only on the size of the maximum acyclic subgraph of the feedback graph, in contrast with a polynomial dependency on the number of states and actions in the absence of a feedback graph. Finally, we highlight challenges when leveraging a small dominating set of the feedback graph as compared to the bandit setting and propose a new algorithm that can use knowledge of such a dominating set for more sample-efficient learning of a near-optimal policy.
\end{abstract}
\nopagebreak[4]
\section{Introduction}
\label{sec:intro}
There have been many empirical successes of reinforcement learning (RL) in tasks where an abundance of samples is available \citep{mnih2015human,silver2017mastering}. 
However, for many real-world applications the sample complexity of RL is still prohibitively high. It is therefore crucial to simplify the learning task by leveraging domain knowledge in these applications. 
A common approach is imitation learning where demonstrations from domain experts can greatly reduce the number of samples required to learn a good policy~\citep{ross2011reduction}. Unfortunately, in many challenging tasks such as drug discovery or tutoring system optimization, even experts may not know how to perform the task well. They can nonetheless give insights into the structure of the task, e.g., that certain actions yield similar behavior in certain states. These insights could in principle be baked into a function class for the model or value-function, but this is often non-trivial for experts and RL with complex function classes is still very challenging, both in theory and practice \citep{jiang2017contextual, dann2018oracle, du2019provably, henderson2017deep}. 

A simpler, often more convenient approach to incorporating structure from domain knowledge is to provide additional observations to the algorithm. In supervised learning, this is referred to as \emph{data augmentation} and best practice in areas like computer vision with tremendous performance gains \citep{krizhevsky2012imagenet,zhang2017mixup}. Recent empirical work \citep{lin2019towards, kostrikov2020image, laskin2020reinforcement} suggests that data augmentation is similarly beneficial in RL. However, to the best of our knowledge, little is theoretically known about the question:
\begin{quote}
\emph{How do side observations in the form of transition samples (e.g. through data augmentation) affect the sample-complexity of online RL?}
\end{quote}
\begin{wrapfigure}{r}{0.25\textwidth}
    \centering
    \includegraphics[width=\linewidth]{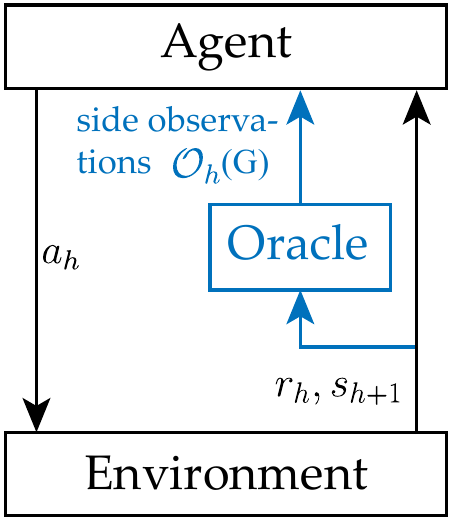}
    \caption{RL loop with side observations from a data augmentation oracle}
    \label{fig:protocol}
\end{wrapfigure}

In this paper, we take a first step toward answering this question and study RL in finite episodic Markov decision processes (MDPs) where, at each step, the agent receives some side information from an \emph{online data augmentation oracle}, in addition to the reward and next state  information $(r_h, s_{h + 1})$ directly supplied by the environment (Figure~\ref{fig:protocol}). This side information is a collection of observations, pairs of reward and next state, for some state-action pairs other than the one taken by the agent in that round. What can be observed is specified by a \emph{feedback graph} \citep{mannor2011bandits} over state-action pairs: an edge in the feedback graph from state-action pair $(s, a)$ to state-action pair $(\bar s, \bar a)$ indicates that, when the agent takes action $a$ at state $s$, the oracle also provides the reward and next-state sample $(r', s')$ that it would have seen if it would have instead taken the action $\bar a$ at state $\bar s$. Specifically, at each time step, the agents not only gets to see the outcome of executing the current (s, a), but also an outcome of executing all the corresponding state-action pairs that have an edge from (s, a) in the feedback graph. 

To illustrate this setting, consider a robot moving in a grid world. Through auxiliary sensors, it can sense positions in its line of sight. When the robot takes an action to move in a certain direction, it can also predict what would have happened for the same action in other positions in the line of sight. The oracle formalizes this ability and provides the RL algorithm with transition observations of (hypothetical) movements in the same direction from nearby states. 
Here, the feedback graph connects state-action pairs with matching states and actions in the line of sight (Figure~\ref{fig:robot_example}).

For another illustrative example where feedback graphs occur naturally, consider a robot arm grasping different objects and putting them in bins. In this task, the specific shape of the object is relevant only when the robot hand is close to the object or has grasped it. In all other states, the actual shape is not significant and thus,  the oracle can provide additional observations to the learning algorithm by substituting different object shapes in the state description of current  transition. In this case, all such state-action pairs that are identical up to the object shape are connected in the feedback graph. This additional information can be easily modeled using a feedback graph but is much harder to incorporate in models such as factored MDPs \citep{boutilier1999decision} or linear MDPs \citep{jin2019provably}.
RL with feedback graphs also generalizes previously studied RL settings, such as learning with aggregated state representations~\citep{dong2019provably} and certain optimal stopping domains~\citep{goel2017sample}. Furthermore, it can also be used to analyze RL with auxiliary tasks (see Section~\ref{sec:multitaskrl}).  

In this paper, we present an extensive study of RL with MDPs in the presence of side observations through feedback graphs. We prove in Section~\ref{sec:modelbased} that optimistic algorithms such as \textsc{Euler} or \textsc{ORLC} \citep{zanette2019tighter, dann2019policy} augmented with the side 
observations 
achieve significantly more favorable regret guarantees: the dominant terms of the bounds only depend on the \emph{mas-number}\footnote{\emph{mas-number} of a graph is defined as the size of its largest acyclic subgraph.} $\mas$ of the feedback graph  as opposed to an explicit dependence on $S$ and $A$, the number of states and actions of the MDP, which can be substantially larger than $M$ in many cases (See Table~\ref{tab:summary} for a summary of our results). We further give lower bounds which show that our regret bounds are in fact minimax-optimal, up to lower-order terms, in the case of symmetric feedback graphs (see Section~\ref{sec:lowerbounds}). 

\begin{figure}
    \centering
    \includegraphics[width=0.74\linewidth]{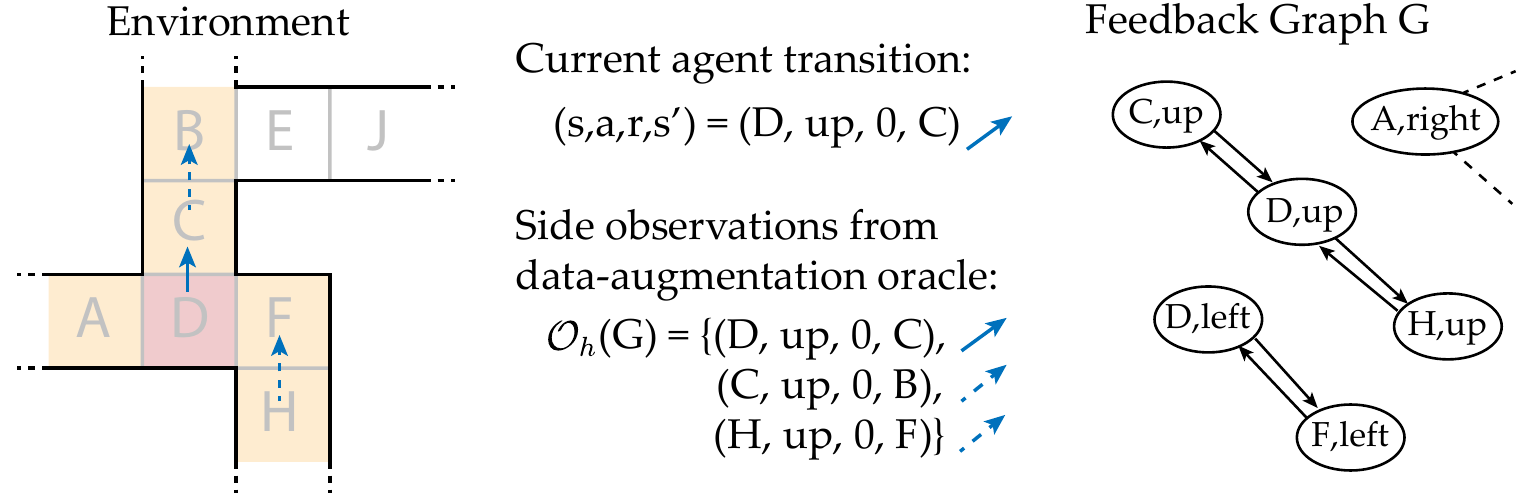}
    \caption{Example for RL with feedback graphs: Through additional sensors, the robot in state $D$ can also observe nearby states (yellow) and when taking the action \emph{up}, the oracle provides the actual transition (solid arrow) as well as hypothetical transitions (dashed arrow) from nearby states. This is formalized by a \emph{feedback graph} $G$ over state-action pairs shown on the right (snippet). Since $(D, up)$ has an edge to $(C, up)$ and $(G, up)$ in the feedback graph, the agent receives a (hypothetical) transition observation for both from the oracle.}
    \label{fig:robot_example}
\end{figure}

While learning with feedback graphs has been widely studied in the multi-armed bandit setting \citep[e.g.][]{mannor2011bandits,AlonCesa-BianchiGentileMansour2013,CortesDeSalvoGentileMohriYang2018,CortesDeSalvoGentileMohriYang2019,AroraMarinovMohri2019}, the corresponding in  the MDP setting is qualitatively different as the agent cannot readily access all vertices in the feedback graph (see section~\ref{sec:domination}). A vertex $(s, a)$ of the feedback graph may be very informative but the agent does not know how to reach state $s$ yet. To formalize this, we prove through a statistical lower bound that leveraging a small \emph{dominating set}\footnote{Dominating set of a graph (D) is defined as a subset of the vertices of a graph such that every vertex is either belongs to D or has an edge from a vertex in D. In our problem setting, the dominating set reveals information about the entire MDP.}  to improve over the sample complexity of RL is fundamentally harder in MDPs than in multi-armed bandits. 
Finally, we propose a simple algorithm to addresses the additional challenges of leveraging a small dominating set in MDPs when learning an $\epsilon$-optimal policy and prove that its sample complexity scales with the size of the dominating set in the main $1/\epsilon^2$- term only.

\section{Background and Notation}
\paragraph{Episodic Tabular MDPs:} The agent interacts with an MDP in episodes indexed by $k$. Each episode is a sequence $(s_{k, 1}, a_{k, 1}, r_{k, 1}, \ldots,\allowbreak s_{k, H}, a_{k, H}, r_{k, H})$ of $H$ states $s_{k,h} \in \statespace$, actions $a_{k,h} \in \actionspace$ and scalar rewards  $r_{k,h} \in [0,1]$. The initial state $s_{k, 1}$ can be chosen arbitrarily, possibly adversarially. Actions are taken as prescribed by the agent's policy $\pi_k$ which are deterministic and time-dependent mappings from states to actions, i.e., $a_{k,h} = \pi_k(s_{k,h}, h)$ for all time steps $h \in [H] := \{1, 2, \dots H\}$. The successor states and rewards are sampled from the MDP as $s_{k,h+1} \sim P(s_{k,h}, a_{k,h})$ and $r_{k,h} \sim P_R(s_{k,h}, a_{k,h})$. 

\paragraph{State-action pairs $\Xcal$: } We denote by $\Xcal$ the space of all state-action pairs $(s,a)$ that the agent can encounter, i.e.,  visit $s$ and take $a$. The state space and action space are then defined as $\statespace = \{s \colon \exists a : ~ (s,a) \in \Xcal\}$ and $\actionspace = \{a \colon \exists s : ~ (s,a) \in \Xcal\}$, respectively. This notation is more general than the typical definition of $\statespace$ and $\actionspace$ and more convenient for our purposes. We restrict ourselves to tabular MDPs where $\Xcal$ is finite. The agent only knows the horizon $H$ and $\Xcal$, but has no access to the reward and transition distributions. For a pair $x \in \Xcal$, we denote by $s(x)$ and $a(x)$ its state and action respectively.

\paragraph{Value Functions and Regret:} 
The Q-value of a policy is defined as the reward to go given the current state and action when the agent follows $\pi$ afterwards
\[    
    Q^\pi_h(s,a) \defeq \Ex\left[ \sum_{t=h}^H r_{k,t} \Bigg| a_{k,h} = a, s_{k,h} = s, a_{k,h+1:H} \sim \pi \right],
\]
and the state-values of $\pi$ are $V^\pi_h(s) \defeq Q_h^\pi(s, \pi_h(s))$. The expected return of a policy in episode $k$ is simply the initial value $V^{\pi}_1(s_{k,1})$. Any policy that achieves optimal reward to go, i.e., $\pi(s,h) \in \argmax_{a} Q_h^\pi(s,a)$ is called optimal. We use superscript $\star$ to denote any optimal policy and its related quantities.
The quality of an algorithm can be measured by its \emph{regret}, the cumulative difference of achieved and optimal return, which after $T$ episodes is 
\[
R(T) \defeq \sum_{k=1}^T ( V^\star_1(s_{k,1}) - V^{\pi_k}_1(s_{k,1})).
\]

\begin{table}
    \centering
    \bgroup
\def\arraystretch{1.7}
    \begin{tabular}{@{\hspace{0cm}}l|lll@{\hspace{0cm}}}
       \multicolumn{1}{l}{}&\multicolumn{1}{l}{} & \multicolumn{1}{l}{\textbf{Worst-Case Regret}} & 
       \multicolumn{1}{l}{\textbf{Sample Complexity}} \\
        \hline 
        \multirow{ 2}{*}{\begin{minipage}{1.5cm}\centering without feedback graph
        \end{minipage}} & \textsc{ORLC} \citep{dann2019strategic} & $\tilde O(\sqrt{\textcolor{DarkRed}{\numS \numA} H^2 T} + \textcolor{DarkRed}{\numS \numA} \hat \numS H^2)$ & 
        $\tilde O\left( \frac{\textcolor{DarkRed}{\numS \numA} H^2}{\epsilon^2} + \frac{\textcolor{DarkRed}{\numS \numA} \hat \numS H^2}{\epsilon}\right)$
        \\% \cline{2-4}
         & Lower bounds \citep{dann2015sample, osband2016lower} &
         $\tilde \Omega(\sqrt{\textcolor{DarkRed}{\numS \numA} H^2 T})$ &
                 $\tilde \Omega\left( \frac{\textcolor{DarkRed}{\numS \numA} H^2}{\epsilon^2}\right)$\\
        \hline 
        \multirow{ 3}{*}{\begin{minipage}{1.5cm}\centering with feedback graph
        \end{minipage}} & \textsc{ORLC} [Thm.~\ref{thm:cipoc_independencenumber}, Cor.~\ref{cor:epspolicy}] & $\tilde O(\sqrt{\textcolor{mygreen}{{M}} H^2 T} + \textcolor{mygreen}{{M}} \hat \numS H^2)$ & 
        $\tilde O\left( \frac{\textcolor{mygreen}{{M}} H^2}{\epsilon^2} + \frac{\textcolor{mygreen}{{M}} \hat \numS H^2}{\epsilon}\right)$
        \\  
        & Algorithm~\ref{alg:dominatingset} [Thm.~\ref{thm:samplecomplexity_domset}]
        & at least $O(\sqrt{\textcolor{mygreen}{{\gamma}} T^{2/3}})$ &
        $\tilde O\left( \frac{\textcolor{mygreen}{{\gamma}} H^3}{p_0 \epsilon^2} +\frac{\textcolor{mygreen}{{\gamma}} \hat \numS H^2}{p_0 \epsilon}  +  \frac{\textcolor{mygreen}{{\mas}} \hat \numS H^2}{p_0}\right)$
        \\ 
        & Lower bounds  [Thm.~\ref{thm:indeplowerbound}, Thm.~\ref{thm:domsetlowerbound}]
        & $\tilde \Omega( \sqrt{\textcolor{mygreen}{{\alpha}} H^2 T})$  &
        $\tilde \Omega\left( \frac{\textcolor{mygreen}{{\gamma}} H^2}{p_0 \epsilon^2} + \frac{\textcolor{mygreen}{{\alpha}}}{p_0} \wedge \frac{\textcolor{mygreen}{{\alpha}} H^2}{\epsilon^2} \right)$
        \\\hline
    \end{tabular}
        \caption{Comparison of our main results.  $\alpha$, $\gamma$ and $\mas$ denote the independence number, domination number and mas-number of the feedback graph respectively, with $\gamma \leq \alpha \leq \mas \leq \numS \numA$. }\label{tab:summary}
    \egroup
    
\end{table}

\section{Reinforcement Learning in MDPs with Feedback Graphs}
\label{sec:feedbackgraphs}

In the typical RL setting, when the agent takes action $a_h$ at state $s_h$,
it can only observe the reward $r_h$ and next-state
$s_{h+1}$. Thus, it only observes the transition
$(s_{h},a_{h},r_{h},s_{h+1})$.
Here, we assume that the agent additionally receives some side
observations from an oracle (Figure~\ref{fig:protocol}).  We denote by
$\Ocal_{k,h}(G) \subseteq \Xcal \times [0,1] \times \statespace$ the set
of transition observations thereby available to the agent in episode
$k$ and time $h$.\footnote{We often omit episode indices $k$ when unambiguous to reduce clutter.}  $\Ocal_{k,h}(G)$ thus consists of the tuples
$(s, a, r, s')$ with state $s$, action $a$, reward $r$ and next state
$s'$, including the current transition
$(s_{k,h}, a_{k,h}, r_{k,h}, s_{k,h+1})$.  For notational convenience,
we also sometimes write transition tuples in the form $(x, r, s')$
where $x = (s, a)$ is the state-action pair.

As discussed in Section~\ref{sec:intro}, the oracle information is
typically based on prior knowledge about the environment and
additional sensors. 
The goal of this paper is not to study how specific oracles work but rather how RL algorithms can benefit from side observations.
To that end, we formalize the side observations available to
the agent by a directed graph $G = (\Xcal, E)$ over state-action pairs
called a \emph{feedback graph}. An edge $x \gedge \bar x$ (short for $(x, \bar x) \in E$)
from $x \in \Xcal$ to $\bar x \in \Xcal$ indicates that, when the agent
takes action $a(x)$ at state $s(x)$, it can observe a reward and
next-state sample $(r', s')$ it would have received, had it taken
action $a(\bar x)$ at state $s(\bar x)$. To simplify the discussion, self-loops will be implicit and not included in the feedback graph. Essentially, $G$ only stipulates which side observations are available \emph{in addition} to the currently performed transition.
See Figure~\ref{fig:robot_example} for a concrete
example.
Formally, the set of transition observations received by the agent
when it takes action $a_{h}$ at state $s_{h}$ is thus
\begin{align}
    \Ocal_{h}(G) = \{((s_{h}, a_{h}), r_{h}, s_{h+1}) \} \cup \{(x, r', s') \colon (s_{h}, a_{h})
\gedge x \},
\end{align} where each observation $(x, r, s')$ contains an
independent sample from the next state distribution $s' \sim P(x)$ and
reward distribution $r \sim P_R(x)$ given all \emph{previous}
observations. Note that we allow simultaneous transition observations to be dependent.\footnote{
This is important as it allows the oracle
to generate side observations from the current, possibly noisy, transition and
feed them to the algorithm without the need for a completely fresh
sample with independent noise.
}

\paragraph{Important Graph Properties:} The analysis of regret and
sample-complexity in this setting makes use of 
following properties of the feedback graph:
\begin{itemize}

    \item \textbf{Mas-number $\bm \mas$}: A set of vertices $\Vcal \subseteq
      \Xcal$ form an acyclic subgraph if the subgraph $(\Vcal, \{
      (v,w) \subseteq \Vcal \times \Vcal \colon v \gedge w
      \})$ of $G$ restricted to $\Vcal$ is loop-free. We call the size
      of the maximum acyclic subgraph the \emph{mas-number} $\mas$ of $G$.

    \item \textbf{Independence number $\bm \alpha$}: A set of vertices
      $\Vcal \subseteq \Xcal$ is an independent set if there is no
      edge between any two nodes of that set:
      $\forall v,w \in \Vcal \colon v \, \not \gedge w$. The
      size of the largest independent set is called the
      \emph{independence number} $\alpha$ of $G$.

    \item \textbf{Domination number $\bm \gamma$}: A set of vertices
      $\Vcal \subset \Xcal$ form a dominating set if there is an edge
      from a vertex in $\Vcal$ to any vertex in $G$:
      $\forall x \in \Xcal \; \exists v \in \Vcal \colon v
      \gedge x$. The size of the smallest dominating set is
      called the \emph{domination number} $\gamma$.
\end{itemize}
For any directed graph $G$, 
$$\gamma \leq \alpha \leq \mas \leq |\Xcal|,$$ where all the above inequalities can be a factor of 
$\Theta(|\Xcal|)$ apart in the worst case. Independence- and mas-number coincide,
$\alpha = \mas$, for symmetric (or undirected) graphs where for every
edge there is an edge pointing backward. We defer to
Appendix~\ref{app:graphprop_discussion} a more extensive discussion with
examples of how the above graph properties can differ from each other. 
Here, we only give two relevant examples:

\begin{enumerate}[label=\alph*)]
    \item \emph{State aggregation} \citep{dong2019provably} can be considered a special case of learning with feedback graphs  where the feedback graph consists of disjoint cliques, each consisting of the state-action pairs whose state belongs to a an aggregated state. Here $M = \alpha = \gamma = \numA B$ where $B$ is the number of aggregated states and $\numA=|\actionspace|$. 
    \item \emph{Learning with auxiliary tasks} where the agent aims to optimize
several auxiliary reward functions can also be modeled as RL with a
feedback graph where the MDP state space is augmented with a task
identifier. See Section~\ref{sec:multitaskrl} for an extended
discussion.
\end{enumerate}

\section{Graph Regret Bounds for Model-Based RL}
\label{sec:modelbased}
In this section, we show the benefits of a feedback graph in achieving more favorable  learning guarantees. We focus on model-based algorithms that
follow the \emph{optimism in the face of uncertainty principle}, a popular paradigm that achieves the tightest known  regret / PAC bounds for the tabular MDP setting. Specifically, we will analyze a version of the \textsc{Euler} or
\textsc{ORLC} algorithm \citep{dann2019policy, zanette2019tighter} shown in Algorithm~\ref{alg:mb}.

The algorithm proceeds in rounds, and maintains first and second moments of the immediate
reward i.e. $\wh r(x)$ and $\rhatsq(x)$ respectively,  transition frequencies $\wh P(x)$ and
the number of observations $n(x)$ for each state-action pair
$x \in \Xcal$ as statistics to form an estimate of the true model. At the start of every round, in
line~\ref{lin:optplan_main}, we  compute
a new policy $\pi_k$ using \FOptPlan, a version of value iteration with reward bonuses.\footnote{The subroutine \FOptPlan returns an upper-confidence bound $\Vub_{k,h}$ on the optimal value function $V^\star_{h}$ as well as a lower-confidence
bound $\Vlb_{k,h}$ on the value function of the returned policy
$V^{\pi_k}_{h}$, and can be can be viewed as an extension of the UCB policy from the bandit literature to the MDP setting}
Next, we execute the policy $\pi_k$ for one episode and update the model statistics using the samples collected. 

The main difference between 
Algorithm~\ref{alg:mb} and the \textsc{Euler} or \textsc{ORLC}
algorithms without feedback graphs is in the way we update our model statistics. Specifically, our algorithm also includes the additional observations $\Ocal_h(G)$ available along with the current transition (as stipulated by the feedback graph $G$) to update the model statistics at the end of every round. This is highlighted in green in lines~\ref{lin:fg1}--\ref{lin:fg2} of  Algorithm~\ref{alg:mb}.

Though being a straightforward extension of the previous algorithms, we show that Algorithm~\ref{alg:mb} can benefit from the feedback graph structure, and satisfies the following regret and certificates sizes (IPOC \citep{dann2019policy}) bound that only scales with the mas-number $\mas$ of the feedback graph $G$, and does not have any explicit dependence on size of the state or action space (in the dominant terms). Our main technical contribution is in the analysis, which we describe in the rest of this section.

\begin{algorithm}[t]
\SetAlgoLined
\SetNoFillComment
\SetInd{0.7em}{0.5em}
\SetKwInOut{Inputa}{input}
\SetKwInOut{Outputa}{ouput}
\SetKwInOut{Returna}{return}
  \SetKwProg{Fn}{function}{:}{}
  \SetNlSty{bfseries}{\color{black}}{}
\Inputa{failure tolerance $\delta \in (0, 1]$, state-action space $\Xcal$, episode length $H$;}
\Inputa{maximum transition support $\wh \numS \leq \|P(x)\|_0 \leq \numS$;}

    Initialize $n_1(x) \gets 0, ~ \wh r_1(x) \gets 0, ~ \rhatsq_1(x) \gets 0, ~ \wh P_1(x) \gets e_1 \in \{0, 1\}^{\numS} $ for all $x  \in \Xcal$\;
\For(\tcp*[f]{Main Loop}){episode $k=1, 2, 3, \dots$}{
    $(\pi_k, \Vub_{k,h}, \Vlb_{k,h}) \gets$ \FOptPlan{$n_k, \wh r_k, \rhatsq_k, \wh P_k$}
        \label{lin:optplan_main}
    \tcp*{VI with reward bonuses, see appendix}
    Receive initial state $s_{k,1}$
    \;
    $(n_{k+1}, \wh r_{k+1}, \rhatsq_{k+1}, \wh P_{k+1}) \gets $\FSample{$\pi_k, s_{k,1}, n_k, \wh r_k, \rhatsq_k, \wh P_k$}
    \label{lin:sample_ep}\;
}

\Fn{\FSample($\pi, s_1, n, \wh r, \rhatsq, \wh P$)}{
\For{$h=1, \dots H$}{
    Take action $a_{h} = \pi(s_{h}, h)$ and transition to $s_{h+1}$ with reward $r_h$\;
    \textcolor{mygreen}{Receive transition observations
      $\Ocal_{h}(G)$}\tcp*{As stipulated by feedback graph $G$} \label{lin:fg1}
    \textcolor{mygreen}{\For(\tcp*[f]{Update empirical model and number of observations}){transition $(x,r, s') \in \Ocal_{h}(G)$ \label{lin:fg2}}{
    \textcolor{black}{
     $\begin{array}{ll}
         n(x) \gets n(x) + 1, 
        &\wh P(x) \gets \frac{n(x)-1}{n(x)} \wh P(x) +  \frac{1}{n(x)} e_{s'},  \\ 
         \wh r(x) \gets \frac{n(x)-1}{n(x)} \wh r(x) + \frac{1}{n(x)}r, \qquad
        & \rhatsq(x) \gets \frac{n(x)-1}{n(x)} \rhatsq(x) + \frac{1}{n(x)}r^2, 
       \end{array}$ \\
       $\qquad$where $e_{s'} \in \{0,1\}^{\numS}$ has $1$ on the $s'$-th position\;
    }}}
    \Returna{$(n, \wh r, \rhatsq, \wh P)$}
}
}
\caption{Optimistic model-based RL algorithm}
\label{alg:mb}
\end{algorithm}

\begin{theorem}[Cumulative IPOC and regret bound]
For any tabular episodic MDP with episode length $H$, state-action
space $\Xcal \subseteq \statespace \times \actionspace$ and directed
feedback graph $G$, Algorithm~\ref{alg:mb} satisfies with probability
at least $1 - \delta$ an IPOC bound for all number of episodes $T$ of
\begin{align}
	 \label{eqn:nobias_regret}
	 \wt O\left(\sqrt{\mas H^2 T} 
	 + \mas \wh \numS H^2 \right),
\end{align}
where $\mas$ is the size of the maximum acyclic
subgraph of $G$ and algorithm parameter $\wh \numS \leq \numS$ is a
bound on the number of possible successor states of each $x \in
\Xcal$.\\ 
Equation~\eqref{eqn:nobias_regret} bounds the
cumulative certificate size $\sum_{k=1}^T (\Vub_1(s_{k,1}) -
\Vlb_1(s_{k,1}))$ and the regret $R(T)$.
\label{thm:cipoc_independencenumber}
\end{theorem}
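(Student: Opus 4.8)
The plan is to reduce to the existing analysis of \textsc{ORLC}/\textsc{Euler} \citep{dann2019policy, zanette2019tighter}, whose IPOC/regret guarantee rests on two pillars: (i) the functions $\Vub_{k,h},\Vlb_{k,h}$ produced by \FOptPlan{} are valid, i.e.\ $\Vlb_{k,h}\le V^{\pi_k}_h\le V^\star_h\le \Vub_{k,h}$ pointwise on a high-probability ``good event''; and (ii) on that event the cumulative certificate $\sum_{k=1}^T(\Vub_{k,1}(s_{k,1})-\Vlb_{k,1}(s_{k,1}))$ is dominated by a weighted sum of per-step confidence widths evaluated along the realized trajectory. Algorithm~\ref{alg:mb} differs from the graph-free version only in that the counts $n_k(x)$ and the empirical estimates $\wh r_k(x),\rhatsq_k(x),\wh P_k(x)$ absorb the extra observations in $\Ocal_h(G)$. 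So the work splits into: (a) re-establishing the good event despite the fact that simultaneous side observations may be statistically dependent; and (b) re-examining the trajectory sums with the faster-growing counts, showing that $\mas$ takes the place of $\numS\numA$.

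For (a), note that for a fixed $x\in\Xcal$ the algorithm updates $\wh r_k(x),\rhatsq_k(x),\wh P_k(x)$ only with $x$'s own observation stream, and by construction each observation of $x$ is, conditionally on all previously generated observations, a fresh draw of $(r,s')$ from $(P_R(x),P(x))$. Hence, in the natural filtration ordered by observation, each vertex's estimators are martingale-type averages of a conditionally i.i.d.\ sequence, so the empirical-Bernstein / Freedman-style deviation bounds used by \textsc{ORLC} — uniform over the number of observations — apply unchanged; a union bound over $x\in\Xcal$ recovers the good event, and the optimistic-planning argument (optimism of $\Vub$, pessimism of $\Vlb$) goes through verbatim since enlarging the data only shrinks the confidence sets. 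The permitted dependence is only \emph{between different} vertices at a common step, which never couples a single vertex's estimator with its own past, so no genuinely new concentration argument is needed.

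For (b), on the good event the cumulative certificate is bounded, exactly as in \textsc{ORLC}, by $\wt O(1)$ times a ``main'' term $\sum_{k,h}\sqrt{\VV_{k,h}/n_k(x_{k,h})}$, where $\VV_{k,h}$ is the conditional variance of the next-step (optimal) value, plus ``lower-order'' terms $\sum_{k,h}\wh\numS H^2/n_k(x_{k,h})$, with every step having $n_k(x_{k,h})=0$ contributing only $O(H)$. The variance is handled as in \textsc{Euler}: by the law of total variance $\sum_{k,h}\VV_{k,h}=\wt O(H^2 T)$ up to self-bounding corrections. The dependence on the size of the state-action space is then removed by two graph arguments. \emph{Burn-in}: for every episode $k$ containing a step with $n_k(x_{k,h})=0$, fix one such vertex $y_k:=x_{k,h}$; if $k<k'$ and $y_k\gedge y_{k'}$ then visiting $y_k$ in episode $k$ already produced an observation of $y_{k'}$, contradicting $n_{k'}(y_{k'})=0$, so $\{y_k\}$ forms an acyclic subgraph, there are at most $\mas$ such episodes, hence at most $\mas H$ zero-count steps and an $O(\mas H^2)$ contribution; the same idea bounds, for any threshold $\tau$, the number of steps with $n_k(x_{k,h})\le\tau$ by $\wt O(\mas\tau)$. \emph{Counting}: for the sequence of state-action pairs visited along the trajectory, $\sum_{k,h}1/n_k(x_{k,h})=\wt O(\mas)$, which extends the observation-counting lemmas for feedback graphs from the bandit literature \citep{mannor2011bandits, AlonCesa-BianchiGentileMansour2013} to our sequential setting where counts accumulate across episodes; the proof charges each term to a doubling scale of the count and again invokes the acyclic-subgraph structure. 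Combining with Cauchy–Schwarz yields $\sum_{k,h}\sqrt{\VV_{k,h}/n_k(x_{k,h})}=\wt O(\sqrt{\mas H^2 T})$ for the main term and $\wt O(\mas\wh\numS H^2)$ for the lower-order terms, which is exactly \eqref{eqn:nobias_regret}; since the same quantity upper-bounds both the cumulative certificate and $R(T)$ in the IPOC framework, the theorem follows.

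The step I expect to be the main obstacle is the $\mas$-counting lemma and its interaction with the episode structure — within an episode all $H$ steps are divided by the same episode-start count, and side observations at one step are correlated — because this is precisely where the gain from $\numS\numA$ to $\mas$ materializes and where the feedback-graph bandit arguments do not transfer directly; by contrast, verifying that \FOptPlan{}'s optimism and the \textsc{Euler}-style variance recursion are untouched by the additional observations is comparatively routine bookkeeping.
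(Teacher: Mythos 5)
Your overall architecture is sound: part (a) matches the paper's treatment (the paper also handles the dependence between simultaneous side observations by working in a per-vertex filtration indexed by stopping times of observation), your zero-count burn-in argument is correct, and working with realized trajectory counts rather than expected visitation probabilities is a legitimately different route that, for deterministic feedback graphs, even avoids the paper's concentration event $\mathsf{E}^{\mathrm{N}}$. However, there is a genuine gap at exactly the point you flag as ``the main obstacle,'' and it is not mere bookkeeping: the counting claim $\sum_{k,h} 1/n_k(x_{k,h}) = \wt O(\mas)$ over the realized trajectory is false. Because $n_k$ is frozen at the episode boundary, a vertex whose in-neighborhood was observed exactly once in an early episode and which is then visited $H$ times in a single later episode contributes $H/1 = H$ to this sum; an independent set of $\alpha$ such vertices makes the sum $\Omega(\alpha H)$, which equals $\Omega(\mas H)$ when the graph is symmetric. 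Cauchy--Schwarz then yields only $\sqrt{\mas H^3 T}$ for the main term --- precisely the $\Omega(\sqrt{H^3T})$ loss the paper explicitly warns results from importing the discrete pigeonhole arguments of bandits with graph feedback.

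The missing idea is to threshold at $\Theta(H)$ rather than at $0$. Steps with $n_k(x_{k,h}) \lesssim H$ number at most $\wt O(\mas H)$ by your own $\tau$-threshold pigeonhole (the paper's Lemma~\ref{lem:mas_pigeonhole_main}, applied to realized per-episode visit counts with $C = \Theta(H)$ to absorb the current episode's contribution); charging each such step the trivial bound $H$ places them in the lower-order $\mas \wh \numS H^2$ term. For steps with $n_k \gtrsim H$, each episode increases any neighborhood's count by at most $H \leq n_k$, so the within-episode staleness costs only a constant factor and the restricted self-normalized sum is indeed $\wt O(\mas)$ --- this is the paper's Lemma~\ref{lem:wsum_mas_main}, proved by reduction to integer and then binary vertex sequences. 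The paper carries out this same split ($U_k$ versus $W_k$ at threshold $4H\ln(1/\delta')$) but phrased in expected visitation probabilities $w_{k,h}(x)$, which is why it needs $\mathsf{E}^{\mathrm{N}}$ to relate $n_k(x)$ to cumulative in-neighbor visitation probability; your realized-count version would instead need a Freedman-type self-bounding argument to control the realized variance sum and the certificate-to-trajectory conversion. With the $\Theta(H)$ threshold inserted, your route should close.
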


The above regret bound replaces a factor of $\numS \numA$ in the regret
bounds for RL without side observations \citep{dann2019policy,zanette2019tighter} with
the mas-number $\mas$ (see also Table~\ref{tab:summary}). This is a substantial improvement since, in many
feedback graphs $\numS \numA$ may be very large while $\mas$ is a constant. 
 The only remaining polynomial dependency on $\wh \numS \leq \numS$ in
the lower-order term is typical for model-based algorithms.

On the lower bound side, we show in Section~\ref{sec:lowerbounds} that the regret is at-least $\wt \Omega(\sqrt{\alpha H^2 T})$, where $\alpha$ denotes the independence number of $G$.  While $\mas$ and $\alpha$ can differ by as much as $|\Xcal| - 1$ for general graphs, they match for symmetric feedback graphs
(i.e. $\mas = \alpha$).\footnote{We call a graph $G$
  symmetric if for every edge
  $x \gedge y$, there also exists a back edge $y \gedge x$} In that
case, our regret bound in Theorem \ref{thm:cipoc_independencenumber}
 is optimal up to constant terms and $\log$-factors, and Algorithm~\ref{alg:mb} cannot be improved further.

We now discuss how the analysis of Theorem \ref{thm:cipoc_independencenumber} differs from existing ones, with the full proof deferred to Appendix~\ref{app:modelbasedproofs}. Assuming that the value
functions estimated in $\FOptPlan$ are valid confidence bounds, that
is, $\Vlb_{k,h} \leq V^{\pi}_h \leq V^\star_h \leq \Vub_{k,h}$ for all $k \in [T]$ and $h \in [H]$, we
bound regret as their differences
\begin{equation}
\label{eqn:basicbound1}
  R(T) \leq \sum_{k=1}^T \big[ \Vub_{k,1}(s_{k,1}) - \Vlb_{k,1}(s_{k,1}) \big]
    \lesssim \sum_{k=1}^T \sum_{h=1}^H \sum_{\substack{x \in \Xcal}} w_{k,h}(x) \left[ H \wedge
    \left[ 
    \frac{\sigma_{k,h}(x)}{\sqrt{n_k(x)}} + \frac{\wh \numS H^2}{n_k(x)}
    \right]
    \right],
\end{equation}
where $\lesssim$ and $\gtrsim$ ignore constants and $\log$-terms and
where $\wedge$ denotes the minimum operator.  The second step is a
bound on the value estimate differences derived through a standard
recursive argument. Here,
$w_{k,h}(x) = \PP\big((s_{k,h}, a_{k,h}) = x \mid \pi_k, s_{k,1} \big)$ is the
probability that policy $\pi_k$ visits $x$ in episode $k$ at time $h$.  In
essence, each such expected visit incurs regret $H$ or a term that
decreases with the number of observations $n_k(x)$ for $x$ so far. In
the expression above,
$\sigma^2_{k,h}(x) = \operatorname{Var}_{r \sim P_R(x)}(r) +
\operatorname{Var}_{s' \sim P(x)}( V^{\pi_k}_{h+1}(s'))$ is the
variance of immediate rewards and the policy value with respect to one
transition.

In the bandit case, one would now apply a concentration argument to
turn $w_{k,h}(x)$ into actual visitation indicators but this would
yield a loose regret bound of $\Omega(\sqrt{H^3T})$ here. Hence,
techniques in the analysis of UCB in bandits with graph feedback
  \citep{lykouris2019graph} based on discrete pigeon-hole arguments
  cannot be applied here without incurring suboptimal regret in $H$.  Instead, we apply a probabilistic argument
to the number of observations $n_k(x)$. We show that, with high
probability, $n_k(x)$ is not much smaller than the total visitation
probability so far of all nodes
$x' \in \Ncal_{G}(x) \defeq \{x\} \cup \{x' \in \Xcal \colon x'
\gedge x\}$ that yield observations for $x$:
\begin{align}
    n_k(x) \gtrsim \sum_{i=1}^{k} \sum_{x' \in \Ncal_G(x)} w_{i}(x'), \textrm{ with }
    w_i(x) = \sum_{h=1}^H w_{i,h}(x). \nonumber
\end{align}
This only holds when $\sum_{i=1}^{k} \sum_{x' \in \Ncal_G(x)} w_{i}(x') \gtrsim H$.
Hence, we split the sum over $\Xcal$ in \eqref{eqn:basicbound1} in $U_k = \left\{ x \in \Xcal \colon \sum_{i=1}^{k} \sum_{x' \in \Ncal_G(x)} w_{i}(x') \gtrsim H \right\}$ and complement $U^c_k$. Ignoring fast decaying  $1 / n_k(x)$ terms, this yields
\begin{align}
    \eqref{eqn:basicbound1}
  & \lesssim
    \sum_{k=1}^T\!\left[ \sum_{x \in U_k^c} w_{k}(x) H
    + \!\!
    \sum_{x \in U_k} \sum_{h=1}^H w_{k,h}(x)
    \frac{\sigma_{k,h}(x)}{\sqrt{n_k(x)}}\right]\nonumber\\
    & \lesssim \markedterm{a}{\sum_{k=1}^T \sum_{x \in U_k^c}
      w_{k}(x)} H
     + 
    \markedterm{b}{\sqrt{\sum_{k=1}^T \sum_{x \in \Xcal} \sum_{h=1}^H w_{k,h}(x)\sigma_{k,h}^2(x)}} ~ \cdot \markedterm{c}{\sqrt{\sum_{k=1}^T \sum_{x \in U_k} \frac{w_k(x)}{\sum_{i=1}^{k} \sum_{x' \in \Ncal_G(x)} w_{i}(x')}}} ,
    \nonumber
\end{align}
where the second step uses the Cauchy-Schwarz inequality. The law of
total variance for MDPs \citep{azar2012sample} implies that
$\prnmarker{b} \lesssim H\sqrt{T}$. It then remains to bound
$\prnmarker{a}$ and $\prnmarker{c}$, which is the main technical 
innovation in our proof. Observe that both $\prnmarker{a}$ and $\prnmarker{c}$ are sequences of functions that map each
node $x$ to a real value $w_k(x)$. While $\prnmarker{a}$ is a
thresholded sequence that effectively stops once a node has
accumulated enough weight from the in-neighbors, $\prnmarker{c}$
is a self-normalized sequence. We derive the following two novel
results to control each term. We believe these could be of general interest.

\begin{lemma}[Bound on self-normalizing real-valued graph sequences]
Let $G = (\Xcal, \Ecal)$ be a directed graph with the finite vertex set
$\Xcal$ and mas-number $\mas$, and let $(w_k)_{k \in [T]}$ be a sequence
of weights $w_k:\Xcal \rightarrow \RR^+$ such that for all
$k$, $\sum_{x \in \Xcal}w_k(x) \leq w_{\max}$. Then, for any
$w_{\min} > 0$,
\begin{align}
	\sum_{k =1}^T \sum_{x \in \Xcal} \frac{\one\{w_k(x) \geq w_{\min}\} w_k(x)}{\sum_{i=1}^k \sum_{x' \in \Ncal_G(x)} w_i(x')} 
	\leq
	2 \mas \ln \left(eT  \cdot \frac{w_{\max}}{w_{\min}}\right), 
\end{align}
where $\Ncal_{G}(x) = \{x\} \cup \{ y \in \Xcal~ |~  y \gedge x \}$ denotes the set of all vertices that have an edge to $x$ in $G$ and $x$ itself.
\label{lem:wsum_mas_main}
\end{lemma}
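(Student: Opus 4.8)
The plan is to isolate the two quantities in the bound: the logarithmic factor should come from a telescoping argument along the time axis, and the mas-number $\mas$ from a combinatorial property of $G$. Throughout write $N_k(x)\defeq\sum_{i=1}^{k}\sum_{x'\in\Ncal_G(x)}w_i(x')$ for the denominator and $q_k(x)\defeq\sum_{i=1}^{k}w_i(x)$ for the accumulated weight at $x$, so that $N_k(x)$ is nondecreasing in $k$, $N_k(x)=\sum_{x'\in\Ncal_G(x)}q_k(x')$, $N_k(x)-N_{k-1}(x)=\sum_{x'\in\Ncal_G(x)}w_k(x')\ge w_k(x)$ since $x\in\Ncal_G(x)$, $N_k(x)\ge q_k(x)$, and $N_k(x)\ge w_{\min}$ on any round with $w_k(x)\ge w_{\min}$; moreover $N_T(x)\le\sum_{x'}q_T(x')=\sum_{i=1}^{T}\sum_{x'}w_i(x')\le T w_{\max}$. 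A purely per-vertex argument already produces the right logarithmic factor: among the rounds where $x$ is counted, the first contributes $\tfrac{w_k(x)}{N_k(x)}\le 1$ and each later one contributes $\tfrac{w_k(x)}{N_k(x)}\le\tfrac{N_k(x)-N_{k-1}(x)}{N_k(x)}\le\ln\tfrac{N_k(x)}{N_{k-1}(x)}$, which telescopes to $\ln\tfrac{N_T(x)}{w_{\min}}$, so $x$ contributes at most $\ln\tfrac{e N_T(x)}{w_{\min}}\le\ln\tfrac{e T w_{\max}}{w_{\min}}$. Summing over $x$ costs a factor $|\Xcal|$, which can be arbitrarily larger than $\mas$ (e.g.\ when $G$ is a clique), so the vertices must be treated jointly.

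The combinatorial input I would use is a ``ratio lemma'' for directed graphs: for any directed graph $H$ with mas-number $\mas$ and any $q\colon V(H)\to\RR^+$, $\sum_{x}\tfrac{q(x)}{\sum_{x'\in\Ncal_H(x)}q(x')}\le\mas$ (terms with vanishing denominator omitted). I would prove it by greedily building an acyclic subgraph: repeatedly pick a not-yet-deleted vertex $v$, charge to it the fractions $\tfrac{q(x)}{\sum_{x'\in\Ncal_H(x)}q(x')}$ of the still-present vertices it ``covers,'' and delete $v$ together with a set of vertices chosen so that (i) the charge assigned to $v$ is at most $1$ and (ii) the picked vertices create no directed cycle, hence form an acyclic induced subgraph of size at most $\mas$. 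Since $N_k(x)\ge\sum_{x'\in\Ncal_G(x)}w_k(x')$, applying this lemma with $q=w_k$ at a fixed round $k$ gives $\sum_{x}\tfrac{w_k(x)}{N_k(x)}\le\mas$ for every $k$; but summing this over all $T$ rounds costs a spurious factor $T$.

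To interpolate between the per-vertex bound (which costs $|\Xcal|$) and the per-round bound (which costs $T$), I would introduce a dyadic decomposition in time: vertex $x$ is at \emph{level} $j\ge 0$ during round $k$ if $N_k(x)\in[2^{j}w_{\min},2^{j+1}w_{\min})$, which is possible only for $0\le j\le L\defeq\lceil\log_2(T w_{\max}/w_{\min})\rceil$, and since $N_k(x)$ is nondecreasing each vertex sweeps the levels in order, occupying level $j$ on a contiguous block $I_j(x)$ with $\sum_{k\in I_j(x)}w_k(x)\le\sum_{k\in I_j(x)}(N_k(x)-N_{k-1}(x))<2^{j+1}w_{\min}$. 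The crux of the proof — and the step I expect to be hardest — is to bound, for each fixed $j$, the \emph{joint} level-$j$ mass $\sum_{x}\sum_{k\in I_j(x)}w_k(x)$ by $O(\mas\,2^{j}w_{\min})$ rather than the trivial per-vertex bound $|\Xcal|\,2^{j+1}w_{\min}$: once that is in hand, level $j$ contributes at most $\tfrac{1}{2^{j}w_{\min}}\cdot O(\mas\,2^{j}w_{\min})=O(\mas)$ to the sum in the lemma, and summing over the $L+1$ levels gives $O(\mas\log(T w_{\max}/w_{\min}))$. Obtaining the joint level-$j$ bound is delicate because the naive estimate $\sum_{x\text{ at level }j}w_k(x)\le 2^{j+1}w_{\min}\sum_{x}\tfrac{w_k(x)}{N_k(x)}\le 2^{j+1}w_{\min}\mas$ from the ratio lemma is again only a per-round statement; one has to amortize it over time, exploiting that $N_k(x)$ never re-enters a level it has left, and feeding the accumulated weights $q_k$ into an ``anytime'' strengthening of the ratio lemma so that the $\mas$-budget is charged only $O(1)$ times per vertex per level. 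Tracking constants and converting $\log_2$ to $\ln$ then yields the stated $2\,\mas\ln(e T w_{\max}/w_{\min})$.
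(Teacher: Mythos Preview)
Your proposal leaves the decisive step open. You correctly isolate the ``crux'' as bounding the level-$j$ mass $\sum_x\sum_{k\in I_j(x)}w_k(x)$ by $O(\mas\,2^j w_{\min})$, but what you offer toward it is only a direction: an ``anytime strengthening'' of the static ratio lemma so that ``the $\mas$-budget is charged only $O(1)$ times per vertex per level.'' The ratio inequality you quote controls $\sum_x q(x)/\sum_{x'\in\Ncal_G(x)}q(x')$ for a \emph{single} weight vector; it gives no obvious handle on cumulative mass within a level, and the amortization you sketch is never carried out. The level-$j$ bound does happen to hold --- it follows at once from the paper's \emph{other} graph lemma (Lemma~\ref{lem:mas_pigeonhole_main}) with threshold $C=2^{j+1}w_{\min}$, which gives $\sum_x\sum_k w_k(x)\,\one\{N_k(x)\le C\}\le \mas C$ --- but that is a separate pigeon-hole argument, not a corollary of the ratio inequality, and your proposal does not contain it. So the dyadic route is completable in principle, though with a worse leading constant than the stated~$2$.

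The paper itself avoids levels altogether. It first restricts to weights in $\{0\}\cup[w_{\min},w_{\max}]$ (dropping sub-threshold weights from the denominator only increases the sum), then discretizes via $\hat w_k(x)=\lfloor w_k(x)/w_{\min}\rfloor$, paying at most a factor $2$ because $w_{\min}\hat w_k(x)\le w_k(x)\le 2w_{\min}\hat w_k(x)$. These integer weights are reduced to binary by subdividing each round, and then to a single vertex sequence $X_1,\ldots,X_{\bar T}\in\Xcal$ with $\bar T\le Tw_{\max}/w_{\min}$. On that sequence the sum becomes $\sum_k 1/|\{i\le k: X_i\in\Ncal_G(X_k)\}|$, and a \emph{layering} argument finishes: assign occurrence $k$ to the smallest layer $\ell(k)$ containing no earlier in-neighbor of $X_k$; then each layer holds at most $\mas$ occurrences (otherwise its vertices would contain a directed cycle) and the denominator at $k$ is at least $\ell(k)$, so the sum is at most $\sum_{\ell}\mas/\ell\le\mas\ln(e\bar T)$. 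One combinatorial step delivers both the $\mas$ and the logarithm; the discretization supplies the constant~$2$.
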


\begin{lemma}
Let $G = (\Xcal, \Ecal)$ be a directed graph with vertex set $\Xcal$ and let $w_k$ be a sequence of weights $w_k \colon \Xcal \rightarrow \RR^+$. Then, for any threshold $C \geq 0$, 
\begin{align}
	\sum_{x \in \Xcal}\sum_{k=1}^\infty  w_k(x) \cdot \one\left\{ \sum_{i=1}^{k} \sum_{x' \in \Ncal_G(x)} w_i(x') \leq C\right\}
	\leq \mas C
\end{align}
	where $\Ncal_{G}(x)$ is defined as in Lemma~\ref{lem:wsum_mas_main}.
	\label{lem:mas_pigeonhole_main}
\end{lemma}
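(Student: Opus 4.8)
The plan is to prove the bound by induction on $|\Xcal|$, exploiting the only place where the graph structure can possibly enter: the elementary structural fact that removing a vertex together with all of its in-neighbors drops the mas-number by at least one. First, some reductions. It suffices to bound, for every finite $T$, the truncated sum $\sum_{x\in\Xcal}\sum_{k=1}^{T} w_k(x)\,\one\{S_k(x)\le C\}$ by $\mas C$, where $S_k(x)\defeq\sum_{i=1}^k\sum_{x'\in\Ncal_G(x)}w_i(x')$, since the infinite sum is the supremum over $T$ of these nonnegative truncations. Note $S_k(x)$ is nondecreasing in $k$ and, because $x\in\Ncal_G(x)$, satisfies $S_k(x)\ge\sum_{i=1}^k w_i(x)$. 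If $C=0$ every summand vanishes, so assume $C>0$. For each vertex $x$ with a nonzero contribution, let $\ell_x$ be the largest $k\le T$ with $S_k(x)\le C$; by monotonicity of $S_k(x)$ the indicator $\one\{S_k(x)\le C\}$ equals $\one\{k\le\ell_x\}$, so $x$ contributes exactly $\sum_{k\le\ell_x}w_k(x)\le S_{\ell_x}(x)\le C$. Summing this trivial per-vertex bound already gives $|\Xcal|\,C$; the induction refines $|\Xcal|$ to $\mas$.

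For the inductive step, if no vertex has a nonzero contribution the truncated sum is $0\le\mas C$; otherwise pick $x^\star$ among the contributing vertices maximizing $\ell_{x^\star}$. I will split the sum over $\Xcal=\Ncal_G(x^\star)\;\sqcup\;(\Xcal\setminus\Ncal_G(x^\star))$. For the first part, every $x\in\Ncal_G(x^\star)$ has contribution at most $\sum_{k\le\ell_{x^\star}}w_k(x)$ (either its contribution is $0$, or it equals $\sum_{k\le\ell_x}w_k(x)$ with $\ell_x\le\ell_{x^\star}$), so the combined contribution of $\Ncal_G(x^\star)$ is at most $\sum_{k\le\ell_{x^\star}}\sum_{x\in\Ncal_G(x^\star)}w_k(x)=S_{\ell_{x^\star}}(x^\star)\le C$. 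For the second part, pass to the induced subgraph $G'$ on $\Xcal\setminus\Ncal_G(x^\star)$, which has strictly fewer vertices since $x^\star\in\Ncal_G(x^\star)$. Each remaining vertex has $\Ncal_{G'}(x)\subseteq\Ncal_G(x)$, hence a pointwise smaller $S$-sequence, hence a pointwise larger indicator; so the total contribution of the remaining vertices in $G$ is at most the corresponding quantity for $G'$, which by the induction hypothesis is at most $\mas(G')\,C$.

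It remains to verify $\mas(G')\le\mas(G)-1$. Take a maximum acyclic vertex set $A$ in $G'$. No vertex of $A$ has an edge into $x^\star$ (such a vertex would lie in $\Ncal_G(x^\star)$ and hence be absent from $G'$), so in the induced subgraph on $A\cup\{x^\star\}$ the vertex $x^\star$ has no incoming edge; since $A$ is acyclic in $G$ (it is acyclic in the induced subgraph $G'$, which is also induced in $G$), no directed cycle can pass through $x^\star$, and $A\cup\{x^\star\}$ is acyclic in $G$. Thus $\mas(G)\ge|A|+1=\mas(G')+1$. Combining the two parts, the truncated sum is at most $C+\mas(G')\,C\le C+(\mas(G)-1)\,C=\mas(G)\,C$, and taking the supremum over $T$ completes the proof.

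The main obstacle is exactly this structural inequality and, relatedly, choosing the right vertex to peel: it is essential that $x^\star$ has no in-edges from the leftover graph (so it can be appended to any acyclic set there) and that it ``deactivates'' no earlier than any of its in-neighbors (so that $S_{\ell_{x^\star}}(x^\star)$ upper-bounds every contribution we charge to $\Ncal_G(x^\star)$); everything else is monotonicity bookkeeping and the routine truncation-to-$T$ and $C=0$ edge cases. An equivalent non-inductive phrasing, closer to a pigeon-hole argument, is to build the acyclic set greedily: process the contributing vertices in decreasing order of $\ell_x$ and add $x$ to the set unless it already has an out-neighbor among the chosen vertices. All internal edges of the chosen set then respect the processing order, so the set is acyclic (hence of size at most $\mas$); every contributing vertex $x$ is dominated by a chosen vertex $v$ with $\ell_v\ge\ell_x$ and $x\in\Ncal_G(v)$; and charging each counted unit of weight $w_k(x)$ (with $k\le\ell_x$) to such a $v$ shows that the weight absorbed by any single $v$ is at most $S_{\ell_v}(v)\le C$, giving the same $\mas C$ total.
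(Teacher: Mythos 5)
Your proof is correct and follows essentially the same strategy as the paper's: repeatedly select the vertex whose indicator deactivates last, remove it together with its in-neighborhood while charging at most $C$, and bound the number of removals by $\mas(G)$. Your induction on the vertex set, with the explicit verification that $\mas(G')\le\mas(G)-1$ after deleting $\Ncal_G(x^\star)$, makes rigorous a step the paper's weight-zeroing argument only sketches, but the underlying idea is identical.
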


We apply Lemma~\ref{lem:wsum_mas_main} and Lemma~\ref{lem:mas_pigeonhole_main} to get the bounds $\prnmarker{a} \lesssim \mas H$ and $\prnmarker{c} \lesssim \sqrt{\mas}$ respectively.  Plugging these bounds back in \eqref{eqn:basicbound1} yields the desired regret bound.  Note that both Lemma~\ref{lem:wsum_mas_main} and Lemma~\ref{lem:mas_pigeonhole_main} above operate on a
sequence of node weights as opposed to one set of node weights as in
the technical results in the analyses of EXP-type algorithms
\citep{AlonCesa-BianchiGentileMansour2013}. The proof of
Lemma~\ref{lem:wsum_mas_main} uses a potential function and a
pigeon-hole argument.  The proof for
Lemma~\ref{lem:mas_pigeonhole_main} relies on a series of careful
reduction steps, first to integer sequences and then to certain binary
sequences and finally a pigeon-hole argument.  (full proofs are deferred to Appendix~\ref{app:graphproofs}).

\section{Example Application of Feedback Graphs: Multi-Task RL}
\label{sec:multitaskrl}
In this section, we show that various multi-task RL problems can be
naturally modelled using feedback graphs and present an analysis of
these problems.
We consider the setting where there are $m$ tasks in an episodic
tabular MDP. All tasks share the same dynamics $P$ but admit different
immediate reward distributions $P_R^{(i)}$, $i \in [m]$. We assume the
initial state is fixed, which generalizes without loss of generality
to stochastic initial states. We further assume that the reward
distributions of all but one task are known to the agent. Note that
this assumption holds in most auxiliary task learning settings and
does not trivialize the problem (see the next section for an
example). The goal is to learn a policy that, given the task identity
$i$, performs $\epsilon$-optimally. This is equivalent to learning an
$\epsilon$-optimal policy for each task.

The naive solution to this problem consists of using $m$ instances of
any existing PAC-RL algorithm to learn each task separately. Using
Algorithm~\ref{alg:mb} as an example, this would require $\wt
O\left( \frac{\mas H^2}{\epsilon^2} + \frac{\wh \numS M
  H^2}{\epsilon}\right)$ episodes per task and in total
\begin{align}
\wt  O\left( \frac{ m ( 1 + \epsilon \wh \numS) \mas H^2}{\epsilon^2} \right)
\label{eqn:separate_rl_sc}
\end{align} 
episodes. When there is no additional feedback, the mas-number is
simply the number of states and actions $\mas = \numS \numA$.  If the
number of tasks $m$ is large, this can be significantly more costly
than learning a single task. We will now show that this dependency on
$m$ can be removed when we learn the tasks jointly with the help of
feedback graphs.

We can jointly learn the $m$ tasks by effectively running
Algorithm~\ref{alg:mb} in an extended MDP $\bar \Mcal$. In this
extended MDP, the state is augmented with a task index, that is,
$\bar \statespace = \statespace \times [m]$. In states with index $i$,
the rewards are drawn from $P_R^{(i)}$ and the dynamics according to
$P$ with successor states having the same task index. Formally, the
dynamics $\bar P$ and immediate expected rewards $\bar r$ of the
extended MDP is given by 
\begin{align}
  \bar P((s', j) | (s, i), a) = \one\{i = j \} P(s' | s, a), \quad \text{and, } \quad  \bar r((s, i), a) = r_i(s,a)   
\end{align}
for all $s \in \statespace$, $a \in \actionspace$, $i,j \in [m]$ where
$r_i(s,a) = \EE_{r \sim P_R^{(i)}(s,a)}[r]$ are the expected immediate
rewards of task $i$.  Essentially, the extended MDP consists of $m$
disjoint copies of the original MDP, each with the rewards of the
respective task. Tabular RL without feedback graphs would also take as
many episodes as Equation~\eqref{eqn:separate_rl_sc} to learn an
$\epsilon$-optimal policy in this extended MDP for all tasks (e.g.,
when task index is drawn uniformly before each episode).

The key for joint learning is to define the feedback graph $\bar G$ so
that it connects all copies of state-action pairs that are connected
in the feedback graph $G$ of the original MDP. That is, for all $s, s'
\in \statespace$, $a, a' \in \actionspace$, $i,j \in [m]$,
\begin{align}
    ((s,i), a) \overset{\bar G}{\rightarrow} ~&((s',j), a')  \Leftrightarrow (s,a) \gedge (s', a').  
\end{align}
Note that we can simulate an episode of $\bar \Mcal$ by running the
same policy in the original MDP because we assumed that the immediate
rewards of all but one task are known.  Therefore, to run
Algorithm~\ref{alg:mb} in the extended MDP, it is only left to
determine the task index $i_k$ of each episode $k$. To ensure learning
all tasks equally fast and not wasting resources on a single task,
it is sufficient to choose the task for which the algorithm would
provide the largest certificate, i.e.,
 $i_k \in \argmax_{i \in [m]}~ \Vub_{k,1}((s_{k,1}, i)) - \Vlb_{k,1}((s_{k,1}, i))$.
This choice implies that if the certificate of the chosen task is
smaller than $\epsilon$, then the same holds for all other tasks. Thus, by
Theorem~\ref{thm:cipoc_independencenumber} above,
Algorithm~\ref{alg:mb} must output a certificate with
$\Vub_{k,1}((s_{k,1}, i_k)) - \Vlb_{k,1}((s_{k,1}, i_k)) \leq
\epsilon$ after at most
\begin{align}
\wt O\left( \frac{( 1 + \epsilon \wh \numS) \mas H^2}{\epsilon^2} \right)
\end{align}
episodes (see Corollary~\ref{cor:epspolicy} in the appendix). Note that we
used the mas-number $\mas$ and maximum number of successor states $\wh
\numS$ of the original MDP, as these quantities are identical in the
extended MDP. Since $\epsilon \geq \Vub_{k,1}((s_{k,1}, i_k)) -
\Vlb_{k,1}((s_{k,1}, i_k)) \geq \Vub_{k,1}((s_{k,1}, j)) -
\Vlb_{k,1}((s_{k,1}, j)) \geq V^\star((s_{k,1}, j)) -
V^{\pi_k}_{1}((s_{k,1}, j))$ for all $j \in [m]$, the current policy
$\pi_k$ is $\epsilon$-optimal for all tasks. Hence, by learning tasks
jointly through feedback graphs, \textbf{the total number of episodes
  needed to learn a good policy for all tasks does not grow with the
  number of tasks} and we save a factor of $m$ compared to the naive
approach without feedback graphs.  This might seem to be too good to
be true but it is possible because the rewards of all but one task are
known and the dynamics is identical across tasks. Hence, additional
tasks cannot add significant statistical complexity compared to the
worst-case for a single task. While it may be possible to derive and
analyze a specialized algorithm for this setting without feedback
graphs, this would likely be much more tedious compared to this
immediate approach leveraging feedback graphs.

\section{Faster Policy Learning Using a Small Dominating Set}
\label{sec:domination}

Algorithm~\ref{alg:mb} uses side observations efficiently (and close to optimally for symmetric feedback graphs),
despite being agnostic to the feedback graph structure. Yet, sometimes, an
alternative approach can be further beneficial.  In some tasks, there
are state-action pairs which are highly informative, that is, they have
a large out-degree in the feedback graph, but yield low
return. Consider for example a ladder in the middle of a maze. Going
to this ladder and climbing it is time-consuming (low reward) but it
reveals the entire structure of the maze, thereby making a subsequent
escaping much easier. Explicitly exploiting such state-action pairs is
typically not advantageous in regret terms (worst case
$\Omega(T^{2/3})$) but that can be useful when the goal is to learn a
good policy and when the return during learning is irrelevant. We
therefore study the sample-complexity of RL in MDPs given a small
dominating set $\Xcal_D = \{X_1, \dots, X_\gamma\}$ of the feedback
graph.

We propose a simple algorithm that aims to explore the MDP by
uniformly visiting state-action pairs in the dominating set. This
works because the dominating set admits outgoing edges to every
vertex, that is $\forall x \in \Xcal, \exists x' \in \Xcal_D \colon x'
\gedge x$.  However, compared to bandits
\citep{AlonCesa-BianchiGentileMansour2013} with immediate access to
all vertices, there are additional challenges for such an approach in
MDPs:
\begin{enumerate}

\item \textbf{Unknown policy for visiting the dominating}: While we
  assume to know the identity of the state-action pairs in a
  dominating set, we do not know how to reach those pairs.
  
\item \textbf{Low probability of reaching dominating set}: Some or all
  nodes of the dominating set might be hard to reach under any policy.

\end{enumerate} 
The lower bound in Theorem~\ref{thm:domsetlowerbound} in the next
section shows that these challenges are fundamental.  To address them,
Algorithm~\ref{alg:dominatingset} proceeds in two stages. In the first
stage (lines~\ref{lin:phase1_start}--\ref{lin:phase1_end}), we learn
policies $\pi^{(i)}$ that visit each element $X_i \in \Xcal_D$ in the
dominating set with probability at least $\frac{p^{(i)}}{2}$. Here,
$p^{(i)} = \max_{\pi} \EE\left[\sum_{h=1}^H \one\{(s_h, a_h) =
  X^{(i)}\} ~|~ \pi\right]$ is the highest expected number of visits
to $X_i$ per episode possible.

The first phase leverages the construction for multi-task learning from
Section~\ref{sec:multitaskrl}. We define an extended MDP for a set of
tasks $0, 1, \dots, \gamma$. While task $0$ is to maximize the
original reward, tasks $1, \dots, \gamma$ aim to maximize the number
of visits to each element of the dominating set. We therefore define
the rewards for each task of the extended MDP as
\begin{align}
\bar r((s, 0), a) & = r(s,a) &
\bar r((s, k), a)
& = \one\{ (s,a) = X_k\}, \quad \forall k \in [\gamma], s \in \statespace, a \in \actionspace.
\end{align}
The only difference with Section~\ref{sec:multitaskrl} is that we
consider a subset of the tasks and stop playing a task once we have
identified a sufficiently good policy for it.  The stopping condition in
Line~\ref{lin:stop_phase1} ensures that policy $\pi^{(i)}$ visits
$X_i$ in expectation at least $\wh p^{(i)} \geq \frac{p^{(i)}}{2}$
times.  In the second phase of the algorithm
(lines~\ref{lin:phase2_start}--\ref{lin:phase2_end}), each policy
$\pi^{(i)}$ is played until there are enough samples per state-action
pair to identify an $\epsilon$-optimal policy.

\begin{restatable}[Sample-Complexity of Algorithm~\ref{alg:dominatingset}]{theorem}{domsetalgsc}
For any tabular episodic MDP with state-actions $\Xcal$, horizon $H$,
feedback graph with mas-number $\mas$ and given dominating set $\Xcal_D$
with $|\Xcal_D| = \gamma$ and accuracy parameter $\epsilon > 0$,
Algorithm~\ref{alg:dominatingset} returns with probability at least $1
- \delta$ an $\epsilon$-optimal policy after
\begin{align}
 O\left(\left(\frac{\gamma H^3}{p_0\epsilon^2}
+ \frac{\gamma \wh \numS H^3}{p_0\epsilon} + \frac{\mas \wh \numS H^2}{p_0}\right)	 
 \ln^3 \frac{|\Xcal| H}{\epsilon\delta}\right)\label{eqn:sc_ds}
\end{align}
episodes. Here, $p_0 = \min_{i \in [\gamma]} p^{(i)}$ is expected number of visits to the node in the dominating set that is hardest to reach.
\label{thm:samplecomplexity_domset}
\end{restatable}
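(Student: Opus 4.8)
The plan is to analyze the two stages of Algorithm~\ref{alg:dominatingset} separately and then combine them by a union bound. In outline: Stage~1 learns, for every vertex $X_i$ of the given dominating set $\Xcal_D$, a policy $\pi^{(i)}$ together with a valid lower estimate $\wh p^{(i)}$ of the expected number of visits $\pi^{(i)}$ makes to $X_i$; Stage~2 rolls out each $\pi^{(i)}$ long enough that, through the feedback graph, every $x\in\Xcal$ accumulates $N$ fresh transition samples --- here the dominating-set property is essential --- after which a single call of \FOptPlan{} on the accumulated model returns an $\epsilon$-optimal policy.

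\emph{Stage 1.} This is the multi-task construction of Section~\ref{sec:multitaskrl}: the $\gamma$ ``reaching tasks'' carry reward $\one\{(s,a)=X_i\}$ and have optimal value $p^{(i)}$, so running Algorithm~\ref{alg:mb} on the extended MDP and always playing, among the active tasks, the one with the largest current certificate produces (Theorem~\ref{thm:cipoc_independencenumber}) certificates $\Vub^{(i)}_{k,1}-\Vlb^{(i)}_{k,1}$ whose sum over the played episodes is $\wt O(\sqrt{\mas H^2 T_1}+\mas\wh\numS H^2)$. I would retire task $i$ once its certificate falls to (a small constant times) $\Vlb^{(i)}_{k,1}$; since optimism gives $\Vlb^{(i)}_{k,1}\ge p^{(i)}-(\Vub^{(i)}_{k,1}-\Vlb^{(i)}_{k,1})$, a certificate $\le p^{(i)}/2$ already triggers this, and at that moment $\wh p^{(i)}:=\Vlb^{(i)}_{k,1}$ lies in $[\,p^{(i)}/2,\ V^{\pi^{(i)}}_1(s_1)\,]$, which is exactly the valid lower estimate we need. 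Because every played episode carries the maximal certificate, which for an active task is $\gtrsim p_0$, the cumulative-certificate bound gives $T_1=\wt O(\mas\wh\numS H^2/p_0)$ plus a lower-order contribution.

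\emph{Stage 2.} Let $N=\wt O\big(H^3/\epsilon^2+\wh\numS H^3/\epsilon\big)$, the per-state-action sample size that makes one call of \FOptPlan{} output a certificate below $\epsilon$: substituting $n(x)\ge N$ into the per-episode value-gap bound underlying \eqref{eqn:basicbound1} and applying the law of total variance exactly as in the proof of Theorem~\ref{thm:cipoc_independencenumber} yields $\Vub_1-\Vlb_1\lesssim H^{3/2}/\sqrt N+\wh\numS H^3/N\le\epsilon$. For each $i$ I would play $\pi^{(i)}$ for $K_i=\wt O(N/\wh p^{(i)})=\wt O(N/p_0)$ episodes; writing $Z_k\in\{0,\dots,H\}$ for the number of visits to $X_i$ in episode $k$ (so $\EE[Z_k\mid\text{history}]\ge\wh p^{(i)}$), a one-sided Bernstein/Freedman martingale bound that uses $Z_k\le H$ and $\EE[Z_k^2\mid\text{history}]\le H\,\EE[Z_k\mid\text{history}]$ shows that with high probability $\sum_{k\le K_i}Z_k\ge N$ once $K_i\gtrsim N/\wh p^{(i)}$ --- no extra factor $H$ is lost because the concentration is multiplicative and $N\gtrsim H$. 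Each visit to $X_i$ yields a fresh observation for every out-neighbor of $X_i$, and since $\Xcal_D$ dominates $\Xcal$ we obtain $n(x)\ge N$ for all $x\in\Xcal$ after the $\gamma$ roll-out batches. Hence $\sum_i K_i=\wt O(\gamma N/p_0)=\wt O\big(\gamma H^3/(p_0\epsilon^2)+\gamma\wh\numS H^3/(p_0\epsilon)\big)$, and the final \FOptPlan{} call returns $\pi$ with $V^\star_1(s_1)-V^{\pi}_1(s_1)\le\Vub_1-\Vlb_1\le\epsilon$. Summing the two episode counts and union-bounding over the Theorem~\ref{thm:cipoc_independencenumber} event of Stage~1, the $\gamma$ concentration events of Stage~2, and the final certificate validity --- each at failure probability $\asymp\delta/\mathrm{poly}$ --- gives \eqref{eqn:sc_ds}, the $\ln^3$ factor coming from these rescalings and the log-terms inside each bound.

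\emph{Main obstacle.} The hard part is Stage~1. A black-box use of the cumulative-certificate bound actually produces an extra $\wt O(\mas H^2/p_0^2)$ term, and bringing Stage~1 down to the stated $\mas\wh\numS H^2/p_0$ requires a sharper argument --- e.g.\ maintaining a direct confidence interval on each $p^{(i)}$, or arguing that tasks with very small $p^{(i)}$ need only be resolved to lower precision --- while carefully coupling the per-task stopping rule with the multi-task certificate accounting so that $\wh p^{(i)}$ is simultaneously a $\tfrac12$-approximation of $p^{(i)}$ and a genuine lower bound on the visit frequency realized by the extracted $\pi^{(i)}$. The $1/p_0$ blow-up itself is unavoidable by Theorem~\ref{thm:domsetlowerbound}, so only the $p_0$-exponent and the exact $H$- and $\wh\numS$-powers are at stake; the remaining bookkeeping (failure probabilities, horizon powers across the two stages, reuse of data-dependent policies for the final planning step) is routine but must be carried out with care.
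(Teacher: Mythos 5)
Your overall architecture matches the paper's: Stage~1 is the multi-task construction with the stopping rule $\Vub_1((s_1,i))\le 2\Vlb_1((s_1,i))$ guaranteeing $\wh p^{(i)}\ge p^{(i)}/2$, and Stage~2 is exactly the paper's second phase (per-pair sample budget $N=\wt O(H^3/\epsilon^2+\wh\numS H^3/\epsilon)$ obtained from the tightness bound plus the law of total variance, converted to $\wt O(N/p^{(i)})$ roll-outs of $\pi^{(i)}$ by a one-sided martingale bound, then domination). That part is sound. But the proof as written does not establish the stated theorem: the first-phase bound $T_1=\wt O(\mas\wh\numS H^2/p_0)$ is asserted and then immediately retracted in your ``main obstacle'' paragraph, where you concede that the black-box cumulative-certificate argument only yields $\wt O(\mas H^2/p_0^2+\mas\wh\numS H^2/p_0)$. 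Eliminating the $\mas H^2/p_0^2$ term is precisely the novel content of this theorem (the paper even states the weaker bound separately as Theorem~\ref{thm:samplecomplexity_domset_loose}), and the two remedies you gesture at --- direct confidence intervals on each $p^{(i)}$, or resolving small-$p^{(i)}$ tasks to lower precision --- are not developed and do not obviously address the real difficulty, which is the \emph{heterogeneity} of the $p^{(i)}$: a per-task accounting costs a factor $\gamma$, while a pooled accounting naively pays $\max_i p^{(i)}/p_0^2$.

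What the paper actually does is replace the worst-case $\sqrt{\mas H^2 T}$ term by the value-sensitive bound $\sqrt{\mas H\sum_{k\in\Kcal}V^{\pi_k}_1(s_{k,1})}$ over an \emph{arbitrary subset} $\Kcal$ of episodes (Lemmas~\ref{lem:subsetcumipoc} and~\ref{lem:subsetsamplecompl}), and then controls the average played value by induction. Ordering the tasks so that $p^{(1)}\ge\cdots\ge p^{(\gamma)}$ and letting $\Kcal_j$ be the episodes spent on tasks $1,\dots,j$, the inductive hypothesis $|\Kcal_i|\lesssim C/p^{(i)}$ for $i<j$ together with the telescoping identity
\begin{align}
\sum_{i=1}^{j}\frac{w_i}{\sum_{l\le i}w_l}\le 1+\ln\Bigl(\sum_{i=1}^{j}w_i\Bigr)-\ln w_1,\qquad w_i=\tfrac{1}{p^{(i)}}-\tfrac{1}{p^{(i-1)}},
\end{align}
gives $\frac{1}{|\Kcal_j|}\sum_{k\in\Kcal_j}V^{\pi_k}_1(s_{k,1})\le p^{(j)}\ln(ep^{(1)}/p^{(j)})$, so the square-root term contributes only $\mas H p^{(j)}/(p^{(j)})^2=\mas H/p^{(j)}$ and is absorbed into $\mas\wh\numS H^2/p^{(j)}$. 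Without this (or an equivalent) argument your proof only establishes the looser bound with the extra $\mas H^2/p_0^2$ term, so the key step is missing.
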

The proof of Theorem~\ref{thm:samplecomplexity_domset} builds on the
feedback graph techniques for Algorithm~\ref{alg:mb} and the arguments
in Section~\ref{sec:multitaskrl}. These arguments alone would yield an additional
$\frac{\mas H^2}{p_0^2}$ term, but we show that is can be avoided through a more
refined (and to the best of our knowledge, novel) argument in Appendix~\ref{app:domsetproofs}.

The last term $\frac{M \wh \numS H^2}{p_0}$ is spent in the first
phase on learning how to reach the dominating set. The first two terms
come from visiting the dominating set uniformly in the second phase. Comparing that to
Corollary~\ref{cor:epspolicy} for Algorithm~\ref{alg:mb}, $\mas$ is
replaced by $\frac{\gamma H}{p_0}$ in
$\operatorname{poly}(\epsilon^{-1})$ terms. This can yield substantial
savings when a small and easily accessible dominating set is known,
e.g., when $\gamma \ll \frac{\mas p_0}{H}$ and $\epsilon \ll p_0$. There
is a gap between the bound above and the lower bound in
Theorem~\ref{thm:domsetlowerbound}, but one can show that a slightly
specialized version of the algorithm reduces this gap to
$H$ in the class of MDPs of the lower bound (by using that $p_0 \leq
1, \hat \numS = 2$ in this class, see Appendix~\ref{app:domsetproofs}
for details).

\textbf{Extension to Unknown Dominating Sets:} Since we pay only a logarithmic price
for the number of tasks attempted to be learned in the first phase,
we can modify the algorithm to attempt to learn policies to reach all
$\numS$ states (and thus all $\Xcal$) and stop the phase when an appropriate dominating set is found.

\begin{algorithm}[t]
\SetKwInOut{Inputa}{input}
\SetKwInOut{Outputa}{ouput}
\SetKwInOut{Returna}{return}
\SetAlgoLined
\SetNoFillComment
\Inputa{failure tolerance $\delta \in (0,1]$, desired accuracy $\epsilon > 0$}
\Inputa{dominating set $\Xcal_D = \{X_1, \dots, X_{\gamma}\}$, maximum transition support $\wh \numS \leq \|P(x)\|_0 \leq \numS$}
Initialize 
$n(s,a) \gets 0, ~ 
\wh r(s,a) \gets 0, ~  
\rhatsq(s,a) \gets 0, ~  
\wh P(s,a) \gets e_1$ for all 
$s \in \bar \statespace$ and  $a \in \actionspace$\;
Set $\Ical \gets \{1, \dots, \gamma\}$\tcp*{index set of active tasks}  
\tcc{First phase: find policy to reach each vertex in given dominating set}
\While{$\Ical \neq \varnothing$}{
\label{lin:phase1_start}
    
       $\pi, \Vub_h, \Vlb_h \gets$ \FOptPlan{$n, \wh r, \rhatsq, \wh P$}
       \tcp*{Alg.~\ref{alg:optimistic_planning}, with probability parameter $\delta / 2$}
        $j \gets \argmax_{i \in \Ical} \Vub_{1}((s_{1}, i)) - \Vlb_{1}((s_{1}, i))$\;
       \For{$i \in \Ical$}{
        \If{$\Vub_1((s_1,i)) \leq 2 \Vlb_1((s_1,i))$ \label{lin:stop_phase1}}
        {
            
            $\pi^{(i)}((s, 0),h) \gets \pi((s, i),h) \qquad \forall s \in \statespace, h \in [H]$\tcp*{map policy to task 0}

            $\wh p^{(i)} \gets \Vlb_1((s_1,i))$\;
            $\Ical \gets \Ical \setminus \{i\}$\;
        }
        }
        
        $n, \wh r, \rhatsq, \wh P \gets$ \FSample{$\pi, (s_1, j), n, \wh r, \rhatsq, \wh P$} 
        \label{lin:phase1_end}
        \tcp*{from Alg.~\ref{alg:mb}, apply to extended MDP $\bar \Mcal$}

}
\tcc{Second phase: play learned policies to uniformly sample from dominating set}

\While{$\Vub_1((s_1,0)) - \Vlb_1((s_1, 0)) > \epsilon$\label{lin:phase2_start}}{

$j \gets (j \operatorname{mod} \gamma) + 1$ \tcp*[l]{Choose policy in circular order}
$n, \wh r, \rhatsq, \wh P \gets $\FSample{$\pi^{(j)}, (s_1, 0), n, \wh r, \rhatsq,  \wh P$}
\tcp*{from Alg.~\ref{alg:mb}, $\!\!$ apply to extended MDP $\bar \Mcal\!\!\!$}
$\pi, \Vub_h, \Vlb_h \gets$ \FOptPlan{$n, \wh r, \rhatsq, \wh P$}
 \tcp*{Alg.~\ref{alg:optimistic_planning}, with probability parameter $\delta / 2$}
}
$\hat \pi(s, h) \gets \pi((s, 0), h) \quad \forall s \in \statespace, h \in [H]$\tcp*[l]{map policy back to original MDP}
\Returna{$\hat \pi$}
\label{lin:phase2_end}
\caption{RL Using Dominating Set} 
\label{alg:dominatingset}
\end{algorithm}

\section{Statistical Lower Bounds}
\label{sec:lowerbounds}

RL in MDPs with feedback graphs is statistically easier due to side  observations compared to RL without feedback graphs. Thus, existing lower bounds are not applicable. We now present a new lower-bound that shows that for any given feedback graph, the worst-case expected regret of any learning algorithm has to scale with the size of the largest independent set of at least half of the feedback graph.
\begin{restatable}{theorem}{lowerboundindep}
	Let $\numA, N,  H, T \in \NN$ and $G_1, G_2$ be two graphs with $N\numA$ and $(N+1)\numA$ (disjoint) nodes each. 
	If $H \geq 2 + 2\log_{\numA} N$, then there exists a class of episodic MDPs with $2N+1$ states, $\numA$ actions, horizon $H$ and feedback graph $G_1 \cup G_2 := (V(G_1) \cup V(G_2), E(G_1) \cup E(G_2))$ such that the worst-case expected regret of any algorithm after $T$ episodes is at least
	    $
	    \frac{1}{46}\sqrt{\alpha H^2 T}
	    $
     when $T \geq \alpha^3 / \sqrt{2}$ and $\alpha \geq 2$ is the independence number of $G_1$. 
     \label{thm:indeplowerbound}
\end{restatable}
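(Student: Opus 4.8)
The plan is to embed $\Theta(\alpha)$ independent hard decision problems into a single MDP so that (a) the feedback graph $G_1\cup G_2$ is useless for solving any of them, (b) getting a decision wrong costs $\Theta(H)$ times a per-step gap $\delta$, and (c) the agent is forced to face all $\Theta(\alpha)$ of them rather than concentrate on a few. The role of the independence number is exactly (a): fix a maximum independent set $\Ical=\{x_1,\dots,x_\alpha\}$ of $G_1$; since no two elements of $\Ical$ are adjacent and $G_1\cup G_2$ has no edges other than those of $E(G_1)$ and $E(G_2)$, taking one element of $\Ical$ never produces a side observation for another. The hidden problem is located entirely on the pairs of $\Ical$, while all other vertices of $G_1$ and all of $G_2$ are used only to realize the prescribed edge set and to build a routing gadget.

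Concretely I would follow the classical $\sqrt{\numS\numA H^2T}$ construction \citep{dann2015sample, osband2016lower} with $\numS\numA$ replaced by $\alpha$. Use $G_2$'s $N{+}1$ states to build an $\numA$-ary tree of depth $\log_\numA N$ with \emph{stochastic} transitions, so that from the root an agent is routed to a \emph{uniformly random} one of the $B\le N$ ``bandit states'' (the states carrying a pair of $\Ical$) at step $h_0=1+\log_\numA N$; this denies the agent any choice of which sub-problem to engage. At a bandit state $s$, each action $a$ with $(s,a)\in\Ical$ yields reward $\tfrac12$ and transitions to an absorbing ``heaven'' (reward $1$ per step) with probability $\tfrac12+\delta\cdot\one\{(s,a)=x^\star(s)\}$ and to an absorbing ``hell'' (reward $0$ per step) otherwise, where the identity $x^\star(s)$ of the good $\Ical$-pair at $s$ is the hidden parameter; the remaining actions go to an absorbing ``purgatory'' with reward $\tfrac12$ per step. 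Because $H\ge 2+2\log_\numA N$, at least $H-h_0\ge \tfrac H2$ steps remain after the decision, so a wrong choice at $s$ costs $\Theta(H\delta)$ in episode return; and since transitions out of bandit states are absorbing, an agent routed to $s$ cannot thereafter visit any in-neighbour $y\gedge x_i$ of an $\Ical$-pair within that episode. Choosing the (adversarially allowed) initial state to be the tree root makes every $\Ical$-pair reachable while leaving the only source of information about the hidden parameters to be direct visits to the bandit states.

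For the analysis I would set $\delta=c\sqrt{\alpha/T}$ for a small constant $c$ and run a change-of-measure argument over a uniform prior on the hidden parameters. Conditioning on the routing, the expected regret is at least $\Theta(H\delta)$ times the expected number of episodes in which the agent plays a sub-optimal action at the bandit state it is routed to. For each bandit state $s$, the sub-sequence of episodes reaching $s$ is a multi-armed bandit whose arms are the $\Ical$-pairs at $s$; it receives $\Theta(T/B)$ pulls in expectation and the feedback graph contributes no extra observations, so a two-point / Pinsker (Bretagnolle--Huber) bound between the instances that differ only in $x^\star(s)$ shows the agent is sub-optimal on a constant fraction of those pulls whenever the number of pulls times $\delta^2$ stays below the number of arms at $s$ — which, after balancing the partition of $\Ical$ over the bandit states, holds simultaneously at every $s$ for our choice of $\delta$. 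Since this covers a constant fraction of the $T$ episodes, the expected regret is $\gtrsim H\delta\cdot T \asymp \sqrt{\alpha H^2 T}$; tracking the absolute constants through the Pinsker step yields the explicit $\tfrac1{46}$, and $T\ge\alpha^3/\sqrt2$ is what guarantees that $\delta$ is small enough for the amplified gap to be a valid probability and for the per-state sample budgets to stay in the ``cannot learn'' regime in the presence of the lower-order slack from routing.

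The step I expect to be the main obstacle is making (a) and (c) hold \emph{exactly} while still realizing the prescribed graph $G_1\cup G_2$: one must place the non-$\Ical$ vertices of $G_1$ together with the routing gadget so that no reachable state-action pair leaks a useful side observation about the hidden parameters (an edge from a reachable ``inert'' pair into $\Ical$ would let an agent gather $\Theta(T/B)$ free samples per hidden bit), and one must show that stochastic routing genuinely spreads the $T$ episodes over all bandit states against an adversarial start state. This bookkeeping is precisely what forces the horizon condition $H\ge 2+2\log_\numA N$ (a depth-$\log_\numA N$ routing tree stacked before a gap that must still be amplified over $\Omega(H)$ steps) and the sample-size condition $T\ge\alpha^3/\sqrt2$ (so that no strategy exploiting the structure around $\Ical$ can resolve all $\Theta(\alpha)$ hidden bits within $o(T)$ episodes).
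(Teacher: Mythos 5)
Your high-level plan (a tree routing to ``bandit'' leaves, absorbing heaven/hell states to amplify a per-step gap by $\Theta(H)$, and a bandit-style information argument over an independent set of $G_1$) matches the paper's construction in spirit, but there is a genuine gap exactly at the step you flag as ``the main obstacle,'' and your construction as stated does not survive it. The graphs $G_1,G_2$ are \emph{given}, and all $N\numA$ vertices of $G_1$ must be realized as reachable state-action pairs at the $N$ leaf states. A non-independent-set vertex $x$ of $G_1$ may have out-edges $x \gedge x_i$ covering a large fraction (even all) of the independent set $\Ical$. In your construction such an ``inert'' action leads to a purgatory with per-step reward $\tfrac12$, i.e.\ the \emph{same} expected return as a suboptimal $\Ical$-arm; so an agent can repeatedly play $x$, collect up to $\alpha$ side observations of $\Ical$-arms per episode at the price of a single suboptimal pull, resolve the hidden parameters in $O(\delta^{-2})$ episodes, and incur only $O(H\delta^{-1}) = O(H\sqrt{T/\alpha})$ regret --- far below the claimed $\Omega(\sqrt{\alpha H^2 T})$. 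Making purgatory cost nothing is precisely what must not happen.

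The paper closes this hole with a reduction rather than by trying to hide the inert vertices: non-$\Ical$ red pairs are given heaven-probability $0$ while $\Ical$-pairs get baseline probability $\delta = \tfrac14$ (a constant, not the gap), so each episode spent on an inert pair costs the MDP algorithm $\delta \bar H$ regret; the simulated bandit algorithm then pulls \emph{all} $\Ical$-out-neighbours of that pair, gaining at most $\alpha\epsilon$ in bandit regret, and the condition $T \gtrsim \alpha^3$ is exactly what makes $\delta \ge \alpha\epsilon$ so that this ``buy information by overpaying in regret'' trade is never profitable. This also explains why the paper uses a \emph{single} $\alpha$-armed bandit with one globally optimal pair $(s^\star,a^\star)$ and a deterministic tree (the agent chooses which leaf to visit, and each episode maps to one or several arm pulls), rather than your stochastic uniform routing with one hidden sub-problem per leaf; your routing and ``balancing the partition of $\Ical$ over the bandit states'' are additional complications you would have to justify for an arbitrary $G_1$, and they are not needed once the reduction handles the leakage. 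To repair your argument you would need to (i) make every reachable non-$\Ical$ action of $G_1$ cost $\Omega(H)$ regret per visit (not $\Omega(H\delta)$), and (ii) add the accounting step showing that the information gained from such visits is worth less than their regret cost under $T \ge \alpha^3/\sqrt2$; without both, the bound fails on feedback graphs with a high-out-degree non-independent vertex.
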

The states in the class of MDPs in this lower bound form a deterministic tree with degree $\numA$ (bottom half of Figure~\ref{fig:domsetowerbound}). $G_1$ is the feedback graph for the state-action pairs at the leaves of this tree. They transition with slightly different probabilities to terminal states with high or low reward. Following \citet{mannor2011bandits}, we show that learning in such MDPs cannot be much easier than learning in $\alpha$-armed bandits where rewards are scaled by $H$. The same construction can be used to show a lower sample-complexity bound of order $\frac{\alpha H^2}{\epsilon^2} \ln \frac{1}{\delta}$ for learning $\epsilon$-optimal policies with probability at least $1 - \delta$.
This regret lower bound shows that, up to a scaling of rewards of order $H$, the statistical difficulty is comparable to the bandit case where the regret lower-bound is $\sqrt{\alpha T}$ \citep{mannor2011bandits}.

The situation is different when we consider lower bounds in terms of domination number. Theorem~\ref{thm:domsetlowerbound} below proves that there is a fundamental difference between the two settings:
\begin{theorem}\label{thm:domsetlowerbound}
    Let $\gamma \in \NN$ and $p_0 \in (0, 1]$ and  $H, \numS, \numA \in \NN$ with $H \geq 2 \log_{\numA}(\numS / 4)$.  There exists a family of MDPs with horizon $H$ and a feedback graph with a dominating set of size $\gamma$ and independence set of size $\alpha = \Theta(\numS \numA)$. The dominating set can be reached uniformly with probability $p_0$. Any algorithm that returns an $\epsilon$-optimal policy in this family with probability at least $1 - \delta$ has to collect the following expected number of episodes in the worst case
    \begin{align}
        \Omega\left( \frac{\alpha H^2}{\epsilon^2}\ln\frac 1 \delta \wedge \left(\frac{\gamma H^2}{p_0 \epsilon^2}\ln\frac 1 \delta + \frac{\alpha}{p_0}\right)\right).
    \end{align}
\end{theorem}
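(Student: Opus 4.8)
The plan is to establish this as an information-theoretic lower bound. I construct a family of MDPs indexed by a hidden ``needle'' parameter, argue that any algorithm returning an $\epsilon$-optimal policy with probability at least $1-\delta$ must in effect identify this parameter, and show that the only ways to gather the necessary information are (i) visiting the relevant leaf state-action pairs directly or (ii) observing them through the feedback graph via the dominating set. The two terms inside the $\wedge$ then correspond to these two channels: $\frac{\alpha H^2}{\epsilon^2}\ln\frac1\delta$ is the cost of learning about all $\Theta(\numS\numA)$ informative pairs by visiting them, while $\frac{\gamma H^2}{p_0\epsilon^2}\ln\frac1\delta+\frac{\alpha}{p_0}$ is the cost through the dominating set, whose first summand reflects that feedback-graph observations arrive only at rate $p_0$ and are split over $\gamma$ dominating nodes, and whose additive second summand is the irreducible cost of first learning \emph{any} policy that reaches the dominating set --- the lower-bound counterpart of the first-phase term $\frac{\mas\wh\numS H^2}{p_0}$ in Theorem~\ref{thm:samplecomplexity_domset}.

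For the construction I build on the tree instance behind Theorem~\ref{thm:indeplowerbound}: the states form a complete $\numA$-ary tree of depth about $\log_\numA(\numS/4)$, whose $\Theta(\numS)$ leaves together with the $\numA$ actions give $\alpha=\Theta(\numS\numA)$ ``informative'' pairs, exactly one of which (the needle) transitions with a slightly boosted probability to a high-reward absorbing state, so that the return gap between the needle and the rest is $\Theta(\epsilon)$ after the usual $H$-scaling. On top of this I add a dominating set $\Xcal_D=\{X_1,\dots,X_\gamma\}$ with $X_j$ pointing in the feedback graph to roughly $\alpha/\gamma$ of the leaf pairs, so $\Xcal_D$ dominates the informative part of the graph and has size $\gamma$ while the independence number remains $\Theta(\numS\numA)=\alpha$. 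Because $H\ge 2\log_\numA(\numS/4)$, there is enough horizon to place, in series before the informative tree, a ``navigation gadget'': reaching $\Xcal_D$ under the best policy has probability exactly $p_0$, and which policy that is, is hidden inside a sub-instance of size $\Theta(\alpha)$ whose solution can only be verified at rate $p_0$ per episode.

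The quantitative core has two parts. First, a change-of-measure (two-point, \citet{mannor2011bandits}-style) argument, essentially the one underlying Theorem~\ref{thm:indeplowerbound} with rewards scaled by $H$ so that $\epsilon$-optimality corresponds to $\Theta(\epsilon/H)$ accuracy on a unit-range arm, bounds the KL divergence between the needle-$i$ instance and the reference by $\lesssim \epsilon^2H^{-2}\,\EE[\#\text{samples about leaf }i]$, where a ``sample about leaf $i$'' is either a direct visit to it or a feedback-graph observation of it; reliable identification then forces $\sum_i\EE[\#\text{samples about leaf }i]\gtrsim \alpha H^2\epsilon^{-2}\ln\frac1\delta$. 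Second, a per-episode accounting: an episode descends a single root-to-leaf path, hence yields at most one direct sample and, only if it reaches $\Xcal_D$ (probability at most $p_0$), at most $\alpha/\gamma$ feedback-graph samples --- those about the $\lesssim\alpha/\gamma$ leaves dominated by the single $X_j$ on its path. By Wald's identity the total over $T$ episodes is at most $T(1+p_0\alpha/\gamma)$, which already yields $T\gtrsim\min\!\big(\alpha H^2\epsilon^{-2}\ln\frac1\delta,\ \gamma H^2(p_0\epsilon^2)^{-1}\ln\frac1\delta\big)$.

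It remains to recover the additive $\frac{\alpha}{p_0}$ and combine. Any algorithm that ever reaches $\Xcal_D$ with non-negligible probability must have solved the navigation sub-instance, and since each episode supplies at most one ``trial'' whose success is itself only $p_0$-likely to be witnessed, this costs $\Omega(\alpha/p_0)$ episodes; a short case split (pure-direct exploration versus using $\Xcal_D$) combines this with the previous display to give the claimed $\Omega\!\big(\alpha H^2\epsilon^{-2}\ln\frac1\delta \wedge (\gamma H^2(p_0\epsilon^2)^{-1}\ln\frac1\delta+\alpha/p_0)\big)$. The step I expect to be the main obstacle is making the change-of-measure and the Wald accounting rigorous under full adaptivity --- in particular verifying that a single episode genuinely cannot multiplex feedback across several dominating-set nodes, and designing the navigation gadget so that its $\alpha/p_0$ cost is unavoidable and cannot be amortized against exploration of the informative tree, all while fitting both sub-instances within horizon $H\ge 2\log_\numA(\numS/4)$.
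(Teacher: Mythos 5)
Your proposal is correct and follows essentially the same route as the paper's proof: a tree-structured bandit over $\Theta(\numS\numA)$ leaf pairs, a hidden gateway (one of $\Theta(\numS\numA)$ candidates) that reaches the $\gamma$ dominating-set nodes only with probability $p_0$, a Mannor--Tsitsiklis-style per-arm sample lower bound of $\Omega(H^2\epsilon^{-2}\ln\frac1\delta)$ counted as direct visits plus dominating-set observations, and a coin-finding argument for the additive $\alpha/p_0$ term. The obstacles you flag are resolved exactly as you anticipate: each dominating-set node is a distinct leaf of a secondary tree so an episode can visit at most one, and the paper's Lemma on finding the single biased coin among $k$ gives the unavoidable $\Omega(\bar Z/p_0)$ navigation cost.
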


\begin{wrapfigure}{r}{0.4\textwidth}
    \centering
    \includegraphics[width=\linewidth]{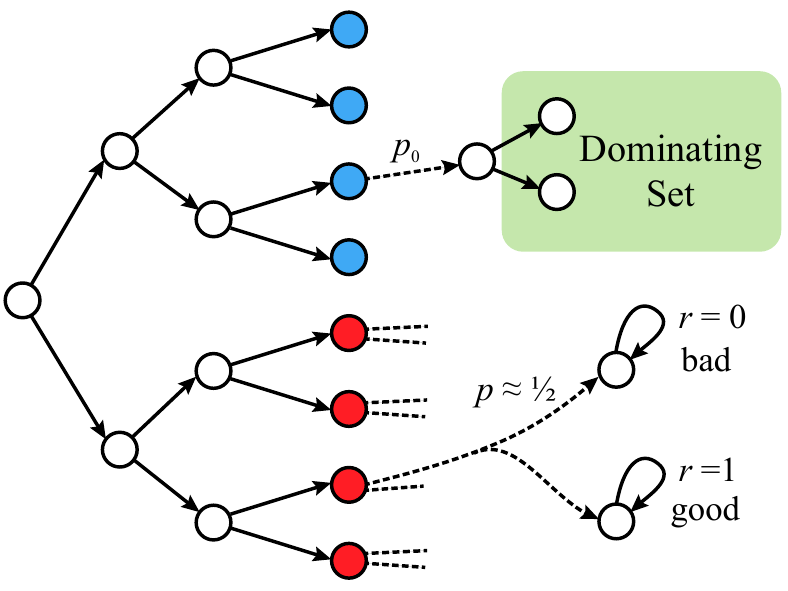}
    \caption{Difficult class of MDPs with a feedback graph and small dominating set. 
    Omitted transitions point to the bad state.}
    \label{fig:domsetowerbound}
\end{wrapfigure}

This lower bound depends on the probability $p_0$ with which the dominating set can be reached and has a dependency $\frac{\mas}{p_0} \approx \frac{\numS \numA}{p_0}$ on the number of states and actions. In bandits, one can easily avoid the linear dependency on number of arms by uniformly playing all actions in the given dominating set $\tilde \Theta(\epsilon^{-2})$ times. 
We illustrate where the difficulty in MDPs comes from in Figure~\ref{thm:domsetlowerbound}. 
States are arranged in a tree so that each state at the leafs can be reached by one action sequence. 

The lower half of state-action pairs at the leafs (red) transition to good or bad terminal states with similar probability. This mimics a bandit with $\Theta(\numS\numA)$ arms. There are no side observations available except in state-action pairs of the dominating set (shaded area). Each of them can be reached by a specific action sequence but only with probability $p_0$, otherwise the agent ends up in the bad state. 

To identify which arm is optimal in the lower bandit, the agent needs to observe $\Omega(H^2 / \epsilon^2)$ samples for each arm. It can either directly play all $\Theta(\numS \numA)$ arms or learn about them by visiting the dominating set uniformly. To visit the dominating set once takes $1/p_0$ attempts on average if the agent plays the right action sequence. However, the agent does not know which state-action at the leaf of the tree (blue states) can lead to the dominating set and therefore has to try each of the $\Theta(\numS \numA)$ options on average $1/p_0$ times to identify it.

\section{Related Work}
To the best of our knowledge, we are the first to study RL with feedback graphs in MDPs. In the bandit setting, there is a large body of works on feedback graphs going back to \citet{mannor2011bandits}. 
In stochastic bandits, \citet{CaronKvetonLelargeBhagat2012} provided the first regret bound for UCB in terms of clique covering number which was improved by \citet{lykouris2019graph} to mas-number.\footnote{They assume symmetric feedback graphs and state their results in terms of independence number.} 
Both are gap-dependent bounds as is common in bandits. \citet{simchowitz2019non} recently proved the first gap-dependent bounds in MDPs for an algorithm similar to Algorithm~\ref{alg:mb} without graph feedback. To keep the analysis and discussion to the point, we here provided worst-case problem-independent bounds but we assume that a slight generalization of our technical results could be used to prove similar problem-dependent bounds. 

While mas-number is the best-known dependency for UCB-style algorithms, \citet{CohenHazanKoren2016} achieved $\sqrt{\alpha T}$ regret with an elimination algorithm that uniformly visits independence sets in each round. Instead, \citet{AlonCesa-BianchiGentileMansour2013} explicitly leveraged a dominating set for $\sqrt{\alpha T}$ regret. Finally, \citet{BuccapatnamEryilmazShroff2014} also relies on the existence of a small dominating set to achieve problem-dependent regret scaling with $\gamma$. Unfortunately, all these techniques rely on immediate access to each node in the feedback graph which is unavailable in MDPs.

Albeit designed for different purposes, the first phase of Algorithm~\ref{alg:dominatingset} is similar to a concurrently developed algorithm \citep{jin2020reward} for exploration in absence of rewards. But there is a key technical difference: Algorithm~\ref{alg:dominatingset} learns how to reach each element of the dominating set jointly, while the approach by \citet{jin2020reward} learns how to reach each state-action pair separately. Following the discussion in Section~\ref{sec:multitaskrl}, we hypothesize that by applying our technique to their setting, one could reduce the state-space dependency in the $\epsilon^{-1}$ term of their sample complexity bound from $\numS^4 / \epsilon$ to $\numS^3 / \epsilon$.
\section{Conclusion}
We studied the effect of data augmentation in the form of side observations governed by a feedback graph on the sample-complexity of RL. Our results show that optimistic model-based algorithms achieve minimax-optimal regret up to lower-order terms in symmetric feedback graphs by just incorporating all available observations. We also proved that exploiting the feedback graph structure by visiting highly informative state-action pairs (dominating set) is fundamentally more difficult in MDPs compared to the well-studied bandit setting. As RL with feedback graph in MDPs captures existing settings such as learning with state abstractions and learning with auxiliary tasks, our work paves the way for a more extensive study of this setting. Promising directions include a regret analysis for feedback graphs in combination with function approximation motivated by impressive empirical successes \citep{lin2019towards, kostrikov2020image, laskin2020reinforcement}. Another question of interest is an analysis of model-free methods~\citep{jin2018q} with graph feedback which likely requires a very different analysis, as existing proofs hinge on observations arriving in trajectories.

\paragraph{Acknowledgements.}
The work of MM was partly supported by NSF CCF-1535987, NSF IIS-1618662, and a Google Research Award. KS would like to acknowledge NSF CAREER Award 1750575 and Sloan Research Fellowship. The work of YM was partly
supported by a grant of the Israel Science Foundation (ISF).

\bibliographystyle{abbrvnat}
\bibliography{cdann_thesis_lib, manual}
\clearpage

\appendix
\renewcommand{\contentsname}{Contents of Appendix}
\tableofcontents
\addtocontents{toc}{\protect\setcounter{tocdepth}{3}}
\clearpage

\section{Discussion of Graph Properties}
\label{app:graphprop_discussion}

In this section, we provide an extended discussion of the relevant graph properties that govern learning efficiency of RL with feedback graphs.
For convenience, we repeat the definitions of the properties from Section~\ref{sec:feedbackgraphs}.
\begin{itemize}

    \item \textbf{Mas-number $\bm \mas$}: A set of vertices $\Vcal \subseteq
      \Xcal$ form an acyclic subgraph if the subgraph $(\Vcal, \{
      (v,w) \subseteq \Vcal \times \Vcal \colon v \gedge w
      \})$ of $G$ restricted to $\Vcal$ is loop-free. We call the size
      of the maximum acyclic subgraph the \emph{mas-number} $\mas$ of $G$.

    \item \textbf{Independence number $\bm \alpha$}: A set of vertices
      $\Vcal \subseteq \Xcal$ is an independent set if there is no
      edge between any two nodes of that set:
      $\forall v,w \in \Vcal \colon v \, \not \gedge w$. The
      size of the largest independent set is called the
      \emph{independence number} $\alpha$ of $G$.

    \item \textbf{Domination number $\bm \gamma$}: A set of vertices
      $\Vcal \subseteq \Xcal$ form a dominating set if there is an edge
      from a vertex in $\Vcal$ to any vertex in $G$:
      $\forall x \in \Xcal \; \exists v \in \Vcal \colon v
      \gedge x$. The size of the smallest dominating set is
      called the \emph{domination number} $\gamma$.
      
      \item \textbf{Clique covering number $\bm \Ccal$}: A set of vertices $\Vcal \subseteq \Xcal$ is a clique if there it is a fully-connected subgraph, i.e., for any $x,y \in \Vcal \colon x \gedge y$. A set of such cliques $\{\Vcal_1, \dots \Vcal_n\}$ is called a clique cover if every node is included in at least one of the cliques, i.e., $\Xcal = \bigcup_{i=1}^n \Vcal_i$. The size of the smallest clique cover is called the \emph{clique covering number} $\Ccal$.
\end{itemize}
In addition to the properties appearing our bounds, we here include the clique covering number $\Ccal$ which has been used earlier analyses of UCB algorithms in bandits \citep{CaronKvetonLelargeBhagat2012}. One can show that in any graph, the following relation holds
\begin{align}
    |\Xcal| \geq \Ccal \geq \mas \geq \alpha \geq \gamma.
\end{align}
For example, $\Ccal \geq \mas$ follows from the fact that no two vertices that form a clique can be part of an acyclic subgraph and thus no acyclic subgraph can be larger than any clique cover. An important class of feedback graphs are symmetric feedback graphs where for each edge $x \gedge y$, there is a back edge $y \gedge x$. In fact, many analyses in the bandit settings assume undirected feedback graphs which is equivalent to symmetric directed graphs. For symmetric feedback graphs, the independence number and mas-number match, i.e., 
\begin{align}
    \alpha = \mas.
\end{align}   
This is true because acyclic subgraphs of symmetric graphs cannot contain any edges, otherwise the back edge would immediately create a loop. Thus any acyclic subgraph is also an independent set and $\mas \geq \alpha$.

\paragraph{Examples:} 
We now discuss the value of the graph properties in feedback graphs by example (see Figure~\ref{fig:example_graph_properties}). The graph in Figure~\ref{fig:clique_graph} consists of two disconnected cliques and thus the clique covering number and the domination number is $2$. While the total number of nodes can be much larger -- 8 in this example -- all graph properties equal the number of cliques in such a graph. In practice, feedback graphs that consists of disconnected cliques occur for example in state abstractions where all $(s,a)$ pairs with matching action and where the state belongs to the same abstract state form a clique. They are examples for a simple structure that can be easily exploited by RL with feedback graphs to substantially reduce the regret.

In the feedback graph in Figure~\ref{fig:ordered_graph}, the vertices are ordered and every vertex is connected to every vertex to the left. This graph is acyclic and hence $\mas$ coincides with the number of vertices but the independence number is $1$ as the graph is a clique if we ignore the direction of edges (and thus each independence set can only contain a single node).
A concrete example where feedback graphs can exhibit such structure is in tutoring systems where the actions represent the number of practice problems to present to a student in a certain lesson. The oracle can fill in the outcomes (how well the performed on each problem) for all actions that are would have given fewer problems than the chosen action.

Figure~\ref{fig:star_graph} shows a star-shaped feedback graph. Here, the center vertex reveals information about all other vertices and thus is a dominating set with size $\gamma = 1$. At the same time, the largest independence set are the tips of the star which is much larger. This is an example where approaches such as Algorithm~\ref{alg:dominatingset} that leverage a dominating set can learn a good policy with much fewer samples as compared to others that only rely independence sets.

The examples in Figure~\ref{fig:clique_graph}--\ref{fig:star_graph} exhibit structured graphs, but it is important to realize that our results do not rely a specific structure. They can work with any feedback graph and we expect that feedback graphs in practice are not necessarily structured. Figure~\ref{fig:general_graph} shows a generic graph where all relevant graph properties are distinct which highlights that even in seemingly unstructured graphs, it is important which graph property governs the learning speed of RL algorithms.

\begin{figure}
    \centering
    \begin{subfigure}[t]{.24\textwidth}\centering
    \includegraphics[width=3.8cm]{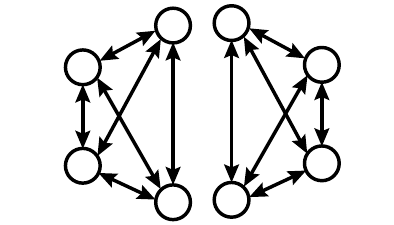}
    \caption{$\numS\numA = 8$\\\hspace*{5mm}$\Ccal = \mas =  \alpha = \gamma = 2$}
    \label{fig:clique_graph}
    \end{subfigure}
    \hfil
        \begin{subfigure}[t]{.24\textwidth}\centering
    \includegraphics[width=3.8cm]{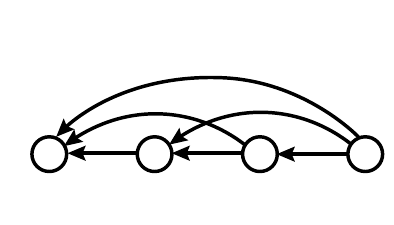}
    \caption{$\numS\numA = \Ccal = \mas = 4$\\\hspace*{5mm}$\alpha = \gamma = 1$}
    \label{fig:ordered_graph}
    \end{subfigure}
        \hfil
        \begin{subfigure}[t]{.24\textwidth}\centering
    \includegraphics[width=3.8cm]{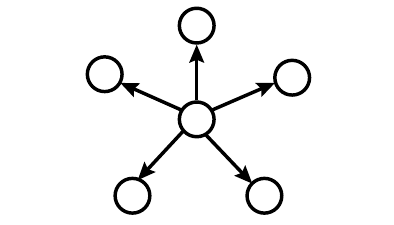}
    \caption{$\numS\numA = 6$ \\\hspace*{5mm}$\Ccal = \mas = \alpha = 5$\\\hspace*{5mm}$\gamma = 1$}
    \label{fig:star_graph}
    \end{subfigure}
        \hfil
        \begin{subfigure}[t]{.24\textwidth}
        \centering
    \includegraphics[width=3.8cm]{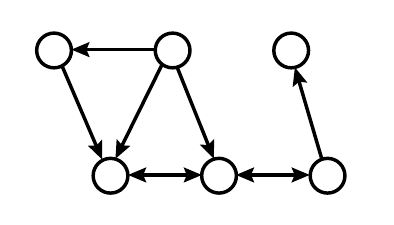}
    \caption{$\numS\numA = 6$\\\hspace*{4mm}
    $\Ccal = 5, \mas = 4$\\\hspace*{5mm}$\alpha = 3, \gamma = 2$}
    \label{fig:general_graph}
    \end{subfigure}

    \caption{Examples of feedback graphs with different vertex numbers $\numS \numA$, mas-number $\mas$, independence number $\alpha$ and domination number $\gamma$.}
    \label{fig:example_graph_properties}
\end{figure}

\section{Additional Details on Model-Based RL with Feedback Graphs}
Here, we provide additional details and extensions to Algorithm~\ref{alg:mb} in Section~\ref{sec:modelbased}.
\subsection{Optimistic Planning}
Algorithm~\ref{alg:optimistic_planning} presents the optimistic planning subroutine called by Algorithms~\ref{alg:mb} and \ref{alg:dominatingset}. 
In this procedure, the maximum value is set as $V^{\max}_h = H-h+1$ for each time step $h$ and notation $\wh P(x)f = \Ex_{s' \sim P(x)}[f(x)]$ denotes the expectation with respect to the next state distribution of any function $f \colon \statespace \rightarrow \RR$ on states.

The \FOptPlan procedure computes an optimistic estimate $\Qub$ of the optimal Q-function $Q^\star$ by dynamic programming. The policy $\pi$ is chosen greedily with respect to this upper confidence bound $\Qub$. In addition, a pessimistic estimate $\Qlb$ of the Q-function of this policy $Q^{\pi}$ is computed (lower confidence bound) analogously to $\Qub$. The two estimates only differ in the sign of the reward bonus $\psi_h$. Up to the specific form of the reward bonus $\psi_h(x)$, this procedure is identical to the policy computation in \textsc{ORLC} \citep{dann2019policy} and \textsc{Euler} \citep{zanette2019tighter}.\footnote{Note however that the lower confidence bound in \textsc{Euler} is only supposed to satisfy $\Qlb \leq Q^\star$ while we here follow the \textsc{ORLC} approach and its analysis and require $\Qlb$ to be a lower confidence bound on the Q-value of the computed policy $Q^{\pi}$.} 

\begin{algorithm}
\SetAlgoLined
\SetNoFillComment
\SetKwInOut{Inputa}{input}
\SetKwInOut{Outputa}{ouput}
\SetKwInOut{Returna}{return}
  \SetKwProg{Fn}{function}{:}{}
\Fn{\FOptPlan($n, \wh r, \rhatsq, \wh P$)}{
Set $\Vub_{H+1}(s) \gets 0; \quad  \Vlb_{H+1}(s) \gets 0 \quad \forall s \in \statespace$\;
    \For
    (\tcp*[f]{optimistic planning with upper and lower confidence bounds})
    {$h=H$ \KwTo $1$ \textbf{and} $s \in \statespace$}
    {
        
            \For{$a \in \actionspace$}
            {
            $x \gets (s,a)$\;
            \tcc{Compute reward bonus}
            $\eta \gets \sqrt{\rhatsq(x) - \wh r(x)^2} +  \sqrt{\wh P(x) (\Vub_{h+1}^2) - (\wh P(x) \Vub_{h+1})^2}$\tcp*[l]{Reward and next state variance}
            $\psi_h(x) \gets O\left(
    \frac{1}{H}\wh P(x)[\Vub_{h+1} - \Vlb_{h+1}]
    +
    \sqrt{\frac{\eta}{n(x)}\ln \frac{|\Xcal|H \ln n(x)}{\delta}} + \frac{\wh \numS H^2}{n(x)}\ln \frac{|\Xcal|H \ln n(x)}{\delta}\!\right)$
             \label{lin:psiterm}\;
            \tcc{Bellman backup of upper and lower confidence bounds}
                $\Qub_{h}(x) \gets 0 \vee \,\,( \wh r(x) 
                + \wh P(x)\Vub_{h+1} + \psi_{h}(x)) \,\,\wedge V^{\max}_h$
                \label{lin:ulcr_ub}
                \tcp*{UCB of $Q^\star_{h}$}
                $\Qlb_{h}(x) \gets 0 \vee \,\,(  \wh r(x) 
                + \wh P(x)\Vlb_{h+1} - \psi_h(x))\,\,\wedge V^{\max}_h$
                \label{lin:ulcr_lb}
                \tcp*{LCB of $Q^{\pi}_{h} \geq 0$}
             }
\tcc{Compute greedy policy of UCB}
 $\pi(s, h) \gets \argmax_{a} \Qub_{h}(s, a)$\;
 $\Vub_{h}(s) \gets \Qub_{h}(s, \pi(h))$;
  $\qquad$
 $\Vlb_{h}(s) \gets \Qlb_{h}(s, \pi(h)) $\;
}
\Returna{$\pi, \Vub_h, \Vlb_h$}
}
\label{alg:optimistic_planning}
\caption{Optimistic Planning Routine}
\end{algorithm}

\subsection{Runtime Analysis}
Just as in learning without graph feedback, the runtime of Algorithm~\ref{alg:mb} is $O(\numS \wh \numS \numA H)$ per episode where $\wh \numS$ is a bound on the maximum transition probability support ($\numS$ in the worst case).
The only difference to RL without side observations is that there are additional updates to the empirical model. However, sampling an episode and updating the empirical model requires $O(H \numS \numA)$ computation as there are $H$ time steps and each can provide at most $|\Xcal| \leq \numS \numA$ side observations. This is still dominated by the runtime of optimistic planning $O(\numS \wh \numS \numA H)$.
If the feedback graph is known ahead of time, one might be able to reduce the runtime, e.g., by maintaining only one model estimate for state-action pairs that form a clique in the feedback graph with no incoming edges. Then is suffices to only compute statistics of a single vertex per clique.

\subsection{Sample Complexity}
\label{sec:sample_complex}
Since Algorithm~\ref{alg:mb} is a minor modification of \textsc{ORLC}, it follows the IPOC framework \citep{dann2019policy} for accountable reinforcement learning.\footnote{To formally satisfy an IPOC guarantee, the algorithm has to output the policy and with a certificate before each episode. We omitted outputting of policy $\pi_k$ and certificate $[\Vlb_{k,1}(s_{k,1}), \Vub_{k,1}(s_{k,1})]$ after receiving the initial state $s_{k,1}$ in the listing of Algorithm~\ref{alg:mb} for brevity, but this can be added if readily.} 
As a result, we can build on the results for algorithms with cumulative IPOC bounds \citep[][Proposition~2]{dann2019strategic} and show that our algorithm satisfies a sample-complexity guarantee:

\begin{restatable}[PAC-style Bound]{corollary}{mbpacbound}
    For any episodic MDP with state-actions $\Xcal$, horizon $H$ and  feedback graph $G$, with probability at least $1 - \delta$ for all $\epsilon > 0$ jointly, Algorithm~\ref{alg:mb} can output a certificate with $\Vub_{k',1}(s_{k',1}) - \Vlb_{k', 1}(s_{k', 1})$ for some episode $k'$ within the first
    \begin{align}
        k' = O\left(
        	\frac{M H^2}{\epsilon^2} \ln^2\frac{H|\Xcal|}{\epsilon\delta} 
	 + \frac{M \hat \numS H^2}{\epsilon} \ln^3 \frac{H|\Xcal|}{\epsilon\delta}
        \right)
    \end{align}
    episodes. If the initial state is fixed, such a certificate identifies an $\epsilon$-optimal policy.
    \label{cor:epspolicy}
\end{restatable}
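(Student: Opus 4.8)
The plan is to convert the cumulative certificate bound of Theorem~\ref{thm:cipoc_independencenumber} into a sample-complexity statement, following the standard recipe for cumulative IPOC bounds in \citep[Proposition~2]{dann2019strategic}. The key observation is that Algorithm~\ref{alg:mb} is an instance of the ORLC template and hence fits the IPOC framework of \citet{dann2019policy}: on the same probability-$\ge 1-\delta$ event on which Theorem~\ref{thm:cipoc_independencenumber} holds, the certificates returned by \FOptPlan are valid, i.e. $\Vlb_{k,1}(s_{k,1}) \le V^{\pi_k}_1(s_{k,1}) \le V^\star_1(s_{k,1}) \le \Vub_{k,1}(s_{k,1})$ for every episode $k$, and simultaneously for all $T$ one has $\sum_{k=1}^T\big(\Vub_{k,1}(s_{k,1}) - \Vlb_{k,1}(s_{k,1})\big) \le \wt O(\sqrt{\mas H^2 T} + \mas \wh\numS H^2)$. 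Since this bound holds for all $T$ at once and the remainder of the argument is deterministic, the ``for all $\epsilon > 0$ jointly'' quantifier in the corollary comes for free.

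First I would fix $\epsilon > 0$ and argue by contradiction: suppose the certificate width exceeds $\epsilon$ in each of the first $N$ episodes. Summing over $k \in [N]$ and invoking the cumulative bound (with the explicit polylogarithmic factors from the proof of Theorem~\ref{thm:cipoc_independencenumber}) gives $N\epsilon < c_1\sqrt{\mas H^2 N}\,\polylog + c_2\,\mas\wh\numS H^2\,\polylog$. Solving this self-bounding inequality for $N$ — separating the regime where the $\sqrt{N}$ term dominates (using that $N\epsilon \le 2a\sqrt{N}$ implies $N \le 4a^2/\epsilon^2$) from the regime where the constant term dominates (using that $N\epsilon \le 2b$ implies $N \le 2b/\epsilon$), and absorbing the $\ln N$ factors by substituting the resulting bound on $N$ back into the logarithm at the cost of a bounded increase of its exponent — yields $N \le O\!\left(\frac{\mas H^2}{\epsilon^2}\ln^2\frac{H|\Xcal|}{\epsilon\delta} + \frac{\mas\wh\numS H^2}{\epsilon}\ln^3\frac{H|\Xcal|}{\epsilon\delta}\right)$. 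Taking $N$ one larger than this bound contradicts the assumption, so within the first $O(\cdot)$ episodes there must exist an episode $k'$ with $\Vub_{k',1}(s_{k',1}) - \Vlb_{k',1}(s_{k',1}) \le \epsilon$, which is the claimed bound on $k'$.

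For the last sentence, if the initial state is fixed, $s_{k,1} = s_1$ for all $k$, then validity of the certificate at episode $k'$ gives $V^\star_1(s_1) - V^{\pi_{k'}}_1(s_1) \le \Vub_{k',1}(s_1) - \Vlb_{k',1}(s_1) \le \epsilon$, so the policy $\pi_{k'}$ returned with that certificate is $\epsilon$-optimal. The only step that needs genuine care is the log-bookkeeping when solving the self-bounding inequality — ensuring that resubstituting the bound on $N$ into $\ln N$ costs only the stated extra power of the logarithm and does not change the polynomial rate in $1/\epsilon$ — but this is precisely the calculation carried out in \citep[Proposition~2]{dann2019strategic}, which I would cite rather than redo. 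Everything else is immediate once Theorem~\ref{thm:cipoc_independencenumber} is in hand, so this corollary carries essentially no new difficulty.
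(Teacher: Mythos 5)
Your proposal is correct and follows essentially the same route as the paper: the paper proves this corollary as a special case of a general sample-complexity proposition (Proposition~\ref{prop:general_samplecomplexity_mb}) whose proof is exactly your contradiction/self-bounding argument — if no certificate of size at most $\epsilon$ has appeared, then $\sum_k \epsilon_k > (T-1)\epsilon$, which combined with the cumulative IPOC bound forces $T$ below the stated threshold after the same careful resubstitution of $\bar T$ into the logarithms. The only cosmetic difference is that the paper parameterizes the proposition by the number $\gamma$ of small certificates and by a bound $\bar V(\bar T)$ on the average return (here set to their worst-case values $\gamma=1$, $\bar V = H$, $Q^{\max}=H$), which it later reuses for the dominating-set analysis.
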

The proof of this corollary is available in Section~\ref{sec:sample_complex_proof}
\subsection{Generalization to Stochastic Feedback Graphs} 
\label{sec:stochfbg}
As presented in Section~\ref{sec:feedbackgraphs}, we assumed so far that the feedback graph $G$ is fixed and identical in all episodes. We can generalize our results and consider stochastic feedback graphs where the existence of an edge in the feedback graph in each episode is drawn independently (from other episodes and edges). 
This means the oracle provides a side observation for another state-action pair only with a certain probability.
We formalize this as the feedback graph $G_k$ in episode $k$ to be an independent sample from a fixed distribution where the probability an each edge is denoted as 
\begin{align}
    q(x,x') \defeq \prob\left(x \overset{G_k}{\rightarrow} x' \right).
\end{align} 
This model generalizes the well-studied Erdős–R\'enyi model \citep[e.g.][]{BuccapatnamEryilmazShroff2014} because different edges can have different probabilities. This can be used as a proxy for the strength of the user's prior. One could for example choose the probability of states being connected to decreases with their distance. This would encode a belief that nearby states behave similarly. 

Algorithm~\ref{alg:mb} can be directly applied to stochastic feedback graphs and as our analysis will show the bound in Theorem~\ref{thm:cipoc_independencenumber} still holds as long as the mas-number $\mas$ is replaced by 
\begin{align}
    \bar \mas = \inf_{\epsilon \in (0, 1]} \frac{\mas(G_{\geq \epsilon})}{\epsilon}
\end{align}
where $G_{\geq \nu}$ is the feedback graph that only contains an edge if its probability is at least $\nu$, i.e., $x \overset{G_{\geq \nu}}{\rightarrow} x'$ if and only if $q(x, x') \geq \nu$ for all $x, x' \in \Xcal$. The quantity $\bar \mas$ generalizes the mas-number of deterministic feedback graphs where $q$ is binary and thus $\mas = \bar \mas$.

\subsection{Generalization to Side Observations with Biases} 
\label{sec:biasedfbg}
While there are often additional observations available, they might not always have the same quality as the observation of the current transition \citep{kocak2016online}. For example in environments where we know the dynamics and rewards change smoothly (e.g. are Lipschitz-continuous), we can infer additional observations from the current transition but have error that increases with the distance to the current transition. We thus also consider the case where each feedback graph sample $(x, r, s', \epsilon')$ also comes with a bias $\epsilon' \in \RR$ and the distributions $\tilde P_R, \tilde P$ of this sample satisfy
\begin{align}
   |\EE_{r \sim \tilde P_R}[r] - \EE_{r \sim P_R(x)}[r]| \leq \epsilon'
   \quad \textrm{and} \quad \| \tilde P - P(x) \|_1 \leq  \epsilon'.
\end{align}
To allow biases in side observations, we adjust the bonuses in Line~\ref{lin:psiterm} of Algorithm~\ref{alg:mb} to 
\begin{align}
    \psi_h(x) + \tilde O\left(
    \sqrt{\frac{H \wh \epsilon(x)}{n(x)}\ln \frac{|\Xcal|H \ln n(x)}{\delta}} + H \wh \epsilon(x)\right)
\end{align}
for each state-action pair $x$
where $\wh \epsilon(x)$ is the average bound on bias in all observations of this $x$ so far. We defer the presentation of the full algorithm with these changes to the next section but first state the main result for learning with biased side observations here. The following theorem shows that the algorithm's performance degrades smoothly with the maximum encountered bias $\epsilon_{\max}$:
\begin{theorem}[Regret bound with biases] \label{thm:cipoc_stochastic}
In the same setting as Theorem~\ref{thm:cipoc_independencenumber} but where samples can have a bias of at most $\epsilon_{\max}$ , the cumulative certificate size and regret are bounded with probability at least $1 - \delta$ for all $T$ by  
\begin{align}
  O\left(\sqrt{\mas H^2 T} \ln \frac{H|\Xcal| T}{\delta} 
	 + \mas \hat \numS H^2 \ln^3 \frac{H|\Xcal| T}{\delta} + \sqrt{\mas H^3 T  \epsilon_{\max}} \ln \frac{|\Xcal| H T}{\delta} 
	 + H^2 T \epsilon_{\max} \right).
\end{align}
\end{theorem}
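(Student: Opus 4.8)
The plan is to mimic the proof of Theorem~\ref{thm:cipoc_independencenumber} step by step, tracking the extra error contributed by the biases through the entire recursion. First I would revisit the confidence-bound construction in $\FOptPlan$. With the enlarged bonus $\psi_h(x) + \tilde O\big(\sqrt{H\wh\epsilon(x) n(x)^{-1}\ln(\cdot)} + H\wh\epsilon(x)\big)$, I would show that $\Vlb_{k,h}\leq V^{\pi_k}_h \leq V^\star_h \leq \Vub_{k,h}$ still holds with probability $1-\delta$. The key estimate here is that the empirical model built from biased side observations satisfies $|\wh r(x) - r(x)| \lesssim \sqrt{\wh r^2(x) - \wh r(x)^2}\sqrt{n(x)^{-1}\ln(\cdot)} + \wh\numS H^2 n(x)^{-1}\ln(\cdot) + \wh\epsilon(x)$ and an analogous $\ell_1$-bound for $\wh P(x)$, where $\wh\epsilon(x)\leq\epsilon_{\max}$ is the average bias of the observations of $x$. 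The extra $\sqrt{H\wh\epsilon(x)/n(x)}$ and $H\wh\epsilon(x)$ pieces in the bonus are exactly what is needed to absorb the bias in one Bellman backup — the $\sqrt{\cdot}$ term covers the cross term after applying the recursive argument, and $H\wh\epsilon(x)$ times $H$ backups gives the $H^2\epsilon_{\max}$ flavor.

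Next I would push this through the regret decomposition \eqref{eqn:basicbound1}. The only change is that the per-visit cost becomes $H \wedge \big[\sigma_{k,h}(x)/\sqrt{n_k(x)} + \wh\numS H^2/n_k(x) + \sqrt{H\wh\epsilon(x)/n_k(x)} + H\wh\epsilon(x)\big]$. I would split this into the three groups already handled in the unbiased proof plus two new groups: the term $\sum_{k,h,x} w_{k,h}(x)\sqrt{H\epsilon_{\max}/n_k(x)}$ and the term $\sum_{k,h,x} w_{k,h}(x) H\epsilon_{\max}$. The latter is immediately bounded by $H^2 T\epsilon_{\max}$ since $\sum_x w_{k,h}(x)\leq 1$ and there are $HT$ summands (after one more factor $H$ from $h$), matching the last term of the theorem. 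For the former, I would reuse the same $U_k$ / $U_k^c$ split and the concentration bound $n_k(x)\gtrsim \sum_{i\leq k}\sum_{x'\in\Ncal_G(x)}w_i(x')$: on $U_k^c$ the visits contribute at most $H$ each, already counted in term $\prnmarker{a}\lesssim \mas H$ (negligible compared to $H^2 T\epsilon_{\max}$ unless $T$ tiny); on $U_k$, apply Cauchy–Schwarz to write $\sum_{k,h,x} w_{k,h}(x)\sqrt{H\epsilon_{\max}/n_k(x)} \leq \sqrt{H\epsilon_{\max}\sum_{k,h,x}w_{k,h}(x)}\cdot\sqrt{\sum_{k,x\in U_k} w_k(x)/n_k(x)}$. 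The first factor is $\sqrt{H\epsilon_{\max}\cdot HT}$ and the second factor is controlled by Lemma~\ref{lem:wsum_mas_main} as $\lesssim\sqrt{\mas}$, giving the term $\sqrt{\mas H^3 T\epsilon_{\max}}$ up to $\log$ factors.

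Finally I would collect terms: the two unbiased terms $\sqrt{\mas H^2 T}\ln(\cdot)$ and $\mas\wh\numS H^2\ln^3(\cdot)$ come verbatim from Theorem~\ref{thm:cipoc_independencenumber} (now with the slightly inflated $\log$ factors from the enlarged bonus), and the two new terms $\sqrt{\mas H^3 T\epsilon_{\max}}\ln(\cdot)$ and $H^2 T\epsilon_{\max}$ from the analysis above. Since the bound is simultaneously a cumulative-certificate-size bound, the same chain of inequalities gives the regret bound by $R(T)\leq\sum_k(\Vub_{k,1}-\Vlb_{k,1})$. The main obstacle I anticipate is the first step: verifying that the specific form of the augmented bonus is large enough to restore validity of the confidence bounds in the presence of bias while being small enough to not inflate the leading term — this requires carefully re-deriving the recursive Bellman-error bound of \textsc{ORLC} with the extra bias term carried along, making sure the $O(\frac1H \wh P(x)[\Vub_{h+1}-\Vlb_{h+1}])$ self-bounding trick still closes the recursion after the new $H\wh\epsilon(x)$ contributions are added.
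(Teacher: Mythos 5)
Your proposal follows essentially the same route as the paper: the paper proves a general result (Theorem~\ref{thm:cipoc_independencenumber_app}) that carries the bias $\wh\epsilon(x)$ through the enlarged bonus, the validity argument for the confidence bounds, and the same regret decomposition you describe, with the additive $O(H\wh\epsilon(x))$ bonus summing to $H^2T\epsilon_{\max}$ and the $\sqrt{\wh\epsilon}$-part absorbed into the variance/Cauchy--Schwarz step together with the graph lemma to produce $\sqrt{\mas H^3 T\epsilon_{\max}}$. One bookkeeping point: the bias part of the bonus actually used in the appendix is $O\bigl(H\sqrt{\wh\epsilon(x)/n(x)}\bigr)$ rather than $O\bigl(\sqrt{H\wh\epsilon(x)/n(x)}\bigr)$ (it must dominate the bias of the next-state value, whose range is $H$), which is where the $H^3$ under the square root comes from --- your Cauchy--Schwarz computation as written would yield $\sqrt{\mas H^2 T\epsilon_{\max}}$, so the stated form of the bonus and the final term should be reconciled.
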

If $T$ is known, the algorithm can be modified to ignore all observations with bias larger than $T^{-1/2}$ and still achieve order $\sqrt{T}$ regret by effectively setting $\epsilon_{\max} = O(T^{1/2})$ (at the cost of increase in $\mas$).

\subsection{Generalized Algorithm and Main Regret Theorem}
We now introduce a slightly generalized version of Algorithm~\ref{alg:mb} that will be the basis for our theoretical analysis and all results for Algorithm~\ref{alg:mb} follow as special cases.
This algorithm, given in Algorithm~\ref{alg:mb_app} contains numerical values for all quantities -- as opposed to $O$-notation -- and differs from Algorithm~\ref{alg:mb} in the following aspects: 

\begin{enumerate}
\item \textbf{Allowing Biases: }
While Algorithm~\ref{alg:mb} assumes that the observations provided by the feedback graph are unbiased, Algorithm~\ref{alg:mb_app} allows biased observations where the bias (for every sample) is bounded by some $\epsilon' \geq 0$ (see Section~\ref{sec:biasedfbg}). 
For the unbiased case, one can set $\epsilon'$ or the average bias $\hat{\epsilon}$ as $0$ throughout. 

\item \textbf{Value Bounds: }
While the \FOptPlan subroutine of Algorithm~\ref{alg:mb} in Algorithm~\ref{alg:optimistic_planning} only uses the trivial upper-bound $V^{\max}_h = H - h + 1$ to clip the value estimates, Algorithm~\ref{alg:mb_app} uses upper-bounds $Q^{\max}_h(x)$ and $V^{\max}_{h+1}(x)$ that can depend on the given state-action pair $x$. This is useful in situations where one has prior knowledge on the optimal value for particular states and can a smaller value bound than the worst case bound of $H - h + 1$. This is the case in Algorithm~\ref{alg:dominatingset}, where we apply an instance of Algorithm~\ref{alg:mb_app} to the extended MDP with different reward functions per task.
\end{enumerate}

\noindent
We show that Algorithm~\ref{alg:mb_app} enjoys the IPOC bound (see \citet{dann2019policy}) in the theorem below. This is the main theorem and other statements follow as a special case. The proof can be found in the next section. 
\begin{theorem}[Main Regret / IPOC Theorem]
    For any tabular episodic MDP with episode length $H$, state-action space $\Xcal \subseteq \statespace \times \actionspace$ and directed, possibly stochastic, feedback graph $G$, Algorithm~\ref{alg:mb_app} satisfies with probability at least $1 - \delta$ a cumulative IPOC bound for all number of episodes $T$ of 
\begin{align}
	O\left(\sqrt{\bar \mas H \sum_{k=1}^T  V^{\pi_k}_1(s_{k,1})} \ln \frac{|\Xcal| H T}{\delta} 
	 + \bar \mas \wh \numS Q^{\max} H \ln^3 \frac{|\Xcal| H T}{\delta} + \sqrt{\bar \mas H^3 T  \epsilon_{\max}} \ln \frac{|\Xcal| H T}{\delta} + H^2 T \epsilon_{\max} \right),\nonumber
\end{align} 
    where $\bar M = \inf_{\nu} \frac{\mas(G_{\geq \nu})}{\nu}$ and $\mas(G_{\geq \nu})$ is the mas-number of a feedback graph that only contains edges that have probability at least $\nu$. Parameter $\wh \numS \leq \numS$ denotes a bound on the number of possible successor states of each $x \in \Xcal$. Further, $Q^{\max} \leq H$ is a bound on all value bounds used in the algorithm for state-action pairs that have visitation probability under any policy $\pi_k$ for all $k \in [T]$, i.e., $Q^{\max}$ satisfies
    \begin{align}
        Q^{\max} &\geq \max_{\substack{k \in [T],  h \in [H]}} ~ \max_{x \colon w_{k,h}(x) > 0} Q^{\max}_h(x), \text{ \qquad and, }\\
        Q^{\max} &\geq\max_{k \in [T], h \in [H]} ~ \max_{x \colon w_{k,h}(x) > 0} V^{\max}_{h+1}(x).
    \end{align} 
    The bound in this theorem is an upper-bound on the cumulative size of certificates $\sum_{k=1}^T \Vub_1(s_{k,1}) - \Vlb_1(s_{k,1})$ and on the regret $R(T)$.
\label{thm:cipoc_independencenumber_app}
\end{theorem}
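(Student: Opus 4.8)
The plan is to run the optimism-in-the-face-of-uncertainty analysis of \textsc{ORLC}/\textsc{Euler}, modified in exactly two places so as to exploit the feedback graph: a high-probability lower bound on the observation counts $n_k(x)$ that aggregates visitation probability over the in-neighborhood of $x$ in $G$, and the two graph lemmas (Lemma~\ref{lem:wsum_mas_main} and Lemma~\ref{lem:mas_pigeonhole_main}) to turn the resulting graph-weighted sums into bounds scaling with $\bar\mas$ rather than $|\Xcal|$. I would first define a ``good event'' of probability at least $1-\delta$ on which (i) the bonuses $\psi_h$ (including the extra $\sqrt{H\wh\epsilon(x)/n(x)}+H\wh\epsilon(x)$ summand that accounts for biased side observations) are large enough that $\Vlb_{k,h}\le V^{\pi_k}_h\le V^\star_h\le\Vub_{k,h}$ for all $k\in[T]$, $h\in[H]$; and (ii) for every $x\in\Xcal$, every $k$ and every fixed $\nu\in(0,1]$, the count lower bound $n_k(x)\gtrsim\nu\sum_{i<k}\sum_{x'\in\Ncal_{G_{\geq \nu}}(x)}w_i(x')$ holds whenever the right-hand side exceeds a threshold $C=\Theta(\wh\numS H\ln^3(|\Xcal|HT/\delta))$. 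Part (i) is the usual empirical-Bernstein/Freedman concentration for $\wh r(x)$, $\rhatsq(x)$ and $\wh P(x)$, made uniform over $n(x)$ by a peeling argument; the only new feature is that the samples feeding $x$ now arrive from visits to any $x'\in\Ncal_G(x)$ and may be dependent \emph{within} an episode, but they remain conditionally independent \emph{across} episodes and unbiased up to $\epsilon'\le\epsilon_{\max}$, so the martingale structure is preserved and biases enter only through the added bonus. Part (ii) follows from Freedman's inequality applied episode-by-episode to the increments $\sum_h\one\{(s_{i,h},a_{i,h})\in\Ncal_{G_{\geq \nu}}(x)\}$, which are bounded by $H$ with conditional mean at least $\nu\sum_{x'\in\Ncal_{G_{\geq \nu}}(x)}w_i(x')$; for stochastic feedback graphs the parameter $\nu$ is left free and optimized only at the very end, producing $\bar\mas=\inf_\nu\mas(G_{\geq \nu})/\nu$.

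\noindent On the good event I would bound the regret by the cumulative certificate size, $R(T)\le\sum_{k=1}^T(\Vub_{k,1}(s_{k,1})-\Vlb_{k,1}(s_{k,1}))$, and unfold the latter with the standard per-step recursion, using the $\tfrac1H\wh P(x)(\Vub_{h+1}-\Vlb_{h+1})$ piece of $\psi_h$ to close the recursion with only an $O(1)$ multiplicative factor, to arrive at
\[
R(T)\ \lesssim\ \sum_{k=1}^T\sum_{h=1}^H\sum_{x\in\Xcal}w_{k,h}(x)\left[Q^{\max}\wedge\left(\frac{\sigma_{k,h}(x)}{\sqrt{n_k(x)}}+\frac{\wh\numS(Q^{\max})^2}{n_k(x)}+\sqrt{\frac{H\wh\epsilon(x)}{n_k(x)}}+H\wh\epsilon(x)\right)\right].
\]
Fixing the optimal $\nu$, at each round I split $\Xcal$ into $U_k=\{x:\sum_{i<k}\sum_{x'\in\Ncal_{G_{\geq \nu}}(x)}w_i(x')>C/\nu\}$ and its complement. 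Over $U_k^c$ the count bound is unavailable, so those contributions are bounded by $Q^{\max}$ per expected visit and summed through Lemma~\ref{lem:mas_pigeonhole_main} with threshold $C/\nu$, giving $\sum_k\sum_{x\in U_k^c}w_k(x)\le\mas(G_{\geq \nu})\,C/\nu=\bar\mas\,C$ and hence the $\bar\mas\wh\numS Q^{\max}H$ lower-order term. Over $U_k$ I substitute the count lower bound, apply Cauchy--Schwarz to separate $\sqrt{\sum_{k,h,x}w_{k,h}(x)\sigma_{k,h}^2(x)}$ from $\sqrt{\sum_k\sum_{x\in U_k}w_k(x)/(\nu\sum_{i<k}\sum_{x'\in\Ncal_{G_{\geq \nu}}(x)}w_i(x'))}$, bound the first factor by $\lesssim\sqrt{H\sum_k V^{\pi_k}_1(s_{k,1})}$ via the law of total variance for MDPs \citep{azar2012sample}, and bound the second by $\lesssim\sqrt{\mas(G_{\geq \nu})/\nu}\,\ln(|\Xcal|HT/\delta)=\sqrt{\bar\mas}\,\ln(|\Xcal|HT/\delta)$ via Lemma~\ref{lem:wsum_mas_main} (with $w_{\max}=H$ and $w_{\min}$ taken polynomially small, or after one more split discarding negligibly-visited pairs). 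Multiplying these two factors gives the leading term $\sqrt{\bar\mas H\sum_k V^{\pi_k}_1}$; the $\wh\numS(Q^{\max})^2/n_k(x)$ contributions are controlled by a $\sum 1/(\text{partial sums})$ estimate of the same flavour as Lemma~\ref{lem:wsum_mas_main}, and the two bias-induced terms by Cauchy--Schwarz together with $\sum_{k,h,x}w_{k,h}(x)\wh\epsilon(x)\le HT\epsilon_{\max}$ and the count bound, producing the $\sqrt{\bar\mas H^3T\epsilon_{\max}}$ and $H^2T\epsilon_{\max}$ terms. Since the confidence bounds are valid on the good event, the certificates $[\Vlb_{k,1}(s_{k,1}),\Vub_{k,1}(s_{k,1})]$ are sound, so the resulting expression is a valid cumulative IPOC bound, and it also bounds $R(T)$ because $V^\star_1(s_{k,1})-V^{\pi_k}_1(s_{k,1})\le\Vub_{k,1}(s_{k,1})-\Vlb_{k,1}(s_{k,1})$.

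\noindent The main obstacle is part (ii) of the good event and how it interacts with the $U_k/U_k^c$ split. Unlike the UCB-with-graph-feedback analyses in bandits, which discretize the \emph{numerator} $w_{k,h}(x)$ into visitation indicators and thereby lose a factor $\sqrt H$, here the concentration must instead be applied to the \emph{denominator} $n_k(x)$, aggregated over the entire in-neighborhood $\Ncal_G(x)$ and made uniform over all $x$ and $k$; and it has to be sharp enough that (a) the first-order refinement through the law of total variance survives, and (b) the free parameter $\nu$ cleanly delivers $\bar\mas$ in the stochastic case. Its combinatorial counterpart, Lemma~\ref{lem:wsum_mas_main}, whose potential-function and pigeon-hole proof is what ultimately converts the self-normalized sum over $U_k$ into a factor $\mas$ rather than $|\Xcal|$, is the other delicate ingredient.
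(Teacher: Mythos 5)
Your proposal is correct and follows essentially the same route as the paper's proof: the same good event (model concentration plus the in-neighborhood-aggregated lower bound on $n_k(x)$), the same recursion bounding the cumulative certificate size, the same split of $\Xcal$ into under-observed and well-observed pairs (plus the low-visitation set handled by choosing $w_{\min}$ small), Cauchy--Schwarz with the law of total variance for the leading term, Lemma~\ref{lem:mas_pigeonhole_main} and Lemma~\ref{lem:wsum_mas_main} in exactly the roles you assign them, and the final optimization over $\nu$ to produce $\bar\mas$. The only differences are minor bookkeeping choices (e.g., where the $\wh\numS$ factor enters the lower-order term), not substantive ones.
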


\begin{algorithm}[p]
\SetAlgoLined
\SetNoFillComment
\SetInd{0.7em}{0.5em}
\SetKwInOut{Inputa}{input}
\SetKwInOut{Outputa}{ouput}
\SetKwInOut{Returna}{return}
  \SetKwProg{Fn}{function}{:}{}
\Inputa{failure tolerance $\delta \in (0,1]$, state-action space $\Xcal$, episode length $H$}
\Inputa{known bound on maximum transition support $\wh \numS \leq \|P(x)\|_0 \leq \numS$}
\Inputa{known bounds on value $V^{\max}_{h+1}(x) \leq H$ and $Q^{\max}_h(x) \leq H$ with $V^{\max}_{h+1}(x) \geq \max_{s' : P(s' | x) > 0} V^\star_{h+1}(s')$ and $Q^{\max}_h(x) \geq Q^\star_h(x)$}
    $\phi(n) \defeq 1 \wedge \sqrt{\frac{0.52}{n}\left( 1.4 \ln \ln(e \vee 2n) + \ln\frac{5.2 \times |\Xcal|(4\wh \numS + 5H + 7)}{\delta}\right)} = \Theta\left(\sqrt{\frac{\ln \ln n}{n}}\right)$\;

    Initialize $n_1(x) \gets 0, ~  
    \wh \epsilon_1(x) \gets 0, ~  
    \wh r_1(x) \gets 0$ 
    $\rhatsq_1(x) \gets 0, ~ 
    \wh P_1(x) \gets e_1 \in \{0, 1\}^{\numS} $ for all  $x  \in \Xcal$\;
    \tcc{Main loop}
\For{episode $k=1, 2, 3, \dots$}{
    $\pi_k, \Vub_{k,h}, \Vlb_{k,h} \gets$ \FOptPlan{$n_k, \wh r_k, \rhatsq_k, \wh P_k, \wh \epsilon_{k}$}\;
    Receive initial state $s_{k,1}$\;
    $n_{k+1}, \wh r_{k+1}, \rhatsq_{k+1}, \wh P_{k+1}, \wh \epsilon_{k+1} \gets $\FSample{$\pi_k, s_{k,1}, n_k, \wh r_k, \rhatsq_k, \wh P_k, \wh \epsilon_{k}$}\;
}

\tcc{Optimistic planning subroutine with biases}
\Fn{\FOptPlan$(n, \wh r, \rhatsq, \wh P, \wh \epsilon)$}{
$\Vub_{H+1}(s) = 0;~  \Vlb_{H+1}(s) = 0 \quad \forall s \in \statespace, k \in \NN
    $\;
    \For
    {$h=H$ \KwTo $1$ \textbf{and} $s \in \statespace$}
    {

            \For{$a \in \actionspace$}
            {
            $x \gets (s,a)$\;
            $\eta \gets \sqrt{\rhatsq(x) - \wh r(x)^2} + 2\sqrt{\wh \epsilon(x)} H +  \sigma_{\wh P(x)}(\Vub_{h+1})$\;
            $\psi_h(x) \gets
            4 \eta \phi(n(x)) + 53 \wh \numS H V^{\max}_{h+1}(x) \phi(n(x))^2
    +\frac{1}{H} \wh P(x) (\Vub_{h+1} - \Vlb_{h+1})
    + (H + 1) \wh \epsilon(x)$\label{lin:psiterm_app}\;
                $\Qub_{h}(x) \gets 0 \vee \,\,( \wh r(x) 
                + \wh P(x)\Vub_{h+1} + \psi_{h}(x)) \,\,\wedge Q^{\max}_h(x)$\tcp*{UCB of $Q^\star_{h} \leq V^{\max}_h \leq H$}
                $\Qlb_{h}(x) \gets 0 \vee \,\,(  \wh r(x) 
                + \wh P(x)\Vlb_{h+1} - \psi_h(x))\,\,\wedge Q^{\max}_h(x)$\tcp*{LCB of $Q^{\pi}_{h} \geq 0$}
             }
 $\pi(s, h) \gets \argmax_{a} \Qub_{h}(s, a)$,
 $\Vub_{h}(s) \gets \Qub_{h}(s, \pi(h))$,
 $\Vlb_{h}(s) \gets \Qlb_{h}(s, \pi(h)) $\;
}
\Returna{$\pi, \Vub_h, \Vlb_h$}
}
\tcc{Sampling subroutine with biases}
\Fn{\FSample($\pi, n, \wh r, \rhatsq, \wh P, \wh \epsilon$)}{
\For{$h=1, \dots H$}{
    Take action $a_{h} = \pi(s_{h}, h)$ and transition to $s_{h+1}$ with reward $r_h$\;
    Receive transition observations $\Ocal_{h}(G)$\;
   \For(){transition $(x,r, s', \epsilon') \in \Ocal_{h}(G)$}{
        ~$n(x) \gets n(x) + 1$\;
        $\begin{array}{ll}
            \wh r(x) \gets \frac{n(x)-1}{n(x)} \wh r(x) + \frac{1}{n(x)}r,  &  \rhatsq(x) \gets \frac{n(x)-1}{n(x)} \rhatsq(x) + \frac{1}{n(x)}r^2, \\
            \wh \epsilon(x) \gets \frac{n(x)-1}{n(x)} \wh \epsilon(x) + \frac{1}{n(x)}\epsilon', & \qquad \wh P(x) \gets \frac{n(x)-1}{n(x)} \wh P(x) +  \frac{1}{n(x)} e_{s'},
        \end{array}$ \\ \quad where $e_{s'} \in \{0,1\}^{\numS}$ has $1$ on the $s'$-th position\;
    }
    \Returna{($n, \wh r, \rhatsq, \wh P, \wh \epsilon$)}
}
}
\caption{Optimistic model-based RL algorithm for biased side observations}
\label{alg:mb_app}
\end{algorithm}

\section{Analysis of Model-Based RL with Feedback Graphs}
\label{app:modelbasedproofs}

Before presenting the proof of the main Theorem~\ref{thm:cipoc_independencenumber_app} stated in the previous section, we show that 
Theorem~\ref{thm:cipoc_independencenumber} and Theorem~\ref{thm:cipoc_stochastic} indeed follow from Theorem~\ref{thm:cipoc_independencenumber_app}: 

\paragraph{Proof of Theorem~\ref{thm:cipoc_independencenumber}.}
\begin{proof} 
We will reduce from the bound in Theorem~\ref{thm:cipoc_independencenumber_app}. We start by setting the bias in Theorem~\ref{thm:cipoc_independencenumber} to zero by plugging in $\epsilon_{\max} = 0$. Next, we set the worst-case value $Q^{\max} = H$. Next, we set the thresholded mas-number of the stochastic graph $\bar \mas$ to  the mas-number $\mas$ of deterministic graphs (by setting $\nu = 1$ in the definition of $\bar \mas$). Finally,  we upper-bound the initial values for all played policies 
by the maximum value of $H$ rewards, i.e.,
\begin{align}
    \sum_{k=1}^T  V^{\pi_k}_1(s_{k,1}) \leq TH. \label{eqn:Vbound11}
\end{align}
Plugging all of the above in the statement of Theorem~\ref{thm:cipoc_independencenumber_app}, we get that Algorithm~\ref{alg:mb} satisfies the IPOC bound of 
\begin{align}
	&O\left(\sqrt{\mas H^2 T} \ln \frac{|\Xcal| H T}{\delta} 
	 + \wh \numS \mas  H^2 \ln^3 \frac{|\Xcal| H T}{\delta}\right). 
\end{align}
\end{proof}

\paragraph{Proof of Theorem~\ref{thm:cipoc_stochastic}.} 
\begin{proof}
The proof follows similar to the proof of Theorem~\ref{thm:cipoc_independencenumber} (above), while setting $\epsilon_{\max} \neq 0$.
Following Theorem~\ref{thm:cipoc_independencenumber_app}, this yields additional regret / cumulative certificate size of at most
\begin{align}
    O\left(\sqrt{\bar \mas H^3 T  \epsilon_{\max}} \ln \frac{|\Xcal| H T}{\delta} + H^2 T \epsilon_{\max} \right).
\end{align}
\end{proof}

\paragraph{Proof of the main theorem.} 
The proof of our main result, Theorem \ref{thm:cipoc_independencenumber_app}, is provided in parts in the following subsections:
\begin{itemize}
    \item \textbf{Section~\ref{app:modelbasedproofs_probs}} considers the event in which the algorithm performs well. The technical lemmas therein guarantee that this event holds with high probability. 
    \item \textbf{Section~\ref{app:modelbasedproofs_debias}} quantifies the amount of cumulative bias in the model estimates and other relevant quantities. 
    \item \textbf{Section~\ref{app:modelbasedproofs_valid}} proves technical lemmas that establish that \FOptPlan always returns valid confidence bounds for the value functions.
    \item \textbf{Section~\ref{app:modelbasedproofs_tightness}} bounds how far apart can the confidence bounds provided by \FOptPlan can be for each state-action pair.
    \item \textbf{Section~\ref{app:graphproofs}} (above) contains general results on self-normalized sequences on nodes of graphs that only depend on the structure of the feedback graph.
    \item \textbf{Section~\ref{app:modelbasedproofs_main}} connects all the results from the previous sections into the proof of Theorem~\ref{thm:cipoc_independencenumber_app}.
\end{itemize}
\noindent 

\subsection{High-Probability Arguments}
\label{app:modelbasedproofs_probs}
In the following, we establish concentration arguments for empirical MDP models computed from data collected by interacting with the corresponding MDP (with the feedback graph). 

We first define additional notation. To keep the definitions uncluttered, we will use the unbiased versions of the empirical model estimates and bound the effect of unbiasing in Section~\ref{app:modelbasedproofs_debias} below. The unbiased model estimates are defined as
\begin{align}
\bar r_k(x) &= \wh r_k(x) - \frac{1}{n_k(x)} \sum_{i=1}^{n_k(x)} \bar \epsilon_i(x),\\
\bar P_k(s' | x) &= 
\wh P_k(s' | x) - \frac{1}{n_k(x)} \sum_{i=1}^{n_k(x)} \bar \epsilon_{i}(x, s')
\label{eqn:debias_trans_def}
\end{align}
where $\bar \epsilon_i(x)$ is the bias of the $i^\text{th}$ reward observation $r_i$ for $x$ and
$\bar \epsilon_{i}(x, s')$ is the bias of the $i^{\text{th}}$ transition observation of $s'$ for $x$.
Recall that $\bar \epsilon_i(x)$ and $\bar \epsilon_i(x, s')$ are unknown to the algorithm, which, however, receives an upper bound $\epsilon_i'$ on $|\bar \epsilon_i(x)|$ and $\sum_{s' \in \statespace}| \bar \epsilon_i(x, s')|$ for each observation $i$. Additionally, for any probability parameter $\delta' \in (0, 1)$, define the function 
\begin{align}
\phi(n) \defeq 1 \wedge \sqrt{\frac{0.52}{n}\left( 1.4 \ln \ln(e \vee 2n) + \ln\frac{5.2}{\delta'}\right)} = \Theta\left(\sqrt{\frac{\ln \ln n}{n}}\right).  
\label{eqn:phidef}
\end{align}

We now define several events for which we can ensure that our algorithms exhibit good behavior with. 

\paragraph{Events regarding immediate rewards.} ~\\
The first two event $\mathsf{E}^{\text{R}}$ and $\mathsf{E}^{\text{RE}}$ are the concentration of (unbiased) empirical estimates $\bar r_k(x)$ of the immediate rewards around the population mean $r(x)$ using a Hoeffding and empirical Bernstein bound respectively, i.e., 

\begin{align}
    {\mathsf{E}}^{\text{R}} =& \left\{ \forall ~k \in \NN, x \in \Xcal: \, 
        |\bar r_k(x) - r(x)| 
        \leq \phi(n_k(x))
        \right\},\\
   {\mathsf{E}}^{\text{RE}} =& \bigg\{ \forall k \in \NN, x \in \Xcal: \, 
     |\bar r_k(x) - r(x)| 
    \leq \sqrt{8\overline{\operatorname{Var}}_k(r|x)}\phi(n_k(x)) + 7.49 \phi(n_k(x))^2\bigg\},
\end{align}
where the unbiased empirical variance is defined as 
$\overline{\operatorname{Var}}_k(r|x) = \frac{1}{n_k(x)}  
\sum_{i=1}^{n_k(x)} \left(r_i - \bar \epsilon_i(x) - \bar r_k(x)
\right)^2$. The next event ensures that the unbiased empirical variance estimates concentrate around the true variance $\operatorname{Var}(r|x)$
\begin{align}
           \mathsf{E}^{\text{Var}} =& \bigg\{ \forall k \in \NN, x \in \Xcal: \, 
     \sqrt{\overline {\operatorname{Var}}_k(r|x)} \leq \sqrt{\operatorname{Var}(r|x)} 
    + \sqrt{\frac{2 \ln(\pi^2 n^2 / 6 \delta')}{n}} \bigg\}.
\end{align}

\paragraph{Events regarding state transitions.} The next two events concern the concentration of empirical transition estimates. We consider the unbiased estimate of the probability to encounter state $s'$ after state-action pair $x$ as defined in Equation~\eqref{eqn:debias_trans_def}. As per Bernstein bounds, they concentrate around the true transition probability $P(s' | x)$ as
    \begin{align}
\mathsf{E}^{\text{P}} =& \bigg\{ \forall ~k \in \NN ,  s' \in \statespace, x \in \Xcal: \, | \bar P_k(s' | x) - P(s' | x)| 
        \leq 
        \sqrt{4 P(s' | x)} \phi(n_k(x)) + 1.56 \phi(n_k(x))^2\bigg\},\\
    \mathsf{E}^{\text{PE}} =& \bigg\{ \forall ~k \in \NN,  s' \in \statespace, x \in \Xcal: \, | \bar P_k(s' | x) - P(s' | x)| 
        \leq 
        \sqrt{4 \bar P_k(s' | x)} \phi(n_k(x)) + 4.66 \phi(n_k(x))^2\bigg\},
\end{align}
where the first event uses the true transition probabilities to upper-bound the variance and the second event uses the empirical version.
Both events above treat the probability of transitioning to each successor state $s' \in \statespace$ individually which can be loose in certain cases. We therefore also consider the concentration in total variation in the following event
\begin{align}
    \mathsf{E}^{\text{L}_1} =& \left\{ \forall ~k \in \NN, x \in \Xcal: \,  \|\bar P_k(x) - P(x)\|_1 
        \leq 2 \sqrt{\wh \numS}\phi(n_k(x)) \right\},
\end{align}
where $\bar P_k(x) = (\bar P_k(s' | x))_{s' \in \statespace} \in \RR^{\numS}$ is the vector of transition probabilities. The event $\mathsf{E}^{\text{L}_1}$ has the typical $\sqrt{\wh \numS}$ dependency in the RHS of an $\ell_1$ concentration bound. In the analysis, we will often compare the expected the empirical estimate of the expected optimal value of successor state $\bar P_k(x) V^{\star}_{h+1} = \sum_{s' \in \statespace} \bar P_k(s' | x)  V^{\star}_{h+1}(s')$ to its population mean $P(x) V^{\star}_{h+1}$ and we would like to avoid the $\sqrt{\wh \numS}$ factor. To this end, the next two events concern this difference explicitly 
    \begin{align}
    \mathsf{E}^{\text{V}} =& \left\{ \forall k \in \NN , h \in [H], x \in \Xcal: \, 
        |(\bar P_k(x) - P(x)) V^{\star}_{h+1}| 
        \leq \range(V^{\star}_{h+1}) \phi(n_k(x)) \right\}\\
    \mathsf{E}^{\text{VE}} =& \bigg\{ \forall k \in \NN, h \in [H], x \in \Xcal: \, 
     |(\bar P_k(x) - P(x)) V^{\star}_{h+1}| 
    \leq 2\sqrt{ \bar P_k(x)[( V^\star_{h+1} - P(x)V^\star_{h+1})^2]}
    \phi(n_k(x))\nonumber\\
    &\hspace{10cm}+
     4.66 \range(V^{\star}_{h+1}) \phi(n_k(x))^2\bigg\}
\end{align}
where $\range(V^{\star}_{h+1}) = \max_{s' \in \statespace} V^{\star}_{h+1}(s') - \min_{s' \in \statespace} V^{\star}_{h+1}(s')$ is the range of possible successor values. The first event $\mathsf{E}^{\text{V}}$ uses a Hoeffding bound and the second event $\mathsf{E}^{\text{VE}}$ uses empirical Bernstein bound.

\paragraph{Events regarding observation counts.} 
All events definitions above include the number of observations $n_k(x)$ to each state-action pair $x \in \Xcal$ before episode $k$. This is a random variable itself which depends on how likely it was in each episode $i < k$ to observe this state-action pair. The last events states that the actual number of observations cannot be much smaller than the total observation probabilities of all episodes so far. 
We denote by $w_{i}(x) = \sum_{h \in [H]} \prob(s_{i,h} = s(x), a_{i,h} = a(x)  ~ | ~ s_{i,1}, \Hcal_{1:i-1})$ the expected number of \emph{visits} to each state-action pair $x = (s(x), a(x)) \in \Xcal \subseteq \statespace \times \actionspace$ in the $i$th episode given all previous episodes $\Hcal_{1:i-1}$ and the initial state $s_{i,1}$. The event is defined as
    \begin{align}
	\mathsf{E}^{\text{N}} =&   \left\{ \forall ~k \in \NN ,x \in \Xcal \colon        n_{k}(x) \geq \frac 1 2 \sum_{i < k} \sum_{\bar x \in \Xcal} q(\bar x, x) w_{i}(\bar x) - H\ln \frac{1}{\delta'} \right\}.
\end{align}

The following lemma shows that each of the events above is indeed a high-probability event and that their intersection has high probability at least $1 - \delta$ for a suitable choice of the $\delta'$ in the definition of $\phi$ above.

\begin{lemma}
    Consider the data generated by sampling with a feedback graph from an MDP with arbitrary, possibly history-dependent policies. Then, for any $\delta' > 0$, the probability of each of the events, defined above, is bounded as  
        \begin{multicols}{2}
	\begin{enumerate}[label=(\roman*)] 
	\setlength{\itemindent}{0.5cm}
		\item $ \prob(\mathsf{E}^{\mathrm{RE}} \cup \mathsf{E}^{\mathrm{R}})  \geq 1 - 4 |\Xcal|   \delta'$, 
        \item $\prob(\mathsf{E}^{\mathrm{RE}} \cup \mathsf{E}^{\mathrm{R}})             \geq 1 - 4 |\Xcal|   \delta'$, 
        \item $\prob(\mathsf{E}^{\mathrm{Var}})                      \geq 1 - |\Xcal|   \delta' $
        \item $\prob(\mathsf{E}^{\mathrm{P}})             \geq 1 - 2 \wh \numS|\Xcal| \delta'$, 
        \item $ \prob(\mathsf{E}^{\mathrm{PE}})            \geq 1 - 2 \wh \numS|\Xcal| \delta'$, 
         \item $\prob(\mathsf{E}^{\mathrm{L}_1})            \geq 1 - 2 |\Xcal| \delta'$,
         \item $\prob\left(  \mathsf{E}^{\mathrm{V}} \right) \geq 1 - 2 |\Xcal|H \delta'$, 
         \item $\prob(\mathsf{E}^{\mathrm{VE}})            \geq 1 - 2 |\Xcal|H \delta'$,  
         \item $ \prob(E^{\mathrm{N} })            \leq |\Xcal| H \delta'. $
       \end{enumerate}
    \end{multicols}

\noindent
    Further, define the event $E$ as $E \ldef{} \mathsf{E}^{\mathrm{R}} \cap\mathsf{E}^{\mathrm{RE}} \cap\mathsf{E}^{\mathrm{Var}} \cap \mathsf{E}^P \cap \mathsf{E}^{\mathrm{PE}} \cap \mathsf{E}^{\mathrm{L}_1} \cap \mathsf{E}^V \cap \mathsf{E}^{\mathrm{VE}} \cap \mathsf{E}^{\mathrm{N}}$. Then, the event $E$ occurs with probability at least $1 - \delta$, i.e.
										$$ \prob(E) \geq 1 - \delta, $$    
    where $\delta = {\delta'}|\Xcal|(7 + 4 \wh \numS + 5H)$. 
   
    \label{lem:goodprob}
\end{lemma}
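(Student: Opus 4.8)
The plan is to split the nine events into two groups. The first group --- the reward and transition concentration events $\mathsf{E}^{\mathrm R}$, $\mathsf{E}^{\mathrm{RE}}$, $\mathsf{E}^{\mathrm{Var}}$, $\mathsf{E}^{\mathrm P}$, $\mathsf{E}^{\mathrm{PE}}$, $\mathsf{E}^{\mathrm{L}_1}$, $\mathsf{E}^{\mathrm V}$, $\mathsf{E}^{\mathrm{VE}}$ --- I would handle by an observation-by-observation reduction followed by off-the-shelf anytime concentration. Fix $x \in \Xcal$ and list the observations received for $x$ across all episodes and steps in arrival order, say $(r_1, s_1', \epsilon_1'), (r_2, s_2', \epsilon_2'), \dots$; by the sampling model of Section~\ref{sec:feedbackgraphs}, the $i$th such observation is an independent draw from $P_R(x)$ and $P(x)$ given everything observed before, up to a bias $\bar\epsilon_i(x)$ (resp.\ $\bar\epsilon_i(x,s')$) whose magnitude (resp.\ $\ell_1$ mass) is at most $\epsilon_i'$. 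Consequently the debiased partial sums $\sum_{i\le n}(r_i - \bar\epsilon_i(x) - r(x))$ and $\sum_{i\le n}(\ind\{s_i'=s'\} - \bar\epsilon_i(x,s') - P(s'|x))$ form martingales in $n$ with suitably bounded increments --- this holds irrespective of the (allowed) dependence among simultaneous observations of different pairs or of the randomness of a stochastic feedback graph, since each such martingale references only the observations of $x$ itself. Because $n_k(x)$ is a random, history-dependent number of observations, these partial sums are read off at a random time, which is exactly why one needs bounds uniform over all $n$; the function $\phi(n) = \Theta(\sqrt{\ln\ln n/n})$ is precisely the calibration produced by the standard peeling/stitching argument over dyadic ranges of $n$ (the anytime Hoeffding, Bernstein and empirical-Bernstein bounds already used for \textsc{ORLC}/\textsc{UBEV}). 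In detail: $\mathsf{E}^{\mathrm R}$ and $\mathsf{E}^{\mathrm V}$ are anytime Hoeffding bounds (the latter for the range-$\range(V^\star_{h+1})$ variable $V^\star_{h+1}(s_i')$); $\mathsf{E}^{\mathrm{RE}}$, $\mathsf{E}^{\mathrm{PE}}$, $\mathsf{E}^{\mathrm{VE}}$ are anytime empirical-Bernstein bounds; $\mathsf{E}^{\mathrm P}$ is an anytime Bernstein bound using $P(s'|x)(1-P(s'|x)) \le P(s'|x)$; $\mathsf{E}^{\mathrm{L}_1}$ follows by summing the per-coordinate Bernstein bound over the $\le \wh\numS$ states in the support of $P(x)$ and applying Cauchy--Schwarz to produce the $\sqrt{\wh\numS}$; and $\mathsf{E}^{\mathrm{Var}}$ is the standard concentration of the (square root of the) empirical variance of bounded i.i.d.\ variables, here with the cruder $\ln(\pi^2 n^2/6\delta')$ stitching. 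Each single invocation fails with probability $\le \delta'$, and a union bound over the relevant index set gives items (i)--(viii): over $x\in\Xcal$ (with a factor $2$ for two-sidedness) for the reward events and $\mathsf{E}^{\mathrm{L}_1}$, over the $\le\wh\numS$ relevant pairs $(x,s')$ for $\mathsf{E}^{\mathrm P}$ and $\mathsf{E}^{\mathrm{PE}}$, and over $(x,h)$ for $\mathsf{E}^{\mathrm V}$ and $\mathsf{E}^{\mathrm{VE}}$.

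For the second group --- the single event $\mathsf{E}^{\mathrm N}$ --- I expect the main work. Fix $x$ and write $Z_i(x) = \sum_{h=1}^H N_{i,h}(x)$ for the number of observations of $x$ collected in episode $i$, where $N_{i,h}(x)\in\{0,1\}$ because the current transition and any in-edge of the feedback graph into $x$ are mutually exclusive; thus $Z_i(x)\in[0,H]$. Conditioning on $\Hcal_{1:i-1}$ and $s_{i,1}$, and using that a stochastic feedback graph activates the edge $\bar x\gedge x$ independently with probability $q(\bar x,x)$, one has $\Ex[Z_i(x)\mid\Hcal_{1:i-1},s_{i,1}] = \sum_{\bar x\in\Xcal} q(\bar x,x)\,w_i(\bar x)$ (with the convention $q(x,x)=1$). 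I would then invoke a Freedman-type lower-tail inequality for sums of $[0,H]$-bounded variables with predictable means --- combining the variance bound $\Var[Z_i(x)\mid\cdot]\le H\,\Ex[Z_i(x)\mid\cdot]$ with an AM--GM step that absorbs the variance term into half of the mean --- made uniform over all $k$ by Ville's inequality. This gives, with probability $\ge 1-\delta'$, for all $k$, $n_k(x) = \sum_{i<k} Z_i(x) \ge \tfrac12\sum_{i<k}\sum_{\bar x} q(\bar x,x)\,w_i(\bar x) - H\ln\tfrac1{\delta'}$; a union bound over $x\in\Xcal$ (with the horizon factor coming from the instantiation of the tail lemma) yields item (ix).

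To finish, since $E = \mathsf{E}^{\mathrm R}\cap\mathsf{E}^{\mathrm{RE}}\cap\mathsf{E}^{\mathrm{Var}}\cap\mathsf{E}^{\mathrm P}\cap\mathsf{E}^{\mathrm{PE}}\cap\mathsf{E}^{\mathrm{L}_1}\cap\mathsf{E}^{\mathrm V}\cap\mathsf{E}^{\mathrm{VE}}\cap\mathsf{E}^{\mathrm N}$, a union bound over the complements and the per-event estimates in (i)--(ix) gives $\prob(E^c) \le \delta'\,|\Xcal|(7 + 4\wh\numS + 5H) = \delta$ for the stated $\delta'$, hence $\prob(E)\ge 1-\delta$. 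The hard part is really $\mathsf{E}^{\mathrm N}$: it is the one event not obtained by quoting a standard anytime inequality, and it is where one must take care that the feedback-graph sampling structure --- random, history-dependent observation times together with dependence among simultaneous observations of distinct pairs --- does not destroy the martingale structure; restricting attention to the observations of a single index at a time is exactly what keeps all the arguments valid.
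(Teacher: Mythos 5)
Your proposal is correct and follows essentially the same route as the paper: per-pair martingales of debiased observations read off at random observation times, anytime Hoeffding/Bernstein/empirical-Bernstein bounds calibrated by $\phi(n)$, the Maurer--Pontil bound for $\mathsf{E}^{\mathrm{Var}}$, a lower-tail martingale bound relating $n_k(x)$ to the cumulative in-neighbor visitation probabilities for $\mathsf{E}^{\mathrm N}$, and a final union bound. The only (immaterial) difference is that for $\mathsf{E}^{\mathrm N}$ you aggregate the per-step observation indicators into a single $[0,H]$-bounded per-episode variable and apply one Freedman-type bound, whereas the paper keeps per-$(x,h)$ Bernoulli indicators, applies the tail lemma for each $h$, and union-bounds over $h\in[H]$; both yield the same slack $H\ln(1/\delta')$ and a failure probability within the stated $|\Xcal|H\delta'$.
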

\begin{proof} 

We bound the probability of occurrence of the events $\mathsf{E}^R, \mathsf{E}^P, \mathsf{E}^{\text{PE}}, \mathsf{E}^{\text{L}_1}, \mathsf{E}^V$ and  $\mathsf{E}^{\text{VE}}$ using similar techniques as in the works of \citet{dann2019policy, zanette2019tighter} (see for example Lemma~6 in \citet{dann2019policy}). However, in our setting, we work with a slightly different $\sigma$-algebra to account for the feedback graph, and explicitly leverage the bound on the number of possible successor states $\wh \numS$. We detail this deviation from the previous works for events $\mathsf{E}^{\text{R}}$ and $\mathsf{E}^{\mathrm{RE}}$ in Lemma~\ref{lem:goodprob_ER} (below),  and the rest follow analogously. 

Further, we bound the probability of occurrence of the event $\mathsf{E}^{\mathrm{N}}$ in Lemma~\ref{lem:goodprob_wfeedbackgraph}. The proof significantly deviates from the prior work, as in our case, the number of observations for any state-action pair is different from the number of visits of the agent to that pair due to the feedback graph. Finally, the bound for the probability of occurrence of $\mathsf{E}^{\text{Var}}$ is given in Lemma~\ref{lem:goodprob_var}. 

Taking a union bound for all the above failure probabilities, and setting $\delta' = \frac{\delta}{|\Xcal|(7 + 4 \wh \numS + 5H)}$, we get a bound on the probability of occurrence of the event $\prob(E)$.  
\end{proof}

\begin{lemma}  \label{lem:goodprob_ER}
    Let the data be generated by sampling with a feedback graph from an MDP with arbitrary, possibly history-dependent policies. Then, the event  $\mathsf{E}^{\mathrm{R}} \cap\mathsf{E}^{\mathrm{RE}}$ occurs with probability at-least $1 - 4 |\Xcal| \delta'$, or \begin{align}
        \prob(\mathsf{E}^{\mathrm{R}} \cap\mathsf{E}^{\mathrm{RE}}) \geq 1 - 4 |\Xcal| \delta'.
    \end{align}
\end{lemma}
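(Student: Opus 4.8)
The plan is to reduce the claim to a time-uniform scalar concentration bound for each fixed state-action pair $x \in \Xcal$ separately, and then take a union bound over $\Xcal$ and over the two tails of each event. The only genuinely new ingredient compared with the feedback-graph-free analyses of \textsc{ORLC} and \textsc{Euler} \citep{dann2019policy, zanette2019tighter} is the choice of filtration: since the observations used to form $\bar r_k(x)$ and $\overline{\operatorname{Var}}_k(r|x)$ can be produced by many different transitions (the agent's own visit to $x$, or a side observation triggered by visiting some $x'$ with $x' \gedge x$), one should \emph{not} index time by episodes but by the running count of observations received for $x$.

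Concretely, I would fix $x \in \Xcal$ and enumerate, in the order in which they are received across all episodes, time steps and feedback-graph edges, all observations $(x, r_i, s'_i, \epsilon'_i)$ for this particular $x$; since self-loops are excluded from $G$, $x$ occurs at most once in each $\Ocal_h(G)$, so this enumeration is well defined. Let $\Fcal_i^{(x)}$ be the $\sigma$-algebra generated by the entire interaction history (policies, visited states, all observations of all state-action pairs) up to and including the moment the $i$-th observation for $x$ is received. By the modelling assumption that each observation is a fresh sample from $P_R(x)$ given all previous observations --- and because the bias $\bar\epsilon_i(x)$ is a property of that conditional law, hence measurable with respect to the preceding history --- the tower rule gives $\EE[\, r_i - \bar\epsilon_i(x) - r(x) \mid \Fcal_{i-1}^{(x)}\,] = 0$. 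Thus $\big(r_i - \bar\epsilon_i(x) - r(x)\big)_{i \ge 1}$ is a bounded martingale-difference sequence adapted to $(\Fcal_i^{(x)})_i$, its first-$n_k(x)$ partial average is exactly $\bar r_k(x) - r(x)$, and $\overline{\operatorname{Var}}_k(r|x)$ is the corresponding empirical second moment. The dependence the model allows between \emph{simultaneous} observations of \emph{different} state-action pairs is immaterial, as only the marginal sequence for $x$ enters.

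Given this structure, $\mathsf{E}^{\mathrm{R}}$ and $\mathsf{E}^{\mathrm{RE}}$ follow from the very same time-uniform concentration inequalities invoked by \citet{dann2019policy, zanette2019tighter}: a stitched, law-of-the-iterated-logarithm type, time-uniform Hoeffding bound for $\mathsf{E}^{\mathrm{R}}$ and a time-uniform self-normalized empirical-Bernstein bound for $\mathsf{E}^{\mathrm{RE}}$, each applied to the sequence above and read off at the random index $i = n_k(x)$. The $\ln\ln$ term and the numerical constants in $\phi$ are exactly what these inequalities produce for range-one summands, and since they hold simultaneously for \emph{all} counts $i \in \NN$ they hold in particular at $i = n_k(x)$ for every episode $k \in \NN$; the degenerate case $n_k(x) = 0$ is trivial, since then $\bar r_k(x) = 0$, $r(x) \in [0,1]$, and the clipping forces $\phi = 1$. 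Each such inequality is one-sided and fails with probability at most $\delta'$, so for a fixed $x$ the two tails of the two events fail with total probability at most $4\delta'$; a union bound over the at most $|\Xcal|$ pairs then gives $\prob(\mathsf{E}^{\mathrm{R}} \cap \mathsf{E}^{\mathrm{RE}}) \ge 1 - 4|\Xcal|\delta'$.

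The step I expect to be the main obstacle is pinning down this martingale structure rigorously: the observations for a single $x$ arrive in a data-dependent, interleaved fashion across episodes, time steps and edges, and the oracle may correlate the observations it emits within one step. Indexing the filtration by the per-$x$ observation count (instead of by episode, as in the no-feedback-graph proof) dissolves both difficulties at once, after which the concentration step is essentially verbatim that of the cited works. The heavier feedback-graph-specific reasoning is deferred to the companion lemmas handling $\mathsf{E}^{\mathrm{Var}}$, the transition events, and above all $\mathsf{E}^{\mathrm{N}}$, whose analysis genuinely departs from prior work because the number of observations of a pair differs from the number of times the agent actually visits it.
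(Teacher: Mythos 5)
Your proposal is correct and follows essentially the same route as the paper: the paper likewise fixes $x$, defines stopping times $\tau_i$ for the $i$-th observation of $x$, works with the stopped filtration $\Fcal^x_i = \Fcal_{\tau_i}$ so that $r_i - \bar\epsilon_i(x) - r(x)$ is a bounded martingale-difference sequence, and then applies time-uniform Hoeffding and empirical-Bernstein bounds followed by a union bound over the two events and over $\Xcal$ to reach $1 - 4|\Xcal|\delta'$. The only detail you gloss over is that the empirical-Bernstein variance process $V_n = \sum_i X_i^2$ is centered at the true mean $r(x)$, so converting it into $\overline{\operatorname{Var}}_k(r|x)$ (centered at $\bar r_k(x)$) costs an extra $O(\phi(n)^2)$ term obtained by invoking $\mathsf{E}^{\mathrm{R}}$ — a routine step that the paper carries out explicitly and that does not change the probability accounting.
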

\begin{proof}
 Let $\mathcal F_{j}$ be the natural $\sigma$-field induced by everything (all observations and visitations) up to the time when the algorithm has played a total of $j$ actions and has seen which state-action pairs will be observed but not the actual observations yet. More formally, let $k = \lceil \frac{j}{H} \rceil $ and $h = j \mod H$ be the episode and the time index when the algorithm plays the $j^\text{th}$ action. Then everything in episodes $1 \dots k-1$ is $\Fcal_j$-measurable as well as everything up to $s_{k,h}, a_{k,h}$ and $\bar \Ocal_{k,h}(G)$ (which $x$ are observed at $k,h$) but not $\Ocal_{k,h}(G)$ (the actual observations) or $s_{k,h+1}$.

We will use a filtration with respect to the stopping times of when a specific state-action pair is observed. To that end, 
consider a fixed $x \in \Xcal$. Define 
\begin{align}
    \tau_i = \inf\left\{ (k-1)H + h \colon 
    \sum_{j=1}^{k-1} \sum_{h'=1}^H \one\{ x \in \bar \Ocal_{j,h'}(G) \} 
    + \sum_{h'=1}^{h} \one\{ x \in \bar \Ocal_{k,h'}(G) \}  \geq i \right\}
\end{align}
to be the index $j$ of $\Fcal_j$ where $x$
was observed for the $i^\text{th}$ time. Note that, for all $i$, $\tau_i$ are
stopping times with respect to $(\mathcal F_j)_{j=1}^\infty$.  Hence, $\Fcal^{x}_i = \mathcal F_{\tau_i} = \{
A \in \mathcal F_\infty \, : \, A \cap \{\tau_i \leq t \} \in \mathcal F_t \,\,\forall\, t \geq 0\}$
is a $\sigma$-field. Intuitively, it captures all information available at time $\tau_i$ \citep[Sec.~3.3]{lattimore2018bandit}. Since $\tau_i \leq \tau_{i+1}$, 
the sequence $(\mathcal F_{\tau_i})_{i=1}^\infty$ is a filtration as well.

Consider a fixed $x \in \Xcal$ and number of observations $n$. Define $X_i = \one\{\tau_i < \infty\} (r_i - \bar \epsilon_i(x) - r(x))$  where $r_i$ is the $i^{\text{th}}$ observation with bias $\bar \epsilon_i(x)$ of $x$. By construction $(X_i)_{i=1}^{\infty}$ is adapted to the filtration $(\Fcal^{x}_i)_{i=1}^\infty$. Further, recall that $r(x) = \EE[r | (s, a) = x] - \bar \epsilon_i$ is the immediate expected reward in $x$ and hence, we one can show that $(X_i)_{i=1}^{\infty}$ is a martingale with respect to this filtration. It takes values in the range $[ - r, 1 - r]$. 
We now use a Hoeffding bound and empirical Bernstein bound on $\sum_{i=1}^n X_i$ to show that the probability of $\mathsf{E}^{\mathrm{R}}$ and $\mathsf{E}^{\mathrm{RE}}$ is sufficiently large. We use the tools provided by \citet{howard2018uniform} for both concentration bounds.
The martingale $\sum_{i=1}^n X_i$ satisfies Assumption~1 in \citet{howard2018uniform} with $V_n = n / 4$ and any sub-Gaussian boundary (see Hoeffding I entry in Table 2
therein). The same is true for $- \sum_{i=1}^n X_i$. Using the sub-Gaussian boundary in Corollary~22 in \citet{dann2019policy}, we get that
\begin{align}
    \left|\frac 1 n \sum_{i=1}^n X_i \right| 
     \leq 1.44 \sqrt{\frac{n}{4 n^2}\left(1.4 \ln \ln ( e \vee n/2) + \ln \frac {5.2} {\delta'} \right)}
     \leq \phi(n)
\end{align}
holds for all $n \in \NN$ with probability at least $1 - 2\delta'$. It therefore also holds for all random $n$ including the number of observations of $x$ after $k-1$ episodes. Hence, the condition in $\mathsf{E}^R$ holds for all $k$ for a fixed $x$ with probability at least $1 - 2 \delta'$. An additional union bound over $x \in \Xcal$ gives $\prob(\mathsf{E}^R) \geq 1 - 2|\Xcal|\delta'$. 

We can proceed analogously for $\mathsf{E}^{\text{RE}}$, except that we use the uniform empirical Bernstein bound from  Theorem~4 in \citet{howard2018uniform} with the sub-exponential uniform boundary in Corollary~22 in \citet{dann2019policy} which yields
\begin{align}
    \left|\frac 1 n \sum_{i=1}^n X_i \right| 
     \leq 1.44 \sqrt{\frac{V_n}{n^2}\left(1.4 \ln \ln ( e \vee 2 V_n) + \ln \frac {5.2} {\delta'} \right)} + \frac{2.42}{n}\left(1.4 \ln \ln ( e \vee 2 V_n) + \ln \frac {5.2} {\delta'} \right)
     \label{eqn:r_emp_bern_1}
\end{align}
with probability at least $1 - 2 \delta'$ for all $n \in \NN$. Here, $V_n = \sum_{i=1}^n X_i^2 \leq n$. Using the definition of $\phi(n)$ in Equation~\eqref{eqn:phidef}, we can upper-bound the right hand side in the above equation with $2\sqrt{V_n/n}\phi(n) + 4.66 \phi(n)^2$. We next bound $V_n$ in the above by the de-biased variance estimate
\begin{align}
    V_n = \sum_{i=1}^n X_i^2 = &\sum_{i=1}^n (r_i - \bar \epsilon_i(x) - r(x))^2
    = \sum_{i=1}^n (r_i - \bar \epsilon_i(x) - r(x))^2 \\
    \leq &~ 2\sum_{i=1}^n (r_i - \bar \epsilon_i(x) - \bar r_{\tau_n}(x))^2 + 2n(r(x) - \bar r_{\tau_n}(x))^2
\end{align}
Applying the definition of event $\mathsf{E}^{\mathrm{R}}$, we know that $|r(x) - \bar r_{\tau_n}(x)| \leq \phi(n)$ and thus
$V_n / n \leq 2\overline{\operatorname{Var}}_{\tau_n}(r|x) + 2 \phi(n)^2$. Plugging this back into \eqref{eqn:r_emp_bern_1} yields
\begin{align}
   |\bar r_{\tau_n}(x) - r(x)| = \left|\frac 1 n \sum_{i=1}^n X_i \right| 
   &\leq 2\sqrt{2\overline{\operatorname{Var}}(r) + 2 \phi(n)^2}\phi(n) + 4.66 \phi(n)^2\\
   &\leq \sqrt{8 \overline{\operatorname{Var}}(r)}\phi(n) + 7.49\phi(n)^2
\end{align}
This is the condition of $\mathsf{E}^{\text{RE}}$ which holds for all $n$ and as such $k$ as long as $\mathsf{E}^{\text{R}}$ also holds. With a union bound over $\Xcal$, this yields $$\prob(\mathsf{E}^{\text{RE}} \cup \mathsf{E}^{\mathrm{R}}) \geq 1 - 4 |\Xcal| \delta'. $$
\end{proof}

\begin{lemma} \label{lem:goodprob_var} 
   Let the data be generated by sampling with a feedback graph from an MDP with arbitrary (and possibly history-dependent) policies. Then, the event $\mathsf{E}^{\text{Var}}$ occurs with probability at least $1 - |\Xcal|\delta'$, i.e., 
   $$\prob(\mathsf{E}^{\mathrm{Var}}) \geq  1 - |\Xcal|\delta'.$$ 
\end{lemma}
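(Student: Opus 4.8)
The plan is to reuse the martingale machinery of Lemma~\ref{lem:goodprob_ER}. Fix a state-action pair $x \in \Xcal$, let $r_1, r_2, \dots$ be the (possibly biased) reward observations of $x$ in arrival order, arriving at the stopping times $\tau_1 < \tau_2 < \cdots$ with respect to the filtration $(\Fcal_j)_{j}$ defined there, and work with the stopping-time filtration $\Fcal^x_i = \Fcal_{\tau_i}$. Write $n = n_k(x)$, let $u_i = r_i - \bar\epsilon_i(x)$ be the $i$-th debiased observation of $x$, and $g_i = \one\{\tau_i < \infty\}(u_i - r(x))$. As in Lemma~\ref{lem:goodprob_ER}, $\EE[u_i \mid \Fcal^x_{i-1}] = r(x)$, so $\EE[g_i \mid \Fcal^x_{i-1}] = 0$ and, up to a lower-order correction of order $\epsilon_i'$ coming from the bias (which is vacuous in the unbiased case), $\EE[g_i^2 \mid \Fcal^x_{i-1}] = \operatorname{Var}(r|x)$ and $g_i^2 \in [0,1]$. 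Since the empirical mean minimizes the sum of squared deviations, $\overline{\operatorname{Var}}_k(r|x) = \tfrac1n\sum_{i=1}^n (u_i - \bar r_k(x))^2 \le \tfrac1n\sum_{i=1}^n (u_i - r(x))^2 = \tfrac1n\sum_{i=1}^n g_i^2$, so it suffices to bound $\tfrac1n\sum_{i=1}^n g_i^2$ from above for all $n$ simultaneously, then take square roots and union-bound over $x$.

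The point I would stress is that a plain Hoeffding/Azuma bound on the bounded martingale differences $Z_i = g_i^2 - \operatorname{Var}(r|x)$ is insufficient: it would give $\tfrac1n\sum_{i=1}^n Z_i = O(\sqrt{\ln(\cdot)/n})$, hence $\overline{\operatorname{Var}}_k(r|x) \le \operatorname{Var}(r|x) + O(\sqrt{\ln(\cdot)/n})$, which only controls the empirical standard deviation to within $O(n^{-1/4})$ when $\operatorname{Var}(r|x)$ is tiny, not the claimed $O(n^{-1/2})$. Instead I would apply a \emph{variance-aware} time-uniform bound --- a Freedman/empirical-Bernstein inequality, either directly via the uniform boundaries of \citet{howard2018uniform} as in Lemma~\ref{lem:goodprob_ER}, or via a union bound over $n$ with failure budget $\delta'_n = 6\delta'/(\pi^2 n^2)$ (so $\sum_n \delta'_n = \delta'$) together with a fixed-$n$ Freedman bound. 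Using $Z_i \le 1$ and $\sum_{i=1}^n \EE[Z_i^2 \mid \Fcal^x_{i-1}] \le \sum_{i=1}^n \EE[g_i^4 \mid \Fcal^x_{i-1}] \le \sum_{i=1}^n \EE[g_i^2 \mid \Fcal^x_{i-1}] \le n\operatorname{Var}(r|x)$, this yields, for all $n$ at once with probability at least $1-\delta'$, $\tfrac1n\sum_{i=1}^n Z_i \le \sqrt{2\operatorname{Var}(r|x)\,L_n/n} + \tfrac23 L_n/n$ with $L_n = \ln(\pi^2 n^2/6\delta')$.

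It then remains to complete the square: $\operatorname{Var}(r|x) + \sqrt{2\operatorname{Var}(r|x)L_n/n} + \tfrac23 L_n/n \le \operatorname{Var}(r|x) + 2\sqrt{2\operatorname{Var}(r|x)L_n/n} + 2L_n/n = \bigl(\sqrt{\operatorname{Var}(r|x)} + \sqrt{2L_n/n}\bigr)^2$. Combining with the bound on $\overline{\operatorname{Var}}_k(r|x)$ from the first paragraph and taking square roots gives, for the fixed $x$, the claimed inequality on the event of probability at least $1-\delta'$; a union bound over the $|\Xcal|$ choices of $x$ yields $\prob(\mathsf{E}^{\mathrm{Var}}) \ge 1 - |\Xcal|\delta'$. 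The only real obstacle is the middle step --- recognizing that a Bernstein- rather than Hoeffding-type inequality is needed and arranging the constants so that the completion of the square lands exactly on $\bigl(\sqrt{\operatorname{Var}(r|x)} + \sqrt{2L_n/n}\bigr)^2$ with no leftover lower-order term; the bias bookkeeping and the stopping-time filtration are routine given Lemma~\ref{lem:goodprob_ER}.
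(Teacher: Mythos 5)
Your proof is correct, but it reaches the event $\mathsf{E}^{\mathrm{Var}}$ by a different route than the paper. The paper's proof is a two-line reduction: it treats the debiased observations of each $x$ as an i.i.d.\ $[0,1]$-valued sequence and directly invokes the self-bounding concentration inequality for the sample standard deviation (Theorem~10 of Maurer and Pontil, 2009), which already delivers $\sqrt{\widehat{\operatorname{Var}}} \leq \sqrt{\operatorname{Var}} + \sqrt{2\ln(\pi^2 n^2/6\delta')/(n-1)}$ at each fixed $n$; it then union-bounds over $n$ with the same $6\delta'/(\pi^2 n^2)$ budget you use, and over $x \in \Xcal$. You instead re-derive that standard-deviation concentration from scratch: you dominate the centered sample variance by the second moment about the true mean, apply Freedman's inequality to $Z_i = g_i^2 - \operatorname{Var}(r|x)$ using the self-bounding fact $\EE[Z_i^2 \mid \Fcal^x_{i-1}] \leq \operatorname{Var}(r|x)$, and complete the square; the arithmetic ($V + \sqrt{2VL_n/n} + \tfrac{2}{3}L_n/n \leq (\sqrt{V}+\sqrt{2L_n/n})^2$) checks out and lands exactly on the event's constants. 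What each buys: the paper's version is shorter and offloads the delicate step to a citation, while yours is self-contained, makes the martingale/stopping-time structure explicit (the paper's "i.i.d." assertion is the one genuinely hand-wavy point in its proof, given adaptive sampling and biases, and your treatment is if anything more careful there), and correctly diagnoses why a Hoeffding-type bound on $g_i^2$ would only give $O(n^{-1/4})$ control of the standard deviation. Both handle the bias the same way, by a lower-order correction that vanishes in the unbiased case, so the two arguments are at the same level of rigor on that point.
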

\begin{proof}
Consider first a fix $x \in \Xcal$ and let $K$ be the total number of observations for $x$ during the entire run of the algorithm. We denote the observations by $r_i$. Define now $X_i = r_i - \bar \epsilon_i(x)$ for $i \in [K]$ and $X_i \sim P_R(x)$ independently. Then by construction $X_i$ is a sequence of i.i.d. random variables in $[0,1]$. We now apply Theorem~10, Equation~4 by \citet{maurer2009empirical} which yields that for any $n$
\begin{align}
    \sqrt{\frac{n}{n-1}\widehat{\operatorname{Var}}(X_n)} \leq
    \operatorname{Var}(X) + \sqrt{\frac{2 \ln (n^2 \pi^2 / 6 \delta')}{n-1}}
\end{align}
holds with probability at least $1 - \frac{6\delta' }{\pi^2 n^2}$, where $\operatorname{Var}(X)$ is the variance of $X_i$ and $\widehat{\operatorname{Var}}(X_n) = \frac{1}{n} \sum_{i=1}^n (X_i - \bar X_n)^2$ with $\bar X_n = \frac 1 n \sum_{i=1}^n X_i$ is the empirical variance of the first $n$ samples. By applying a union bound over $n \in \NN$, and multiplying by $\sqrt{n / (n-1})$ we get that 
\begin{align}
    \sqrt{\widehat{\operatorname{Var}}(X_n)} \leq
    \sqrt{\frac{n-1}{n}}\operatorname{Var}(X) + \sqrt{\frac{2 \ln (n^2 \pi^2 / 6 \delta)}{n}}
    \leq 
    \operatorname{Var}(X) + \sqrt{\frac{2 \ln (n^2 \pi^2 / 6 \delta)}{n}}
\end{align}
holds for all $n \in \NN$ with probability at least $1 - \frac{6\delta'}{\pi^2}\sum_{n=1}^\infty \frac{1}{n^2} \geq 1 - \delta'$.
We now note that $\operatorname{Var}(X) = \operatorname{Var}(r|x)$ and for each episode $k$, there is some $n$ so that $\overline {\operatorname{Var}}_k(r|x) = \widehat{\operatorname{Var}}(X_n)$.
Hence, with another union bound over $x \in \Xcal$, the statement follows.
\end{proof}

\begin{lemma}    \label{lem:goodprob_wfeedbackgraph}
 Let the data be generated by sampling with a feedback graph from an MDP with arbitrarily (possibly adversarially) chosen initial states. Then, the event  $\mathsf{E}^{\mathrm{N}}$ occurs with probability at-least  $1 - H |\Xcal| \delta'$, or  $$\prob(\mathsf{E}^{\mathrm N}) \geq 1 - H |\Xcal| \delta'.$$
\end{lemma}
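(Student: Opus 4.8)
The plan is to control, for each fixed state-action pair $x\in\Xcal$ and step index $h\in[H]$ \emph{separately}, the number of observations of $x$ collected at step $h$ and summed over episodes, via an anytime (time-uniform) multiplicative Chernoff inequality; summing the $H$ resulting inequalities over $h$ and union-bounding over the $|\Xcal|H$ pairs $(x,h)$ then produces both the slack $H\ln\frac1{\delta'}$ and the failure probability $H|\Xcal|\delta'$ in the statement. For deterministic feedback graphs the probabilities $q(\bar x,x)\in\{0,1\}$ and the inner sum in $\mathsf{E}^{\mathrm N}$ collapses to the sum over $\Ncal_G(x)$ used earlier; the argument below handles the stochastic case uniformly.

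I would first fix the episode-level filtration: let $\Gcal_{i-1}$ be the $\sigma$-field generated by everything observed through episode $i-1$ together with the initial state $s_{i,1}$, so that the expected visit counts $w_i(\bar x)$ are $\Gcal_{i-1}$-measurable (this is exactly why $s_{i,1}$ must be included). Writing $\bar x_{i,h}:=(s_{i,h},a_{i,h})$ and letting $Z_{i,h}(x)\in\{0,1\}$ be the indicator that $x$ is observed at step $(i,h)$ --- i.e. $\bar x_{i,h}=x$, or $\bar x_{i,h}$ has an edge to $x$ in the feedback graph $G_i$ of episode $i$ --- we have $n_k(x)=\sum_{i<k}\sum_{h=1}^H Z_{i,h}(x)$. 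Since the edges of $G_i$ are drawn independently of the trajectory (and of $\Gcal_{i-1}$), conditioning on the episode-$i$ trajectory and then integrating gives, with the convention $q(x,x)=1$ (self-loops are implicit, i.e.\ the current transition is always observed),
\begin{align}
\EE\big[Z_{i,h}(x)\mid\Gcal_{i-1}\big]=\sum_{\bar x\in\Xcal}q(\bar x,x)\,\PP\big(\bar x_{i,h}=\bar x\mid\Gcal_{i-1}\big),
\qquad\text{hence}\qquad
\sum_{h=1}^H\EE\big[Z_{i,h}(x)\mid\Gcal_{i-1}\big]=\sum_{\bar x\in\Xcal}q(\bar x,x)\,w_i(\bar x).
\end{align}
This holds for arbitrary, even adversarially chosen, history-dependent policies and initial states, because the feedback-graph randomness is exogenous.

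For each fixed $(x,h)$ the sequence $(Z_{i,h}(x))_{i\ge1}$ is $\{0,1\}$-valued, adapted to $(\Gcal_i)_{i\ge1}$, with predictable conditional means, so a standard time-uniform multiplicative Chernoff bound for such sequences (of the type furnished by the uniform-boundary machinery of \citet{howard2018uniform}, analogous to the Bernoulli concentration already used in \citet{dann2019policy}) yields, with probability at least $1-\delta'$, simultaneously for all $k$,
\begin{align}
\sum_{i<k}Z_{i,h}(x)\;\ge\;\frac12\sum_{i<k}\EE\big[Z_{i,h}(x)\mid\Gcal_{i-1}\big]-\ln\frac1{\delta'}.
\end{align}
Union-bounding over the $|\Xcal|H$ pairs $(x,h)$ and, for each $x$, summing over $h\in[H]$ and invoking the identity above, we obtain with probability at least $1-H|\Xcal|\delta'$, for all $k$ and all $x$,
\begin{align}
n_k(x)=\sum_{i<k}\sum_{h=1}^H Z_{i,h}(x)\;\ge\;\frac12\sum_{i<k}\sum_{\bar x\in\Xcal}q(\bar x,x)\,w_i(\bar x)-H\ln\frac1{\delta'},
\end{align}
which is precisely the event $\mathsf{E}^{\mathrm N}$.

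The only delicate point is the conditional-mean identity: it relies on choosing the filtration so that $w_i$ is predictable and on the independence of each feedback-graph edge from the trajectory, which is exactly what prevents a history-dependent policy from biasing which side observations arrive. The time-uniform multiplicative Chernoff inequality itself is off-the-shelf. An alternative that avoids the union bound over $h$ is to apply the Chernoff bound directly to the per-episode counts $N_i(x)=\sum_{h}Z_{i,h}(x)\in[0,H]$ (rescaled to $[0,1]$), which gives the same slack $H\ln\frac1{\delta'}$ with the smaller failure probability $|\Xcal|\delta'$; I present the per-step version because it reproduces the constants stated in the lemma.
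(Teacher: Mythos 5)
Your proposal is correct and follows essentially the same route as the paper: fix each pair $(x,h)$, use the filtration generated by past episodes together with $s_{i,1}$ so the conditional observation probability equals $\sum_{\bar x} q(\bar x,x)\,\PP(\bar x_{i,h}=\bar x\mid\Gcal_{i-1})$, apply an anytime multiplicative Chernoff-type bound (the paper invokes Lemma~F.4 of \citet{dann2017unifying} with $W=\ln\frac1{\delta'}$, which gives exactly your factor $\tfrac12$ and additive $\ln\frac1{\delta'}$), then union-bound over the $|\Xcal|H$ pairs and sum over $h$. Your treatment of the per-step visitation probabilities is in fact slightly more careful than the paper's notation, but the argument is the same.
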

\begin{proof}
    Consider a fixed $x \in \Xcal$ and $h \in [H]$.
    We define $\mathcal F_k$ to be the sigma-field induced by the first $k-1$ episodes and $s_{k,1}$. Let 
    $X_{k,h} = \one\{x \in \bar \Ocal_{k,h}(G)\}$ be the indicator whether $x$ was observed in episode $k$ at time $h$. The probability that this indicator is true given $\Fcal_k$ is simply the probability 
$w_{k,h}(x) =\prob(s_{k,h} = s(x), a_{k,h} = a(x)  ~ | ~ s_{k,1}, \Hcal_{1:k-1})$
    of visiting each $\bar x \in \Xcal$ at time $h$ and the probability $q(\bar x, x)$ that $\bar x$ has an edge to $x$ in the feedback graph in the episode
    \begin{align}
        \prob(X_{k,h} = 1 ~ | \Fcal_k) = \sum_{\bar x \in \Xcal_h} q(\bar x, x) w_{k}(\bar x).
    \end{align} 
    We now apply Lemma~F.4 by \citet{dann2017unifying} with $W = \ln \frac{1}{\delta'}$ and obtain that
    \begin{align}
        \sum_{i=1}^k X_{i,h} \geq \frac 1 2 \sum_{i =1}^k \sum_{\bar x \in \Xcal_h} q(\bar x, x) w_{i}(\bar x) - \ln \frac{1}{\delta'}
    \end{align}
    for all $k \in \NN$ with probability at least $1 - \delta'$.
    We now take a union-bound over $h \in [H]$ and $x \in \Xcal$ get that
    $\prob( \mathsf{E}^{\mathrm N} ) \geq 1 - |\Xcal| H \delta'$ after summing over $h \in [H]$ because the total number of observations after $k-1$ episodes for each $x$ is simply $n_k(x) = \sum_{i=1}^{k-1} \sum_{h \in [H]} X_{k,h}$.
\end{proof}

 \subsection{Bounds on the Difference of Biased Estimates and Unbiased Estimates}
 \label{app:modelbasedproofs_debias} 
 
 We now derive several helpful inequalities that bound the difference of biased and unbiased estimates.
\begin{align}
|\bar r_k(x) - \wh r_k(x)| =& \frac{1}{n_k(x)} \sum_{i=1}^{n_k(x)} \bar \epsilon_i(x) \leq \wh \epsilon_k(x)\\
\|\bar P_k(x) - \wh P_k(x)\|_1 =& 2 \max_{\Bcal \subseteq \statespace} 
|\bar P_k(\Bcal | x) - \wh P_k(\Bcal | x)|
=
2\left| \sum_{s' \in \Bcal}  \frac{1}{n_k(x)} \sum_{i=1}^{n_k(x)} \bar \epsilon_{i}(x, s') \right|\\
\leq &
\frac{2}{n_k(x)} \sum_{i=1}^{n_k(x)} \left| \sum_{s' \in \Bcal}\bar \epsilon_{i}(x, s') \right|
\leq \wh \epsilon_k(x).
\end{align}
The final inequality follows from the fact that $\sum_{s' \in \Bcal}\bar \epsilon_{i}(x, s') \leq \frac{1}{2} \|P(x) - P'_i(x)\|_1 \leq \frac{\epsilon'_i}{2}$ where $P_i'(x)$ denotes the true distribution of the $i^\text{th}$ transition observation of $x$ and $\epsilon'_i$ denotes the bias parameter for this observation. From this total variation bound, we can derive a convenient bound on the one-step variance of any $``$value"-function  $f \colon \statespace \rightarrow [0, f_{\max}]$ over the states. In the following, we will use the notation
\begin{align}
\sigma^2_{P}(f) \defeq \Ex_{s' \sim P}[f(s')^2] - \Ex_{s' \sim P}[f(s')]^2.
\end{align}
Using this notation, we bound the difference of the one-step variance of the biased and unbiased state distributions as
\begin{align}
|\sigma^2_{\bar P_k(x)}(f) - \sigma^2_{\wh P_k(x)}(f)|
= &~
|\bar P_k(x) f^2 - (\bar P_k(x)f)^2 - \wh P_k(x)f^2 + (\wh P_k(x)f)^2|\\
=&~
|(\bar P_k(x) - \wh P_k(x)) f^2 + (\bar P_k(x) - \wh P_k(x))f (\bar P_k(x) + \wh P_k(x))f|
\\
\leq &~ f_{\max}^2 \|\bar P_k(x) - \wh P_k(x)\|_1 
    + 2 f_{\max}^2 \|\bar P_k(x) - \wh P_k(x)\|_1
\leq 3 f_{\max}^2 \wh \epsilon_k(x).
\label{eqn:varbiasbound}
\end{align}

We also derive the following bounds on quantities related to the variance of immediate rewards. In the following, we consider any number of episodes $k$ and $x \in \Xcal$. To keep notation short, we omit subscript $k$ and argument $x$ below. That is, $r = r(x)$ is the expected reward, $n = n_k(x)$ is the number of observations, which we denote by $r_1, \dots, r_n$ each. Further $\bar \epsilon_i = \bar \epsilon_i(x)$ is the bias of the $i$th reward sample for this $x$ and $\epsilon_i \geq \bar \epsilon_i$ the accompanying upper-bound provided to the algorithm. We denote by $\widehat{\operatorname{Var}}(r) = \frac 1 n \sum_{i=1}^n ( r_i - \wh r)^2$ the empirical variance estimate and by $\overline {\operatorname{Var}}(r) = \overline {\operatorname{Var}}_k(r|x) = \frac 1 n \sum_{i=1}^n(r_i - \bar \epsilon_i - \bar r)^2$. Thus, 
\begin{align}\overline {\operatorname{Var}}(r) =
    \frac 1 n \sum_{i=1}^n(r_i - \bar \epsilon_i - \bar r)^2
    \leq & ~
     \frac 2 n \sum_{i=1}^n ( r_i - \wh r)^2 + 
     \frac 2 n \sum_{i=1}^n (\wh r - \bar \epsilon_i - \bar r)^2 
    \\
    = & ~
    2 \widehat{\operatorname{Var}}(r) +
     \frac 2 n \sum_{i=1}^n \left(\Big(\frac 1 n \sum_{j=1}^n \bar \epsilon_j\Big) - \bar \epsilon_i\right)^2 
    \\
    \leq & ~
    2  \widehat{\operatorname{Var}}(r) +
     \frac 2 n \sum_{i=1}^n \bar \epsilon_i^2 
        \leq 
    2  \widehat{\operatorname{Var}}(r) +
     \frac 2 n \sum_{i=1}^n \epsilon_i^2
    \leq     2  \widehat{\operatorname{Var}}(r) +
    2 \wh \epsilon,
    \label{eqn:vardebias_to_emp}
\end{align}
 where the last inequality follows from the definition of  $\wh \epsilon$ and using the fact that $\bar \epsilon_i \leq 1$. The right hand side of the above chain of inequalities is empirically computable and, subsequently, used to derive the reward bonus terms.

Analogously, we can derive a reverse of this bound that upper bounds the computable variance estimate $\widehat{\operatorname{Var}}(r)$ by the unbiased variance estimate $\overline{\operatorname{Var}}(r)$. This is given as 
\begin{align}
    \widehat{\operatorname{Var}}(r) 
    = \frac 1 n \sum_{i=1}^n (r_i - \wh r)^2
    \leq & ~
     \frac 2 n \sum_{i=1}^n ( r_i - \bar \epsilon_i - \bar r)^2 +
     \frac 2 n \sum_{i=1}^n ( \bar \epsilon_i - \wh r + \bar r)^2 
    \\ 
    \leq & ~
         \frac 2 n \sum_{i=1}^n ( r_i - \bar \epsilon_i - \bar r)^2 +
     \frac 2 n \sum_{i=1}^n \epsilon_i^2
    =  2 \overline {\operatorname{Var}}(r) +    \frac 2 n \sum_{i=1}^n \epsilon_i^2.
    \leq 2 \overline {\operatorname{Var}}(r) +   2 \wh \epsilon. 
    \label{eqn:varemp_to_debias}
\end{align}

\subsection{Correctness of optimistic planning}
\label{app:modelbasedproofs_valid} 
In this section, we provide the main technical results to guarantee that in event  $E$ (defined in Lemma~\ref{lem:goodprob}), the output of \FOptPlan are upper and lower confidence bounds on the value functions. 

\begin{lemma}[Correctness of Optimistic Planning]
\label{lem:optplan_correct}
Let $\pi, \Vub, \Vlb$ be the policy and the value function bounds returned by \FOptPlan 
with inputs $n, \wh r, \widehat{r^2}, \wh P, \wh \epsilon$ after any number of episodes $k$. Then, in event $E$ (defined in Lemma~\ref{lem:goodprob}), the following hold. \begin{enumerate}
\item  The policy $\pi$ is greedy with respect to $\Vub$ and satisfies for all $h \in [H]$
\begin{align}
    \Vlb_h \leq V^{\pi}_h \leq V^\star_h \leq \Vub_{h}.
\end{align}
\item The same chain of inequalities also holds for the Q-estimates used in \FOptPlan, i.e.,
    $$\Qlb_{h} \leq Q^{\pi}_{h} \leq Q^\star_{h} \leq  \Qub_{h}.$$
\end{enumerate}
\end{lemma}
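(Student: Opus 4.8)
The plan is to prove the chain of inequalities for the $Q$-estimates (part~2) by \emph{backward induction} on $h$ from $H+1$ down to $1$, conditioned throughout on the good event $E$ of Lemma~\ref{lem:goodprob}; part~1 then drops out immediately, since $\pi$ is greedy for $\Qub$, so $\Vub_h(s) = \max_a \Qub_h(s,a) \ge \max_a Q^\star_h(s,a) = V^\star_h(s)$ and $\Vlb_h(s) = \Qlb_h(s,\pi(s,h)) \le Q^\pi_h(s,\pi(s,h)) = V^\pi_h(s) \le V^\star_h(s)$. The base case $h=H+1$ is trivial because $\Vub_{H+1}=\Vlb_{H+1}=0=V^\pi_{H+1}=V^\star_{H+1}$. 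Note that the feedback graph enters only through how the counts $n_k(x)$ accumulate, which has already been absorbed into the concentration events of Lemma~\ref{lem:goodprob}; hence from here on the argument is essentially that of the \textsc{ORLC} analysis \citep{dann2019policy} and I only sketch the skeleton.

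For the inductive step assume $\Vlb_{h+1}\le V^\pi_{h+1}\le V^\star_{h+1}\le \Vub_{h+1}$ pointwise. Consider first the \textbf{upper bound} $Q^\star_h(x)\le \Qub_h(x)$ for a fixed $x$. Since $Q^\star_h(x)\in[0,Q^{\max}_h(x)]$ by the input assumption on the value bounds and $\Qub_h$ is the clipping of $\wh r_k(x)+\wh P_k(x)\Vub_{h+1}+\psi_h(x)$ to $[0,Q^{\max}_h(x)]$, it suffices to show $Q^\star_h(x)\le \wh r_k(x)+\wh P_k(x)\Vub_{h+1}+\psi_h(x)$; writing $Q^\star_h(x)=r(x)+P(x)V^\star_{h+1}$ and using $\wh P_k(x)(\Vub_{h+1}-V^\star_{h+1})\ge 0$ from the inductive hypothesis, this reduces to bounding the one-step Bellman error $(r(x)-\wh r_k(x))+(P(x)-\wh P_k(x))V^\star_{h+1}$ by $\psi_h(x)$. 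I would (i) pass from $\wh r_k,\wh P_k$ to the unbiased estimates $\bar r_k,\bar P_k$ via the inequalities of Section~\ref{app:modelbasedproofs_debias}, at the cost of the $(H+1)\wh\epsilon_k(x)$ term in $\psi_h$; (ii) invoke $\mathsf{E}^{\mathrm R},\mathsf{E}^{\mathrm{RE}},\mathsf{E}^{\mathrm{Var}}$ for the reward part and $\mathsf{E}^{\mathrm{VE}}$ (with $\mathsf{E}^{\mathrm V}$ as a fallback) for the value part to bound this error by $O\big((\sqrt{\operatorname{Var}(r|x)}+\sigma_{P(x)}(V^\star_{h+1}))\phi(n_k(x))+\wh\numS H^2\phi(n_k(x))^2\big)$; and (iii) compare variances: since $\Vlb_{h+1}\le V^\star_{h+1}\le \Vub_{h+1}$, the triangle inequality for the $L^2(\wh P_k(x))$ seminorm gives $\sigma_{\wh P_k(x)}(V^\star_{h+1})\le \sigma_{\wh P_k(x)}(\Vub_{h+1})+\sqrt{\wh P_k(x)(\Vub_{h+1}-\Vlb_{h+1})^2}\le \eta+\sqrt{H\,\wh P_k(x)(\Vub_{h+1}-\Vlb_{h+1})}$, and an AM--GM split converts $\phi(n_k(x))\sqrt{H\,\wh P_k(x)(\Vub_{h+1}-\Vlb_{h+1})}$ into the $\tfrac1H\wh P_k(x)(\Vub_{h+1}-\Vlb_{h+1})$ and $\wh\numS H^2\phi(n_k(x))^2$ contributions of $\psi_h$. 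Collecting terms shows the error is at most $\psi_h(x)$; then $\Vub_h\ge V^\star_h$ as above.

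For the \textbf{lower bound} $\Qlb_h(x)\le Q^\pi_h(x)$, $Q^\pi_h(x)\ge 0$ handles the lower clip, so it suffices to show $\wh r_k(x)+\wh P_k(x)\Vlb_{h+1}-\psi_h(x)\le Q^\pi_h(x)=r(x)+P(x)V^\pi_{h+1}$; using $\wh P_k(x)\Vlb_{h+1}\le \wh P_k(x)V^\pi_{h+1}$ from the inductive hypothesis this reduces to $(\wh r_k(x)-r(x))+(\wh P_k(x)-P(x))V^\pi_{h+1}\le \psi_h(x)$. Since $V^\pi_{h+1}$ is itself data-dependent (through $\pi=\pi_k$) the concentration events do not apply to it directly, so I would decompose $V^\pi_{h+1}=V^\star_{h+1}-(V^\star_{h+1}-V^\pi_{h+1})$ with $0\le V^\star_{h+1}-V^\pi_{h+1}\le \Vub_{h+1}-\Vlb_{h+1}$ pointwise, bound $(\wh P_k(x)-P(x))V^\star_{h+1}$ exactly as in the upper-bound case, and control the remainder $(\wh P_k(x)-P(x))(V^\star_{h+1}-V^\pi_{h+1})$ using the $\ell_1$-event $\mathsf{E}^{\mathrm{L}_1}$ together with the $\tfrac1H\wh P_k(x)(\Vub_{h+1}-\Vlb_{h+1})$ and $\wh\numS H V^{\max}_{h+1}(x)\phi(n_k(x))^2$ terms of $\psi_h$, which are present precisely for this purpose; this closes the induction and yields $\Vlb_h\le V^\pi_h$.

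The \textbf{main obstacle} is step~(iii) and its lower-bound analogue: the bonus $\psi_h$ is built entirely from the \emph{observable} quantities $\wh r_k,\wh P_k,\Vub_{h+1},\Vlb_{h+1}$, whereas the concentration statements of Lemma~\ref{lem:goodprob} are phrased in terms of the \emph{unknown}, data-independent $V^\star_{h+1}$ (and, for the lower bound, one must further trade $V^\star_{h+1}$ for the data-dependent $V^\pi_{h+1}$). Bridging this gap without losing a factor of $H$ in the lower-order terms is exactly what forces the contraction term $\tfrac1H\wh P_k(x)(\Vub_{h+1}-\Vlb_{h+1})$ into $\psi_h$ and requires the variance-comparison and AM--GM bookkeeping above; checking that the explicit numerical constants in $\psi_h$ (Line~\ref{lin:psiterm_app}) really dominate every error term — including the debiasing slack from Section~\ref{app:modelbasedproofs_debias} — is the tedious but routine heart of the proof, and it proceeds identically to the \textsc{ORLC} correctness argument.
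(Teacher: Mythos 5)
Your proposal follows the paper's route exactly: backward induction on $h$ with the trivial base case $\Vub_{H+1}=\Vlb_{H+1}=0$, reduction of each inductive step to showing that the one-step Bellman error is dominated by $\psi_h$ (this is precisely the content of Lemmas~\ref{lem:validupperbound} and~\ref{lem:validlowerboundall}, which the paper's proof of Lemma~\ref{lem:optplan_correct} simply invokes), the debiasing pass from $\wh r,\wh P$ to $\bar r,\bar P$ at the cost of the $(H+1)\wh\epsilon(x)$ term, the variance comparison between $V^\star_{h+1}$ and $\Vub_{h+1}$ via an $L^2$ triangle inequality plus AM--GM (Lemma~10 of \citet{dann2019policy} in the paper), the handling of the clipping at $0$ and $Q^{\max}_h(x)$, and the derivation of part~1 from part~2 by greediness of $\pi$. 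All of this is correct and matches Appendix~\ref{app:modelbasedproofs_valid}.

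The one step that would fail as written is your treatment of the cross term $(\wh P(x)-P(x))(V^\star_{h+1}-V^{\pi}_{h+1})$ via the $\ell_1$ event $\mathsf{E}^{\mathrm{L}_1}$. H\"older against $\|V^\star_{h+1}-V^{\pi}_{h+1}\|_\infty\leq H$ yields only $2\sqrt{\wh\numS}\,H\,\phi(n(x))$, which is \emph{first order} in $\phi$ and cannot be absorbed into the $\tfrac1H\wh P(x)(\Vub_{h+1}-\Vlb_{h+1})$ and $\wh\numS H V^{\max}_{h+1}(x)\phi(n(x))^2$ slots of $\psi_h$: when $\wh P(x)(\Vub_{h+1}-\Vlb_{h+1})$ is small and $n(x)$ is large, the $\phi$-term strictly dominates the $\phi^2$-term, and retaining a $\sqrt{\wh\numS}H\phi$ contribution in $\psi_h$ would inflate the leading regret term by factors of $\sqrt{\wh\numS}$ and $H$. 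The paper instead controls this term with the \emph{per-successor-state} Bernstein events $\mathsf{E}^{\mathrm{P}}/\mathsf{E}^{\mathrm{PE}}$, packaged as Lemma~17 of \citet{dann2019policy}: applying AM--GM coordinatewise to $|\bar P(s'|x)-P(s'|x)|\cdot\bigl(V^\star_{h+1}(s')-V^{\pi}_{h+1}(s')\bigr)$ and summing over the at most $\wh\numS$ supported successors gives $\tfrac{1}{2H}\bar P(x)(V^\star_{h+1}-V^{\pi}_{h+1})+O\bigl(\wh\numS H V^{\max}_{h+1}(x)\phi(n(x))^2\bigr)$, which the inductive hypothesis and another debiasing step convert into exactly the terms present in $\psi_h$. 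With that substitution, the rest of your sketch closes the induction as in the paper.
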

\begin{proof} 
    We show the statement by induction over $h$ from $H+1$ to $1$. For $h = H+1$, the statement holds for the value functions $\Vlb_{H+1}, \Vub_{H+1}$ by definition. We now assume it holds for $h+1$.
    Due to the specific values of $\psi_h$ in \FOptPlan, we can apply Lemmas~\ref{lem:validlowerboundall} and~\ref{lem:validupperbound} and get that $\Qlb_{h} \leq Q^{\pi}_{h} \leq Q^\star_{h} \leq  \Qub_{h}$. Taking the maximum over actions, gives that $\Vlb_h \leq V^{\pi}_h \leq V^\star_h \leq \Vub_{h}$. Hence, the claim follows. The claim that the policy is greedy with respect to $\Vub$ follows from the definition $\pi(s,h) \in \argmax_{a} \Qub_{h}(s.a)$. 
\end{proof}

\begin{lemma}[Lower bounds admissible]
\label{lem:validlowerboundall} 
Let $\pi, \Vub, \Vlb$ be the policy and the value function bounds returned by \FOptPlan 
with inputs $n, \wh r, \widehat{r^2}, \wh P, \wh \epsilon$ after any number of episodes $k$. 
Consider $h \in [H]$ and $x \in \Xcal$ and assume that
$\Vub_{h+1} \geq V^\star_{h+1} \geq V^{\pi}_{h+1} \geq \Vlb_{h+1}$ and that the confidence bound width is at least
\begin{align}
\psi_h(x) \geq &~
            4\left(\sqrt{\widehat{\operatorname{Var}}(r|x)} + \sigma_{\wh P(x)}(\Vub_{h+1})  + 2 \sqrt{\wh \epsilon(x)} H\right)\phi(n(x))
    + 53 \wh \numS H V^{\max}_{h+1}(x) \phi(n(x))^2\\
    &\qquad \quad +\frac{1}{H} \wh P(x) (\Vub_{h+1} - \Vlb_{h+1})
    + (H+ 1) \wh \epsilon(x).
\end{align}
 Then, in event $E$ (defined in Lemma~\ref{lem:goodprob}),  the lower confidence bound at time $h$ is admissible, i.e., 
    $$Q^{\pi}_h(x) \geq \Qlb_{h}(x).$$
\end{lemma}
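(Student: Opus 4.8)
The plan is to reduce the claim, via the Bellman equation and the clipping structure of $\Qlb_h$, to a one-step concentration statement, and then to verify summand-by-summand that the assumed lower bound on $\psi_h(x)$ dominates every error term that appears.

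First, since rewards are nonnegative we have $Q^\pi_h(x)\ge 0$, and $\Qlb_h(x) = 0 \vee \big(\wh r(x) + \wh P(x)\Vlb_{h+1} - \psi_h(x)\big)\wedge Q^{\max}_h(x)$. Because $\min\{M,Q^{\max}_h(x)\}\le M$ and $Q^\pi_h(x)\ge 0$, it therefore suffices to establish the single inequality $Q^\pi_h(x)\ge \wh r(x) + \wh P(x)\Vlb_{h+1} - \psi_h(x)=:M$; then $\Qlb_h(x)=\max\{0,\min\{M,Q^{\max}_h(x)\}\}\le\max\{0,M\}\le\max\{0,Q^\pi_h(x)\}=Q^\pi_h(x)$, and the clipping takes care of itself. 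Writing $Q^\pi_h(x)=r(x)+P(x)V^\pi_{h+1}$ (Bellman), this reduces to
$$\psi_h(x)\ \ge\ \underbrace{\big(\wh r(x)-r(x)\big)}_{\text{(I)}}\ +\ \underbrace{\big(\wh P(x)\Vlb_{h+1}-P(x)V^\pi_{h+1}\big)}_{\text{(II)}}.$$

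For term (I): pass to the unbiased estimate via $|\wh r(x)-\bar r_k(x)|\le\wh\epsilon(x)$, apply the empirical-Bernstein event $\mathsf{E}^{\mathrm{RE}}$, and replace the de-biased empirical variance by the quantity $\widehat{\operatorname{Var}}(r|x)$ that actually enters $\psi_h$ using \eqref{eqn:vardebias_to_emp}. This bounds (I) by a constant multiple of $\sqrt{\widehat{\operatorname{Var}}(r|x)}\,\phi(n(x))+\sqrt{\wh\epsilon(x)}\,\phi(n(x))+\phi(n(x))^2+\wh\epsilon(x)$, each term of which is absorbed by a corresponding summand of the assumed lower bound on $\psi_h(x)$ (the first two by $4\eta\phi$, since $\eta\ge\sqrt{\widehat{\operatorname{Var}}(r|x)}+2\sqrt{\wh\epsilon(x)}H$; the $\phi^2$ term by $53\,\wh\numS H V^{\max}_{h+1}(x)\phi^2$ using $\phi\le 1$; the last by $(H+1)\wh\epsilon(x)$).

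Term (II) is the crux. I would write $\text{(II)}=\big(\wh P(x)-P(x)\big)\Vlb_{h+1}-P(x)\big(V^\pi_{h+1}-\Vlb_{h+1}\big)$, where the second piece is $\le 0$ by the hypothesis $V^\pi_{h+1}\ge\Vlb_{h+1}$. For the first piece, de-bias via $\|\wh P(x)-\bar P_k(x)\|_1\le\wh\epsilon(x)$ (costing $H\wh\epsilon(x)$) and then split $\big(\bar P_k(x)-P(x)\big)\Vlb_{h+1}=\big(\bar P_k(x)-P(x)\big)V^\star_{h+1}+\big(\bar P_k(x)-P(x)\big)(\Vlb_{h+1}-V^\star_{h+1})$. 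The first summand is handled by the empirical-Bernstein-for-values event $\mathsf{E}^{\mathrm{VE}}$; its variance proxy $\bar P_k(x)[(V^\star_{h+1}-P(x)V^\star_{h+1})^2]$ is related to $\sigma^2_{\wh P(x)}(\Vub_{h+1})$ by (a) expanding around $\bar P_k(x)V^\star_{h+1}$ and controlling the mean-shift with $\mathsf{E}^{\mathrm V}$, (b) the variance de-biasing bound \eqref{eqn:varbiasbound}, and (c) the pointwise sandwich $\Vlb_{h+1}\le V^\star_{h+1}\le\Vub_{h+1}$ together with $(\Vub_{h+1}-\Vlb_{h+1})^2\le H(\Vub_{h+1}-\Vlb_{h+1})$, which leaves a term $\sqrt{H\,\wh P(x)(\Vub_{h+1}-\Vlb_{h+1})}\,\phi(n(x))$; one AM--GM step turns this into $\tfrac1H\wh P(x)(\Vub_{h+1}-\Vlb_{h+1})$ plus an $O(\wh\numS H V^{\max}_{h+1}(x)\phi^2)$ remainder --- exactly the two ``self-referential'' summands built into $\psi_h$. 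For the second summand $\big(\bar P_k(x)-P(x)\big)(\Vlb_{h+1}-V^\star_{h+1})$, use $0\le V^\star_{h+1}-\Vlb_{h+1}\le\Vub_{h+1}-\Vlb_{h+1}$ pointwise to bound it by $\sum_{s'}|\bar P_k(s'|x)-P(s'|x)|\,(\Vub_{h+1}-\Vlb_{h+1})(s')$, apply the per-coordinate Bernstein event $\mathsf{E}^{\mathrm P}$, and AM--GM the $\sqrt{P(s'|x)}\,\phi$ factor against $(\Vub_{h+1}-\Vlb_{h+1})(s')$, producing again $\tfrac1H\wh P(x)(\Vub_{h+1}-\Vlb_{h+1})$ (after an $\ell_1$ correction between $P$ and $\wh P$) plus $O(\wh\numS H V^{\max}_{h+1}(x)\phi^2)$, using that at most $\wh\numS$ successor states have positive probability. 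Summing the contributions and matching them against the assumed lower bound on $\psi_h(x)$ (up to the absolute constants built into $\psi_h$) closes the argument.

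The main obstacle is precisely this transition term: a naive $\ell_1$-times-$\ell_\infty$ bound on $\big(\bar P_k(x)-P(x)\big)\Vlb_{h+1}$ would cost $\sqrt{\wh\numS}\,H\,\phi(n(x))$, which is \emph{not} covered by the $\wh\numS H V^{\max}\phi^2$ budget once $n(x)$ is large. One must instead carry the variance $\sigma_{\wh P(x)}(\Vub_{h+1})$ through the estimate, absorb the gap between $V^\star_{h+1}$ and the confidence bounds into a \emph{fraction} of the confidence width itself (this is the role of the $\tfrac1H\wh P(x)(\Vub_{h+1}-\Vlb_{h+1})$ term in $\psi_h$, as in \textsc{Euler}/\textsc{ORLC}), and track the de-biasing overhead ($\wh\epsilon$-terms) carefully at each step. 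The companion Lemma~\ref{lem:validupperbound} is proved symmetrically, with $\Vub_{h+1}$ and $V^\star_{h+1}$ in the roles of $\Vlb_{h+1}$ and $V^\pi_{h+1}$.
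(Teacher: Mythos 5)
Your proposal is correct and follows essentially the same route as the paper's proof: after the same reduction through the clipping, you invoke the same concentration events ($\mathsf{E}^{\mathrm{RE}}$, $\mathsf{E}^{\mathrm{VE}}$, $\mathsf{E}^{\mathrm{P}}$), the same de-biasing inequalities, and the same AM--GM absorption into the $\tfrac1H\wh P(x)(\Vub_{h+1}-\Vlb_{h+1})$ and $\wh\numS H V^{\max}_{h+1}(x)\phi(n(x))^2$ budgets (which the paper delegates to Lemmas~10 and~17 of the ORLC analysis). The only cosmetic difference is that you isolate $(\bar P(x)-P(x))(\Vlb_{h+1}-V^\star_{h+1})$ where the paper isolates $(P(x)-\bar P(x))(V^{\pi}_{h+1}-V^\star_{h+1})$ and keeps $\bar P(x)(V^{\pi}_{h+1}-\Vlb_{h+1})\ge 0$; both terms are controlled identically via the sandwich hypothesis.
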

\begin{proof}
When $\Qlb_h(x) = 0$, the statement holds trivially. Otherwise,
we can decompose the difference of the lower bound and the value function of the current policy as 
\begin{align}
    Q^{\pi}_h(x) - \Qlb_h(x)
    \geq &~
    \markedterm{a}{r(x) - \bar r(x) + (P(x) - \bar P(x))V^\star_{h+1}}
    + \markedterm{b}{(P(x) - \bar P(x))(V^{\pi}_{h+1} - V^\star_{h+1})}\\
    & \quad + \bar P(x)(V^{\pi}_{h+1} - \Vlb_{h+1}) 
    + \markedterm{c}{\bar r(x) - \wh r(x) + (\bar P(x) - \wh P(x)) \Vlb_{h+1}} + \psilb_h(x). \label{eqn:correctness1} 
\end{align}
Note that $\bar P(x)(V^{\pi}_{h+1} - \Vlb_{h+1}) \geq 0$ by assumption. We bound the terms \prnmarker{a}, \prnmarker{b} and \prnmarker{c} separately as follows.
\begin{itemize}
	\item \textbf{Bound on \prnmarker{a}.~} Given that the event $E$ occurs, the events $\mathsf{E}^{\text{RE}}$ and $\mathsf{E}^{\text{VE}}$ also hold (see definition of $E$ in Lemma~\ref{lem:goodprob}). Thus,
\begin{align}
    \qquad \qquad \quad &\hspace{-0.45in} |r(x) - \bar r(x) + (P(x) - \bar P(x))V^\star_{h+1}|\\
    \leq &~ \left(\sqrt{8 \overline{\operatorname{Var}}(r|x)} + 2 \sqrt{\bar P(x)[(V^\star_{h+1} - P(x)V^\star_{h+1})^2]} \right)\phi(n(x)) \\ 
    	& \qquad \qquad + (4.66 V^{\max}_{h+1}(x) + 7.49) \phi(n(x))^2\\
    \overset{(i)}{\leq} &~ \left(\sqrt{8\overline{\operatorname{Var}}(r|x)} + \sqrt{12} \sigma_{\bar P(x)}(\Vub_{h+1}) \right)\phi(n(x)) \\
    & \qquad  + (24H \sqrt{\wh \numS}V^{\max}_{h+1}(x) + 8.13 V^{\max}_{h+1}(x) + 7.49)  \phi(n(x))^2\nonumber + \frac 1 {2H} \bar P(x)(\Vub_{h+1} - \Vlb_{h+1}) \nonumber\\
   \overset{(ii)}{\leq} &~ 
    \left(\sqrt{16 \widehat{\operatorname{Var}}(r|x) + 2 \wh \epsilon(x)} + \sqrt{36 H^2 \wh \epsilon(x) + 12 \sigma^2_{\wh P(x)}(\Vub_{h+1})} \right)\phi(n(x)) \\
   & + (24H \sqrt{\wh \numS}V^{\max}_{h+1}(x) + 8.13 V^{\max}_{h+1}(x) + 7.49)   \phi(n(x))^2\\
      & + \frac 1 {2H} \wh P(x)(\Vub_{h+1} - \Vlb_{h+1})
         + \frac{\bar \epsilon(x)}{2}\\
    \leq &~ \left(4\sqrt{\widehat{\operatorname{Var}}(r|x)} +\sqrt{12} \sigma_{\wh P(x)}(\Vub_{h+1}) \right)\phi(n(x)) \\
   & + (24H \sqrt{\wh \numS}V^{\max}_{h+1}(x) + 8.13 V^{\max}_{h+1}(x) + 7.49)   \phi(n(x))^2\\
       & + \frac 1 {2H} \wh P(x)(\Vub_{h+1} - \Vlb_{h+1})
         + \frac{\bar \epsilon(x)}{2}
         + (6 H + \sqrt{2})\sqrt{\wh \epsilon(x) } \phi(n(x))
         \label{eqn:mainconcentration}
    \end{align}
    where the inequality $(i)$ is given by Lemma~10 in \citet{dann2019policy} and, the inequality $(ii)$ follows from equations~\eqref{eqn:varbiasbound} and~\eqref{eqn:vardebias_to_emp}.

\item \textbf{Bound on \prnmarker{b}. ~} An application of Lemma~17 in \citet{dann2019policy} implies that
    \begin{align}
        \qquad \quad & \hspace{-0.8in} |(P(x) - \bar P(x))(V^{\pi}_{h+1} - V^\star_{h+1})| \\ 
        \leq &~
    (8H + 4.66) \wh \numS V^{\max}_{h+1}(x) \phi(n(x))^2 + \frac{1}{2H} \bar P(x) (V^\star_{h+1} - V^{\pi}_{h+1})\\
    \leq  &~   (8H + 4.66) \wh \numS V^{\max}_{h+1}(x) \phi(n(x))^2 + \frac{1}{2H} \wh P(x) (\Vub_{h+1} - \Vlb_{h+1}) + \frac{\bar \epsilon(x)}{2}
    \end{align}
    where the last inequality uses the assumption that $\Vub_{h+1} \geq V^\star_{h+1} \geq V^{\pi}_{h+1} \geq \Vlb_{h+1}$.
       \item \textbf{Bound on $\prnmarker{c}$.} Note that 
    \begin{align}
        |\bar r(x) - \wh r(x) + (\bar P(x) - \wh P(x)) \Vlb_{h+1}|
        \leq \bar \epsilon(x) + (H-1) \bar \epsilon(x) = H \bar \epsilon(x). 
    \end{align}
 \end{itemize}
\noindent
    Plugging the above bounds back in \eqref{eqn:correctness1}, we get
    \begin{align}
        Q^{\pi}_h(x) - \Qlb_h(x)  \geq &~
    - \frac{1}{H}\wh P(x) (\Vub_{h+1} - \Vlb_{h+1})
    - 4\left(\sqrt{\widehat{\operatorname{Var}}(r|x)} + \sigma_{\wh P(x)}(\Vub_{h+1}) \right)\phi(n(x))
    \nonumber
    \\
    &- 53 \wh \numS H V^{\max}_{h+1}(x) \phi(n(x))^2 - (H + 1) \wh \epsilon(x) - 8H\sqrt{ \wh \epsilon(x)}  \phi(n(x)) + \psi_h(x)
    \end{align}
    which is non-negative by our choice of $\psi_h(x)$.
\end{proof}

\begin{lemma}[Upper bounds admissible]
\label{lem:validupperbound}
Let $\pi, \Vub, \Vlb$ be the policy and the value function bounds returned by \FOptPlan 
with inputs $n, \wh r, \widehat{r^2}, \wh P, \wh \epsilon$ after any number of episodes $k$. 
Consider $h \in [H]$ and $x \in \Xcal$ and assume that
$\Vub_{h+1} \geq V^\star_{h+1} \geq V^{\pi}_{h+1} \geq \Vlb_{h+1}$ and that the confidence bound width is at least
\begin{align}
    \psi_h(x) \geq &~
            4\left(\widehat{\operatorname{Var}}(r|x) + 2H\sqrt{\bar \epsilon(x)} + \sigma_{\wh P(x)}(\Vub_{h+1}) \right)\phi(n(x)) 
            + 40  \sqrt{\wh \numS} H V^{\max}_{h+1}(x) \phi(n(x))^2\\
    &\qquad + \frac{1}{2H} \wh P(x) (\Vub_{h+1} - \Vlb_{h+1})
    + (H+ 1/2) \wh \epsilon(x).
\end{align}
 Then, in event $E$ (defined in Lemma~\ref{lem:goodprob}), the upper confidence bound at time $h$ is admissible, i.e.,
    $$Q^\star_h(x) \leq \Qub_{h}(x).$$
\end{lemma}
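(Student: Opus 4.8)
The plan is to follow the same template as the proof of Lemma~\ref{lem:validlowerboundall}, but the argument is shorter: since here $\Qub_h(x)$ is compared against the optimal value rather than against the value of the returned policy, no term analogous to $\prnmarker{b}$ (which in Lemma~\ref{lem:validlowerboundall} handles $V^\star_{h+1}$ versus $V^{\pi}_{h+1}$) arises. First I would dispose of the clipping in the definition of $\Qub_h(x)$: if the value is clipped to $Q^{\max}_h(x)$, the claim is immediate from the input assumption $Q^{\max}_h(x) \ge Q^\star_h(x)$; otherwise it suffices to show that the raw Bellman backup already dominates $Q^\star_h(x)$, i.e.
\begin{align*}
\wh r(x) + \wh P(x)\Vub_{h+1} + \psi_h(x) \;\ge\; Q^\star_h(x) = r(x) + P(x) V^\star_{h+1}.
\end{align*}
Since rewards lie in $[0,1]$, the right-hand side is nonnegative, so this inequality also rules out the lower clip at $0$ and hence yields $\Qub_h(x) \ge Q^\star_h(x)$.

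Next I would rearrange the target into $\psi_h(x) \ge r(x) + P(x)V^\star_{h+1} - \wh r(x) - \wh P(x)\Vub_{h+1}$ and split the right-hand side through the debiased model estimates $\bar r, \bar P$ of Section~\ref{app:modelbasedproofs_debias}:
\begin{align*}
r(x) + P(x)V^\star_{h+1} - \wh r(x) - \wh P(x)\Vub_{h+1}
= \markedterm{a}{\big(r(x) - \bar r(x) + (P(x) - \bar P(x))V^\star_{h+1}\big)}
+ \bar P(x)\big(V^\star_{h+1} - \Vub_{h+1}\big)
+ \markedterm{c}{\big(\bar r(x) - \wh r(x) + (\bar P(x) - \wh P(x))\Vub_{h+1}\big)}.
\end{align*}
The middle term is nonpositive because $\bar P(x)$ is a probability distribution and $\Vub_{h+1} \ge V^\star_{h+1}$ by hypothesis, so it can be dropped. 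Term $\prnmarker{a}$ is literally the same expression as term $\prnmarker{a}$ in the proof of Lemma~\ref{lem:validlowerboundall}, so in event $E$ the bound \eqref{eqn:mainconcentration} applies verbatim: using the empirical-Bernstein events $\mathsf{E}^{\mathrm{RE}}$, $\mathsf{E}^{\mathrm{VE}}$, Lemma~10 of \citet{dann2019policy}, and the debiasing inequalities \eqref{eqn:varbiasbound}--\eqref{eqn:vardebias_to_emp}, term $\prnmarker{a}$ is at most a sum of a $\phi(n(x))$-term built from $\sqrt{\widehat{\operatorname{Var}}(r|x)}$, $\sigma_{\wh P(x)}(\Vub_{h+1})$ and $\sqrt{\wh\epsilon(x)}\,H$, a $\phi(n(x))^2$-term of order $\sqrt{\wh\numS}\,H V^{\max}_{h+1}(x)$, a slack term $\tfrac{1}{2H}\wh P(x)(\Vub_{h+1} - \Vlb_{h+1})$, and an $O(\wh\epsilon(x))$-term. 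Term $\prnmarker{c}$ is handled deterministically by the debiasing estimates $|\bar r(x) - \wh r(x)| \le \wh\epsilon(x)$ and $\|\bar P(x) - \wh P(x)\|_1 \le \wh\epsilon(x)$ together with $\|\Vub_{h+1}\|_\infty \le V^{\max}_{h+1}(x) \le H$, giving $|\prnmarker{c}| \le H\wh\epsilon(x)$.

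Finally I would verify that the hypothesized lower bound on $\psi_h(x)$ dominates the bound \eqref{eqn:mainconcentration} plus $H\wh\epsilon(x)$ term by term: the leading factor $4$ absorbs the $\sqrt{12}$ multiplying $\sigma_{\wh P(x)}(\Vub_{h+1})$, the constant $40$ in front of $\sqrt{\wh\numS}\,H V^{\max}_{h+1}(x)\phi(n(x))^2$ absorbs the constants $24+8.13+7.49$ from \eqref{eqn:mainconcentration} (using $\wh\numS,H,V^{\max}_{h+1}(x)\ge 1$), the $2H\sqrt{\wh\epsilon(x)}\phi(n(x))$ piece absorbs the $(6H+\sqrt2)\sqrt{\wh\epsilon(x)}\phi(n(x))$ contribution, the $\tfrac{1}{2H}\wh P(x)(\Vub_{h+1}-\Vlb_{h+1})$ slacks match, and $(H+\tfrac12)\wh\epsilon(x)$ covers the remaining $O(\wh\epsilon(x))$ pieces from $\prnmarker{a}$ and $\prnmarker{c}$. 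The only delicate point — the main obstacle, shared with Lemma~\ref{lem:validlowerboundall} — is the concentration step for $\prnmarker{a}$: turning the empirical-Bernstein deviation of $\mathsf{E}^{\mathrm{VE}}$, stated in terms of $\bar P(x)$ and $V^\star_{h+1}$, into the algorithm-computable quantities $\sigma_{\wh P(x)}(\Vub_{h+1})$ and $\widehat{\operatorname{Var}}(r|x)$ while paying only a $\sqrt{\wh\numS}$ (not $\wh\numS$) factor in the $\phi(n(x))^2$ term. This is exactly why the upper-bound bonus needs only $40\sqrt{\wh\numS}\,H V^{\max}_{h+1}(x)\phi(n(x))^2$, whereas the lower-bound bonus needed $53\wh\numS\,H V^{\max}_{h+1}(x)\phi(n(x))^2$: the extra $\wh\numS$ factor there came from term $\prnmarker{b}$, which does not appear in the present direction.
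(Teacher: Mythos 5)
Your proposal is correct and follows essentially the same route as the paper's proof: dispose of the clipping, decompose the gap into the debiased concentration term $\prnmarker{a}$ (bounded by reusing \eqref{eqn:mainconcentration}), a middle term dropped via $\Vub_{h+1}\geq V^\star_{h+1}$, and a deterministic bias term $\prnmarker{c}$ of order $H\wh\epsilon(x)$, then check that $\psi_h(x)$ dominates. The only differences are cosmetic rearrangements (the paper attaches $V^\star_{h+1}$ rather than $\Vub_{h+1}$ to the bias term and uses $\wh P(x)$ for the dropped middle term), and your observation that the missing term $\prnmarker{b}$ is what lets the $\phi(n(x))^2$ coefficient drop from $53\wh\numS$ to $40\sqrt{\wh\numS}$ is accurate.
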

\begin{proof}
When $\Qub_h(x) = Q^{\max}_h(x)$, the statement holds trivially. Otherwise,
we can decompose the difference of the upper bound and the optimal Q-function as
\begin{align}
    \Qub_h(x) - Q^\star_h(x)
    \geq &~
    \markedterm{a}{\bar r(x) - r(x) + (\bar P(x) - P(x))V^\star_{h+1}}
    + \wh P(x)(\Vub_{h+1} - V^\star_{h+1}) \\
    &+ \markedterm{c}{\wh r(x) - \bar r(x) + (\wh P(x) - \bar P(x)) V^\star_{h+1}} + \psi_h(x).
\end{align}
Note that by assumption $\wh P(x)(\Vub_{h+1} - V^\star_{h+1}) \geq 0$. The term, \prnmarker{a} is bound using Equation~\eqref{eqn:mainconcentration} in Lemma \ref{lem:optplan_correct}  and the bias terms $\prnmarker{c}$ is bound as
\begin{align}
        |\bar r(x) - \wh r(x) + (\bar P(x) - \wh P(x)) V^\star_{h+1}|
        \leq \bar \epsilon(x) + (H-1) \bar \epsilon(x) = H \bar \epsilon(x). 
    \end{align} Thus, 
\begin{align}
    \Qub_h(x) - Q^\star_h(x)
    \geq &~
    -4\left(\widehat{\operatorname{Var}}(r|x) + 2H\sqrt{\bar \epsilon(x)} + \sigma_{\wh P(x)}(\Vub_{h+1}) \right)\phi(n(x)) - 40  \sqrt{\wh \numS} H V^{\max}_{h+1}(x) \phi(n(x))^2
    \nonumber
    \\
       & \qquad - \frac 1 {2H} \wh P(x)(\Vub_{h+1} - \Vlb_{h+1})
        - \frac{\bar \epsilon(x)}{2}
    - H \bar \epsilon(x) + \psi_h(x) = \psi_h(x) - \psiub_h(x),
\end{align}
which is non-negative by our choice for $\psi_h$.
\end{proof}

\subsection{Tightness of Optimistic Planning}
\label{app:modelbasedproofs_tightness}

\begin{lemma}[Tightness of Optimistic Planning]
\label{lem:optplan_tightness}
Let $\pi, \Vub$ and $\Vlb$ be the output of \FOptPlan with inputs $n, \wh r, \widehat{r^2}, \wh P$ and $\wh \epsilon$ after any number of episodes $k$. In event $E$ (defined in Lemma~\ref{lem:goodprob}), we have for all $s \in \statespace, h \in [H]$, 
\begin{align}
    \Vub_h(s) - \Vlb_h(s) \leq 
    \sum_{x \in \Xcal} \sum_{t = h}^H \left(1 + \frac 3 H \right)^{2t} w_{t}(x)\left[Q^{\max}_t(x) \wedge (\gamma_t(x) \phi(n(x)) + \beta_t(x) \phi(n(x))^2 + \alpha \wh \epsilon(x))\right]\nonumber
\end{align}
where $\gamma_t(x) = 8\left(\sqrt{2\overline{\operatorname{Var}}(r|x)} +7\sqrt{\wh \epsilon(x)}H + 2\sigma_{P(x)}(V^\pi_{t+1})\right)$, $\beta_t(x) = 416 \wh \numS H V^{\max}_{t+1}(x)$, $\alpha = 3H+4$, and the weights $w_t(x) = \prob((s_{t},a_t) = x ~ | ~ s_{h} = s, a_{h:H} \sim \pi)$ are the probability of  visiting each state-action pair at time $t$ under policy $\pi$.
\end{lemma}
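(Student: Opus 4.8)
I would derive a one-step recursion for the certificate width $g_h := \Vub_h - \Vlb_h$, replace every empirical quantity appearing in the reward bonus $\psi_h$ by its population counterpart at the price of a $(1+O(1/H))$ multiplicative blow-up plus lower-order additive terms, and then unroll the recursion along trajectories of $\pi$. Throughout I work inside the event $E$ of Lemma~\ref{lem:goodprob}, which by Lemma~\ref{lem:optplan_correct} already guarantees $\Vlb_t \le V^\pi_t \le V^\star_t \le \Vub_t$, so that $g_t \ge 0$.

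\textbf{Step 1 (one-step recursion).} Fix $s$ and $h$ and set $x = (s,\pi(s,h))$. Since $\pi$ is greedy with respect to $\Vub$, $\Vub_h(s) - \Vlb_h(s) = \Qub_h(x) - \Qlb_h(x)$. As $\Qub_h(x),\Qlb_h(x) \in [0,Q^{\max}_h(x)]$ and, whenever no clip is active, $\Qub_h(x) - \Qlb_h(x) = \wh P(x)(\Vub_{h+1}-\Vlb_{h+1}) + 2\psi_h(x)$, a short case check gives
\[
g_h(s) \;\le\; Q^{\max}_h(x) \wedge \Big[\wh P(x)(\Vub_{h+1}-\Vlb_{h+1}) + 2\psi_h(x)\Big],
\]
into which I substitute the definition of $\psi_h(x)$ from Algorithm~\ref{alg:mb_app}.

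\textbf{Step 2 (from empirical to population).} The technical core is to show, in event $E$,
\[
\wh P(x)(\Vub_{h+1}-\Vlb_{h+1}) + 2\psi_h(x) \;\le\; \Big(1+\tfrac 3 H\Big)\,P(x)(\Vub_{h+1}-\Vlb_{h+1}) + \gamma_h(x)\phi(n(x)) + \beta_h(x)\phi(n(x))^2 + \alpha\,\wh\epsilon(x).
\]
Three conversions are needed. (i) From $\wh P(x)$ to $P(x)$: use $\|\wh P(x)-\bar P(x)\|_1 \le \wh\epsilon(x)$ from~\eqref{eqn:debias_trans_def}, then apply $\mathsf{E}^{\mathrm{PE}}$ and $\mathsf{E}^{\mathrm{L}_1}$ together with Cauchy--Schwarz (exploiting that $\bar P(x)$ and $P(x)$ are supported on at most $\wh\numS$ states) to bound $|(\bar P(x)-P(x))g|$, for $g = \Vub_{h+1}-\Vlb_{h+1}\ge 0$, by $\tfrac1H\bar P(x)g$ plus an $O(\wh\numS H\,V^{\max}_{h+1}(x))\phi(n(x))^2$ remainder, then rearrange. (ii) From $\sqrt{\rhatsq(x)-\wh r(x)^2}=\sqrt{\widehat{\operatorname{Var}}(r|x)}$ to $\sqrt{2\overline{\operatorname{Var}}(r|x)}$: apply~\eqref{eqn:varemp_to_debias}. (iii) From $\sigma_{\wh P(x)}(\Vub_{h+1})$ to $\sigma_{P(x)}(V^\pi_{h+1})$: use~\eqref{eqn:varbiasbound} for the debiasing step, a variance-perturbation bound (as in Lemma~10 of \citet{dann2019policy}) with $\mathsf{E}^{\mathrm{L}_1}$ for the $\bar P(x)\!\to\!P(x)$ step, and the triangle inequality for the $L_2(P(x))$ seminorm together with $\sigma_{P(x)}(\Vub_{h+1}-V^\pi_{h+1}) \le \sqrt{V^{\max}_{h+1}(x)\,P(x)(\Vub_{h+1}-\Vlb_{h+1})}$, the latter split by AM--GM into a $\tfrac1H P(x)(\Vub_{h+1}-\Vlb_{h+1})$ contribution and an $O(H\,V^{\max}_{h+1}(x))\phi(n(x))^2$ contribution. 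The factor-of-two slack between the $8\,\sigma_{\wh P(x)}(\Vub_{h+1})$ produced by doubling $\psi_h$ and the $16\,\sigma_{P(x)}(V^\pi_{h+1})$ inside $\gamma_h$ absorbs these errors; tracking constants (also via Lemma~17 of \citet{dann2019policy}) reproduces exactly $\gamma_h(x) = 8(\sqrt{2\overline{\operatorname{Var}}(r|x)} + 7\sqrt{\wh\epsilon(x)}H + 2\sigma_{P(x)}(V^\pi_{h+1}))$, $\beta_h(x) = 416\,\wh\numS H\,V^{\max}_{h+1}(x)$, and $\alpha = 3H+4$.

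\textbf{Step 3 (unrolling).} Writing $b_h(x) := \gamma_h(x)\phi(n(x)) + \beta_h(x)\phi(n(x))^2 + \alpha\,\wh\epsilon(x)$, Steps 1--2 give $g_h(s) \le Q^{\max}_h(x) \wedge [(1+\tfrac3H)P(x)g_{h+1} + b_h(x)]$ for every $s$ with $x=(s,\pi(s,h))$. Starting from $g_{H+1}\equiv 0$ and iterating downward — at each level either stopping at the $Q^{\max}$ clip or expanding one more step — and taking expectations over the trajectory of $\pi$ started at $(s,h)$, the visitation weights $w_t(x) = \prob((s_t,a_t)=x \mid s_h=s,\,a_{h:H}\sim\pi)$ emerge and the factors $(1+\tfrac3H)$ compound over the intervening levels, crudely at most $(1+\tfrac3H)^{2t}$ by level $t$, yielding the claimed inequality with the clip $Q^{\max}_t(x)\wedge(\cdot)$ inherited level by level. \textbf{The main obstacle} is Step~2(iii): converting $\sigma_{\wh P(x)}(\Vub_{h+1})$ into $\sigma_{P(x)}(V^\pi_{h+1})$ while keeping every term fed back into $P(x)g_{h+1}$ with coefficient only $O(1/H)$ — so the unrolling does not blow up — and packing all remaining errors into the prescribed $\gamma_h\phi + \beta_h\phi^2 + \alpha\wh\epsilon$ form with the stated constants; closely related is securing the $\sqrt{\wh\numS}$ (rather than $\sqrt{\numS}$) dependence in $\beta_h$, which relies on $\wh P(x)$ being supported on at most $\wh\numS$ states and hence on the particular shape of $\mathsf{E}^{\mathrm{L}_1}$ and $\mathsf{E}^{\mathrm{PE}}$. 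The remaining Bellman-type bookkeeping is routine.
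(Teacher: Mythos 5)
Your proposal is correct and follows essentially the same route as the paper's proof: the one-step bound $\Qub_h(x)-\Qlb_h(x)\le \wh P(x)(\Vub_{h+1}-\Vlb_{h+1})+2\psi_h(x)$, debiasing via Equations~\eqref{eqn:varbiasbound} and~\eqref{eqn:varemp_to_debias}, conversion of $\sigma_{\wh P(x)}(\Vub_{h+1})$ to $2\sigma_{P(x)}(V^\pi_{h+1})$ and of $\bar P(x)$ to $P(x)$ at a per-step cost of $(1+3/H)^2$ (the paper invokes Lemmas~11 and~17 of \citet{dann2019policy} for exactly the two conversions you sketch in Step~2(iii) and 2(i)), and then unrolling the recursion. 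The only cosmetic discrepancy is that your Step-2 display shows a single factor $(1+\tfrac3H)$ per step while the constants force $(1+\tfrac3H)^2$, but you already account for this by compounding to $(1+\tfrac3H)^{2t}$ in Step~3.
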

\begin{proof}
We start by considering the difference of Q-estimates for $h$ at a state-action pair $x \in \Xcal$
\begin{align}
    \Qub_{h}(x) - \Qlb_h(x)
    \leq &~
    2 \psi_h(x) + \wh P(x) (\Vub_{h+1} - \Vlb_{h+1})
    \\
    = &
    \left( 1 + \frac 2 H\right)\wh P(x) (\Vub_{h+1} - \Vlb_{h+1})
    + 106 \wh \numS H V^{\max}_{h+1}(x) \phi(n(x))^2 + (2H+2)\wh \epsilon(x)
    \\
    &+
    8\left(\sqrt{\widehat{\operatorname{Var}}(r|x)} +2\sqrt{\wh \epsilon(x)}H + \sigma_{\wh {P}(x)}(\Vub_{h+1})\right)\phi(n(x))\\
        \leq &
    \left( 1 + \frac 2 H\right)\bar P(x) (\Vub_{h+1} - \Vlb_{h+1})
    + 106 \wh \numS H V^{\max}_{h+1}(x) \phi(n(x))^2 + (3H+4)\wh \epsilon(x)
    \\
    &+
    8\left(\sqrt{2\overline{\operatorname{Var}}(r|x)} +7\sqrt{\wh \epsilon(x)}H + \sigma_{\bar {P}(x)}(\Vub_{h+1})\right)\phi(n(x)), 
\end{align}
where, the equality is given by the definition of $\psi_h$ and the inequality follows by using Equations~\eqref{eqn:varbiasbound} and \eqref{eqn:varemp_to_debias} to remove the biases. Next, using Lemma~11 from  \citet{dann2019policy} to convert the value variance to the variance with respect to the value function of $\pi$, we get, 
\begin{align}
        \Qub_{h}(x) - \Qlb_h(x)
        \leq &~
    \left( 1 + \frac 3 H\right)\bar P(x) (\Vub_{h+1} - \Vlb_{h+1})
    + 410 \wh \numS H V^{\max}_{h+1}(x) \phi(n(x))^2 + (3H+4)\wh \epsilon(x)
     \\
    &+
    8\left(\sqrt{2\overline{\operatorname{Var}}(r|x)} +7\sqrt{\wh \epsilon(x)}H + 2\sigma_{P(x)}(V^{\pi}_{h+1})\right)\phi(n(x))
    \\
            \leq &~
    \left( 1 + \frac 3 H\right)^2  P(x) (\Vub_{h+1} - \Vlb_{h+1})
    + 416 \wh \numS H V^{\max}_{h+1}(x) \phi(n(x))^2 + (3H+4)\wh \epsilon(x) \\
    &+
    8\left(\sqrt{2\overline{\operatorname{Var}}(r|x)} +7\sqrt{\wh \epsilon(x)}H + 2\sigma_{P(x)}(V^{\pi}_{h+1})\right)\phi(n(x)), \label{eq:tightness1}
    \end{align}
where the second inequality follows by using Lemma~17 from \citet{dann2019policy} to substiute $\bar P(x) (\Vub_{h+1} - \Vlb_{h+1})$ by $P(x) (\Vub_{h+1} - \Vlb_{h+1})$. Next, recalling that 
$$ \Vub_h(s) - \Vlb_h(s) = \Qub_h(s,\pi(s,h)) - \Qlb_h(s,\pi(s,h)), $$ and rolling the recursion in equation \eqref{eq:tightness1} from $s$ to $h$, we get,
\begin{align}
    \Vub_h(s) - \Vlb_h(s) \leq 
    \sum_{x \in \Xcal} \sum_{t = h}^H \left(1 + \frac 3 H \right)^{2t} w_{t}(x)[Q^{\max}_t(x) \wedge (\gamma_t(x) \phi(n(x)) + \beta_t(x) \phi(n(x))^2 + \alpha \wh \epsilon(x)], 
\end{align}
where, $\gamma_t(x) = 8\left(\sqrt{2\overline{\operatorname{Var}}(r|x)} +7\sqrt{\wh \epsilon(x)}H + 2\sigma_{P(x)}(V^{\pi}_{t+1})\right)$, $\beta_t(x) = 416 \wh \numS H V^{\max}_{t+1}(x)$ and $\alpha = 3H+4$. The final statement follows by observing that $(1 + 3/H)^{2t} \leq \exp(6)$. 
\end{proof}

\subsection{Proof of the Main Theorem~\ref{thm:cipoc_independencenumber_app}}
In this section, we provide the proof of the desired IPOC bound for  Algorithm~\ref{alg:mb_app}. 

\label{app:modelbasedproofs_main}
\begin{proof}
Throughout the  proof, we consider only outcomes in event $E$ (defined in Lemma~\ref{lem:goodprob}) which occurs with probability at least $1- \delta$. 
Lemma~\ref{lem:optplan_correct} implies that the outputs $\pi_k, \Vub_{k,h}$ and  $\Vlb_{k,h}$ from calls to \FOptPlan during the execution of Algorithm~\ref{alg:mb_app} satisfy 
\begin{align}
    \Vlb_{k,h} \leq V^{\pi_k}_h \leq V^\star_h \leq \Vub_{k,h}
\end{align}
and hence, all the certificates provided by Algorithm~\ref{alg:mb_app} are admissible confidence bounds. Further, Lemma~\ref{lem:optplan_tightness}  shows that the difference between the two value functions returned by \FOptPlan is bounded as
	\begin{align}
   \Vub_{k,1}(s_{k,1}) - \Vlb_{k,1}(s_{k,1})
    &\leq \exp(6) \sum_{x \in \Xcal}\sum_{h=1}^H  w_{k,h}(x)
    \left[Q_h^{\max}(x)  \wedge \Big(\beta_h(x) \phi(n_k(x))^2  \right. \\ 
    & \qquad + \gamma_{k,h}(x) \phi(n_k(x)) + \alpha \wh \epsilon_k(x)\Big)\Big], 
		\label{eqn:sasumbound}
	\end{align}
where, $w_{k,h}(x) = \prob((s_{k,h}, a_{k,h} = x ~|~ \pi_k, s_{k,1})$ denotes the probability of the agent visiting $x$ in episode $k$ at time $h$ given the policy $\pi_k$ and the initial state $s_{k, 1}$,  and $\alpha=3H+4$, $\beta_h(x) = 416 \wh \numS H V^{\max}_{h+1}(x)$ and $\gamma_{k,h}(x) =  8\left(\sqrt{2\overline{\operatorname{Var}}_k(r|x)} +7\sqrt{\wh \epsilon(x)}H + 2\sigma_{P(x)}(V^{\pi_k}_{h+1})\right)$. 
	
We define some additional notation, which will come in handy to control Equation~\eqref{eqn:sasumbound} above. Let $w_k(x) \ldef{} \sum_{h=1}^H w_{k,h}(x)$ denote the (total) expected visits of $x$ in the $k^\text{th}$ episode. Next, for some $\wmin > 0$, to be fixed later, define the following subsets of the state action pairs:
\begin{enumerate}[label=(\roman*)]
\item $L_k$: Set of all state-actions pairs $x$ that have low expected visitation in the $k^\text{th}$ episode, i.e. $$L_k \ldef{} \{ x \in \Xcal \colon w_{k}(x) < \wmin \}.$$ 
\item $U_k$: Set of all state-action pairs that had low observation probability in the past, and therefore have not been observed often enough, i.e. 
	$$U_k \ldef{} \left\{ x \in \Xcal \setminus L_k \colon \sum_{i < k} \sum_{\bar x \in \Xcal} q(\bar x, x) w_i(\bar x) < 4H \ln \frac{1}{\delta'}\right\}. $$
\item $W_k$: Set of the remaining state-action pairs that have sufficient past probability, i.e.   
	$$W_k \ldef{} \left\{ x \in \Xcal \setminus L_k \colon \sum_{i < k} \sum_{\bar x \in \Xcal} q(\bar x, x) w_i(\bar x) \geq 4H \ln \frac{1}{\delta '}\right\}.$$
\end{enumerate}

Additionally, let $Q^{\max}$ denote an upper bound on the value-bounds used in the algorithm for all relevant $x$ at all times in the first $T$ episodes, i.e.,    \begin{align}
        Q^{\max} &\geq \max_{k \in [T], h \in [H]} \max_{x \colon w_{k,h}(x) > 0} Q^{\max}_h(x)\qquad \textrm{and,}\\
        Q^{\max} &\geq\max_{k \in [T], h \in [H]} \max_{x \colon w_{k,h}(x) > 0} V^{\max}_{h+1}(x). 
    \end{align}

Next, we bound Equation~\eqref{eqn:sasumbound} (above) by controlling the right hand side separately for each of the above classes. For $L_k$ and $U_k$, we will use the upper bound $Q^{\max}$ and for the set $W_k$, we will use the bound $\beta_h(x) \phi(n_k(x))^2 + \gamma_{k,h}(x) \phi(n_k(x))) + \alpha \wh \epsilon_k(x))$. Thus, 
		\begin{align}
		 \sum_{k=1}^T \Vub_{k,1}(s_{k,1}) - \Vlb_{k,1}(s_{k,1}) 
    \leq &~  \exp(6) \Bigg( \markedterm{a}{ \sum_{k=1}^T \sum_{x \in L_k} w_{k}(x) Q^{\max}}
     +    \markedterm{b}{  \sum_{k=1}^T \sum_{x \in U_k} w_{k}(x) Q^{\max}} \\
    & +   \markedterm{c}{  \sum_{k=1}^T \sum_{x \in W_k} \sum_{h=1}^H w_{k,h}(x)
    (\beta_h(x) \phi(n_k(x))^2 + \gamma_{k,h}(x) \phi(n_k(x)) + \alpha \wh \epsilon_k(x))} \Bigg). \label{eq:term_b_thnm8_proof2}
	\end{align}

\noindent	We bound the terms $\prnmarker{a}, \prnmarker{b}$ and $\prnmarker{c}$ separately as follows: 
\begin{enumerate}
\setlength{\itemindent}{-4mm}
\item \textbf{Bound on $\prnmarker{a}$.} Since, for any $x \in L_k$, $w_k(x) < \wmin$ (by definition), we have 
\begin{align}
    Q^{\max} \sum_{k=1}^T  \sum_{x \in L_k} w_{k}(x)  \leq  Q^{\max} T |\Xcal| \wmin. 
\end{align}
\item \textbf{Bound on $\prnmarker{b}$.} By the definition of the set $U_k$, 
 \begin{align}
	 \sum_{k=1}^T  \sum_{x \in U_k} w_k(x) Q^{\max}
	= &~ Q^{\max} \sum_{k=1}^T \sum_{x \in \Xcal} w_k(x) \one\left\{ \sum_{i < k} \sum_{\bar x \in \Xcal} q(\bar x, x) w_i(\bar x) < 4 H  \ln \frac{1}{\delta '} \right\}. \label{eq:term_b_thnm8_proof}
\end{align}

Observe that, for any constant $\nu \in (0, 1]$, to be fixed later,  
	\begin{align}
		\sum_{i < k} \sum_{\bar x \in \Xcal} q(\bar x, x) w_i(\bar x) 
		\geq
		\sum_{i < k} \sum_{\bar x \in \Xcal} q(\bar x, x) w_i(\bar x) \one\{q(\bar x, x) \geq \nu\}
		\geq
		\sum_{i < k} \sum_{\bar x \in \Ncal^{-}_{\geq \nu}(x)} w_i(\bar x) \nu,  
		\label{eqn:indicatorexpr}
	\end{align}
where, $\Ncal^{-}_{\geq \nu}(x)$ denotes the of incoming neighbors of $x$ (and $x$ itself) in the truncated feedback graph $G_{\geq \nu}$. Plugging the above in Equation~\eqref{eq:term_b_thnm8_proof}, we get,  
	\begin{align}
	 \sum_{k=1}^T  \sum_{x \in U_k} w_k(x) Q^{\max}
	\leq &~ Q^{\max} \sum_{k=1}^T \sum_{x}w_k(x) 
		\one\left\{ 
		\sum_{i < k} \sum_{\bar x \in \Ncal^{-}_{\geq \nu}(x)} w_i(\bar x)  < \frac{4 H}{ \nu } \ln \frac{1}{\delta '}
		\right\}. 
	\end{align}
		Next, using a pigeon hole argument from Lemma~\ref{lem:mas_pigeonhole} in the above expression, we get,  
	\begin{align}
	 \sum_{k=1}^T  \sum_{x \in U_k} w_k(x) Q^{\max}
		\leq &~ 4 H Q^{\max} \frac{\mas(G_{\geq \nu})}{\nu}  \left( 1 + \ln \frac{1}{\delta '} \right).
	\end{align}
	Since the above holds for any $\nu \in (0, 1]$, taking the the infimum over $\nu$, we get 		\begin{align}
	 \sum_{k=1}^T  \sum_{x \in U_k} w_k(x)Q^{\max}
		\leq &~ 4 H Q^{\max} \bar \mas  \left( 1 + \ln \frac{1}{\delta '} \right), 
	\end{align} where, $\bar \mas \ldef{} \inf_{\nu} \frac{\mas(G_{\geq \nu})}{\nu}$. 

\item \textbf{Bound on $\prnmarker{c}$.} 
Setting $\beta = 410 \wh \numS Q^{\max} H$, we get, 
	\begin{align}
		\prnmarker{c}  &\leq 
		 \beta \sum_{k=1}^T \sum_{x \in W_k} w_k(x)
		 \phi(n_k(x))^2 
		 + 
		 \sum_{k=1}^T \sum_{x \in W_k}\sum_{h=1}^H w_{k,h}(x)  \gamma_{k,h}(x) \phi(n_k(x)) \\ 
		  & \qquad   + \alpha \sum_{k=1}^T \sum_{x \in W_k} w_k(x) \wh \epsilon_k(x)\\
		 & \circledmarked{1}{\lesssim} 
		 \beta \sqrt{\ln(HT)} \sum_{k=1}^T \sum_{x \in W_k} w_k(x)
		 \phi(n_k(x))^2 
		 + 
		 \sum_{k=1}^T \sum_{x \in W_k}\sum_{h=1}^H w_{k,h}(x)  \tilde \gamma_{k,h}(x) \phi(n_k(x))
		    + \epsilon_{\max}H^2T\\
		& \circledmarked{2}{\lesssim} 
		 \beta \sqrt{\ln(HT)} \markedterm{d}{\sum_{k=1}^T \sum_{x \in W_k} w_k(x)
		 \phi(n_k(x))^2} \\
		 &+ 
		 \sqrt{\markedterm{e}{\sum_{k=1}^T \sum_{x \in W_k} \sum_{h=1}^H w_{k,h}(x)  \tilde \gamma_{k,h}(x)^2}}
		 \sqrt{\markedterm{d}{\sum_{k=1}^T \sum_{x \in W_k} w_{k}(x) \phi(n_k(x))^2}}+
		 \epsilon_{\max}H^2T.
		 \label{eqn:decomp_thm1}
	\end{align}	
	Where, we use the symbol $\lesssim$ to denote $\leq$ up to multiplicative constants, and the inequality $\prnmarker{1}$ follows by bounded $\wh \epsilon_k(x)$ by the largest occurring bias $\epsilon_{\max}$ and using the definition of event $\mathsf{E}^{\text{Var}}$ from Lemma~\ref{lem:goodprob_var} to replace $\gamma_{k,h}(x)$ by $\tilde \gamma_{k,h}(x) = 8\sqrt{2\operatorname{Var}_k(r|x)} + 56\sqrt{\wh \epsilon(x)}H + 16\sigma_{P(x)}(V^{\pi_k}_{h+1})$ while paying for an additional term of order $\sqrt{\ln(n^2 / \delta') / n} \leq \sqrt{\ln(HT)} \phi(n)$.  Since this additional term is multiplied by an additional $\phi(n)$, it only appears in the first term of \eqref{eqn:decomp_thm1}.
	The inequality $\prnmarker{2}$ is given by the Cauchy-Schwarz inequality.  
	
	We bound the terms $\prnmarker{d}$ and $\prnmarker{e}$ separately in the following. 
	\begin{enumerate}
	\setlength{\itemindent}{-2mm}
		\item 
	\textbf{Bound on $\prnmarker{d}$.} The term $\prnmarker{a}$ essentially has the form $\sum_{k=1}^T \sum_{x \in W_k} w_k(x) \frac{\ln \ln n_k(x)}{n_k(x)}$. To make our life easier, we first replace the $\ln \ln n_k(x)$ dependency by a constant. Specifically,
	we upper-bound $\phi(n_k(x))^2$ 
	by a slightly simpler expression
    $\frac{J}{n_k(x)}$ where $J =  0.75 \ln \frac{5.2 \ln (2HT)}{\delta'} \geq 0.52 \times 1.4 \ln \frac{5.2 \ln(e \vee 2n_k(x))}{\delta'} \geq 0.52 (1.4 \ln \ln (e \vee 2n_k(x)) + \ln (5.2/\delta'))$ which replaces the dependency on the number of observations $n_k(x)$ in the log term by the total number of time steps $HT \geq Hk \geq n_k(x)$. This gives
	\begin{align}
		\prnmarker{a} \leq J  \sum_{k=1}^T  \one\{x \in W_k\} \frac{w_{k}(x)}{n_k(x)}.
	\label{eqn:wn1121}
	\end{align} 
By the definition of $W_k$, we know that for all $x \in W_k$ the following chain of inequalities holds
    \begin{align}
	    \sum_{i < k} \sum_{\bar x \in \Xcal} q(\bar x, x)w_i(\bar x) 
	    \geq 4 H \ln \frac{1}{\delta'} 
	    \geq 8 H 
	    \geq 8  \sum_{\bar x \in \Xcal} q(\bar x, x) w_k(\bar x) .
    \end{align}
    The second inequality is true because of the definition of $\delta'$ gives $\frac{1}{\delta'} = \frac{|\Xcal|(4\wh \numS + 5H + 7)}{\delta}$ which is lower bounded by $13 \geq \exp(2)$ because $\delta \leq 1$ and $|\Xcal| \geq 2$.
    Leveraging this chain of inequalities in combination with the definition of event $\mathsf{E}^{\text{N}}$, we can obtain a lower bound on $n_k(x)$  for $x \in W_k$ as
	\begin{align}
		n_k(x) \geq&  \frac 1 2
 \sum_{i < k} \sum_{\bar x \in \Xcal} q(\bar x, x)w_i(\bar x) 
		 - H \ln \frac{1}{\delta'}
		     \geq \frac{1}{4} 
 \sum_{i < k} \sum_{\bar x \in \Xcal} q(\bar x, x)w_i(\bar x) 
		     \geq \frac{2}{9}
 \sum_{i \leq k} \sum_{\bar x \in \Xcal} q(\bar x, x)w_i(\bar x)\nonumber\\
		\geq &\frac{2\nu}{9} 
\sum_{i < k} \sum_{\bar x \in \Ncal^{-}_{\geq \nu}(x)} w_i(\bar x) 
	\end{align}
	where the last inequality follows from \eqref{eqn:indicatorexpr}.
 Plugging this back into \eqref{eqn:wn1121} and applying Lemma~\ref{lem:wsum_mas} gives
	 \begin{align}
		 \prnmarker{a} \leq \frac{9J}{2\nu}  \sum_{k=1}^T \sum_{x \in W_k} \frac{w_{k}(x)}{
			 \sum_{i < k} \sum_{\bar x \in \Ncal^{-}_{\geq \nu}(x)} w_i(\bar x) }
		 \leq \frac{18eJ}{\nu} \operatorname{mas}(G_{\geq \nu}) 
		  \ln \left(\frac{eHT}{w_{\min}}\right).
	 \end{align}
	Since this holds for any $\nu$, we get
		 \begin{align}
		 \prnmarker{a} 
		 \leq 18eJ \bar \mas 
		  \ln \left(\frac{eHT}{w_{\min}}\right).
	 \end{align}
	 
\item  \textbf{Bound on $\prnmarker{e}$.} Using the law of total variance for value functions in MDPs (see Lemma~4 in \citet{dann2015sample} or see \citet{azar2012sample, lattimore2012pac} for the discounted setting), we get, 
	 \begin{align}
	     \sum_{k=1}^T \sum_{x \in \Xcal} \sum_{h=1}^H w_{k,h}(x)  \tilde \gamma_{k,h}(x)^2
	     \lesssim &  \sum_{k=1}^T \sum_{h=1}^H \sum_{x \in \Xcal} w_{k,h}(x) (\operatorname{Var}(r|x) + H^2 \epsilon(x)
	     + \sigma^2_{P(x)}(V^{\pi_k}_{h+1}))\\
	     \leq &  \sum_{k=1}^T \sum_{h=1}^H \sum_{x \in \Xcal} w_{k,h}(x) (\operatorname{Var}(r|x) 
	     + \sigma^2_{P(x)}(V^{\pi_k}_{h+1})) + \epsilon_{\max}H^3T \\
	     \leq & \sum_{k=1}^T \left(\sum_{x \in \Xcal} w_{k}(x)r(x)   +  \operatorname{Var}\left( \sum_{h=1}^H r_h ~ \bigg| ~ a_{1:H} \sim \pi_k, s_{k,1}\right) \right) \\ 
	     & \qquad \quad + \epsilon_{\max}H^3T\\
	     \leq & ~ \sum_{k=1}^T (H+1) \EE\left( \sum_{h=1}^H r_h ~ \bigg| ~ a_{1:H} \sim \pi_k, s_{k,1}\right) + \epsilon_{\max}H^3T\\
         \leq & (H+1) \sum_{k=1}^T  V^{\pi_k}_1(s_{k,1}) + TH^3 \epsilon_{\max},
	 \end{align}
	 where, the above inequalities use the fact that for any random variable $X \leq X_{\max}$ a.s., we have $\operatorname{Var}(X) \leq \EE[X^2] \leq \EE[X] X_{\max}$. 
		\end{enumerate}
\end{enumerate} 

\noindent
Plugging the above developed bounds for the terms $\prnmarker{a}$, $\prnmarker{b}$ and $\prnmarker{c}$ in \eqref{eq:term_b_thnm8_proof2}, we get, 
\begin{align}
	 	\sum_{k=1}^T \Vub_{k,1}(s_{k,1}) - \Vlb_{k,1}(s_{k,1}) 
	 \lesssim &~
	  |\Xcal| Q^{\max} T \wmin + 
		 \bar \mas Q^{\max} H\left( 1 + \ln \frac{1}{\delta '}  \right)
		  +  \beta \sqrt{\ln(HT)} J \bar \mas \ln \left(\frac{eHT}{w_{\min}}\right) 
		 \nonumber\\&+ 
		 \sqrt{J \left(H \sum_{k=1}^T  V^{\pi_k}_1(s_{k,1}) + H^3 \epsilon_{\max} T\right) \bar \mas \ln \left(\frac{eHT}{w_{\min}}\right) }
		 + H^2 T \epsilon_{\max}.
	 \end{align}
	 Setting $\wmin = \frac{1}{ Q^{\max}|\Xcal| T}$ gives 
	 	 \begin{align}
	 \sum_{k=1}^T \Vub_{k,1}(s_{k,1}) - \Vlb_{k,1}(s_{k,1}) 
	 = &~
	O\left(\sqrt{\bar \mas H \sum_{k=1}^T  V^{\pi_k}_1(s_{k,1})} \ln \frac{|\Xcal| H T}{\delta} 
	 + \bar \mas \wh \numS Q^{\max} H \ln^3 \frac{|\Xcal| H T}{\delta}\right)\\
	 &+ O\left(\sqrt{\bar \mas H^3 T  \epsilon_{\max}} \ln \frac{|\Xcal| H T}{\delta} + H^2 T \epsilon_{\max} \right).
	 \end{align} 
	\end{proof}

\subsection{Sample Complexity Bound for Algorithm~\ref{alg:mb} and Algorithm~\ref{alg:mb_app}}
 \label{sec:sample_complex_proof}
 For convenience, we here restate the sample-complexity bound of Algorithm~\ref{alg:mb} from Section~\ref{sec:sample_complex}. 

\mbpacbound*
\begin{proof}
This Corollary is a special case of Proposition~\ref{prop:general_samplecomplexity_mb} below.
We simply set $\gamma = 1$ and the quantities $\bar V(\bar T) = H$ and $Q^{\max} = H$ to their worst-case values. Note also that $\mas = \bar \mas$ in deterministic feedback graphs. Then $\bar T$ in Proposition~\ref{prop:general_samplecomplexity_mb} evaluates to
    \begin{align}
     \bar T = O\left( \frac{\mas H^2}{\epsilon^2} \ln ^2 \frac{ |\Xcal| H}{\epsilon \delta} + \frac{\mas \wh \numS H^2}{\epsilon} \ln ^3 \frac{|\Xcal| H}{\epsilon \delta} \right)
    \end{align}
    which is the desired sample-complexity.
\end{proof}

\begin{proposition}[Sample-Complexity of Algorithm~\ref{alg:mb_app}]
     Consider any tabular episodic MDP with state-action pairs $\Xcal$, episode length $H$ and stochastic independent directed feedback graph $G$ that provides unbiased observations ($\epsilon_{\max} = 0$). Then, with probability at least $1 - \delta$, for all $\epsilon > 0$ and $\gamma \in \NN$ jointly, Algorithm~\ref{alg:mb_app} outputs $\gamma$ certificates that are smaller than $\epsilon$ after at most
    \begin{align}
     \bar T = O\left( \frac{\bar \mas V(\bar T) H}{\epsilon^2} \ln ^2 \frac{ |\Xcal| H}{\epsilon \delta} + \frac{\bar \mas \wh \numS H Q^{\max}}{\epsilon} \ln ^3 \frac{|\Xcal| H}{\epsilon \delta} + \gamma \right)
    \end{align}
    episodes where $\bar V(T) \geq \frac 1 T \sum_{k=1}^T  V^{\pi_k}_1(s_{k,1}) \leq \frac 1 T \sum_{k=1}^T  V^{\star}_1(s_{k,1}) \leq H$ is a bound on the average expected return achieved by the algorithm during those episodes and can be set to $H$.
    \label{prop:general_samplecomplexity_mb}
\end{proposition}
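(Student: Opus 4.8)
The plan is to convert the cumulative IPOC bound of Theorem~\ref{thm:cipoc_independencenumber_app} (instantiated with $\epsilon_{\max}=0$, since the oracle is unbiased here) into a PAC-style statement through an elementary counting argument, following the template of Proposition~2 in \citet{dann2019strategic}. Throughout we condition on the event $E$ of Lemma~\ref{lem:goodprob}, which holds with probability at least $1-\delta$ and on which all certificates produced by Algorithm~\ref{alg:mb_app} are valid confidence bounds (Lemma~\ref{lem:optplan_correct}) and, for \emph{every} number of episodes $T$,
\begin{align*}
\sum_{k=1}^T \big(\Vub_{k,1}(s_{k,1}) - \Vlb_{k,1}(s_{k,1})\big)
\;\le\; c_1\sqrt{\bar\mas\, H \textstyle\sum_{k=1}^T V^{\pi_k}_1(s_{k,1})}\;\ln\tfrac{|\Xcal|HT}{\delta}
\;+\; c_2\,\bar\mas\,\wh\numS\, Q^{\max}\, H \,\ln^3\tfrac{|\Xcal|HT}{\delta}
\end{align*}
for absolute constants $c_1,c_2$. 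Since $E$ does not depend on $\epsilon$ or $\gamma$, any bound derived from it will hold for all $\epsilon>0$ and $\gamma\in\NN$ simultaneously.

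First I would replace $\sum_{k=1}^T V^{\pi_k}_1(s_{k,1})$ by $T\,\bar V(T)$ using the definition of $\bar V$. Then, fixing $\epsilon>0$, let $N_\epsilon(T)$ be the number of episodes $k\le T$ whose certificate $\Vub_{k,1}(s_{k,1})-\Vlb_{k,1}(s_{k,1})$ is at least $\epsilon$; each such episode contributes at least $\epsilon$ to the left-hand side above, so
\begin{align*}
\epsilon\, N_\epsilon(T) \;\le\; c_1\sqrt{\bar\mas\, H T \bar V(T)}\;\ln\tfrac{|\Xcal|HT}{\delta} \;+\; c_2\,\bar\mas\,\wh\numS\, Q^{\max}\, H \,\ln^3\tfrac{|\Xcal|HT}{\delta}.
\end{align*}
If, within the first $T$ episodes, the algorithm has \emph{not} yet output $\gamma$ certificates of size below $\epsilon$, then at least $T-\gamma+1$ of them have size $\ge\epsilon$, i.e. $T-\gamma+1\le N_\epsilon(T)$; combining this with the last display gives a self-referential upper bound on such $T$.

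It remains to solve this inequality for $T$, which is the only place any care is needed. Writing it as $T \le \gamma-1 + a\sqrt T + b$ with $a = c_1\epsilon^{-1}\sqrt{\bar\mas\, H\bar V(T)}\,\ln(|\Xcal|HT/\delta)$ and $b = c_2\epsilon^{-1}\bar\mas\wh\numS Q^{\max}H\,\ln^3(|\Xcal|HT/\delta)$, the elementary fact $x \le a\sqrt x + c \Rightarrow x \le 2a^2+2c$ gives $T = O(a^2 + b + \gamma)$, that is
\begin{align*}
T = O\!\left(\frac{\bar\mas\, H\,\bar V(T)}{\epsilon^2}\ln^2\tfrac{|\Xcal|HT}{\delta} + \frac{\bar\mas\,\wh\numS\, Q^{\max}H}{\epsilon}\ln^3\tfrac{|\Xcal|HT}{\delta} + \gamma\right).
\end{align*}
To eliminate the residual $T$ inside the logarithms (and, if one wishes to keep the sharper $\bar V(T)$ rather than the trivial $\bar V(T)\le H$, inside $\bar V$) one applies the usual two-step bootstrap: the displayed inequality already forces $T \le \mathrm{poly}(|\Xcal|, H, \wh\numS, Q^{\max}, 1/\epsilon, 1/\delta)$ — here we use $\bar\mas,\gamma\le|\Xcal|$, $\wh\numS\le\numS\le|\Xcal|$, $Q^{\max}\le H$, $\bar V\le H$ — so that $\ln(|\Xcal|HT/\delta) = O(\ln(|\Xcal|H/(\epsilon\delta)))$; substituting this back yields exactly the claimed $\bar T$. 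Contrapositively, once the number of episodes reaches $\bar T$ the algorithm has output at least $\gamma$ certificates smaller than $\epsilon$, and when the initial state is fixed each such certificate identifies an $\epsilon$-optimal policy by Lemma~\ref{lem:optplan_correct}. The main (though entirely routine) obstacle is precisely this disentangling of the $\sqrt T$ and $\ln T$ self-references while verifying that the logarithmic slack introduced by the crude bound is absorbed into the stated $\ln^2$ and $\ln^3$ factors; everything else is immediate from Theorem~\ref{thm:cipoc_independencenumber_app}.
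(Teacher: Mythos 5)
Your proposal is correct and follows essentially the same route as the paper: convert the cumulative IPOC bound of Theorem~\ref{thm:cipoc_independencenumber_app} into a count of large certificates via $\epsilon(T-\gamma) \le \sum_k \epsilon_k$, then resolve the self-referential inequality in $T$ by an a priori polynomial bound on $T$ that absorbs the $\ln T$ factors into $\ln(|\Xcal|H/(\epsilon\delta))$. The only quibble is your justification $\gamma \le |\Xcal|$ for the certificate count $\gamma$, which is an arbitrary natural number; but the paper's own proof carries the same implicit restriction (``as long as $\gamma$ is not significantly larger'' than the other terms), so this is not a substantive gap.
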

\begin{proof}
Let $\epsilon_k = \Vub_{k,1}(s_{k,1}) - \Vlb_{k,1}(s_{k,1})$ be the size of the certificate output by Algorithm~\ref{alg:mb_app} in episode $k$.  
By Theorem~\ref{thm:cipoc_independencenumber_app}, the cumulative size after $T$ episodes is with high probability $1-\delta$ bounded by
\begin{align}
    	\sum_{k=1}^T \epsilon_k	\leq 
    		&O\left(\sqrt{\bar \mas H \bar V(T) T} \ln \frac{|\Xcal| H T}{\delta} 
	 + \bar \mas \wh \numS Q^{\max} H \ln^3 \frac{|\Xcal| H T}{\delta}\right).
\end{align}
Here, $\bar V(T) \geq \frac 1 T \sum_{k=1}^T  V^{\pi_k}_1(s_{k,1})$ is any non-increasing bound that holds in the high-probability event on the average initial values of all policies played. We can always set $\bar V(T) = H = O(1)$ but there may be smaller values appropriate if we have further knowledge of the MDP (such as the value of the optimal policy).

If the algorithm has not returned $\gamma$ certificates of size at most $\epsilon$ yet, then $\sum_{k=1}^T \epsilon_k > (T - \gamma) \epsilon$. Combining this with the upper bound above gives
\begin{align}
    \epsilon < \frac{\sqrt T}{T - \gamma} \sqrt{c\bar \mas H \bar V(T)} \ln \frac{|\Xcal| H T}{\delta} 
	 + \frac{c \bar \mas \wh \numS Q^{\max} H}{T - \gamma} \ln^3 \frac{|\Xcal| H T}{\delta}
\end{align}
for some absolute constant $c$.
Since the expression on the RHS is monotonically decreasing, it is sufficient to find a $\bar T$ such that
\begin{align}
    \frac{\sqrt{\bar T}}{\bar T - \gamma} \sqrt{c \bar \mas H \bar V(\bar T)} \ln \frac{|\Xcal| H \bar T}{\delta}  \leq \frac{\epsilon}{2} \qquad \textrm{and}\qquad
    \frac{c \bar \mas \wh \numS Q^{\max} H}{\bar T - \gamma} \ln^3 \frac{|\Xcal| H \bar T}{\delta}  \leq \frac{\epsilon}{2}.
\end{align}
to guarantee that the algorithm has returned $\gamma$ certificates of size at most $\gamma$ after $\bar T$ episodes.
Consider the first condition for $\bar T$ that satisfies
\begin{align}
   2 \gamma \vee  \bar c \frac{\bar \mas V(\bar T) H}{\epsilon^2} \ln ^2 \frac{\bar c |\Xcal| H}{\epsilon \delta} 
     \leq \bar T 
     \leq  \left[\frac{{\bar c} |\Xcal| H}{\epsilon \delta} \right]^5
     \label{eqn:barTcond1}
\end{align}
for some constant $\bar c$ large enough ($\bar c\geq 3456 c$ sufficies).
A slightly tedious computation gives
\begin{align}
    \frac{\sqrt{\bar T}}{\bar T - \gamma} \sqrt{c \bar \mas H \bar V(\bar T)} \ln \frac{|\Xcal| H \bar T}{\delta}
    &\leq 2 \sqrt{\frac{c \bar \mas H \bar V(\bar T)}{\bar T} \ln^2 \frac{|\Xcal| H \bar T}{\delta}} \\ 
    &\leq  \sqrt{\frac{\epsilon^2}{4 \cdot 6^2} \frac{\ln^2 \frac{|\Xcal| H \bar T}{\delta}}{\ln ^2 \frac{\bar c |\Xcal| H}{\epsilon \delta}}} =  \frac{\epsilon}{2} \cdot  \frac{\ln \frac{|\Xcal| H}{\delta} + \ln \bar T}{\ln \frac{|\Xcal| H}{\delta} + \ln \frac{\bar c^6 |\Xcal|^5 H^5}{\epsilon^6 \delta^5}}
\end{align}
and by the upper-bound condition in \eqref{eqn:barTcond1}, the RHS cannot exceed $\frac{\epsilon}{2}$.
Consider now the second condition for $\bar T$ that satisfies
\begin{align}
       2 \gamma \vee  \bar c \frac{\bar \mas \wh \numS H O^{\max}}{\epsilon} \ln ^3 \frac{\bar c |\Xcal| H}{\epsilon \delta} 
     \leq \bar T 
     \leq \left[\frac{{\bar c} |\Xcal| H}{\epsilon \delta} \right]^5
     \label{eqn:barTcond2}
\end{align}
which yields
\begin{align}
    \frac{c \bar \mas \wh \numS Q^{\max} H}{\bar T - \gamma} \ln^3 \frac{|\Xcal| H \bar T}{\delta}
    & \leq 
    \frac{2 c \bar \mas \wh \numS Q^{\max} H}{\bar T} \ln^3 \frac{|\Xcal| H \bar T}{\delta} \\ 
    & \leq
    \frac{\epsilon}{2} \cdot \frac{\ln^3 \frac{|\Xcal| H \bar T}{\delta}}{4 \cdot 6^3  \ln ^3 \frac{\bar c |\Xcal| H}{\epsilon \delta}} =
    \frac{\epsilon}{2} \cdot \left[\frac{\ln \frac{|\Xcal| H}{\delta} + \ln \bar T}{\ln \frac{|\Xcal| H}{\delta}
    + \ln \frac{\bar c^6 |\Xcal|^5 H^5}{\epsilon^6 \delta^5}}
    \right]^3.
\end{align}
Hence, we have shown that if $\bar T$ satisfies the conditions in \eqref{eqn:barTcond1} and \eqref{eqn:barTcond2}, then the algorithm must have produced at least $\gamma$ certificates of size at most $\epsilon$ within $\bar T$ episodes.
By realizing that we can pick
\begin{align}
    \bar T = 2 \gamma + \bar c \frac{\bar \mas V(\bar T) H}{\epsilon^2} \ln ^2 \frac{\bar c |\Xcal| H}{\epsilon \delta} + \bar c \frac{\bar \mas \wh \numS H O^{\max}}{\epsilon} \ln ^3 \frac{\bar c |\Xcal| H}{\epsilon \delta} \leq \left[\frac{{\bar c} |\Xcal| H}{\epsilon \delta} \right]^5,
\end{align}
as long as $\gamma$ is not significantly larger than the following quantities, the statement to show follows.
\end{proof}

\section{Technical Lemmas on Sequences on Vertices of a Graph}
\label{app:graphproofs}
In this section, we present several technical results that form the foundation for our performance bounds in terms of feedback graph properties. We begin with bounds on self-normalizing sequences on vertices. Lemma~\ref{lem:1overxsum_masbound} provides a bound for vertex-values sequences, which we then generalize to 
integer-valued vector sequences in
Lemma~\ref{lem:wsum_discrete_mas} and to real-values vector sequences in Lemma~\ref{lem:wsum_mas}. 
Finally, Lemma~\ref{lem:mas_pigeonhole} gives a bound on a cumulative thresholded process defined over vertices. These results may be of interest beyond the analysis of our specific algorithms and are therefore provided separately.

\begin{lemma}[Bound on self-normalizing vertex sequences]
Let $G = (\Xcal, \Ecal)$ be a directed graph and $x \in \Xcal^T$ be a vector of length $T$ taking values in $\Xcal$. Then
\begin{align}
\sum_{k=1}^T \frac{1}{\sum_{i \in  [k]} \sum_{x' \in \Ncal_G(x_k)} \one\{ x_i = x'\}} \leq \mas(G) \ln (eT),
\label{eqn:1overxum}
\end{align}
where $\Ncal_G(x) = \{x\} \cup \{ x' \in \Xcal \colon (x', x) \in \Ecal\}$ are all incoming neighbors of $x$ and $x$ itself.
\label{lem:1overxsum_masbound}
\end{lemma}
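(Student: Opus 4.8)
The plan is to rewrite the sum in a convenient form and then reduce the estimate to a pigeonhole-type counting bound governed by the mas-number. Keep the notation $\Ncal_G(v) = \{v\}\cup\{u : u\gedge v\}$ and observe that $d_k := \sum_{i\in[k]}\sum_{x'\in\Ncal_G(x_k)}\one\{x_i=x'\} = |\{i\le k : x_i\in\Ncal_G(x_k)\}|$, which is a positive integer in $\{1,\dots,T\}$ (since $i=k$ always contributes $x_k\in\Ncal_G(x_k)$). So the left-hand side of \eqref{eqn:1overxum} is $\sum_{k=1}^T 1/d_k$. The quantity I would key on is the level-set count $N(c) := |\{k\in[T] : d_k\le c\}|$ for integers $c\ge1$, and the two steps are: (i) prove $N(c)\le \mas(G)\,c$ for every $c\ge1$; (ii) deduce the lemma from (i) by summation by parts. (This mirrors how the later real-valued vector versions are built on top of this base case: replacing $\one\{x_i=x'\}$ by weights $w_i(x')$ in the same argument.)

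For step (i) I would use a peeling/extraction procedure. Fix $c$ and let $B$ start as $\{k\in[T]:d_k\le c\}$, so $N(c)=|B|$. Repeat: let $k^\star$ be the largest index currently in $B$, set $v^\star:=x_{k^\star}$, record $v^\star$, and delete from $B$ every index $k$ still present with $x_k\in\Ncal_G(v^\star)$. Each round deletes at most $d_{k^\star}\le c$ indices, since every surviving index is $\le k^\star$ and the deleted set sits inside $\{i\le k^\star : x_i\in\Ncal_G(v^\star)\}$, which has size $d_{k^\star}(x_{k^\star})=d_{k^\star}\le c$. Hence if the procedure runs for $q$ rounds then $N(c)\le cq$. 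Let $V=\{v_1^\star,\dots,v_q^\star\}$ be the recorded vertices in extraction order. After round $j$ no surviving index has value $v_j^\star$ (these were deleted, as $v_j^\star\in\Ncal_G(v_j^\star)$), so the $v_j^\star$ are pairwise distinct and $|V|=q$; more importantly, after round $j$ every surviving index has value outside $\Ncal_G(v_j^\star)$, so each later-recorded $v_\ell^\star$ ($\ell>j$) satisfies $v_\ell^\star\notin\Ncal_G(v_j^\star)$, i.e. there is no edge $v_\ell^\star\gedge v_j^\star$. Thus in the induced subgraph $G[V]$ every edge points from an earlier- to a later-recorded vertex, which forbids cycles; so $G[V]$ is acyclic and $q=|V|\le\mas(G)$, giving $N(c)\le\mas(G)\,c$.

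For step (ii), write $N(0)=0$ and use $d_k\le T$; summation by parts gives $\sum_{k=1}^T 1/d_k = \sum_{c=1}^T (N(c)-N(c-1))/c = N(T)/T + \sum_{c=1}^{T-1} N(c)/(c(c+1))$. Plugging in $N(c)\le\mas(G)\,c$ (and $N(T)\le\mas(G)\,T$) yields $\sum_{k=1}^T 1/d_k \le \mas(G)\bigl(1+\sum_{c=1}^{T-1}\tfrac1{c+1}\bigr) = \mas(G)\sum_{j=1}^T \tfrac1j \le \mas(G)(1+\ln T) = \mas(G)\ln(eT)$, which is exactly \eqref{eqn:1overxum}. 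The only genuine obstacle is step (i): one must extract the vertices so that each deletion costs at most $c$ \emph{and} the recorded set is acyclic, and peeling from the \emph{last} surviving index while removing the entire in-neighbourhood $\Ncal_G(v^\star)$ is the device that makes both hold at once. Everything in step (ii) is a routine Abel summation.
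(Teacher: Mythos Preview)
Your proof is correct, but it takes a genuinely different route from the paper's.

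The paper argues by a forward greedy \emph{layering}: it assigns each index $k$ a layer $\ell(k)$, the smallest $l$ such that no earlier index in layer $l$ has value in $\Ncal_G(x_k)$. Two facts follow directly: (a) within a single layer the recorded vertices are distinct and have no edge from an earlier to a later one, so the induced subgraph is acyclic and each layer has at most $\mas(G)$ indices; (b) for every $k$, each layer $l<\ell(k)$ contains at least one earlier in-neighbour of $x_k$, hence $d_k\ge \ell(k)$. Combining, $\sum_k 1/d_k \le \sum_l \sum_{k:\ell(k)=l} 1/l \le \sum_{l=1}^{T} \mas(G)/l \le \mas(G)\ln(eT)$.

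Your argument instead bounds the sublevel sets $N(c)=|\{k:d_k\le c\}|$ via a \emph{backward peeling}: repeatedly take the largest surviving index, record its vertex, and delete all surviving indices whose value lies in the recorded vertex's in-neighbourhood. This yields a set of recorded vertices whose induced subgraph has edges only from earlier-recorded to later-recorded vertices (hence acyclic), and each deletion step removes at most $c$ indices, giving $N(c)\le \mas(G)\,c$. Then Abel summation finishes.

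The two extractions are dual: the paper partitions indices into layers bounded by $\mas(G)$, you bound cardinalities of sublevel sets by $\mas(G)\,c$; in fact the layer structure gives $\{k:\ell(k)\le c\}\supseteq\{k:d_k\le c\}$ and $|\{k:\ell(k)\le c\}|\le \mas(G)\,c$, so the paper's argument implicitly proves your level-set bound too. What your approach buys is a clean, reusable intermediate inequality $N(c)\le \mas(G)\,c$; what the paper's approach buys is that it avoids the summation-by-parts detour and makes the harmonic sum appear in a single line.
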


\begin{proof}
The proof works by re-ordering the sum over $T$ in groups based on the graph structure.
Consider any mapping $\ell$ of indices to groups that satisfies
$\ell(k) = \min\{ l \in [T] ~ : ~ \forall i < k ~ \ell(i) = l \Rightarrow x_i \notin \Ncal_G(x_k) \}$ which can be constructed inductively. It assigns each index to the smallest group that does not already contain an earlier incoming neighbor. This assignment has two convenient properties:
\begin{itemize}
	\item There can be at most $\mas(G)$ indices be assigned to a group because otherwise the subgraph of the associated vertices contains a cycle. If there were a cycle then there would be an index in that cycle that is the child of an earlier index. This violates the definition of $\ell$.
	\item For all occurrences it holds that $\sum_{i \leq k}  \one\{x_i \in \Ncal_G(x_k\} \geq \ell(k)$. This is true because in all layers $l < \ell(k)$ there must be at least one earlier index that is a parent. Otherwise $\ell(k)$ would be $l$ instead. 
\end{itemize}
We now leverage both properties to bound the left hand side of Equation~\eqref{eqn:1overxum} as
\begin{align}
		(\textrm{LHS of }\ref{eqn:1overxum}) = &\sum_{l =1}^{T} \sum_{k = 1}^T \frac{\one\{\ell(k) = l\}}
		{\sum_{i=1}^k  \one\{x_i \in \Ncal_G(x_k\}}
		\leq 		\sum_{l =1}^{T} \sum_{k=1}^T \frac{\one\{\ell(k) = l\}}
		{l} 
		\leq \sum_{l =1}^{T} \frac{\mas(G)}{l} \leq  \mas(G)\ln(e T),
\end{align} where the last inequality comes from a bound on the harmonic number $\sum_{i=1}^T 1/i \leq \ln(T) + 1 = \ln(eT)$. This grouping argument bears resemblance with the argument by \citet{lykouris2019graph}.
\end{proof}

\begin{lemma}[Bound on self-normalizing integer-valued sequences]
Let $G = (\Xcal, \Ecal)$ be a directed graph defined on a finite vertex set $\Xcal$ with a maximum acyclic subgraph of size $\mas(G)$ and let $(w_k)_{k \in [T]}$ be a sequence of bounded integer weight functions $w_k ~:~ \Xcal \rightarrow \{0\} \cup [W]$. The following quantity is bounded from above as
	\begin{align}
		\sum_{k=1}^T \sum_{x \in \Xcal} \frac{w_k(x)}{\sum_{i=1}^k \sum_{x' \in \Ncal_G(x)} w_i(x')} 
		\leq
		\mas(G) \ln \left(e \sum_{x}\sum_{k=1}^T w_k(x)\right)
		\label{eqn:wsumdiscmas}
	\end{align}
	where $\Ncal_{G}(x) = \{x\} \cup \{ y \in \Xcal ~:~ (y,x) \in \Ecal\}$ is the set of all neighbors pointing to $x$ (and $x$ itself) in $G$. 
	\label{lem:wsum_discrete_mas}
\end{lemma}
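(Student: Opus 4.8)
The plan is to reduce the integer‑weighted statement to the unit‑weight bound of Lemma~\ref{lem:1overxsum_masbound} by "unfolding" each weight $w_k(x)$ into $w_k(x)$ separate unit observations. Set $N := \sum_{x \in \Xcal}\sum_{k=1}^T w_k(x)$ (assume $N \ge 1$, the degenerate all‑zero case being trivial) and build a vector $y \in \Xcal^N$ by processing the blocks $k = 1, 2, \dots, T$ in increasing order and, within block $k$, appending in an arbitrary order of the vertices exactly $w_k(x)$ copies of each $x \in \Xcal$. This produces a bijection $(k, x, j) \mapsto p(k,x,j) \in [N]$ with $j$ ranging over $\{1, \dots, w_k(x)\}$, such that $y_{p(k,x,j)} = x$ and the positions assigned to blocks $1, \dots, k$ always form a prefix of $1, \dots, N$. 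Write $D_k(x) := \sum_{i \le k}\sum_{x' \in \Ncal_G(x)} w_i(x')$ for the denominator appearing on the left of \eqref{eqn:wsumdiscmas}; since $x \in \Ncal_G(x)$ we have $D_k(x) \ge w_k(x)$, so every term with $w_k(x) > 0$ has a positive denominator (terms with $w_k(x) = 0$ are read as $0$).

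The key step is a per‑position comparison of denominators. Fix $(k,x,j)$ and put $p := p(k,x,j)$, so $y_p = x$. Because $p$ lies in block $k$, every index $i \le p$ belongs to a block $\le k$, and therefore the unit‑sequence denominator at $p$ satisfies
\[
\sum_{i=1}^{p} \one\{y_i \in \Ncal_G(y_p)\} \;\le\; \sum_{i \le k}\sum_{x' \in \Ncal_G(x)} w_i(x') \;=\; D_k(x),
\]
since the right‑hand side already counts all copies of neighbours of $x$ contributed by blocks $1, \dots, k$. Consequently
\[
\frac{w_k(x)}{D_k(x)} \;=\; \sum_{j=1}^{w_k(x)} \frac{1}{D_k(x)} \;\le\; \sum_{j=1}^{w_k(x)} \Bigl(\,\textstyle\sum_{i=1}^{p(k,x,j)} \one\{y_i \in \Ncal_G(y_{p(k,x,j)})\}\Bigr)^{-1}.
\]

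Summing this inequality over all $(k,x)$ and using that $(k,x,j) \mapsto p(k,x,j)$ is a bijection onto $[N]$, the right‑hand side collapses to $\sum_{p=1}^{N} \bigl(\sum_{i=1}^{p} \one\{y_i \in \Ncal_G(y_p)\}\bigr)^{-1}$, which is exactly the quantity controlled by Lemma~\ref{lem:1overxsum_masbound} applied to the length‑$N$ vector $y$; that lemma bounds it by $\mas(G)\ln(eN)$. Substituting $N = \sum_x\sum_{k=1}^T w_k(x)$ yields \eqref{eqn:wsumdiscmas}. The only point that needs care is the displayed denominator inequality — that the within‑block ordering is immaterial and that a prefix of the unfolded sequence ending inside block $k$ never accumulates more neighbour‑copies than $D_k(x)$ — but this is immediate once the unfolding map is fixed, and the rest is bookkeeping. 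The boundedness hypothesis $w_k(x) \le W$ is used only to guarantee $N < \infty$.
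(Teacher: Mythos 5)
Your proof is correct and follows essentially the same route as the paper's: both unfold the integer weights into a sequence of $N=\sum_{x}\sum_k w_k(x)$ unit occurrences ordered consistently with the block order, observe that each per-occurrence prefix denominator is at most the corresponding block-level denominator $D_k(x)$, and then invoke Lemma~\ref{lem:1overxsum_masbound} on the resulting vertex sequence. The only difference is cosmetic — the paper passes through an intermediate binary-weight sequence of length $WT$ before collapsing to occurrences, whereas you go directly to the occurrence sequence, which is a slightly cleaner presentation of the same argument.
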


\begin{proof}
We will first reduce this statement to the case where all weights are binary by extending the length of the sequence by a factor of at most $W$. For each index $k$ and value $m \in [W]$ define the weights $\bar w_{W(k-1)+m}(x) = \one\{ w_k(x) \geq m\}$. Each original index $k$ corresponds now to a block of $W$ indices of which the first $w_k(x)$ are set to $1$. Then we rewrite the quantity of interest in terms of these binary weights as
\begin{align}
(\textrm{LHS of }\ref{eqn:wsumdiscmas})
= &
\sum_{k=1}^T \sum_{m=1}^W \sum_{x \in \Xcal} \frac{\bar w_{(k-1)W +m}(x)}
		{\sum_{i=1}^k \sum_{x' \in \Ncal_G(x)} \sum_{m=1}^W \bar w_{(i-1)W +m}(x')}  \\
		\leq &
\sum_{k=1}^{WT} \sum_{x \in \Xcal} \frac{\bar w_{k}(x)}
		{\sum_{i=1}^k \sum_{x' \in \Ncal_G(x)}  \bar w_{i}(x')}.
		\label{eqn:wsumbinary}
\end{align}
The inequality holds because we have only changed the indexing but both sides are identical except that the right-hand side potentially contains up to $W$ fewer terms in the denominator per $x \in \Xcal$.

Let now $\Ocal$ be the set of all occurrences of $\bar w_k(x) > 0$ and with slight abuse of notation denote by $k(o)$ and $x(o)$ the index and vertice of the occurrence. Note that the total number of occurrences is bounded $|\Ocal| = \bar T := \sum_{x}\sum_{k=1}^T w_k(x) \leq |\Xcal|WT$. Further, consider any total order of this set that satisfies
$o \leq o'$ implies $k(o) \leq k(o')$ for any $o, o' \in \Ocal$ (i.e., order respects index order but occurrences at the same index can be put in any order). We then rewrite \eqref{eqn:wsumbinary} in terms of occurrences
\begin{align}
\eqref{eqn:wsumbinary}
		\leq
		\sum_{o \in \Ocal} \frac{1}
		{\sum_{o' \leq o}  \one\{x(o') \in \Ncal_G(x(o))\}}.
		\label{eqn:osum}
\end{align}
The inequality holds because the denominator on the right-hand side includes all occurrences of all incoming neighbors at previous indices (but might not count occurrences of neighbors at the current index).
Let $X \in \Xcal^{\bar T}$ be the vertex-valued sequence of these ordered occurrences, that is, $X = [x(o_1), \dots, x(o_{\bar T})]$ for $o_{1} < \dots < o_{\bar T}$ and apply Lemma~\ref{lem:1overxsum_masbound}. This gives the desired bound
\begin{align}
		\eqref{eqn:osum} \leq  \mas(G)\ln(e \bar T) = \mas(G) \ln \left(e \sum_{x}\sum_{k=1}^T w_k(x)\right).
\end{align}
\end{proof}

\begin{lemma}[Bound on self-normalizing real-valued sequences, Restatement of Lemma~\ref{lem:wsum_mas_main}]
Let $G = (\Xcal, \Ecal)$ be a directed graph defined on a finite vertex set $\Xcal$ with a maximum acyclic subgraph of size $\mas(G)$ and let $(w_k)_{k \in [T]}$ be a sequence of non-negative weight functions $w_k ~:~ \Xcal \rightarrow \RR^+$ which satisfy for all $k$ that $\sum_{x \in \Xcal}w_k(x) \leq w_{\max}$. For any $w_{\min} > 0$, the following quantity is bounded from above as
	\begin{align}
		\sum_{k=1}^T \sum_{x \in \Xcal} \frac{\one\{w_k(x) \geq w_{\min}\} w_k(x)}{\sum_{i=1}^k \sum_{x' \in \Ncal_G(x)} w_i(x')} 
		\leq
		2\mas(G) \ln \left(\frac{eT w_{\max}}{w_{\min}}\right) 
	\end{align}
	where $\Ncal_{G}(x) = \{x\} \cup \{ y \in \Xcal ~:~ (y,x) \in \Ecal\}$ is the set of all neighbors pointing to $x$ (and $x$ itself) in $G$. 
	\label{lem:wsum_mas}
\end{lemma}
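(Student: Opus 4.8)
The plan is to reduce the real-valued statement (Lemma~\ref{lem:wsum_mas}) to the integer-valued statement already proved in Lemma~\ref{lem:wsum_discrete_mas} by discretizing/rounding the weights at the scale $w_{\min}$, and then controlling the error introduced by rounding. The guiding observation is that the indicator $\one\{w_k(x) \ge w_{\min}\}$ means only "large" weights contribute to the numerator, and that rounding each such weight down to the nearest multiple of $w_{\min}$ loses at most a factor of $2$ in the numerator (since $w_k(x) \le 2\lfloor w_k(x)/w_{\min}\rfloor w_{\min}$ when $w_k(x)\ge w_{\min}$), while only \emph{decreasing} the denominator — which is harmless since the denominator appears in the denominator of the summand.

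**First step:** define rounded integer weights $\tilde w_k(x) \defeq \lfloor w_k(x)/w_{\min} \rfloor$. These take values in $\{0\}\cup[W]$ with $W = \lceil w_{\max}/w_{\min}\rceil$, since $\sum_x w_k(x)\le w_{\max}$ forces each $w_k(x)\le w_{\max}$. For any $x,k$ with $w_k(x)\ge w_{\min}$ we have $\tilde w_k(x)\ge 1$ and $w_k(x) \le 2 w_{\min}\tilde w_k(x)$; for the denominator, $\sum_{i\le k}\sum_{x'\in\Ncal_G(x)} w_i(x') \ge w_{\min}\sum_{i\le k}\sum_{x'\in\Ncal_G(x)} \tilde w_i(x')$ trivially since $w_i(x')\ge w_{\min}\tilde w_i(x')$ for all $i,x'$. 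Hence termwise
\begin{align}
\frac{\one\{w_k(x)\ge w_{\min}\}\, w_k(x)}{\sum_{i\le k}\sum_{x'\in\Ncal_G(x)} w_i(x')}
\;\le\;
\frac{2 w_{\min}\tilde w_k(x)}{w_{\min}\sum_{i\le k}\sum_{x'\in\Ncal_G(x)} \tilde w_i(x')}
\;=\;
\frac{2\tilde w_k(x)}{\sum_{i\le k}\sum_{x'\in\Ncal_G(x)} \tilde w_i(x')},
\end{align}
with the convention $0/0=0$ (the left side is $0$ whenever $\tilde w_k(x)=0$, since then $w_k(x)<w_{\min}$ so the indicator kills it). Summing over $k\in[T]$ and $x\in\Xcal$ and applying Lemma~\ref{lem:wsum_discrete_mas} to the integer sequence $(\tilde w_k)_{k\in[T]}$ gives an upper bound of $2\mas(G)\ln\!\big(e\sum_x\sum_{k=1}^T \tilde w_k(x)\big)$.

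**Final step:** bound the logarithm. We have $\sum_x\sum_{k=1}^T\tilde w_k(x) \le \sum_x\sum_{k=1}^T w_k(x)/w_{\min} \le T w_{\max}/w_{\min}$, so the bound becomes $2\mas(G)\ln(eTw_{\max}/w_{\min})$, which is exactly the claimed inequality. One edge case to dispatch: if $\sum_x\sum_k\tilde w_k(x)=0$, i.e.\ every weight with $w_k(x)\ge w_{\min}$ is... impossible unless no such weight exists, in which case the whole left-hand side is zero and the bound holds trivially (note $w_{\max}\ge w_{\min}$ whenever a large weight exists, so the logarithm is nonnegative and there is nothing to worry about; if no large weight exists the LHS is $0\le$ RHS anyway since $\ln(eTw_{\max}/w_{\min})$ could in principle be made small but $T\ge 1$ and we may assume $w_{\max}\ge w_{\min}$ WLOG, else the sum is empty).

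**Main obstacle:** there is no deep obstacle here — the crux is simply getting the rounding bookkeeping right, in particular making sure the factor-of-$2$ slack in the numerator is legitimate (it relies on $\lfloor t\rfloor \ge t/2$ for $t\ge 1$) and that shrinking the denominator only helps. The one genuine subtlety is that Lemma~\ref{lem:wsum_discrete_mas} does not carry the $\one\{\cdot\ge w_{\min}\}$ restriction, so I must make sure the contribution of indices with $w_k(x)<w_{\min}$ (which have $\tilde w_k(x)=0$) is correctly zeroed out on the left before the comparison — which it is, thanks to the indicator. I would also double-check that the constant and the argument of the log in Lemma~\ref{lem:wsum_discrete_mas} ($e\sum_x\sum_k w_k$, here applied to $\tilde w$) combine to give precisely $eTw_{\max}/w_{\min}$ rather than something slightly larger, which it does since $\sum_x\sum_k\tilde w_k(x)\le Tw_{\max}/w_{\min}$.
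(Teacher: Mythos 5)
Your proof is correct and follows essentially the same route as the paper's: round the weights down to integer multiples of $w_{\min}$, use the factor-of-two slack $w_k(x) \leq 2 w_{\min} \lfloor w_k(x)/w_{\min}\rfloor$ valid whenever $w_k(x) \geq w_{\min}$, and invoke Lemma~\ref{lem:wsum_discrete_mas}. The only cosmetic difference is that the paper first discards the sub-threshold weights from the denominator as a WLOG reduction, whereas you keep them and observe that $w_i(x') \geq w_{\min}\lfloor w_i(x')/w_{\min}\rfloor$ holds for all weights regardless; both yield the identical bound.
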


\begin{proof}
Without loss of generality, we can assume that all weights take values in $\{0\} \cup [w_{\min}, w_{\max}]$ and ignore the indicator in the numerator. This is because
\begin{align}
\sum_{k=1}^T \sum_{x \in \Xcal} \frac{\one\{w_k(x) \geq w_{\min}\} w_k(x)}{\sum_{i=1}^k \sum_{x' \in \Ncal_G(x)} w_i(x')}
\leq \sum_{k=1}^T \sum_{x \in \Xcal} \frac{\one\{w_k(x) \geq w_{\min}\} w_k(x)}{\sum_{i=1}^k \sum_{x' \in \Ncal_G(x)} \one\{w_i(x') \geq w_{\min}\} w_i(x')}.
\end{align}
We define a new set of integer-values weights $\hat w_k(x) = \left\lfloor \frac{w_k(x)}{w_{\min}} \right\rfloor$. These new weights have several convenient properties.
First, $\hat w_k(x)$ are integers bounded by $\frac{w_{\max}}{w_{\min}}$. 
Second, their total sum is nicely bounded as $\sum_{k=1}^T \sum_{x \in \Xcal} \hat w_k(x) \leq \frac{T w_{\max}}{w_{\min}}$. 
Third, from the assumption that $w_k(x) \in \{0\} \cup [w_{\min}, w_{\max}]$, it follows that $\hat w_k(x) \in \{0\} \cup \left[1, \frac{w_{\max}}{w_{\min}}\right]$. This implies that
\begin{align}
\frac{w_k(x)}{2w_{\min}} \leq \hat w_k(x) \leq \frac{w_k(x)}{w_{\min}}
\end{align}
as the flooring has the largest relative effect when $\frac{w_k(x)}{w_{\min}} \nearrow 2$. Rearranging terms, we get $w_{\min} \hat w_k(x) \leq w_k(x) \leq 2 w_{\min} \hat w_k(x)$. We now use this relationship to exchange the original weights with the discretized weights and only pay a factor of 2. Specifically,
\begin{align}
		\sum_{k=1}^T \sum_{x \in \Xcal} \frac{w_k(x)}{\sum_{i=1}^k \sum_{x' \in \Ncal_G(x)} w_i(x')} 
\leq \sum_{k=1}^T \sum_{x \in \Xcal} \frac{2 w_{\min} \hat w_k(x)}{\sum_{i=1}^k \sum_{x' \in \Ncal_G(x)} w_{\min} \hat w_i(x')} 
\leq 2\mas(G) \ln \left(\frac{eT w_{\max}}{w_{\min}}\right).
\end{align}
The final inequality is an application of Lemma~\ref{lem:wsum_discrete_mas}.
\end{proof}

\begin{lemma}[Restatement of Lemma~\ref{lem:mas_pigeonhole_main}]
	Let $G = (\Xcal, \Ecal)$ be a graph with finite vertex set $\Xcal$ and let $w_k$ be a sequence of weights $w_k \colon \Xcal \rightarrow \RR^+$. For any threshold $C \geq 0$, 
	\begin{align}
		\sum_{x \in \Xcal}\sum_{k=1}^\infty  w_k(x) \one\left\{ \sum_{i=1}^{k} \sum_{x' \in \Ncal_G(x)} \!\!\!w_i(x') \leq C\right\}
		\leq \mas(G) C
	\end{align}
	where $\Ncal_{G}(x) = \{x\} \cup \{ y \in \Xcal \colon (y,x) \in \Ecal\}$ is the set of $x$ and all in-neighbors in $G$.
	\label{lem:mas_pigeonhole}
\end{lemma}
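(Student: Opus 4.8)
The plan is to collapse the doubly-infinite sum into a finite linear system over the vertices and then dispatch that system with a short weak-duality argument. For $x \in \Xcal$ set $S_k(x) = \sum_{i=1}^k \sum_{x' \in \Ncal_G(x)} w_i(x')$ (non-decreasing in $k$) and $f(x) = \sum_{k \ge 1} w_k(x)\one\{S_k(x) \le C\}$, so that the quantity to bound is exactly $\sum_{x \in \Xcal} f(x)$. First I would note that by monotonicity of $S_k(x)$ the indicator equals $1$ exactly for $k \le K(x)$, where $K(x) := \sup\{k : S_k(x) \le C\} \in \{0,1,\dots\}\cup\{\infty\}$; hence $f(x) = \sum_{k \le K(x)} w_k(x)$ and, since $x \in \Ncal_G(x)$, $f(x) \le S_{K(x)}(x) \le C$ (reading $S_\infty$ as the limit). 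In particular only the finitely many $x$ with $f(x) > 0$ contribute; enumerate them as $x_1,\dots,x_p$ ordered so that $K(x_1) \le \cdots \le K(x_p)$.

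Next I would extract the key inequality. Fix $j \in [p]$. Every $x' \in \Ncal_G(x_j)$ contributes $\sum_{i' \le K(x_j)} w_{i'}(x')$ to $S_{K(x_j)}(x_j) \le C$; and for $x' = x_i$ with $i \le j$ we have $K(x_i) \le K(x_j)$, so $\sum_{i' \le K(x_j)} w_{i'}(x_i) \ge \sum_{i' \le K(x_i)} w_{i'}(x_i) = f(x_i)$. Keeping only these terms (and dropping the nonnegative rest) yields
\begin{align}
\sum_{i \le j\,:\ x_i \in \Ncal_G(x_j)} f(x_i) \ \le\ S_{K(x_j)}(x_j) \ \le\ C \qquad \text{for every } j \in [p]. \tag{$\star$}
\end{align}
It therefore suffices to prove the combinatorial claim: for any directed graph on the ordered set $x_1,\dots,x_p$ and any $f(x_1),\dots,f(x_p) \ge 0$ satisfying $(\star)$, one has $\sum_{j=1}^p f(x_j) \le \mas(G)\,C$.

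For the combinatorial claim I would build a small certificate set $Y \subseteq [p]$ and finish in one line. Construct $Y$ greedily, scanning indices from $p$ down to $1$: add $i$ to $Y$ iff $i$ is not yet \emph{covered}, where $i$ is covered once some $j \in Y$ with $j > i$ and $x_i \gedge x_j$ has been added. By construction every $i \in [p]$ is covered, i.e.\ $i \in Y$ or there is $j \in Y$ with $j > i$ and $x_i \gedge x_j$. Moreover $\{x_i : i \in Y\}$ induces an acyclic subgraph: if $i_1 < i_2$ are both in $Y$, then $i_2$ was already in $Y$ when $i_1$ was processed, so $x_{i_1} \gedge x_{i_2}$ would have made $i_1$ covered --- impossible; hence every edge among $Y$-vertices runs from a larger index to a smaller one, which forbids a directed cycle, so $|Y| \le \mas(G)$. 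Finally, each $i \in [p]$ appears with coefficient at least $1$ in the constraint $(\star)$ for its covering $j$ (take $j = i$ when $i \in Y$), so summing $(\star)$ over $j \in Y$ gives
\begin{align}
\sum_{i=1}^p f(x_i) \ \le\ \sum_{j \in Y}\ \sum_{i \le j\,:\ x_i \in \Ncal_G(x_j)} f(x_i) \ \le\ |Y|\,C \ \le\ \mas(G)\,C,
\end{align}
which establishes the combinatorial claim and hence the lemma.

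I expect the main obstacle to be pinning down $(\star)$ rigorously: one must order the vertices by the last good episode $K(x)$ precisely so that truncating any vertex's weight series at $K(x_j)$ never loses below its total contribution, and one must treat the $K(x) = \infty$ case by a limiting argument. An alternative route, following the reduction pattern of Lemma~\ref{lem:1overxsum_masbound} and Lemma~\ref{lem:wsum_discrete_mas}, is to first prove the binary-weight case by a layering argument (each layer an acyclic vertex set of size $\le \mas(G)$, its index lower-bounding the cumulative in-neighbourhood count), then lift to integer weights by block-expansion, and finally to real weights by scaling, flooring, and sending the scale to infinity; the duality argument above simply short-circuits those steps.
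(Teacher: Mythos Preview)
Your proof is correct and reaches the same conclusion as the paper, but the route is genuinely different. The paper argues inductively: it repeatedly picks the vertex $x^{(t)}$ whose indicator stays on the longest, zeroes out the weights of $x^{(t)}$ together with all of $\Ncal_G(x^{(t)})$, shows this drops the objective by at most $C$, and observes that the chosen vertices $x^{(0)},x^{(1)},\ldots$ form an acyclic set (no later $x^{(t)}$ can point to an earlier one, since it would have been zeroed already), so the process halts within $\mas(G)$ steps. Your argument instead freezes the contributions $f(x)$ once and for all, extracts the family of linear constraints $(\star)$ by ordering vertices according to their last active episode $K(x)$, and then constructs the acyclic cover $Y$ in a single greedy backward pass. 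The underlying combinatorics are the same---both exploit that a set of ``covering'' vertices with no forward edges among themselves is acyclic---but your version separates the analysis into a clean reduction to a finite linear system followed by a weak-duality step, whereas the paper intertwines the covering construction with weight modification. Your approach is arguably more transparent (no need to track how zeroing weights affects indicators of other vertices), while the paper's iterative peeling makes the $\mas(G)$-step termination slightly more visual. The alternative layering-plus-discretisation route you sketch at the end is not what the paper does for this lemma; the paper reserves that machinery for Lemma~\ref{lem:wsum_mas}.
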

\begin{proof}
	We proceed with an inductive argument that modifies the weight function sequence. To that end, we define $w_k^{(0)} = w_k$ for all $k$ as the first element in this sequence (over sequences of weight functions). We then give the value of interest with respect to $(w^{(t)}_k)_{k \in \NN}$ an explicit name
	\begin{align}
		F^{(t)} = \sum_{x \in \Xcal} \sum_{k=1}^\infty  w^{(t)}_k(x) \one\left\{ \sum_{i \leq k} \sum_{x' \in \Ncal_G(x)} w^{(t)}_i(x') \leq C \right\}. 
	\end{align}
	Let $y^{(t)}(x) = \sum_{k=1}^\infty \one\left\{ \sum_{i \leq k} \sum_{x' \in \Ncal_G(x)} w^{(t)}_i(x') \leq C \right\}$ be the largest index for each $x$ that can have positive weight in the sum. Note that $y^{(t)}(x)$ can be infinity. Let $\hat y^{(t)} = \max_{x \in \Xcal} y^{(t)}$ be the largest index and $x^{(t)} \in \argmax_{x} y^{(t)}(x)$ a vertex that hits the threshold last (if at all). We now effectively remove it and its parents from the graph by setting their weights to $0$. Specifically, define
	\begin{align}
		w^{(t+1)}_k(x) = w^{(t)}_k(x) \one\{ x \notin \Ncal_G(x^{(t)})\}\one\{k \leq \hat y^{(t)}\} \qquad \textrm{for all } k \in \NN
	\end{align}
	as the weight function of the next inductive step. First note that all weights after $\hat y^{(t)}$ can be set to $0$ without affecting $F^{(t)}$ because of how we picked $\hat y^{(t)}$. 
	Second, by the condition in the first indicator, $x \notin \Ncal_G(x^{(t)})$ the total sum of zeroed weights before $\hat y^{(t)}$ is
	\begin{align}
	\sum_{i=1}^{\hat y^{(t)}} \sum_{x' \in \Ncal_G(x^{(t)})} w_i(x') 
	\end{align}
	which can be at most $C$ because $\hat y^{(t)}$ was picked as exactly the index where this bound holds. Hence, $F^{(t+1)}$ can decrease at most by $C + w_{\max}$, i.e., 
	$F^{(t+1)} \geq F^{(t)} - C $.
	We now claim that all weights are $0$ after at most $\mas(G)$ steps. This is true because in each step we zero out the weights of at least one vertex that must have at least one positive weight as well as all its parents. We can do this at most the size of the largest acyclic subgraph. Hence $F^{(\mas(G))} = 0$ and therefore
	\begin{align}
		F^{(0)} \leq F^{(1)} + C \leq \dots \leq F^{(\mas(G))} + \sum_{t=1}^{\mas(G)} C = \mas(G) C 
	\end{align}
	which completes the proof.
\end{proof}

\begin{corollary}
	Let $G = (\Xcal, \Ecal)$ be a graph defined on a finite vertex set $\Xcal$ and let $w_k$ be a sequence of non-negative bounded weight functions $w_k \colon \Xcal \rightarrow [0, w_{\max}]$. For any threshold $C \geq 0$, the following bound holds
	\begin{align}
		\sum_{x \in \Xcal}\sum_{k=1}^\infty  w_k(x) \one\left\{ \sum_{i < k} \sum_{x' \in \Ncal_G(x)} w_i(x') \leq C\right\}
		\leq \mas(G)( C + w_{\max})
		\label{eqn:wcumsum1}
	\end{align}
	where $\Ncal_{G}(x) = \{x\} \cup \{ y \in \Xcal \colon (y,x) \in \Ecal\}$ is the set of $x$ and all its parents in $G$
	\label{cor:mas_pigeonhole}
\end{corollary}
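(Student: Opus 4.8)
The plan is to deduce this from Lemma~\ref{lem:mas_pigeonhole}, which is the same inequality but with the inclusive partial sum $\sum_{i=1}^{k}$ in place of the strict partial sum $\sum_{i<k}$ and with threshold $C$ rather than $C+w_{\max}$. So the entire task is to absorb the off-by-one in the summation index into the extra $w_{\max}$ in the threshold.

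Concretely, fix a vertex $x$ and write $S_{<k}(x)=\sum_{i<k}\sum_{x'\in\Ncal_G(x)}w_i(x')$ and $S_{\le k}(x)=\sum_{i=1}^{k}\sum_{x'\in\Ncal_G(x)}w_i(x')$, so that $S_{\le k}(x)=S_{<k}(x)+\sum_{x'\in\Ncal_G(x)}w_k(x')$ and, since $x\in\Ncal_G(x)$, the $k$-th increment is at least $w_k(x)$. Because the weights are non-negative, $S_{<k}(x)$ is non-decreasing in $k$, and the $k$-th increment $\sum_{x'\in\Ncal_G(x)}w_k(x')$ is at most $w_{\max}$ (using that the per-step weight is controlled by $w_{\max}$; if only the individual entries are bounded one first passes to the regime $\sum_{x\in\Xcal}w_k(x)\le w_{\max}$ by the same discretization device used in the proof of Lemma~\ref{lem:wsum_mas}). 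Consequently, whenever the strict condition $S_{<k}(x)\le C$ holds, so does the inclusive condition $S_{\le k}(x)\le C+w_{\max}$, i.e.
\begin{align}
\one\big\{\textstyle\sum_{i<k}\sum_{x'\in\Ncal_G(x)}w_i(x')\le C\big\}\;\le\;\one\big\{\textstyle\sum_{i=1}^{k}\sum_{x'\in\Ncal_G(x)}w_i(x')\le C+w_{\max}\big\}.
\end{align}
Multiplying by $w_k(x)\ge 0$, summing over $x\in\Xcal$ and $k\ge 1$, and invoking Lemma~\ref{lem:mas_pigeonhole} with threshold $C+w_{\max}$ yields $\sum_{x}\sum_k w_k(x)\,\one\{S_{<k}(x)\le C\}\le \mas(G)(C+w_{\max})$, which is the claim.

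The only step that needs genuine care is the bound on the single-step increment $\sum_{x'\in\Ncal_G(x)}w_k(x')$: if $w_{\max}$ bounds only the individual values $w_k(x')$ rather than their mass over a neighbourhood, this reduction is lossy, and the cleaner route is to re-run the potential-function/pigeon-hole induction of Lemma~\ref{lem:mas_pigeonhole} directly for the strict partial sums — at each step removing a vertex $x^{(t)}$ together with its in-neighbours over the index range on which $S_{<\cdot}(x^{(t)})\le C$, and additionally discarding the single overshoot weight at $x^{(t)}$ at the first index where the bound is violated, so the potential drops by at most $C+w_{\max}$ per step while the removed set stays acyclic and the process ends after at most $\mas(G)$ steps. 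I expect this overshoot bookkeeping to be the one mildly technical point; everything else is an immediate consequence of Lemma~\ref{lem:mas_pigeonhole}.
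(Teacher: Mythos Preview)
Your main line --- replace the strict indicator $\one\{S_{<k}(x)\le C\}$ by the inclusive one $\one\{S_{\le k}(x)\le C+w_{\max}\}$ and then invoke Lemma~\ref{lem:mas_pigeonhole} with threshold $C+w_{\max}$ --- is exactly what the paper does, in one sentence.

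You are also right that the only delicate point is the bound on the single-step increment $\sum_{x'\in\Ncal_G(x)}w_k(x')$. However, neither of your proposed workarounds can close this gap, because under the hypothesis as literally stated (only the pointwise bound $w_k(x)\in[0,w_{\max}]$) the corollary is \emph{false}. Take $G$ to be the complete digraph on $n\ge 2$ vertices, so that $\mas(G)=1$ and $\Ncal_G(x)=\Xcal$ for every $x$; set $w_1(x)=w_{\max}$ for all $x$, $w_k\equiv 0$ for $k\ge 2$, and $C=0$. Then $S_{<1}(x)=0\le C$ for every $x$, so the left-hand side equals $n\,w_{\max}$, while the asserted bound is $\mas(G)(C+w_{\max})=w_{\max}$. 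In particular, the discretization from Lemma~\ref{lem:wsum_mas} does not reduce the per-step total mass, and no ``overshoot'' bookkeeping in the induction can prove a false inequality.

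The intended hypothesis --- consistent with how the result is used in the paper and with the assumption in Lemma~\ref{lem:wsum_mas_main} --- is that $w_{\max}$ bounds the \emph{total} per-step mass, $\sum_{x\in\Xcal}w_k(x)\le w_{\max}$. Under that reading your indicator implication holds verbatim (the increment over $\Ncal_G(x)$ is at most the full mass $w_{\max}$), and your two-line reduction to Lemma~\ref{lem:mas_pigeonhole} is complete and matches the paper's argument.
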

\begin{proof}
	We match the index ranges in front of and within the indicator by increasing the threshold $C$ by the maximum value $w_{\max}$ that the weight can take when the indicator condition is met for the last time 
	\begin{align}
	(\textrm{LHS of }\ref{eqn:wcumsum1})
		\leq \sum_{x \in \Xcal} \sum_{k=1}^\infty  w_k(x) \one\left\{ \sum_{i \leq k} \sum_{x' \in \Ncal_G(x)} w_i(x') \leq C + w_{\max}\right\}.
	\end{align}
	We can now apply Lemma~\ref{lem:mas_pigeonhole}. 
\end{proof}

\section{Proofs for Domination Set Algorithm}
\label{app:domsetproofs}

In this section, we will prove the main sample-complexity bound for Algorithm~\ref{alg:dominatingset}
in Theorem~\ref{thm:samplecomplexity_domset}.
We will do this in two steps:
\begin{enumerate}
    \item We show an intermediate, looser bound with an additional additive $\frac{ \mas H^2}{p_0^2}$ term stated in Theorem~\ref{thm:samplecomplexity_domset_loose} in Section~\ref{sec:domset_proof_loose}.
    \item We prove the final bound in Theorem~\ref{thm:samplecomplexity_domset} based on the intermediate bound in Section~\ref{sec:domset_proof_tight}.
\end{enumerate}

\subsection{Proof of Intermediate Sample-Complexity Bound}
\label{sec:domset_proof_loose}
\begin{theorem}[Sample-Complexity of Algorithm~\ref{alg:dominatingset}, Loose Bound]
	For any tabular episodic MDP with state-actions $\Xcal$, horizon $H$, feedback graph with mas-number $\mas$ and given dominating set $\Xcal_D$ with $|\Xcal_D| = \gamma$ and accuracy parameter $\epsilon > 0$, Algorithm~\ref{alg:dominatingset} returns with probability at least $1 - \delta$ an $\epsilon$-optimal policy after 
\begin{align}
			 O\left(\left(\frac{\gamma H^3}{p_0\epsilon^2}
+ \frac{\gamma \wh \numS H^3}{p_0\epsilon}
+ \frac{\mas \wh \numS H^2}{p_0} + \frac{\mas H^2}{p_0^2}
\right)			 
			 \ln^3 \frac{|\Xcal| H}{\epsilon\delta}\right)
\end{align}
episodes. Here, $p_0 = \min_{i \in [\gamma]} p^{(i)}$ is the expected number of visits to the vertex in the dominating set that is hardest to reach.
\label{thm:samplecomplexity_domset_loose}
\end{theorem}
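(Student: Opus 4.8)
\emph{Proof plan.} The plan is to analyse the two phases of Algorithm~\ref{alg:dominatingset} separately, treating both as instances of Algorithm~\ref{alg:mb_app} run on the extended MDP $\bar\Mcal$ over tasks $0,1,\dots,\gamma$ constructed in Section~\ref{sec:domination}, and then to add the two episode budgets. Throughout I would condition on the event $E$ of Lemma~\ref{lem:goodprob} instantiated for $\bar\Mcal$; since the calls to \FOptPlan use failure parameter $\delta/2$, this event holds with probability at least $1-\delta$. Recall from Section~\ref{sec:multitaskrl} that $\bar\Mcal$ inherits the mas-number $\mas$ and maximum successor support $\wh\numS$ of the original MDP and has extended state-action space of size $(\gamma+1)|\Xcal|$; I write $\Lambda := \ln\frac{|\Xcal| H}{\epsilon\delta}$ and let $\lesssim$ hide absolute constants.

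For the first phase, task $i\in[\gamma]$ carries the indicator reward of $X_i$, so $V^\star_1((s_1,i)) = p^{(i)}$ and $Q^{\max} = H$ is admissible. The key observation is that, on $E$, the stopping test $\Vub_1((s_1,i))\le 2\Vlb_1((s_1,i))$ of Line~\ref{lin:stop_phase1} is implied by the certificate for task $i$ dropping below $p^{(i)}/2$, and that when it fires one has $V^{\pi^{(i)}}_1((s_1,i)) \ge \Vlb_1((s_1,i)) = \wh p^{(i)} \ge p^{(i)}/2$, so $\pi^{(i)}$ reaches $X_i$ at least $p^{(i)}/2 \ge p_0/2$ times in expectation, with $p_0 = \min_i p^{(i)}$. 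Hence every Phase-1 episode but the last plays a task whose certificate exceeds $p_0/2$, and combining this with the cumulative IPOC bound of Theorem~\ref{thm:cipoc_independencenumber_app} (with $\epsilon_{\max}=0$, $\bar\mas = \mas$, $Q^{\max}=H$, and $\sum_k V^{\pi_k}_1 \le H T_1$) through the usual regret-to-PAC inversion (cf. Proposition~\ref{prop:general_samplecomplexity_mb}) shows that Phase~1 terminates within $T_1 = O\big((\mas H^2/p_0^2 + \mas\wh\numS H^2/p_0)\Lambda^3\big)$ episodes.

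For the second phase I would bound the number of episodes $T_2$ needed before the task-$0$ certificate $\Vub_1((s_1,0)) - \Vlb_1((s_1,0))$ drops below $\epsilon$, which by Lemma~\ref{lem:optplan_correct} certifies that the returned $\hat\pi$ is $\epsilon$-optimal for the original MDP. The crucial step is a \emph{uniform} lower bound on observation counts: because $\Xcal_D$ dominates $\Xcal$, each $x\in\Xcal$ has some $X_{i(x)}\in\Xcal_D$ with $X_{i(x)}\gedge x$, hence $(X_{i(x)},0)$ points to $(x,0)$ in $\bar G$ and $n((x,0)) \ge n((X_{i(x)},0))$; during the $\ge T_2/\gamma - 1$ round-robin episodes that play $\pi^{(i(x))}$, the pair $(X_{i(x)},0)$ is visited at least $p_0/2$ times in expectation, so the concentration event $\mathsf{E}^{\mathrm{N}}$ (via the stopping-time filtration of Lemma~\ref{lem:goodprob_wfeedbackgraph}) gives $n_0 := \min_x n((x,0)) \gtrsim T_2 p_0/\gamma$ once $T_2 \gtrsim \gamma H\Lambda/p_0$. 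Plugging this uniform count into the tightness bound of Lemma~\ref{lem:optplan_tightness} (dropping the $\wedge Q^{\max}$ clipping) and controlling the two resulting sums by Cauchy--Schwarz and the MDP law of total variance, just as in the bound on term~(E) in the proof of Theorem~\ref{thm:cipoc_independencenumber_app}, gives $\sum_x\sum_h w_h(x)\gamma_h(x) \lesssim H^{3/2}$ and $\sum_x\sum_h w_h(x)\beta_h(x) \lesssim \wh\numS H^3$, so that
\begin{align}
\Vub_1((s_1,0)) - \Vlb_1((s_1,0)) \;\lesssim\; \sqrt{\frac{H^3\Lambda}{n_0}} + \frac{\wh\numS H^3\Lambda}{n_0} \;\lesssim\; \sqrt{\frac{\gamma H^3\Lambda}{T_2 p_0}} + \frac{\gamma\wh\numS H^3\Lambda}{T_2 p_0}.
\end{align}
Setting the right-hand side to $\epsilon$ and solving for $T_2$ yields $T_2 = O\big((\gamma H^3/(p_0\epsilon^2) + \gamma\wh\numS H^3/(p_0\epsilon))\Lambda^3\big)$, which dominates the $\gamma H\Lambda/p_0$ burn-in; adding $T_1$ and $T_2$ (and noting $\gamma \le \mas$) gives the stated bound.

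The routine parts are the Phase-1 inversion and the law-of-total-variance bookkeeping that produces the $H^3$ dependence in the second phase. The main obstacle is the uniform count bound $n_0 \gtrsim T_2 p_0/\gamma$, which must simultaneously exploit the per-task reach guarantee from Phase~1, the domination property transferred to the extended feedback graph $\bar G$, and a high-probability bridge between expected and realized visits under the round-robin schedule; this step is what replaces $\mas$ by $\gamma/p_0$ in the $\operatorname{poly}(\epsilon^{-1})$ terms. A secondary point worth flagging is that, since Phase~1 only secures a constant-factor approximation of each $p^{(i)}$, its budget scales like $p_0^{-2}$ rather than $p_0^{-1}$ --- this is exactly the additional $\mas H^2/p_0^2$ term in this (deliberately loose) statement, which the sharper argument of Section~\ref{sec:domset_proof_tight} for Theorem~\ref{thm:samplecomplexity_domset} removes.
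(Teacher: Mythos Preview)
Your plan matches the paper's proof essentially step for step: Proposition~\ref{prop:general_samplecomplexity_mb} with certificate threshold $\approx p_0$ for Phase~1, and Lemma~\ref{lem:optplan_tightness} combined with Cauchy--Schwarz and the law of total variance to extract the required per-pair observation count $\bar n$ in Phase~2, followed by a visit-count concentration for the round-robin schedule. One small slip worth fixing: the inequality $n((x,0)) \ge n((X_{i(x)},0))$ is not what domination gives you --- it only yields $n((x,0)) \ge v((X_{i(x)},0))$, the \emph{visit} count to $X_{i(x)}$, since other nodes may feed observations into $X_{i(x)}$ without feeding $x$; the paper (and indeed your very next sentence) correctly works with visits $v(X_i)$ and spends the remaining $\delta/2$ on a fresh $\mathsf{E}^{\mathrm{N}}$-type event to lower-bound them.
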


\begin{proof}

Algorithm~\ref{alg:dominatingset} can be considered an instance of Algorithm~\ref{alg:mb} executed on the extended MDP with two differences:
\begin{itemize}
    \item We choose $\delta/2$ as failure probability parameter in \FOptPlan. The remaining $\frac{\delta}{2}$ will be used later.
    \item We choose the initial states per episode adaptively. This does not impact any of the analysis of Algorithm~\ref{alg:mb} as it allows potentially adversarially chosen initial states.
    \item In the second phase, we do not collect samples with the policy proposed by the \FOptPlan routine but with previous policies.
\end{itemize}
We therefore can consider the same event $E$ as in the analysis of Algorithm~\ref{alg:mb} which still has probability at least $1 - \frac{\delta}{2}$ by Lemmas~\ref{lem:goodprob}. In this event, by Lemma~\ref{lem:optplan_correct} it holds that $\Vlb_h \leq V^{\pi}_h \leq V^\star_h \leq \Vub_h$ for $\Vlb_h, \Vub_h, \pi$ returned by all executions of \FOptPlan.
As a result, the correctness of the algorithm follows immediately as
$\wh \pi$ is guaranteed to be $\epsilon$-optimal in the considered event. It remains to bound the number of episodes collected by the algorithm before returning.

While the regret bound of Algorithm~\ref{alg:mb} in Theorem~\ref{thm:cipoc_independencenumber} does not apply to the second phase, it still holds in the first phase. We can therefore use it directly to bound the number of episodes collected in the first phase.

\paragraph{Length of first phase:} We first claim that the first phase must end when the algorithm encounters a certificate for the chosen task that has size at most $\frac{p_0}{2}$. This is true from the stopping condition in Line~\ref{lin:stop_phase1}. The algorithm removes $i$ from $\Ical$ as soon as $\Vub_1((s_1, i)) \leq 2 \Vlb_1((s_1, i))$. This implies that when the  stopping condition is met
\begin{align}
    \Vlb_1((s_1, i)) \geq \frac{\Vub_1((s_1, i))}{2} \geq \frac{V^{\star}_1((s_1, i))}{2}
    = \frac{p^{(i)}}{2}, 
    \label{eqn:firstphasestop}
\end{align}
where the second inequality follows from the fact that $\Vub_1 \geq V^{\star}_1$ in event $E$.
That means that policy $\pi^{(i)}$ visits node $X_i$ indeed at least $\wh p^{(i)} \geq \frac{p^{(i)}}{2}$ times per episode in expectation.

When the stopping condition is not met, then $\Vub_1((s_1, i)) > 2 \Vlb_1((s_1, i))$ and hence
$\Vub_1((s_1, i)) - \Vlb_1((s_1, i)) > \Vlb_1((s_1, i))$. Note also that $\Vub_1((s_1, i)) - \Vlb_1((s_1, i)) \geq V^{\star}_1((s_1, i)) - \Vlb_1((s_1, i))$ at all times in event $E$.
Combining both lower bounds gives
\begin{align}
    \Vub_1((s_1, i)) - \Vlb_1((s_1, i)) \geq (V^{\star}_1((s_1, i)) - \Vlb_1((s_1, i))) \vee \Vlb_1((s_1, i)) \geq \frac{V^{\star}_1((s_1, i))}{2} = \frac{p^{(i)}}{2}.
    \label{eqn:nostopcond}
\end{align}
Assume the algorithm encounters a certificate that satisfies
\begin{align}
\Vub_1((s_1, i)) - \Vlb_1((s_1, i)) \leq \frac{p_0}{4},
\end{align} 
where $i$ is the task which is about to be executed.
By the task choice of the algorithm,  this implies for any $j \in \Ical$
\begin{align}
\Vub_1((s_1, j)) - \Vlb_1((s_1, j)) \leq 
\Vub_1((s_1, i)) - \Vlb_1((s_1, i)) \leq \frac{p_0}{4} < \frac{p^{(j)}}{2},
\end{align}
where the last inequality follows the definition of $p_0$.
As a result, by contradiction with \eqref{eqn:nostopcond}, all remaining tasks would be removed from $\Ical$. 
Hence, the first phase ends when or before the algorithm has produced a certificate for the chosen task of size $\frac{p_0}{4}$.
By Proposition~\ref{prop:general_samplecomplexity_mb}, this can take at most
\begin{align}
    O \left(
        	\frac{\mas H^2}{p_0^2} \ln^2\frac{ |\Xcal| H}{p_0 \delta} 
	 + \frac{\mas \wh \numS H^2}{p_0} \ln^3 \frac{\ |\Xcal| H}{p_0 \delta}
        \right)
\end{align}
episodes. Note that even though the algorithm operates in the extended MDP, the size of the maximum acyclic subgraph $\mas$ is identical to that of the original feedback graph since all copies of a state-action pair form a clique in the extended feedback graph $\bar G$. Further note that even though the number of states $\bar \numS$ in the extended MDP is larger than in the original MDP by a factor of $(\gamma + 1)$, this factor does not appear in the lower-order term as the number of possible successor states (which can have positive transition probability) are still bounded by $\wh \numS$ in each state-action pair of the extended MDP. It only enters the logarithmic term due to the increased state-action space. 

\paragraph{Length of second phase:} 
We now determine a minimum number of samples per state-action pair that ensures that the algorithm terminates.
By Lemma~\ref{lem:optplan_tightness}, the difference $\Vub_1((s_1, 0)) - \Vlb_1((s_1, 0))$ can be bounded for the case where $\epsilon_{\max} = 0$ by
\begin{align}
 \exp(6) \sum_{x \in \Xcal}\sum_{h=1}^H w_{\pi, h}(x) 
	(H \wedge (\beta \phi(n(x))^2 + \gamma_{h}(x) \phi(n(x))))
	\label{eqn:sasumbounddom}
\end{align}
	with $\beta = 416 \wh \numS H^2$ and $\gamma_{h}(x) = 16\sigma_{P(x)}(V^{\pi}_{h+1}) + 16$ (where we use $Q^{\max} = H$ and $1$ as an upper-bound to $\overline{\operatorname{Var}}(r|x)$). The weights $w_{\pi, h}(x) = \EE_{\pi}\left[\one\{(s_h, a_h) = x\} \right]$ are the probability of $\pi$ visiting each state-action pair at a certain time step $h$.
	This can be upper-bounded by
	\begin{align}
\hspace{0.5in}  & \hspace{-0.5in}   \exp(6) \left(\beta \sum_{x \in \Xcal} w_{\pi}(x) \phi(n(x))^2
 +  \sum_{x \in \Xcal}\sum_{h=1}^H \gamma_h(x) w_{\pi, h}(x) \phi(n(x)) \right) \\
  \leq& ~ 
   \exp(6)\left(  \beta \sum_{x \in \Xcal} w_{\pi}(x) \phi(n(x))^2
 + 
  \sqrt{\sum_{x \in \Xcal}\sum_{h=1}^H \gamma_h^2(x) w_{\pi, h}(x) }
 \sqrt{\sum_{x \in \Xcal} w_{\pi}(x)  \phi(n(x))^2} \right), \label{eqn:simpledom1}
\end{align}
where we used the shorthand notation $w_\pi(x) = \sum_{h=1}^H w_{\pi, h}(x)$ and applied Cauchy-Schwarz in the second step. Assume now that we had at least $\bar n \in \NN$ samples per state-action pair. Then \eqref{eqn:simpledom1} is again upper-bounded by
	\begin{align}
 & \exp(6) \beta H \phi(\bar n)^2 
 + 
  \exp(6)\sqrt{H} \phi(\bar n) \sqrt{\sum_{x \in \Xcal}\sum_{h=1}^H \gamma_h^2(x) w_{\pi, h}(x) }.\label{eqn:test11}
\end{align}
For the remaining term under the square-root,  we use the law of total variance for value functions in MDPs \citep{azar2017minimax, dann2015sample} and bound
	 \begin{align}
    \sum_{x} \sum_{h=1}^H w_{\pi,h}(x) \gamma_{h}(x)^2
	&\leq  2 \times 16^2\sum_{x} \sum_{h=1}^H w_{\pi,h}(x) 
	     +  2 \times 16^2 \sum_{x} \sum_{h=1}^H w_{\pi,h}(x) \sigma^2_{P(x)}(V^{\pi}_{h+1})\\
	     &\leq 2 \times 16^2 (H +  H^2) \leq 4^5 H^2.
	 \end{align}
Plugging this back into \eqref{eqn:test11} gives
	\begin{align}
 & 416 \exp(6) \wh \numS H^3 \phi(\bar n)^2 
 + 
  4^{5/2} \exp(6) H^{3/2} \phi(\bar n)
  \leq \frac{c\wh \numS H^3 \ln \ln \bar n}{\bar n}\ln\frac{|\Xcal| H}{\delta} + \sqrt{\frac{c H^3 \ln \ln \bar n}{\bar n} \ln\frac{|\Xcal| H}{\delta}}
\end{align}
for some absolute constant $c$ where we bounded $\phi(\bar n)^2 \lesssim \frac{\ln \ln \bar n}{\bar n}\ln\frac{|\Xcal| H}{\delta}$. Then there is an absolute constant $\bar c$ so that this expression is smaller than $\epsilon$ for
\begin{align}
    \bar n = \frac{\bar c H^3}{\epsilon^2} \ln^2 \frac{|\Xcal| H}{ \epsilon \delta} + \frac{\bar c \wh \numS H^3}{\epsilon} \ln^2 \frac{|\Xcal| H}{ \epsilon \delta}.
\end{align}
Hence, the algorithm must stop after collecting $\bar n$ samples for each state-action pair. By the property of the dominating set, it is sufficient to collected $\bar n$ samples for each element of the dominating set. Analogously to event $\mathsf{E}^{\mathrm{N}}$ in Lemma~\ref{lem:goodprob}, we can show that with probability at least $1 - \delta / 2$, for all $k$ and $i$, the number of visits to any element of the dominating set $X_i$ are lower-bounded by the total visitation probability so far as
\begin{align}
    v(X_i) \geq \frac 1 2 \sum_{j \leq k}  w_{j}(X_i) - H\ln \frac{2 \gamma}{\delta}, \label{eqn:temp222}
\end{align}
where $k$ is the total number of episodes collected so far and $w_j(X_i)$ is the expected number of visits to $X_i$ of the policy played in the $j$th episode of the algorithm. 
Further, the stopping condition in the first phase was designed so that $\pi^{(i)}$ visits $X_i$ at least $\wh p^{(i)} \geq \frac{p^{(i)}}{2}$ times per episode in expectation (see Equation~\eqref{eqn:firstphasestop}). This follows from the definition of the reward in the extended MDP and the fact that certificates are valid upper and lower confidence bounds on the value function, that is
\begin{align}
\wh p^{(i)} = \Vlb_1((s_1, i)) \geq \frac{\Vub_1((s_1, i))}{2} \geq \frac{V^\star_1((s_1,i))}{2} = \frac{p^{(i)}}{2}.
\end{align}
Hence, if $\pi^{(i)}$ is executed for $m_i$ episodes in the second phase, the total observation probability for $X_i$ is at least $\frac{m_i p^{(i)}}{2}$. Plugging this back in \eqref{eqn:temp222} gives
\begin{align}
        v(X_i) \geq \frac 1 4 m_i p^{(i)}  - H\ln \frac{2\gamma}{\delta} .
\end{align}
Hence, to ensure that the algorithm has visited each vertex of the dominating set sufficiently often, i.e., $\min_{i \in [\gamma]} v(X_i) \geq \bar n$, it is sufficient to play 
\begin{align}
    m_i = O\left(\frac{H^3}{p^{(i)}\epsilon^2} \ln^2 \frac{|\Xcal| H}{ \epsilon \delta} + \frac{\wh \numS H^3}{p^{(i)}\epsilon} \ln^2 \frac{|\Xcal| H}{ \epsilon \delta}\right)
\end{align}
episodes with each policy $\pi^{(i)}$ in the second phase. 
Hence, we get a bound on the total number of episodes in the second phase by summing over $\gamma$, which completes the proof.
\end{proof}

\subsection{Proof of Tighter Sample Complexity Bound Avoiding $1 / p_0^2$}
\label{sec:domset_proof_tight}
The sample complexity proof of Algorithm~\ref{alg:dominatingset} in Theorem~\ref{thm:samplecomplexity_domset_loose} follows with relative ease from the guarantees of Algorithm~\ref{alg:mb}. It does however have a $\tilde O\left( \frac{\mas H^2}{p_0^2} \right)$ dependency which is absent in the lower-bound in Theorem~\ref{thm:domsetlowerbound}. We now show how to remove this additive $\tilde O\left( \frac{\mas H^2}{p_0^2} \right)$ term and prove the main result for Algorithm~\ref{alg:dominatingset} which we restate here:
\domsetalgsc*

Before presenting the formal proof, we sketch the main argument.
The proof of the intermediate result in Theorem~\ref{thm:samplecomplexity_domset_loose} relies on Corollary~\ref{cor:epspolicy} for Algorithm~\ref{alg:mb} to bound the length of the first episode. Yet, Proposition~\ref{prop:general_samplecomplexity_mb} shows that the dominant term of the sample-complexity of  Algorithm~\ref{alg:mb} only scales with $\frac{1}{\epsilon^2}\mas H \frac{1}{T} \sum_{k=1}^T V^\star_1(s_{k,1})$ for some $T$ instead of the looser $\frac{\mas H^2}{\epsilon^2}$ in Corollary~\ref{cor:epspolicy}.
We can upper-bound each summand $V^\star_1(s_{k,1})$ by the optimal value of the task of the episode, e.g., $p^{(i)}$ for task $i$. If all vertices in the dominating set are equally easy to reach, that is, $p^{(1)} = p^{(2)} = \ldots = p^{(\gamma)} = p_0$, this yields $V^\star_1(s_{k,1}) = p_0$ and $\epsilon \approx p_0$. In this case, this term in the sample-complexity evaluates to
\begin{align}
    \frac{\mas H p_0}{p_0^2} \approx \frac{\mas H}{p_0}, 
\end{align}
and gets absorbed into the last term $\frac{\mas \wh \numS H^2}{p_0}$ of the sample-complexity in Theorem~\ref{thm:samplecomplexity_domset}. However, there is a technical challenge when $p^{(i)}$s vary significantly across tasks $i$, i.e., some vertices in the dominating set can be reached easily while others can only be reached with low probability. 
A straightforward bound only yields
\begin{align}
    \frac{\mas H \max_{i \in [\gamma]} p^{(i)}}{p_0^2},
\end{align}
which can be much larger when $\max_{i} p^{(i)} \gg \min_{i} p^{(i)} = p_0$.
To avoid this issue, we will apply a careful argument that avoids a linear factor of the number of policies learned $\gamma$ (which a separate analysis of every task would give us, see Section~\ref{sec:multitaskrl}) while at the same time still only having a $1/p_0$ dependency instead of the $1 / p_0^2$.

The key is an inductive argument that bounds the number of episodes for the $j$ vertices of the dominating set that are the easiest to reach for any $j \in [\gamma]$. 
Thus, assume without loss of generality that vertices are ordered with decreasing reachability, i.e., $p^{(1)} \geq p^{(2)} \geq \dots \geq p^{(\gamma)}$.
We will show that the algorithm plays tasks $1, \dots, j$ in at most 
\begin{align}
    O\left(j + \frac{\mas \wh \numS H^2}{p^{(j)}} \ln^3 \frac{|\Xcal| H}{\delta p_0}\right)
    \label{eqn:task1jcompl}
\end{align}
episodes. For $j = \gamma$, this gives the total length of the first phase and yields the desired reduction in sample complexity for Theorem~\ref{thm:samplecomplexity_domset}. Assuming that this bound holds for $1$ to $j-1$, we consider the subset of episodes $\Kcal_j$ in which the algorithm plays tasks $[j]$ and show the average optimal value in these episodes is not much larger than $p^{(j)}$
\begin{align}
    \frac{1}{|\Kcal_j|} \sum_{k \in \Kcal_j} V^\star_1(s_{k,1}) \lesssim p^{(j)} \ln \frac{e p^{(1)}}{p^{(j)}}.
\end{align}
This insight is the key to prove \eqref{eqn:task1jcompl} for $j$. 

\paragraph{Full proof:}
\begin{proof}[Proof of Theorem~\ref{thm:samplecomplexity_domset}]
The proof of Theorem~\ref{thm:samplecomplexity_domset_loose} can be directly applied here. It yields that with probability at least $1 - \delta$, Algorithm~\ref{alg:dominatingset} returns an $\epsilon$-optimal policy and event $E$ from Lemma~\ref{lem:goodprob} holds. We further know that the algorithm collects at most $T_1$ and $T_2$ episodes in the first and second phase respectively, where
\begin{align}
    T_1 = 
    			 O\left(\left(\frac{\mas H^2}{p_0^2} + \frac{\mas \wh \numS H^2}{p_0}
    			 \right)			 
			 \ln^3 \frac{|\Xcal| H}{\epsilon\delta}\right), \quad \text{and,} \quad 
    T_2 = O\left( \frac{\gamma \wh \numS H^3}{p_0\epsilon}		 
			 \ln^2 \frac{|\Xcal| H}{\epsilon\delta}\right).
\end{align}
It is left to provide a tighter bound for the length of the first phase.
As mentioned above, assume without loss of generality that the nodes of the dominating set are ordered with decreasing reachability, i.e., $p^{(1)} \geq p^{(2)} \geq \dots \geq p^{(\gamma)}$. 
For any $j \in [\gamma]$, let $\Kcal_j \subseteq [T_1]$ be the set of episodes where the algorithm played task $1, \dots, j$. To reason how large this set can be, we need slightly refined versions of the IPOC bound of Algorithm~\ref{alg:mb_app} in Theorem~\ref{thm:cipoc_independencenumber_app} and the corresponding sample-complexity result in Proposition~\ref{prop:general_samplecomplexity_mb}. We state them below as Lemmas~\ref{lem:subsetcumipoc} and \ref{lem:subsetsamplecompl}. They allow us to reason over arbitrary subset of episodes instead of consecutive episodes. Their proof is virtually identical to those of Theorem~\ref{thm:cipoc_independencenumber_app} and Proposition~\ref{prop:general_samplecomplexity_mb}.

As we know from the proof of Theorem~\ref{thm:samplecomplexity_domset_loose}, the algorithm cannot play task $i$ anymore once it has encountered a certificate $\Vub_{1}((s_1, i)) - \Vlb_1((s_1, i)) \leq \frac{p^{(i)}}{4}$. Hence, it can only encounter at most $j$ episodes in $\Kcal_j$ where the certificate was at most $\frac{p^{(j)}}{4}$. 
Thus by Lemma~\ref{lem:subsetsamplecompl} below
\begin{align}
    |\Kcal_j| \leq O\left( j + 1 +
        \mas H  \frac{ \sum_{k \in \Kcal_j}  V^{\pi_k}_1(s_{k,1})}{|\Kcal_j| \cdot (p^{(j)})^2} \ln^2 \frac{|\Xcal| H T_1}{\delta}
    + \frac{ \mas \wh \numS H^2}{p^{(j)}} \ln^3 \frac{|\Xcal| H T_1}{\delta}\right).
    \label{eqn:basebound}
\end{align}
Since $j \leq \gamma$ and we can assume that the provided dominating set is of sufficient quality, i.e., $\gamma \leq \frac{\mas \wh \numS  H^2}{p^{(j)}} \ln^3 \frac{|\Xcal| H T_1}{\delta}$, the $j+1$ term is dominated by the later terms in this bound. We now claim that
\begin{align}
        |\Kcal_j| = O\left(\frac{ \mas \wh \numS H^2}{p^{(j)}} \ln^3 \frac{|\Xcal| H}{p_0 \delta}\right) \label{eqn:induc1}
\end{align}
which we will show inductively. Assume that \eqref{eqn:induc1} holds for all $1, \dots j-1$ and consider the sum of policy values in $\Kcal_j$ from \eqref{eqn:basebound}
\begin{align}
    \sum_{k \in \Kcal_j}  V^{\pi_k}_1(s_{k,1})
    \leq 
     \sum_{k \in \Kcal_j}  V^\star_1(s_{k,1})
     = \sum_{i = 1}^j \sum_{k \in \Kcal_j \setminus \Kcal_{j-1}} p^{(i)} 
     = \sum_{i = 1}^j p^{(i)}  (|\Kcal_i| - |\Kcal_{i-1}|)
\end{align}
where we define $\Kcal_0 = \varnothing$ for convenience. Consider $C = c \mas \wh \numS H^2 \ln^3 \frac{|\Xcal| H}{p_0 \delta}$ with a large enough numerical constant $c$ so that induction assumption implies $|\Kcal_i| \leq C / p^{(i)}$ for $i = 1, \dots, j-1$. Assume further that $|\Kcal_j| \geq C / p^{(j)}$. Then with $ 1/ p^{(0)} \defeq 0$
\begin{align}
   \frac{1}{|\Kcal_j|} \sum_{k \in \Kcal_j}  V^{\pi_k}_1(s_{k,1})
    \leq 
p^{(j)} \sum_{i = 1}^j p^{(i)}  \left( \frac{1}{p^{(i)}} - \frac{1}{p^{(i-1)}} \right).
\end{align}
Define now $w_i = \frac{1}{p^{(i)}} - \frac{1}{p^{(i-1)}}$, which allows us to write $p^{(i)} = \frac{1}{\sum_{l=1}^i w_l}$ because $\sum_{l=1}^i w_l = \frac{1}{p^{(i)}} - \frac{1}{p^{(0)}} = \frac{1}{p^{(i)}}$. Writing the expression above in terms of $w_i$  yields
\begin{align}
   \frac{1}{|\Kcal_j|} \sum_{k \in \Kcal_j}  V^{\pi_k}_1(s_{k,1})
    \leq 
p^{(j)} \sum_{i = 1}^j \frac{w_i}{\sum_{l=1}^i w_l}
&\overset{(i)}{=} p^{(j)}\left( 1 + \ln\left(\sum_{i=1}^j w_i \right) - \ln w_1 \right)\\
&= p^{(j)}\left( 1 + \ln\frac 1 {p^{(j)}} - \ln \frac 1 {p^{(1)}}\right)\\
&= p^{(j)}\ln\frac{e p^{(1)}} {p^{(j)}} \leq p^{(j)}\ln\frac{e H} {p_0},
\end{align} where $(i)$ follows from the fundamental theorem of calculus (see e.g. Lemma~E.5 by \citet{dann2017unifying}). We just showed that if $|\Kcal_j| \geq C / p^{(j)}$, the average policy value $\frac{1}{|\Kcal_j|} \sum_{k \in \Kcal_j}  V^{\pi_k}_1(s_{k,1})$ cannot be much larger than $1 / p^{(j)}$. Plugging this back into \eqref{eqn:basebound} gives that
\begin{align}
        |\Kcal_j| = O\left(
        \frac{ \mas H p^{(j)}}{(p^{(j)})^2} \ln\frac{e H} {p_0} \ln^2 \frac{|\Xcal| H T_1}{\delta}
    + \frac{ \mas \wh \numS H^2}{p^{(j)}} \ln^3 \frac{|\Xcal| H T_1}{\delta}\right) =
     O\left(
      \frac{\mas \wh \numS  H^2}{p^{(j)}} \ln^3 \frac{|\Xcal| H}{p_0 \delta}\right),
\end{align}
where the equality follows since $\ln (T_1) \lesssim \ln \frac{|\Xcal| H}{p_0 \delta}$. We have just shown that \eqref{eqn:induc1} also holds for $j$ which completes the inductive argument. Evaluating \eqref{eqn:induc1} for $j = \gamma$ shows that the length of the first phase is indeed $O\left(
      \frac{ \mas \wh \numS  H^2}{p_0} \ln^3 \frac{|\Xcal| H}{p_0 \delta}\right)$
      which completes the proof.
\end{proof}

\begin{lemma}
For any tabular episodic MDP with episode length $H$, state-action space $\Xcal$ and a directed feedback graph $G$, the total size of certificates of Algorithm~\ref{alg:mb} on any (possibly random) set of episodes indices $\Kcal$ as is bounded in event $E$ (defined in Lemma~\ref{lem:goodprob}) as
\begin{align}
    \sum_{k \in \Kcal} \Vub_{k,1}(s_{k,1}) - \Vlb_{k,1}(s_{k,1})
    = 
	&O\left(\sqrt{\mas H \sum_{k \in \Kcal}  V^{\pi_k}_1(s_{k,1})} \ln \frac{|\Xcal| H T}{\delta} 
	 + \mas \wh \numS H^2 \ln^3 \frac{|\Xcal| H T}{\delta}\right),
\end{align}   where $T = \max \{k \colon k \in \Kcal\}$ is the largest episode index in $\Kcal$.
\label{lem:subsetcumipoc}
\end{lemma}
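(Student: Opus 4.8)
The plan is to retrace the proof of Theorem~\ref{thm:cipoc_independencenumber_app} from Appendix~\ref{app:modelbasedproofs_main}, specialized exactly as in the derivation of Theorem~\ref{thm:cipoc_independencenumber} (deterministic feedback graph so $\bar\mas = \mas$, unbiased observations so $\epsilon_{\max}=0$, and $Q^{\max}=H$ since Algorithm~\ref{alg:mb} uses the trivial value bound), but replacing every sum $\sum_{k=1}^{T}$ over consecutive episodes by $\sum_{k\in\Kcal}$ over the given index set, and checking that each step survives. First I would condition on the event $E$ of Lemma~\ref{lem:goodprob}, which has probability at least $1-\delta$ no matter how $\Kcal$ is later chosen; inside $E$ every inequality below is deterministic, so it does not matter that $\Kcal$ may depend adaptively on the run. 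Within $E$, Lemma~\ref{lem:optplan_tightness} bounds each individual certificate $\Vub_{k,1}(s_{k,1})-\Vlb_{k,1}(s_{k,1})$ by the per-episode expression in \eqref{eqn:sasumbound}; summing that bound over $k\in\Kcal$ instead of over $[T]$ is precisely the quantity we need to control.

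The observation that makes the restriction to a subset harmless is that the counts $n_k(x)$, and hence the lower bounds on them from event $\mathsf{E}^{\mathrm{N}}$, still accumulate over \emph{all} episodes $i<k$, including those outside $\Kcal$; extra observations only shrink the per-term bonuses, so the split of $\Xcal$ into the low-visitation set $L_k$, the under-observed set $U_k$ and the well-observed set $W_k$ goes through verbatim. For the $L_k$ term one bounds $\sum_{k\in\Kcal}\sum_{x\in L_k}w_k(x)\le |\Kcal|\,|\Xcal|\,\wmin \le T|\Xcal|\wmin$ and takes $\wmin = 1/(H|\Xcal|T)$. For $U_k$ and $W_k$, the combinatorial Lemmas~\ref{lem:mas_pigeonhole} and~\ref{lem:wsum_mas} are applied to the \emph{full} weight sequence $(w_i)_{i\in[T]}$ with the numerator restricted to $k\in\Kcal$: since all summands are non-negative, deleting the numerator terms with $k\notin\Kcal$ (and, if convenient, enlarging the denominator from $\sum_{i<k}$ to $\sum_{i\le k}$) only decreases the left-hand side, so the same $\mas\cdot\polylog$ bounds hold, with the logarithmic factor $\ln(eHT\,w_{\max}/\wmin)$ naturally coming out in terms of $T=\max\{k:k\in\Kcal\}$. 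Finally, the law-of-total-variance argument that controls the variance term $\prnmarker{e}$ is carried out episode by episode, so restricting the outer sum yields $\sum_{k\in\Kcal}\sum_{x\in\Xcal}\sum_{h=1}^H w_{k,h}(x)\,\tilde\gamma_{k,h}(x)^2 \lesssim H\sum_{k\in\Kcal}V^{\pi_k}_1(s_{k,1})$; plugging this together with the $\mas$-bounds into the same Cauchy--Schwarz split as in \eqref{eqn:decomp_thm1} gives the claimed bound, now with $\sum_{k\in\Kcal}V^{\pi_k}_1(s_{k,1})$ under the square root.

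There is no genuinely new idea, so no substantive obstacle; the only point requiring care is exactly the index mismatch between the outer set $\Kcal$ and the range $[k-1]\supseteq \Kcal\cap[k-1]$ over which $n_k(x)$ is built up. One must state explicitly that Lemmas~\ref{lem:wsum_mas} and~\ref{lem:mas_pigeonhole} are purely analytic facts about arbitrary non-negative sequences, hence apply pathwise to the (random) weights $w_i(x)=\sum_h \PP((s_{i,h},a_{i,h})=x\mid \pi_i,s_{i,1})$, and that their left-hand sides are monotone under deleting non-negative numerator terms, which legitimizes the restriction to $k\in\Kcal$; everything else is bookkeeping identical to Appendix~\ref{app:modelbasedproofs_main}. (The companion sample-complexity statement Lemma~\ref{lem:subsetsamplecompl} then follows from this cumulative bound by the same inversion argument as in Proposition~\ref{prop:general_samplecomplexity_mb}.)
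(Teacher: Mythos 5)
Your proposal is correct and follows essentially the same route as the paper: condition on the event $E$, rerun the proof of Theorem~\ref{thm:cipoc_independencenumber_app} with the outer sum restricted to $\Kcal$, enlarge the self-normalized term $\prnmarker{d}$ back to the full sum over $[T]$ (legitimate since all summands are non-negative), and keep the variance term $\prnmarker{e}$ over $\Kcal$ so that the law of total variance yields $H\sum_{k\in\Kcal}V^{\pi_k}_1(s_{k,1})$ under the square root. This matches the paper's (very terse) argument exactly, including the key observation that the counts $n_k(x)$ and the combinatorial lemmas are unaffected by the restriction to a subset of episodes.
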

\begin{proof}
The proof of this lemma is in complete analogy to the proof of Theorem~\ref{thm:cipoc_independencenumber_app}, except that we take the sum $\sum_{k \in \Kcal}$ instead of $\sum_{k=1}^T$. In the decomposition in Equation~\eqref{eqn:decomp_thm1}, we replace in term $\prnmarker{d}$ the sum over $\Kcal$ with $[T]$ and proceed normally (which yields the $\ln T$ terms). But in term $\prnmarker{e}$ we keep the sum over $\Kcal$ which yields the  $\sum_{k \in \Kcal}  V^{\pi_k}_1(s_{k,1})$ term in the bound above.
\end{proof}

\begin{lemma}
Consider any tabular episodic MDP with state-action space $\Xcal$, episode length $H$ and directed feedback graph $G$ with mas-number $\mas$. For any $\epsilon > 0$, $m \in \NN$ and (possibly random) subset of episodes $\Kcal \subseteq [T]$ with
\begin{align}
    |\Kcal| = O \left(m + 
    \frac{ \mas H \frac{1}{|\Kcal|} \sum_{k \in \Kcal}  V^{\pi_k}_1(s_{k,1})}{\epsilon^2} \ln^2 \frac{|\Xcal| H T}{\delta}
    + \frac{ \mas \wh \numS H^2}{\epsilon} \ln^3 \frac{|\Xcal| H T}{\delta}\right). 
\end{align}
Algorithm~\ref{alg:mb_app} produces in event $E$ (defined in Lemma~\ref{lem:goodprob} at least $m$ certificates with size $\Vub_{k,1}(s_{k,1}) - \Vlb_{k,1}(s_{k,1}) \leq \epsilon$ with $k \in \Kcal$.
\label{lem:subsetsamplecompl}
\end{lemma}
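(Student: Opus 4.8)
The plan is to mirror the proof of Proposition~\ref{prop:general_samplecomplexity_mb}, replacing the cumulative IPOC bound of Theorem~\ref{thm:cipoc_independencenumber_app} (which sums over all of $[T]$) by its restriction to the subset $\Kcal$, namely Lemma~\ref{lem:subsetcumipoc}. Throughout I would condition on the event $E$ of Lemma~\ref{lem:goodprob}, on which all certificates $\epsilon_k \defeq \Vub_{k,1}(s_{k,1}) - \Vlb_{k,1}(s_{k,1})$ are valid and Lemma~\ref{lem:subsetcumipoc} applies. Writing $N = |\Kcal|$, $S = \sum_{k \in \Kcal} V^{\pi_k}_1(s_{k,1})$ and $\bar V = S/N$, Lemma~\ref{lem:subsetcumipoc} provides absolute constants $c_1, c_2$ with \[ \sum_{k \in \Kcal} \epsilon_k \;\le\; c_1 \sqrt{\mas H S}\,\ln\frac{|\Xcal| H T}{\delta} \;+\; c_2\, \mas \wh\numS H^2 \ln^3\frac{|\Xcal| H T}{\delta}. \]

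The core step is a contradiction argument. Suppose strictly fewer than $m$ episodes $k \in \Kcal$ satisfy $\epsilon_k \le \epsilon$; then at least $N - m$ of them have $\epsilon_k > \epsilon$, so $\sum_{k \in \Kcal} \epsilon_k > (N-m)\epsilon$. Combining with the bound above and substituting $S = N\bar V$ gives \[ (N - m)\epsilon \;<\; c_1 \sqrt{\mas H N \bar V}\,\ln\frac{|\Xcal| H T}{\delta} \;+\; c_2\, \mas \wh\numS H^2 \ln^3\frac{|\Xcal| H T}{\delta}. \] I would then fix the absolute constant hidden in the hypothesized lower bound on $N$ large enough that simultaneously (i) $N \ge 2m$, whence $N - m \ge N/2$; (ii) $N \ge 16 c_1^2\, \mas H \bar V\, \epsilon^{-2} \ln^2\frac{|\Xcal| H T}{\delta}$, which forces $c_1 \sqrt{\mas H N \bar V}\,\ln(\cdot) \le N\epsilon/4$; and (iii) $N \ge 4 c_2\, \mas \wh\numS H^2\, \epsilon^{-1} \ln^3\frac{|\Xcal| H T}{\delta}$, which forces $c_2 \mas \wh\numS H^2 \ln^3(\cdot) \le N\epsilon/4$. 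Under (i)--(iii) the displayed inequality reads $N\epsilon/2 < N\epsilon/4 + N\epsilon/4$, a contradiction; hence at least $m$ episodes in $\Kcal$ have $\epsilon_k \le \epsilon$. The three conditions are exactly the three additive terms of the claimed $O(\cdot)$ bound on $|\Kcal|$, so this closes the argument.

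The only genuinely delicate point is the mild self-reference: $\bar V$ depends on $N$ and on the (possibly random) identity of $\Kcal$. This is harmless — condition (ii) is stated in precisely the same self-referential form as the lemma hypothesis, and if a closed form is preferred one may simply invoke $\bar V \le H$ to replace (ii) by $N \ge 16 c_1^2\, \mas H^2 \epsilon^{-2}\ln^2(\cdot)$. Because $\Kcal$ may be random, I would also remark that the event $E$ is defined independently of the episode-selection rule, so Lemma~\ref{lem:subsetcumipoc} (which is obtained by keeping the sum over $\Kcal$ rather than $[T]$ in term $\prnmarker{e}$ of the proof of Theorem~\ref{thm:cipoc_independencenumber_app}) holds simultaneously for every choice of $\Kcal$ on $E$. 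Beyond Lemma~\ref{lem:subsetcumipoc}, which the paper has already established, there is essentially no obstacle here: the remaining work is the elementary inversion of the certificate bound, exactly as in Proposition~\ref{prop:general_samplecomplexity_mb}.
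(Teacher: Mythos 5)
Your proposal is correct and follows essentially the same route as the paper, which proves this lemma simply by noting it is "in complete analogy" to Proposition~\ref{prop:general_samplecomplexity_mb} with the sum over $[T]$ replaced by the sum over $\Kcal$ and Theorem~\ref{thm:cipoc_independencenumber_app} replaced by Lemma~\ref{lem:subsetcumipoc}. Your explicit contradiction argument and your remarks on the self-referential $\bar V$ term and the randomness of $\Kcal$ merely flesh out details the paper leaves implicit.
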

\begin{proof}
The proof of this lemma is in complete analogy to the proof of Proposition~\ref{prop:general_samplecomplexity_mb}, except that we take the sum $\sum_{k \in \Kcal}$ instead of $\sum_{k=1}^T$ when we consider the cumulative certificate size and apply Lemma~\ref{lem:subsetcumipoc} instead of Theorem~\ref{thm:cipoc_independencenumber_app}.
\end{proof}

\subsection{Comparison to Lower Bound} 
In general MDPs where we do not have a good idea about how reachable the dominating set is and whether the MDP has sparse transitions, the sample-complexity of Algorithm~\ref{alg:dominatingset} is 
\begin{align}
			 \widetilde O\left(\frac{\mas \numS H^2}{p_0} + \frac{\gamma H^3}{p_0\epsilon^2}
+ \frac{\gamma \numS H^3}{p_0\epsilon}\right),	
\end{align}
while the lower bound is
\begin{align}
    \widetilde \Omega\left( \frac{\alpha H^2}{\epsilon^2} \wedge \left(\frac{\alpha}{p_0} + \frac{\gamma H^2}{p_0 \epsilon^2}\right)\right).
\end{align}
When $\epsilon$ is small enough and the dominating set is of good quality, i.e., $\gamma < \alpha$, the second term dominates the first in the lower bound. We see that the $1/p_0$ dependency in our sample-complexity upper bound is tight up to log factors. Nonetheless, there is a gap of $H^2$ and $\numS H$ between our upper- and lower-bound even when the feedback graph is symmetric (where $\mas = \alpha$).
It should be noted that the explicit $\numS$ dependency in the $1/\epsilon$-term is typical for model-based algorithms and it is still an open problem whether it can be removed without increase in $H$ for model-based algorithms in MDPs with dense transitions. 

However, the lower bound in Theorem~\ref{thm:domsetlowerbound} relies on a class of MDPs that in fact have sparse transitions. If we know that the true MDP belongs to this class, then we can run Algorithm~\ref{alg:dominatingset} with the planning routine of Algorithm~\ref{alg:mb_app} that supports state-action-dependent upper-bounds and set 
\begin{align}
    Q^{\max}_{h}(x) &= 1, ~ \quad V^{\max}_{h+1}(x) = 1 \quad \textrm{for $x$ in tasks } \left\{1, \dots, \gamma\right\} \text{ and  } \wh \numS = 2,
\end{align} because each dominating node can only be reached once per episode and each state-action pair can only transition to one of two states. With these modifications, one can show that Algorithm~\ref{alg:dominatingset} terminates within 
\begin{align}
			 \widetilde O\left(\frac{\mas H}{p_0} + \frac{\gamma H^3}{p_0\epsilon^2}\right)	
\end{align}
episodes matches the lower-bound up to one factor of $H$ and log-terms in symmetric feedback graphs for small enough $\epsilon$.

\section{Lower Bound Proofs}
\label{app:lowerboundproofs}

\subsection{Lower Regret Bound with Independence Number}

\begin{figure}
    \centering
    \includegraphics[width=0.5\textwidth]{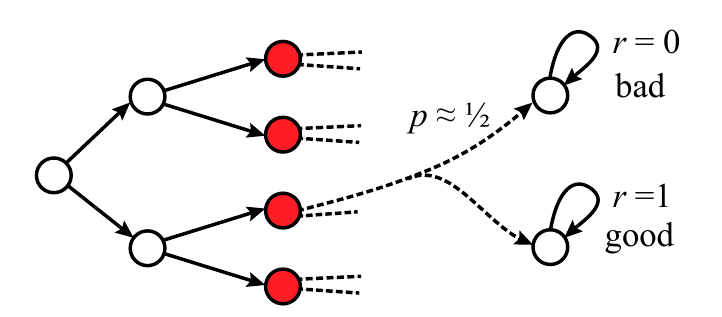}
    \caption{Lower bound construction depicted for $\numA=2$ actions: This family of MDPs is equivalent to a Bernoulli bandit with $N\numA$ arms where rewards are scaled by $\bar H = \lfloor H - 1 - \log_{\numA} N \rfloor$.}
    \label{fig:bandit_as_mdp}
\end{figure}
For convenience, we restate the theorem statement:
\lowerboundindep*
\begin{proof}[Proof of Theorem~\ref{thm:indeplowerbound}]
    We first specify a family of MDPs that are hard to learn with feedback graphs, then show that learning in hard instances of MABs with $\alpha$ arms can be reduced to learning in this family of MDPs and finally use this reduction to lower bound the regret of any agent.
    
    \paragraph{Family of hard MDPs $\Mcal$: }
    Without loss of generality, we assume that $N = A^k$ for some $k \in \NN$.We consider a family $\Mcal$ of $\alpha$ MDPs which are illustrated in Figure~\ref{fig:bandit_as_mdp}. Each MDP in $\Mcal$ has $N$ red states (and $N+1$ white states) that form the leaves of a deterministic tree with fan-out $\numA$. This means that each red state is deterministically reachable by a sequence of actions of length $\lceil \log_{\numA} N \rceil$. From each red state, the agent transitions to a good absorbing state with certain probability and with remaining probability to a bad absorbing state. All rewards are $0$ except in the good absorbing state where the agent accumulates reward of $1$ until the end of the episode (for a total of $\bar H \defeq H - 1 - \lceil \log_{\numA} N \rceil$ time steps).
    
    Let now $G_1$ and $G_2$ be the feedback graphs for the red and white state-actions respectively. Further let $\Ncal$ be an independent set of $G_1$. Each MDP $M_i \in \Mcal$ is indexed by an optimal pair $i = (s^\star, a^\star) \in \Ncal$ of a red state-action pair. When the agent takes $a^\star$ in $s^\star$ it transitions to the good state with probability $\delta + \epsilon$. For all other pairs in $\Ncal$, it transitions to the good state with probability $\delta$. All remaining pairs of red states and actions have probability $0$ of reaching the good state. The values of $\delta, \epsilon > 0$ will be specified below.

    \paragraph{Reduction of learning in MABs to RL in MDPs $\Mcal$:} We now use a reductive argument similar to \citet[Theorem 4]{mannor2011bandits} to show learning in MABs with $\alpha$ actions cannot me much harder than learning in $\Mcal$.
    
    Let $\Bcal$ be any MDP algorithm and denote by $R_{\Bcal, M_i}(T)$ its expected regret after $T$ episodes when applied to problem instance $M_i \in \Mcal$. We can use $\Bcal$ to construct a multi-armed bandit algorithm $\Bcal'$ for a family of MABs $\Mcal'$ indexed by $\Ncal$. Each MAB $M'_i \in \Mcal'$ has $|\Ncal|$ arms, all of which have Bernoulli($\delta$) rewards except $i$ which has Bernoulli($\delta + \epsilon$) rewards.
    To run $\Bcal'$ on $M'_i \in \Mcal'$, we apply $\Bcal$ to $M_i \in \Mcal$.
    Whenever $\Bcal$ chooses to execute an episode that visits a $j \in \Ncal$, $\Bcal'$ picks arm $j$ in $M'_i$ and passes on the observed reward as an indicator of whether the good state was reached. When $\Bcal$ chooses to execute an episode that passes through a vertex $x$ of $G_1$ that is not in the independent set $\Ncal$, then $\Bcal'$ pulls all children $\{ y \in \Ncal \colon x \rightarrow_{G_1} y \}$ that are in the independent set in an arbitrary order. The observed rewards are again used to construct the observed feedback for $\Bcal$ by interpreting them as indicators for whether the good state was reached.
    
    \paragraph{Lower bound on regret: } 
    We denote by $T'$ the (random) number of pulls $\Bcal'$ takes until $\Bcal$ has executed $T$ episodes and by $U$ the expected number of times $\Bcal$ plays episodes that do not visit the independent set. 
    The regret of $\Bcal'$ after $T'$ pulls can then be written as
    \begin{align}
       R_{\Bcal', M_i'}(T') \leq  \frac{R_{\Bcal, M_i}(T)}{\bar H} 
       + \epsilon \alpha U - \delta U,
    \end{align}
    where the first term $\frac{R_{\Bcal, M_i}(T)}{\bar H}$ is the regret accumulated from pulls where $\Bcal$ visits the independent set and the second term from the pulls where $\Bcal$ did not visit the independent set. Each such episode incurs $\delta$ regret for $\Bcal$ and up to $\alpha \epsilon$ regret for $\Bcal'$. We rearrange this inequality as
    \begin{align}
       R_{\Bcal, M_i}(T) 
       \geq 
       \bar H (R_{\Bcal', M_i'}(T')  
       - \epsilon \alpha U + \delta U) 
       \circledmarked{1}{\geq} &
       \bar H (R_{\Bcal', M_i'}(T)  
       - \epsilon \alpha U + \delta U) \\
              \circledmarked{2}{\geq} & 
       \bar H R_{\Bcal', M_i'}(T)  
        + \bar H T[\delta - \epsilon \alpha]^{-} 
        \label{eqn:reglb1},
    \end{align}
    where \circledmarker{1} follows from monotonicity of regret and \circledmarker{2} from considering the best case $U \in [0, T]$ for algorithm $\Bcal$.
    The worst-case regret of $\Bcal'$ in the $\Mcal'$ has been analyzed by \citet{osband2016lower}. We build on their result and use their Lemma~3 and Proposition~1 to lower bound the regret for $\Bcal'$ as follows
    \begin{align}
       \max_i R_{\Bcal', M_i'}(T) 
      \geq \epsilon T \left( 1 - \frac 1 \alpha - \epsilon\sqrt{\frac{T}{2 \delta \alpha}}\right)
      = &
     \frac{1}{4}\sqrt{\frac{\alpha}{2T}} T \left( 1 - \frac 1 \alpha - \frac{1}{4}\sqrt{\frac{\alpha}{2T}} \sqrt{\frac{4T}{2 \alpha}}\right)\\
           = &
     \sqrt{\frac{\alpha T}{32}} \left( \frac{3}{4} - \frac 1 \alpha\right),
    \end{align}
    where we set $\delta = \frac 1 4$ and $\epsilon = \frac{1}{4}\sqrt{\frac{\alpha}{2T}}$ (which satisfy $\epsilon \leq 1 - 2 \delta$ required by Proposition~1 for $T \geq \alpha / 8$). Plugging this result back into \eqref{eqn:reglb1} gives a worst-case regret bound for $\Bcal$ of 
    \begin{align}
        \max_i R_{\Bcal, M_i}(T)
        \geq 
               \sqrt{\frac{\alpha T}{32}} \left( \frac{3}{4} - \frac 1 \alpha\right)
        + \bar H T[\delta - \epsilon \alpha]^{-} \geq \bar H \sqrt{\frac{\alpha T}{32}} \left( \frac{3}{4} - \frac 1 \alpha\right) \geq
        \frac{H}{32}\sqrt{\frac{\alpha T}{2}}, 
    \end{align}
    where we first dropped the second term because $\delta \geq \epsilon \alpha$ for $T \geq \alpha^3 / 8$ and then used the assumptions $\alpha \geq 2$ and $H \geq 2 + 2\log_{\numA} N$.
\end{proof}

\subsection{Lower Sample Complexity Bound with Domination Number}

\begin{proof}[Proof of Theorem~\ref{thm:domsetlowerbound}]
Let $Z = \frac{\numS}{8}$ and $\bar Z = Z \numA$ which we assume to be integer without loss of generality. 
The family of MDPs consists of $\bar Z \times \bar Z$ MDPs, indexed by $(i,j) \in [\bar Z]^2$. All MDPs have the same structure:

\paragraph{Family of MDPs:} 
We order $4Z$ states in a deterministic tree so any of the $2 Z$ leaf nodes can be reached by a specific action sequence. See Figure~\ref{fig:domsetowerbound} for an example with two actions. We split the state-action pairs at the leafs in two sets $\Bcal_1= \{x_{1}, \dots x_{\bar Z} \}$ and $\Bcal_2 = \{z_{1}, \dots z_{\bar Z}\}$, each of size $\bar Z$. 
Playing $x_i$ transitions to the good absorbing state with some probability $g$ and otherwise to the bad absorbing state $b$. The reward is $0$ in all states and actions, except in the good state $g$, where agent receives a reward of $1$. The transition probabilities from $x_i$ depend on the specific MDP. Consider MDP $(j,k)$, then
\begin{align}
    P( g | x_1) = \frac{1}{2} + \frac{\epsilon}{2H}, \qquad \textrm{and,} \qquad
    P( g | x_i) = \frac{1}{2} + \frac{\epsilon}{H} \one\{i = k\}.
\end{align}
Hence, the first index of the MDP indicates which $x_i$ is optimal. Since the agent will stay in the good state for at least $H/2$ time steps (by the assumption that $H \geq 2 \log_{\numA}(\numS / 4)$ by assumption), the agent needs to identify which $x_i$ to play in order to identify an $\frac{\epsilon}{4}$-optimal policy.
All pairs $z_i$ transition to the bad state deterministically, except for pair $z_j$ in MDPs $(i,j)$. This pair transitions with probability $p_0$ to another tree of states (of size at most $2Z$) which has $\gamma$ state-action pairs at the leafs, denoted by $\Dcal = \{d_1, \dots, d_{\gamma}\}$. All pairs in this set transition to the bad state deterministically.

The feedback graph is sparse. There are no edges, except each node $d_i$ has exactly $\frac{\bar Z}{\gamma}$ edges (which we assume to be integer for simplicity) to pairs in $\Bcal_1$. No nodes $d_i$ and $d_j$ point to the same node. Hence, $\Dcal$ forms a dominating set of the feedback graph.

\paragraph{Sample Complexity:}
The construction of $\Bcal_1$ is equivalent to the multi-armed bandit instances in Theorem~1 by \citet{mannor2004sample}.
Consider any algorithm and let $o_i$ be the number of observations an algorithm has received for $x_i$ when it terminates.
By applying Theorem~1 by \citet{mannor2004sample}, we know that if the algorithm indeed outputs an $\frac{\epsilon}{4}$-optimal policy with probability at least $1 - \delta$ in any instance of the family, it has to collect in instances $(1, j)$ at least the following number of samples in expectation
\begin{align}
    \EE_{(1,j)}[o_i] \geq \frac{c_1H^2}{\epsilon^2}\ln \frac {c_2} \delta
    \label{eqn:banditlb}
\end{align}
for some absolute constants $c_1$ and $c_2$. Let $v(x)$ be the number of times the algorithm actually visited a state-action pair $x$. Then
$o_i = v(x_i) + v(d_j)$ for $j$ with $d_j \rightarrow_{G} x_i$ because the algorithm can only observe a sample for $x_i$ if it actually visits it or the node in the dominating set.
Applying this identity to \eqref{eqn:banditlb} yields
\begin{align}
    \EE_{(1,k )}[v(d_j)] \geq \frac{c_1H^2}{\epsilon^2}\ln \frac {c_2} \delta 
        -  \EE_{(1,k)}[v(x_i)]
\end{align}
for all $d_j$ and $x_i$ with $d_j \rightarrow x_i$. Summing over $i \in \bar Z$ and using the fact that each $d_j$ is counted $\bar Z / \gamma$ times, we get
\begin{align}
    \frac{\bar Z}{\gamma} \sum_{j=1}^{\gamma} \EE_{(1,k)}[v(d_j)] \geq \frac{c_1 \bar Z H^2}{\epsilon^2}\ln \frac {c_2} \delta 
        -  \sum_{i=1}^{\bar Z} \EE_{(1,k)}[v(x_i)].
\end{align}
After renormalizing, we get,
\begin{align}
    \sum_{j=1}^{\gamma} \EE_{(1,k)}[v(d_j)] \geq \frac{c_1 \gamma H^2}{\epsilon^2}\ln \frac {c_2} \delta 
        -  \frac{\gamma}{\bar Z} \sum_{i=1}^{\bar Z} \EE_{(1,k)}[v(x_i)].
\end{align}
Next, observe that either the algorithm needs to visit nodes in $\Bcal_1$ at least $\frac{c_1 \bar Z H^2}{2\epsilon^2}\ln \frac {c_2} \delta$ times in expectation or nodes in the dominating set $\Dcal$ at least
$\frac{c_1 \gamma H^2}{2\epsilon^2}\ln \frac {c_2} \delta$ times in expectation. The former case gives an expected number of episodes of $\Omega\left( \frac{ \numS \numA H^2}{\epsilon^2} \ln \frac 1 \delta \right)$ which is the second term in the lower-bound to show. 

It remains the case where $\sum_{j=1}^{\gamma} \EE_{(1,k)}[v(d_j)] \geq \frac{c_1 \gamma H^2}{2\epsilon^2}\ln \frac {c_2} \delta$. The algorithm can only reach the dominating set through $z_k$ and it can visit only one node in the dominating set per episode. Further, when the algorithm visits $z_k$, it only reaches the dominating set with probability $p_0$. Hence, 
\begin{align}
    \EE_{(1,k)}[v(z_k)] = p_0  \sum_{j=1}^{\gamma} \EE_{(1,k)}[v(d_j)]  \geq \frac{c_1 \gamma H^2}{2 p_0 \epsilon^2}\ln \frac {c_2} \delta,
\end{align}
but the algorithm may also visit other pairs $z_i$ for $i \neq k$ as well. To see this, consider the expected number of visits to all $z_i$s before the algorithm visits the dominating set for the first time. By Lemma~\ref{eqn:coinfind}, this is at least $\frac{\bar Z}{4 p_0}$ in the worst case over $k$. This shows that 
\begin{align}
    \max_{k \in [\bar Z]} \sum_{i=1}^{\bar Z}\EE_{(1,k)}[v(z_i)] \geq \frac{c_1 \gamma H^2}{2 p_0 \epsilon^2}\ln \frac {c_2} \delta - 1 + \frac{\bar Z}{4 p_0} 
    = \Omega\left(\frac{\gamma H^2}{p_0 \epsilon^2} \ln \frac{c_2}{\delta} + \frac{\numS \numA}{p_0} \right). 
\end{align}
\end{proof}

\begin{lemma}
Consider $k$ biased coins, where all but one coin have probability $0$ of showing heads. Only one coin has probability $p$ of showing head. The identity $i$ of this coin is unknown. The expected number of coin tosses $N$ until the first head is for any strategy 
\begin{align}
    \EE[N] \geq \frac{k}{4p} 
\end{align}
in the worst case over $i$.
\label{eqn:coinfind}
\end{lemma}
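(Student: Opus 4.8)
The plan is to first strip away adaptivity. The key observation is that every coin other than the hidden coin $i$ shows tails deterministically, and coin $i$ shows tails with probability $1-p$ on each toss until the first head; hence every outcome the algorithm observes \emph{before} the first head is a tail, independently of $i$ and of which coins have been tossed. Consequently, conditioned on the algorithm's internal randomness $\rho$, the sequence of coins it probes, $c_1, c_2, \dots$, is a deterministic sequence that does not depend on $i$. Writing $\EE_i$ for expectation given that the hidden coin is $i$, it therefore suffices to establish the averaged bound $\frac1k\sum_{i=1}^k \EE_i[N] \ge \tfrac{k}{4p}$ for an arbitrary fixed deterministic probing sequence $(c_t)_{t\ge1}$: this implies the worst-case bound $\max_i\EE_i[N]\ge\tfrac{k}{4p}$, and since averaging over $i$ commutes with $\EE_\rho$, it lifts to randomized algorithms.

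\textbf{The Jensen argument.} Fix such a sequence and let $m_i(t)=|\{\,s\le t: c_s=i\,\}|$ be the number of probes of coin $i$ among the first $t$ tosses. Given that $i$ is hidden, $\{N>t\}$ is exactly the event that none of those $m_i(t)$ probes of coin $i$ came up heads, so $\PP_i[N>t]=(1-p)^{m_i(t)}$. Summing over $i$, using $\sum_{i=1}^k m_i(t)=t$ together with the convexity of $x\mapsto(1-p)^x$, Jensen's inequality gives $\sum_{i=1}^k\PP_i[N>t]\ge k\,(1-p)^{t/k}$. Since $\EE_i[N]=\sum_{t\ge0}\PP_i[N>t]$, summing over $t$ yields
\[
 \frac1k\sum_{i=1}^k\EE_i[N]\;\ge\;\sum_{t\ge0}(1-p)^{t/k}\;=\;\frac{1}{1-(1-p)^{1/k}}\;\ge\;\frac{k(1-p)}{p},
\]
where the last step uses the factorization $1-q=(1-q^{1/k})\sum_{j=0}^{k-1}q^{j/k}$ with $q=1-p$ and the elementary bound $q^{j/k}\ge q$ for $0\le j\le k-1$. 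For $p\le 3/4$ this already gives $\frac1k\sum_i\EE_i[N]\ge\tfrac{k}{4p}$.

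\textbf{Remaining regime and conclusion.} For $p>3/4$ I would instead use that a head cannot occur before coin $i$ has been probed: with $\tau_i=\min\{s:c_s=i\}$ we have $N\ge\tau_i$ deterministically given that $i$ is hidden. If some coin is never probed, $\EE_i[N]=+\infty$ for that $i$ and we are done; otherwise the $\tau_i$ are $k$ distinct positive integers, so $\frac1k\sum_i\EE_i[N]\ge\frac1k\sum_i\tau_i\ge\frac{k+1}{2}\ge\frac{k}{2}\ge\frac{k}{4p}$ since $p\le1$. Combining the two regimes and averaging over $\rho$ as above gives $\max_i\EE[N\mid\text{hidden}=i]\ge\tfrac{k}{4p}$.

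\textbf{Main obstacle.} The only genuinely delicate point is the first step: arguing rigorously that, because pre-success observations carry no information about $i$, no adaptive strategy can beat a fixed schedule, and handling the attendant measure-theoretic bookkeeping (possible non-termination, infinite expectations, and interchanging $\EE_\rho$ with the average/max over $i$). The convexity step and the geometric-sum estimate are routine.
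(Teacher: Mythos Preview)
Your proof is correct and follows essentially the same route as the paper: replace the max over $i$ by the average, reduce to a fixed probing schedule, apply convexity of $x\mapsto(1-p)^x$ to get $\frac{1}{k}\sum_i\PP_i[N>t]\ge(1-p)^{t/k}$, sum the geometric series to $\frac{1}{1-(1-p)^{1/k}}$, and handle large $p$ by a separate elementary argument. Your version is in fact somewhat cleaner than the paper's---you invoke Jensen directly where the paper sets up a constrained optimization and uses Young's inequality, your factorization bound $\frac{1}{1-(1-p)^{1/k}}\ge\frac{k(1-p)}{p}$ is more explicit than the paper's ``basic algebra'', and your first-probe argument for $p>3/4$ is more rigorous than the paper's one-line treatment of $p\ge 1/2$.
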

\begin{proof}
Let $N$ be the number of coin tosses when the first head occurs and let Alg be a strategy. The quantity of interest is
\begin{align}
     \inf_{\textrm{Alg}} \max_{i \in [k]} \EE_{t(i)}[ \EE_{R(\textrm{Alg})}[N]],
\end{align}
where $R(\textrm{Alg})$ denotes the internal randomness of the strategy and $t(i)$ the random outcomes of coin tosses. We first simplify this expression to
\begin{align}
     \inf_{\textrm{Alg}} \max_{i \in [k]} \EE_{t(i)}[ \EE_{R(\textrm{Alg})}[N]]
     & \geq
    \inf_{\textrm{Alg}} \frac{1}{k} \sum_{i=1}^k \EE_{t(i)}[ \EE_{R(\textrm{Alg})}[N]]
    = 
    \inf_{\textrm{Alg}}\EE_{R(\textrm{Alg})}\left[ \frac 1 k \sum_{i=1}^k \EE_{t(i)}[ N]\right]\\
    & = 
    \inf_{\textrm{Alg}}\EE_{R(\textrm{Alg})}\left[ \frac 1 k \sum_{i=1}^k \sum_{m=0}^\infty \PP_{t(i)}[ N > m]\right]\\
    & = 
    \inf_{\textrm{Alg}}\EE_{R(\textrm{Alg})}\left[\sum_{m=0}^\infty \frac 1 k \sum_{i=1}^k  \PP_{t(i)}[ N > m]\right], \label{eqn:coineq1}
\end{align}
and derive an explicit expression for $\frac 1 k \sum_{i=1}^k \PP_{t(i)}[ N > m]$. Since the strategy and its randomness is fixed, it is reduced to a deterministic sequence of coin choices. That is, for a given number of total tosses $N$, a deterministic strategy is the number of tosses of each coin $n_1, \dots, n_k$ with $\sum_{i=1}^k n_i = N$.
Consider any such strategy and let a $n_1, \dots, n_k$ with $\sum_{i=1}^k n_i = m$ be the coins selected up to $m$. If $N > m$, then the first $n_i$ tosses of coin $i$ must be tail. Hence, using the geometric distribution, we can explicitly write the probability of this event as
\begin{align}
\frac 1 k \sum_{i=1}^k  \PP_{t(i)}[ N > m] \geq \frac{1}{k} \sum_{i=1}^k (1 - p)^{n_i}
\geq \inf_{n_{1:k} \colon \sum_{i=1}^k n_i = m} \frac{1}{k} \sum_{i=1}^k (1 - p)^{n_i}.
\end{align}
The second inequality just considers the worst-case. The expression on the RHS is a convex program over the simplex with a symmetric objective. The optimum can therefore only be attained at an arbitrary corner of the simplex or the center.
The value at the center is $(1-p)^{m/k}$ and by Young's inequality, we have
\begin{align}
    (1-p)^{m/k} \leq \frac{((1 - p)^{m/k})^k}{k} + \frac{1 ^ {k / k-1}}{k / (k-1)} = 
    \frac{(1 - p)^{m}}{k} + \frac{k-1}{k},
\end{align}
where the RHS is the value of the program at a corner. Hence
$\frac 1 k \sum_{i=1}^k  \PP_{t(i)}[ N > m] \geq (1-p)^{m/k}$ holds and plugging this back into \eqref{eqn:coineq1} gives
\begin{align}
    \inf_{\textrm{Alg}} \max_{i \in [k]} \EE_{t(i)}[ \EE_{R(\textrm{Alg})}[N]]
     \geq
    \sum_{m=0}^\infty (1-p)^{m/k} = \frac{1}{1 - (1 - p)^{1/k}} \geq \frac{k}{4p}, 
\end{align}
where the last inequality follows from basic algebra and holds for $p < 0.5$. For $p \geq 0.5$, the worst case is $\frac{k}{4p} \leq \frac{k}{2}$ anyway because that is the expected number of trials until one can identify a coin with $p=1$.
\end{proof}
\end{document}